\newcommand{\ignore}[1]{}
\definecolor{forestgreen}{rgb}{0.0, 0.27, 0.13}
\newtheorem{defn}{Definition}
\newtheorem{thm}{Theorem}
\newtheorem{prop}{Proposition}
\newtheorem{cor}{Corollary}[thm]
\newtheorem{lem}{Lemma}
\newtheorem{assum}{Assumption}
\newif\ifshowanswer    % by default set to false.
\newcommand{\isitthree}[1]
{
  \ifnum#1=3
    number #1 is 3
  \else
    number #1 is not 3
  \fi
}
\newcommand{\Var}{\mathrm{Var}}
\newcommand{\be}{\begin{equation}}
\newcommand{\ee}{\end{equation}}
\renewcommand\P{{\mathds{P}}}
\newcommand\E{{\mathds{E}}}
\newcommand\CN{{\mathcal N}}
\renewcommand{\epsilon}{\varepsilon}
\newcommand{\tbd}{\sqrt{\epsilon}}
\newcommand{\DKL}{\mathrm{D}_\mathrm{KL}}
\newcommand{\DTV}{\mathrm{D}_\mathrm{TV}}
\newcommand{\sign}{\texttt{sign}}
\newcommand{\ConWid}{b_{r}}
\newcommand{\stepa}[1]{\overset{\rm (a)}{#1}}
\newcommand{\stepb}[1]{\overset{\rm (b)}{#1}}
\newcommand{\stepc}[1]{\overset{\rm (c)}{#1}}
\renewcommand{\epsilon}{\varepsilon}
\title{Beyond the Best: 
Estimating Distribution Functionals in Infinite-Armed Bandits}
\author{%
  Yifei Wang \\
  Department of Electrical Engineering\\
  Stanford University\\
  Stanford, CA 94305 \\
  \texttt{wangyf18@stanford.edu} \\
  \And
  Tavor Z. Baharav \\
  Department of Electrical Engineering\\
  Stanford University\\
  Stanford, CA 94305 \\
  \texttt{tavorb@stanford.edu} \\
  \And
  Yanjun Han\\
  Institute for Data, Systems, and Society\\
  Massachusetts Institute of Technology\\
  Cambridge, MA 02142\\
  \texttt{yjhan@mit.edu}\\
  \And
  Jiantao Jiao\\
  Department of Electrical  Engineering and Computer Sciences and Department of Statistics\\
  University of California, Berkeley\\
  Berkeley, CA 94720\\
  \texttt{jiantao@eecs.berkeley.edu}\\
  \And
  David Tse\\
  Department of Electrical Engineering\\
  Stanford University\\
  Stanford, CA 94305 \\
  \texttt{dntse@stanford.edu}
}
\begin{document}

\maketitle
% TL;DR: We construct unified algorithms for offline and online estimation of a class of distribution functionals in the infinite-armed bandit setting, and provide matching upper and lower bounds.
\begin{abstract}
In the infinite-armed bandit problem, each arm's average reward is sampled from an unknown distribution, and each arm can be sampled further to obtain noisy estimates of the average reward of that arm. Prior work focuses on identifying the best arm, i.e., estimating the maximum of the average reward distribution. We consider a general class of distribution functionals beyond the maximum, and propose unified meta algorithms for both the offline and online settings, achieving optimal sample complexities. We show that online estimation, where the learner can sequentially choose whether to sample a new or existing arm, offers no advantage over the offline setting for estimating the mean functional, but significantly reduces the sample complexity for other functionals such as the median, maximum, and trimmed mean. The matching lower bounds utilize several different Wasserstein distances. For the special case of median estimation, we identify a curious thresholding phenomenon on the indistinguishability between Gaussian convolutions with respect to the noise level, which may be of independent interest. 
\end{abstract}

\section{Introduction}
\vspace{-.2cm}
In the infinite-armed bandit problem~\citep{berry1997bandit}, at each time instance the learner can either sample an arm that has been previously observed, or sample from a new arm, whose average reward is drawn from an unknown distribution $F$. The learner's goal is to identify arms with large average reward, with the objective being achieving either small cumulative regret~\citep{berry1997bandit,wang2008algorithms,bonald2013two}, or small simple regret~\citep{carpentier2015simple}. This setting differs from the classical multi-armed bandit formulation as the number of observed arms is not fixed a priori and needs to be carefully chosen by the algorithm. 

We consider the problem of estimating some functional $g(F)$ of an underlying distribution $F$, as is illusrated in Figure \ref{fig:illus}. From this point of view, the classical infinite-armed bandit problem can be viewed as an \emph{online} sampling algorithm to estimate the \emph{maximum} of the distribution $F$. \footnote{To be precise, the objectives in infinite-armed bandit works~\citep{berry1997bandit,wang2008algorithms,bonald2013two,carpentier2015simple} are slightly different, minimizing simple or cumulative regret.} Once we cast the infinite-armed bandit problem in this manner, it immediately suggests several additional questions. For example, what about \emph{offline} sampling algorithms? Indeed, online sampling requires continual interactions with the environment which may be infeasible in certain applications, and recent work in online and offline reinforcement learning have demonstrated the significant value of both formulations~\citep{rashidinejad2021bridging, zhang2021reinforcement,schrittwieser2021online}. Additionally, it is worth estimating functionals beyond the maximum: in many practical scenarios, including mean estimation in single-cell RNA-sequencing \citep{zhang2020determining} and  Benjamini Hochberg (BH) threshold estimation in multiple hypothesis testing \citep{zhang2019adaptive}, we are interested in the mean, median (quantile), or trimmed mean of the underlying distribution $F$. The estimation of quantiles is similar to estimation of the BH threshold, as both depend on the order statistics of the underlying distribution. Estimating the median or trimmed mean has further applications in robust statistic for instance, maintaining the fidelity of an estimator in the presence of adversarial corruption or outliers. Another natural setting where such problems arise is in large-scale distributed learning \citep{son2012distributed}. Here, a server / platform wants to estimate how much
test-users like their newly released product. Users return a noisy realization of their affinity for the
product, and the platform can decide to pay the user further to spend more time with the product, to
test it further. For many natural objectives which are robust to a small fraction of adversarial users, e.g. trimmed mean, median, or quantile estimation, we see that our algorithm will enable estimation of the desired quantity to high accuracy while minimizing the total cost (number of samples taken). 
Since sampling is expensive, it is critical to identify the optimal method to collect samples, and identify the improvements afforded by adaptivity. 
For example, do online methods offer significant gains over offline methods? Are the fundamental limits of estimating the median and trimmed mean different from that of the maximum? 

\begin{figure}[!htbp]
    \centering
    \begin{tikzcd}[column sep=tiny]
   &&&\framebox[2\width]{$F$}
\arrow[lld,"\texttt{subsample}"']\arrow[rrd]\arrow[d]
&&&\\
   &\framebox[2\width]{$X_1$}\arrow[ld,"\texttt{noise}"']\arrow[d]\arrow[rd]
   &&\framebox[2\width]{$...$}\arrow[d]&&\framebox[2\width]{$X_n$}\arrow[d]\arrow[rd]\arrow[ld]
   &\\
\framebox[2\width]{$Y_{1,1}$}&\framebox[2\width]{$...$}& \framebox[2\width]{$Y_{1,m}$}&\framebox[2\width]{$...$}&\framebox[1.5\width]{$Y_{1,m-1}$}&\framebox[2\width]{$...$}& \framebox[2\width]{$Y_{n,m}$}
    \end{tikzcd}
    \caption{Problem setting. Level 0: underlying distribution $F(x)$. Level 1: unobserved samples $X_1,\dots,X_n\sim F(x)$. Level 2: noisy observations $Y_{i,j}\sim \mcN(X_i,1)$.}\label{fig:illus}
\end{figure}
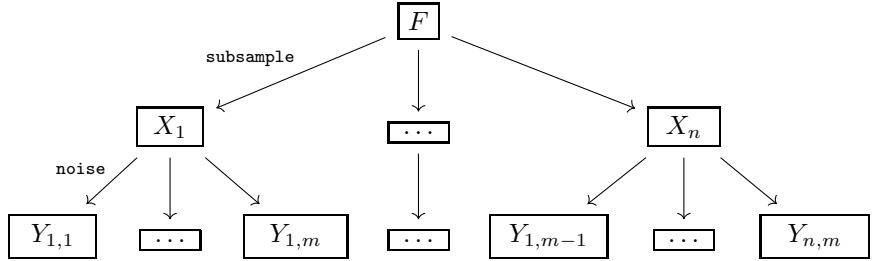

In this paper we initiate the study of distribution functional estimation in both online and offline settings and obtain both information theoretic limits and efficient algorithms for estimating the mean, median, trimmed mean, and maximum. We propose unified meta algorithms for both offline and online settings, and provide %essentially 
matching upper and lower bounds% (ignoring arbitrarily small polynomial factors) 
for the sample complexity of estimating the aforementioned functionals in Table \ref{tab:functionals}. 

We also reveal new insights on the fundamental differences between the offline and online algorithms, as well as the fundamental differences between different functionals. To determine these sharp statistical limits, we use the Wasserstein-$2$ distance to upper bound the KL divergence in the offline setting, while the Wasserstein-$\infty$ distance is applied in the online setting instead. This approach leads to valid sample complexity lower bounds for general functionals $g$, which turn out to be tight for estimating the mean and maximum. However, a curious thresholding phenomenon, which is not captured by the previous approach and does not occur for the \emph{mean} and \emph{maximum}, appears in the \emph{median} and \emph{trimmed mean} analyses: the KL divergence does not change smoothly with the noise level and enjoys a phase transition after the noise level exceeds some threshold. This phenomenon calls for different treatments under different estimation targets and could be of independent interest.

\begin{table}[h]
    \centering
    \begin{tabular}{|c|c|c|c|c|c|} \hline
        Functional \rule{0pt}{4.1ex}& \shortstack{Offline \\ complexity} & \shortstack{Online \\ complexity} & Comments\\ \hline
        Mean \rule{0pt}{2.6ex}& $\Theta(\epsilon^{-2})$ & $\Theta(\epsilon^{-2})$ & No gain from online sampling\\ \hline
        Median \rule{0pt}{2.6ex}& $\Theta(\epsilon^{-3})$ & $\tilde\Theta(\epsilon^{-2.5})$ & Holds for any quantile not on the boundary\\ \hline
        Maximum \rule{0pt}{2.6ex}& $\Theta(\epsilon^{-(2+\beta)})$ & $\tilde\Theta(\epsilon^{-\max(\beta,2)})$ & Depends on the tail regularity $\beta$\\ \hline %\shortstack{Depends on tail of distribution $\beta$\\ $\alpha_1=\alpha_2=0$ for minimum}\\ \hline
        Trimmed mean \rule{0pt}{2.6ex}& $\tilde\Theta(\epsilon^{-3})$ & $\tilde\Theta(\epsilon^{-2.5})$ & $g(F)=\E\{X | X \in [F^{-1}(\alpha), F^{-1}(1-\alpha)]\}$  \\\hline
    \end{tabular}
    \caption{Sample complexity of estimating different functionals $g(F)$, where $F$ is the cumulative distribution function (CDF) of the distribution to estimate. The trimmed mean result holds for a fixed $\alpha \in (0,1/2)$. Here $\epsilon$ is the target accuracy and we use ${\Theta}$ to denote the matching upper and lower bounds up to constants not depending on $\epsilon$.
    Additionally, we use $\tilde{\Theta},\gtrsim$, and $\lesssim$ to suppress constants and logarithmic factors in $\varepsilon$, and $\epsilon^c$ for any fixed $c$ arbitrarily close to zero.
    If $h(\epsilon) \lesssim f(\epsilon)$ and $f(\epsilon) \lesssim h(\epsilon)$ then we denote this as $f(\epsilon) \asymp h(\epsilon)$.
    For maximum estimation, we assume that the distribution satisfies $\P(X\geq F^{-1}(1)-\epsilon)\asymp \epsilon^{\beta}$. Other assumptions on $F$ are detailed in \Cref{sec:algs}. 
    }
    \label{tab:functionals}
    \vspace{-.4cm}
\end{table}

The rest of this paper is structured as follows.
In \Cref{sec:relatedWork} we discuss the relevant literature.
We then formulate our distribution functional estimation problem in \Cref{sec:prob_form}.
Our unified meta algorithms for the offline and online settings are presented in \Cref{sec:algs}, where we show the sample complexity upper bounds.
We present information theoretic lower bounds proofs via Wasserstein distance for the online and offline settings in \Cref{sec:lbs}, and discuss a special thresholding phenomenon arising in median estimation in \Cref{sec:medLB}.
\Cref{sec:conc} concludes this work. 

% \vspace{-.8cm}
\subsection{Related works} \label{sec:relatedWork}\vspace{-.2cm}
The field of multi-armed bandits has seen broad interest and utility since its formalization in 1985 \citep{lai1985asymptotically}. 
Across clinical trials, multi-agent learning, online recommendation systems, and beyond \citep{lattimore2020bandit}, multi-armed bandits have proven to be an excellent framework for modeling and solving complex tasks regarding exploration in an unknown environment.
In the classical multi-armed bandit setting we have a set of $n$ distributions, where the player sequentially pulls one arm per round and observes a sample drawn from the associated reward distribution. In the infinite-armed bandit setting \citep{berry1997bandit}, the average arm reward for each arm is sampled i.i.d. from an unknown distribution, i.e., we have infinitely many available arms. There are many possible objectives that can be formulated in this online learning problem, from cumulative/simple regret minimization \citep{wang2008algorithms,bonald2013two,carpentier2015simple,li2017infinitely} to identification tasks (for example identifying an arm whose average reward is $\epsilon$ close to the largest average reward) \citep{aziz2018pure, chaudhuri2017pac, chaudhuri2019pac}.
Many works have studied best-arm identification, and we now have essentially matching instance-dependent upper and lower bounds \citep{jamieson2014best,kaufmann2016complexity}. One could also use the average reward estimate of the identified best arm to estimate the maximum of the average reward distribution in the infinite-armed bandit setting~\citep{carpentier2015simple, aziz2018pure, chaudhuri2017pac, chaudhuri2019pac}. 

From a statistical perspective, the sample complexity in the offline setting is closely related to deconvolution distribution estimation \citep{cordy1997deconvolution, wasserman2004all, hall2008estimation, delaigle2008deconvolution, dattner2011deconvolution}. Nevertheless, these previous works mainly focus on the expected L2 difference between the underlying distribution function and its estimation. This simplified setting does not allow for consideration of the trade-off inherent in our setting between the number of points and the (variable) number of observations per point. Additionally, these past works did not calculate the specific sample complexity for more general functionals like quantile and trimmed mean. Since the noise is treated as fixed and uniform, there has been no study of the online setting where adaptive resampling can enable dramatic sample complexity improvements. In particular, the challenge is that we have noisy observations, which makes deriving lower bounds even in offline cases a significant challenge that has not been dealt with in the past, let alone analyzing the online case. The dramatic performance gains afforded by adaptive resampling for functional estimation, combined with its lack of formal study, motivates the focus of this work.

\vspace{-.1cm}
\section{Problem formulation}\label{sec:prob_form}\vspace{-.2cm}
We are interested in estimating the distribution functional $g(F)\in \mathbb{R}$ of an underlying distribution with cumulative distribution function (CDF) $F$. We study a class of indicator-based functionals $g$ defined as follows.
\begin{defn}[Indicator-based functionals] \label{assum:gs}
The functional $g$ can be represented as 
\begin{equation}
    g(F) = \E \left[ X | X \in S(F) \right]
\end{equation}
for some set $S(F)$, where $X\sim F$. The set $S(F)$ is defined as follows:
\begin{equation}
    S(F)=[F^{-1}(\alpha_1),F^{-1}(\alpha_2)], 0\leq \alpha_1 \leq \alpha_2 \leq 1. 
\end{equation}
\end{defn}
We denote $S(F)$ by $S$ throughout this work when $F$ is clear from context.
This class encompasses many natural functionals of interest, which we formulate in \Cref{tab:functionals2}. 
In Appendix \ref{app:extension}, we discuss extending our results to more general functionals, and show that our approach can extend to smooth reweighting functions $h(X)$ and more complex sets $S$.

\begin{table}[h]
    \centering
    \begin{tabular}{|c|c|c|c|c|} \hline
        Functional \rule{0pt}{2.4ex}& $g(F)$ & $\alpha_1$ & $\alpha_2$&Comment\\ \hline
        Mean \rule{0pt}{2.4ex}& $\E[X]$&0&1&\\ \hline
        %\shortstack{Quantile  \\ (e.g. median)} 
        %\shortstack{Quantile  \\ (e.g. median)} 
        Quantile \rule{0pt}{2.4ex} & $F^{-1}(\alpha)$ & $\alpha$ & $\alpha$&$\alpha\in(0,1)$, e.g. $\alpha=1/2$ for median\\ \hline
        Maximum \rule{0pt}{2.4ex}& $F^{-1}(1)$ & $1$ & $1$&$\alpha_1=\alpha_2=0$ for minimum\\ \hline
        Trimmed mean \rule{0pt}{2.9ex}& %$ \E\left[X | X \in [F^{-1}(\alpha_1),F^{-1}(\alpha_2)\right]$ 
        $\E[X|F(X)\in[\alpha_1,\alpha_2]]$
        & $\alpha$ & $1-\alpha$&$\alpha\in(0,1/2)$\\\hline
    \end{tabular}
    \caption{Indicator-based functionals.}
    \label{tab:functionals2}
\end{table}

As in the infinite-armed bandit setting, we only have access to noisy observations of samples drawn from the distribution with CDF $F$. We can either choose to sample from a point $X$ which we already have some noisy observations of, or draw a new point $X$ from $F$.
We then observe $Y=X+Z$, where $Z \sim \mcN(0,1)$ is independent of everything observed so far.  

In this paper we characterize the online and offline sample complexities of these problems, and in \Cref{sec:algs} propose online and offline algorithms achieving them.
For $\epsilon>0$ and $\delta\in(0,1)$, we call an estimator $\hat{G}$ an $(\epsilon,\delta)$-PAC approximation of $g(F)$ if
$
\P(|\hat{G}-g(F)|> \epsilon)\le \delta.    
$

\section{Offline and online algorithms} \label{sec:algs}
% In this section, we introduce our unified offline and online algorithms.
\subsection{Offline estimation algorithms}\label{ssec:offline}

We study a special class of offline algorithms, which uniformly obtain observations of the points following the underlying distribution. To be precise, based on prior information regarding the distribution in question, $F$, it will choose an appropriate number of points $n$ and number of samples per point $m$ to obtain an $(\epsilon,\delta)$-PAC approximation of $g(F)$. Specifically, the latent variables $X_1,\dots,X_n$ are drawn from $F$, and our observations $\{Y_{i,j}\}_{j=1}^m$ are drawn i.i.d. from $\mcN(X_i,1)$, independently for each $i$.
For $i\in[n]$, denote $\hat X_i=m^{-1}\sum_{j=1}^mY_{i,j}$ as the empirical mean of the observations for arm $i$.
Then, we can write $\hat X_i=X_i+\hat{Z}_i$ where $\hat Z_i\sim \mcN(0,1/m)$, independent across $i$. %Let $G_{n,m}=g_n(\hat{X}_1,\dots,X_n)$. 
Define $S_n:= \{ i : X_{(\lfloor\alpha_1 n\rfloor)} \le X_i \le X_{(\lfloor\alpha_2 n\rfloor)}\}$ as the set of arms relevant for estimating the functional $g$, and define our $n$ sample estimate of $g$ as $g_n(\hat{X}_1,\hdots,\hat{X}_n) := |S_n|^{-1} \sum_{i \in S_n} \hat{X}_i$. 
Here $X_{(i)}$ denotes the $i$-th order statistic, that is the $i$-th smallest entry in $X_1,\dots,X_n$. Then, $G_{n,m} = g_n(\hat{X}_1,\hdots,\hat{X}_n)$, where each $X_i$ has been sampled $m$ times, serves as a natural estimator for $g(F)$ from the noisy observations. 
With this, we can state the following theorem:
\begin{thm}[Offline PAC sample complexity]\label{thm:gen_uni}
An $(\epsilon,\delta)$-PAC offline uniform-sampling-based algorithm for estimating $g(F)$ requires $\Theta(nm)$ samples where $n,m$ depend on $\epsilon,\delta$, the functional $g$, and information about $F$, with orderwise dependence on $\epsilon$ detailed in \Cref{tab:uniform}.
\end{thm}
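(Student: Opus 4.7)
The plan is to control the estimation error $|G_{n,m} - g(F)|$ through a two-term decomposition and then optimize the product $nm$ case by case. Writing $G_{n,\infty} := g_n(X_1,\dots,X_n)$ for the idealized noiseless estimator, I split
\begin{equation*}
G_{n,m} - g(F) \;=\; \bigl(G_{n,m} - G_{n,\infty}\bigr) \;+\; \bigl(G_{n,\infty} - g(F)\bigr),
\end{equation*}
where the second summand is a classical statistical fluctuation of $n$ i.i.d.\ draws from $F$, and the first is driven by the $\mathcal{N}(0,1/m)$ perturbations $\hat Z_i = \hat X_i - X_i$. For each functional I would choose $n$ and $m$ on the Pareto frontier so that both error terms are $O(\epsilon)$ simultaneously, and then minimize $nm$ subject to these constraints.

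For the upper bounds, the analysis is handled case by case. For the mean, both summands are averages: the sampling error is $O_p(n^{-1/2})$ and the noise error is $O_p((nm)^{-1/2})$, forcing $n \gtrsim \epsilon^{-2}$ and making $m=1$ optimal, giving $nm \asymp \epsilon^{-2}$. For a quantile (in particular the median), standard sample-quantile concentration yields sampling error $O_p(n^{-1/2})$, while the noise effectively replaces $F$ by $F \ast \mathcal{N}(0,1/m)$; exploiting Gaussian symmetry so that the linear term in the Taylor expansion of $F$ cancels gives a noise-induced quantile bias of only $O(1/m)$ rather than the naive $O(m^{-1/2})$, and matching $n^{-1/2} + m^{-1} \asymp \epsilon$ gives $n \asymp \epsilon^{-2}$, $m \asymp \epsilon^{-1}$, so $nm \asymp \epsilon^{-3}$. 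For the trimmed mean, the same quantile perturbation controls misclassification at the endpoints of $\hat S_n$ (costing a logarithmic factor from a union bound), while the average noise inside $\hat S_n$ contributes an extra $O((nm)^{-1/2})$ term; combined they yield $nm \asymp \epsilon^{-3}$ up to logarithms. For the maximum under the tail condition $\mathbb{P}(X \ge F^{-1}(1)-\epsilon) \asymp \epsilon^\beta$, a standard extreme-value argument gives sampling error $O_p(n^{-1/\beta})$ while the noise in the top order statistic of the $\hat X_i$'s scales as $\sqrt{\log n / m}$; balancing gives $n \asymp \epsilon^{-\beta}$, $m \asymp \epsilon^{-2}$, and $nm \asymp \epsilon^{-(2+\beta)}$.

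For the matching lower bounds within uniform-sampling algorithms, I would apply Le Cam's two-point method, constructing pairs $F, F'$ with $|g(F) - g(F')| \ge 2\epsilon$ whose observation distributions under schedule $(n,m)$ are statistically close. By symmetry of the uniform schedule it suffices to bound
\begin{equation*}
\DKL\!\left( (F \ast \mathcal{N}(0,1)^{\otimes m})^{\otimes n} \,\Big\|\, (F' \ast \mathcal{N}(0,1)^{\otimes m})^{\otimes n} \right),
\end{equation*}
which can be controlled by a Wasserstein-$2$-type bound on the per-arm KL, as developed later in the paper. The perturbation is tailored to each functional: a translation for the mean, a local density bump near $F^{-1}(\alpha)$ for the quantile, a modification of the support edge with the prescribed $\beta$-tail for the maximum, and a symmetric tail perturbation for the trimmed mean. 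Optimizing over perturbation scale yields the claimed $nm$ lower bounds along the Pareto frontier in $(n,m)$.

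The main obstacle is the noise-induced bias analysis for the quantile and trimmed mean. A naive bound on $F \ast \mathcal{N}(0,1/m) - F$ gives $O(m^{-1/2})$, which would inflate the complexity to $\epsilon^{-4}$; getting the sharp $O(1/m)$ rate requires carefully exploiting Gaussian symmetry together with local smoothness of $F$ at the quantile and then propagating this through the inversion of the empirical CDF. A secondary subtlety is in the maximum case, where the argmax under noise need not coincide with the noiseless argmax, so the analysis must track the joint fluctuations of the top $\mathrm{polylog}(1/\delta)$ order statistics of the $\hat X_i$'s rather than relying on a single plug-in bound.
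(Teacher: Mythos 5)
Your upper bound analysis correctly identifies the paper's key ideas for all four functionals, most notably the Gaussian-symmetry cancellation that turns the naive $O(m^{-1/2})$ noise-induced quantile bias into $O(1/m)$; this is exactly what \Cref{prop:cdf_diff} establishes, showing $|F_m(x) - F(x)| \le \frac{c_2+1}{2}m^{-1}$ by killing the linear Taylor term against the symmetric Gaussian kernel. The $(n,m)$ balancing you give for each functional matches \Cref{tab:uniform}. Two minor presentation differences: for the median and trimmed mean the paper decomposes $G_{n,m}-g(F)=(G_{n,m}-g(F_m))+(g(F_m)-g(F))$ with $F_m := F*\mcN(0,1/m)$ (a population bias/variance split), whereas you initially write an empirical $G_{n,\infty}$ split — though you implicitly pivot to the $F_m$ view when you say ``the noise effectively replaces $F$ by $F*\mcN(0,1/m)$,'' so this is cosmetic. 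Also, the $\log(1/\epsilon)$ factor in $m$ for the trimmed mean comes from Gaussian tail truncation at radius $k\sigma$ with $k\asymp\sqrt{\log m}$ inside the bias bound of \Cref{lem:tm_noise}, not from a union bound over endpoint misclassification; your attribution differs but the resulting rate is the same. For the maximum, the paper simply union-bounds $|\hat X_i - X_i|\le\epsilon/2$ over all $n$ arms rather than tracking the top $\mathrm{polylog}$ order statistics as you suggest; your refinement is not needed.

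The genuine gap is in your lower bound sketch for the median and trimmed mean. You propose to control the per-arm KL via the Wasserstein-$2$ bound $\DKL(p_{\pi,F_1}\|p_{\pi,F_2}) \le \frac{mn}{2}\mcW_2^2(F_1,F_2)$ of \Cref{lemma:Wasserstein_2}. This is tight for the mean and maximum, but for the median it yields only $mn=\Omega(\epsilon^{-2.5})$, because $\min\{\mcW_2^2(F_1,F_2): |F_1^{-1}(1/2)-F_2^{-1}(1/2)|\ge 2\epsilon\}=\Theta(\epsilon^{2.5})$, and the paper itself notes this bound on the KL modulus is not tight. The sharp $\Omega(\epsilon^{-3+\theta})$ lower bound for offline median requires the thresholding phenomenon of \Cref{lemma:median_offline}: $\mathrm{KL}_\sigma(\epsilon)$ does not decay smoothly with $\sigma$ but collapses from $\Theta(\epsilon^2)$ to $o(\epsilon^\kappa)$ for any $\kappa$ once $\sigma$ exceeds roughly $\epsilon^{1/2}$, which forces $m=\Omega(\epsilon^{2\theta-1})$ and then $n=\Omega(\epsilon^{-2})$. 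Proving this requires the moment-matching construction (\Cref{lem:construct}) and the Hermite-expansion pointwise estimate (\Cref{lem:pointwise}), neither of which your sketch supplies. If you intend \Cref{thm:gen_uni} to assert only achievability (as the propositions it references do), this gap is out of scope, but note your lower bound argument as written would not close the stated $\Theta$ for the median or trimmed mean.
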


For the rest of this section, we discuss in greater detail our assumptions on the underlying distribution. We defer the proofs and calculations for $n$ and $m$ to \Cref{app:offlinenm}, as well as discussion regarding the trimmed mean to \Cref{app:tm}.

\begin{table}[h]
    \centering
    \begin{tabular}{|c|c|c|}
    \hline
        Functional &$m$&$n$  \\\hline
        Mean & $\Theta(1)$&$\Theta(\epsilon^{-2})$\\\hline
        Median & $\Theta(\epsilon^{-1})$&$\Theta(\epsilon^{-2})$\\\hline
        Maximum & $\Theta(\epsilon^{-2})$&$\Theta(\epsilon^{-\beta})$\\\hline
        Trimmed mean \rule{0pt}{2.4ex}& $\Theta\left(\epsilon^{-1}\log \left(\epsilon^{-1}\right)\right)$&$\Theta(\epsilon^{-2})$\\\hline
    \end{tabular}
    \caption{Choice of $(m,n)$ for estimating different functionals to accuracy $\epsilon$.
    }
    \label{tab:uniform}
    \vspace{-.5cm}
\end{table}

\subsubsection{Mean}
To guarantee that the empirical mean is a good estimator for the true mean, we impose assumptions on the tail of the distribution $F$:
\begin{assum}\label{assum:mean}
The distribution $F$ satisfies  $\Var_{X\sim F}[X]\leq c$. 
\end{assum}

\Cref{assum:mean} ensures that estimation of the mean of the distribution can be accomplished with finite samples. The following proposition gives the sample complexity of the offline algorithm. 
\begin{prop}\label{prop:uni_ub_mean}
Suppose that Assumption \ref{assum:mean} is satisfied. By choosing $m=1$ and $n\geq \delta^{-1}(1+c)\epsilon^{-2}$, the estimator $G_{n,m}$ is an $(\epsilon,\delta)$-PAC approximation of $g(F)$. Thus, the offline algorithm requires $O(\epsilon^{-2})$ samples.
% \jt{I don't quite understand this. In the special case of $c = 0$, we should just need a single sample, why the upper bound is still quite large? } \tb{the noise in the arm pulls is still present, so we still require $n= \Omega(\epsilon^{-2})$ samples.}
\end{prop}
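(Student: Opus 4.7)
The plan is to prove this by a direct variance computation followed by Chebyshev's inequality; the statement is really a one-line corollary of Assumption~\ref{assum:mean} once we unpack the estimator.

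First I would specialize the estimator $G_{n,m}$ to the mean functional. Since for the mean we have $\alpha_1=0$ and $\alpha_2=1$, the relevant index set is $S_n = [n]$, so
\begin{equation*}
G_{n,1} \;=\; \frac{1}{n}\sum_{i=1}^n \hat X_i \;=\; \frac{1}{n}\sum_{i=1}^n (X_i + Z_{i,1}),
\end{equation*}
where $X_i \stackrel{\text{iid}}{\sim} F$ and $Z_{i,1} \stackrel{\text{iid}}{\sim} \mcN(0,1)$, independent of the $X_i$'s. Because the $(X_i, Z_{i,1})$ pairs are i.i.d.\ and $\E[Z_{i,1}]=0$, we immediately get $\E[G_{n,1}] = \E_{X\sim F}[X] = g(F)$, so the estimator is unbiased.

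Next I would bound the variance. By independence across $i$ and between $X_i$ and $Z_{i,1}$,
\begin{equation*}
\Var(G_{n,1}) \;=\; \frac{1}{n}\bigl(\Var_{X\sim F}[X] + \Var[Z_{i,1}]\bigr) \;\leq\; \frac{1+c}{n},
\end{equation*}
where the inequality invokes Assumption~\ref{assum:mean} to bound $\Var_{X\sim F}[X] \leq c$ and uses $\Var[Z_{i,1}]=1$.

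Finally, applying Chebyshev's inequality,
\begin{equation*}
\P\bigl(|G_{n,1} - g(F)| > \epsilon\bigr) \;\leq\; \frac{\Var(G_{n,1})}{\epsilon^2} \;\leq\; \frac{1+c}{n\epsilon^2},
\end{equation*}
which is at most $\delta$ as soon as $n \geq \delta^{-1}(1+c)\epsilon^{-2}$. Since $m=1$, the total sample budget is $nm = O(\epsilon^{-2})$ for any fixed $\delta$ and $c$, completing the proof. There is no real obstacle here; the only subtlety worth a sentence is noting that uniform sampling with $m=1$ suffices because the noise variance $1$ is absorbed into the same $1/n$ averaging as the signal variance, so paying for more observations per arm would be wasteful for the mean.
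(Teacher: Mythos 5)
Your proof is correct and takes essentially the same approach as the paper: both specialize $G_{n,1}$ to the empirical mean of $X_i + Z_{i,1}$, bound its variance by $(1+c)/n$ using Assumption~\ref{assum:mean}, and conclude via Chebyshev's inequality. The only cosmetic difference is that you state unbiasedness of $G_{n,1}$ directly, while the paper phrases the same observation as $g(F_m) = g(F)$ for every $m$.
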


\subsubsection{Median}
For median estimation we require different assumptions than the mean, as listed below.
\begin{assum}\label{assum:med}
There exist constants $c_1,c_2>0$ such that
\begin{itemize}
    \item $F'(x)\ge c_1$ for $|x-\mathrm{median}(F)| \lesssim \epsilon$.
    \item $|F''(x)| \le c_2$ for $|x-\mathrm{median}(F)| \lesssim \sqrt{\epsilon}$.
\end{itemize}
\end{assum}
The first assumption ensures that the median of $F$ is unique. The second assumption precludes the distribution from being dumbbell-shaped (very little mass near the median), in which case  estimating the true median is meaningless and can be arbitrarily difficult.
% It also improves the sample number $m$ per point to guarantee that $|g(F_m)-g(F)|\leq \epsilon$ from $\epsilon^{-2}$ to $\epsilon^{-1}$. 
The following proposition gives a suitable choice of $(n,m)$ for providing an $(\epsilon,\delta)$-PAC approximation of $g(F)$.
\begin{prop}\label{prop:med_ub}
Suppose that Assumption \ref{assum:med} holds. Then, by choosing $m\geq {4(c_2+1)}/(c_1\varepsilon)$ and $n\geq {28\log(1/\delta)}/(c_1\varepsilon)^2$, the estimator $G_{n,m}$ is an $(\epsilon,\delta)$-PAC approximation of $g(F)$. Thus, the offline algorithm requires $O(\epsilon^{-3})$ samples. 
\end{prop}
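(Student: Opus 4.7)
Since each noise variable $\hat Z_i\sim\mcN(0,1/m)$ is independent of $(X_j)$, the observations $\hat X_1,\ldots,\hat X_n$ are i.i.d. samples from the smoothed distribution $\tilde F:=F*\mcN(0,1/m)$, and the proposed estimator $G_{n,m}$ is just the empirical median of these $n$ draws. My approach is to show (i) $\tilde F$ has density bounded below by $\Theta(c_1)$ in a window of size $\epsilon$ around $\mu:=F^{-1}(1/2)$, (ii) $\tilde F(\mu\pm\epsilon)$ is bounded away from $1/2$ by order $c_1\epsilon$, and (iii) the empirical CDF $\hat F_n$ concentrates at the two boundary points $\mu\pm\epsilon$ to within $o(c_1\epsilon)$, so that on the good event $\hat F_n(\mu-\epsilon)<\tfrac12<\hat F_n(\mu+\epsilon)$, forcing the empirical median into $(\mu-\epsilon,\mu+\epsilon)$.

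\textbf{Step 1 (local smoothness of $\tilde F$).} For $|y-\mu|\le\epsilon$, write $\tilde F'(y)=\E[F'(y-\hat Z)]$ and split on $A:=\{|\hat Z|\le\sqrt\epsilon\}$. On $A$, the point $y-\hat Z$ lies within $\epsilon+\sqrt\epsilon\lesssim\sqrt\epsilon$ of $\mu$, so Assumption~\ref{assum:med} gives $F'(y-\hat Z)\ge F'(\mu)-c_2|y-\mu-\hat Z|\ge c_1-O(c_2\sqrt\epsilon)\ge 3c_1/4$ for $\epsilon$ small. The Gaussian tail bound yields $\Pr(A^c)\le 2\exp(-\epsilon m/2)$, which is a controlled constant for $m=4(c_2+1)/(c_1\epsilon)$; hence $\tilde F'(y)\ge c_1/2$ on the full $\epsilon$-window.

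\textbf{Step 2 (bias plus integration).} By symmetry $\E[\hat Z]=0$, a second-order Taylor expansion of $F$ at $\mu$ (again splitting on $A$ to apply $|F''|\le c_2$) gives $|\tilde F(\mu)-1/2|\le c_2/(2m)+o(\epsilon)\le c_1\epsilon/8$ for the chosen $m$. Combining with Step 1,
\begin{equation*}
\tilde F(\mu+\epsilon)\ge \tilde F(\mu)+\tfrac{c_1}{2}\epsilon\ge \tfrac12+\tfrac{3c_1}{8}\epsilon,\qquad \tilde F(\mu-\epsilon)\le \tfrac12-\tfrac{3c_1}{8}\epsilon.
\end{equation*}

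\textbf{Step 3 (concentration).} Apply Hoeffding pointwise at $\mu+\epsilon$ and $\mu-\epsilon$ and union bound: with probability at least $1-\delta$,
\begin{equation*}
\bigl|\hat F_n(\mu\pm\epsilon)-\tilde F(\mu\pm\epsilon)\bigr|\le \sqrt{\tfrac{\log(4/\delta)}{2n}}\le \tfrac{c_1\epsilon}{8}
\end{equation*}
for $n\ge 28\log(1/\delta)/(c_1\epsilon)^2$. On this event, $\hat F_n(\mu+\epsilon)>\tfrac12$ and $\hat F_n(\mu-\epsilon)<\tfrac12$, so the empirical median lies in $[\mu-\epsilon,\mu+\epsilon]$, and the total sample count is $nm=O(\epsilon^{-3})$ up to $\log(1/\delta)$ factors.

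\textbf{Main obstacle.} The sharpest point is Step~1: the Gaussian noise has standard deviation $\Theta(\sqrt\epsilon)$, exactly the width of the window in which we have $|F''|\le c_2$, so a naive Taylor expansion of $F'(y-\hat Z)$ would leave this window with non-trivial probability. The fix is the truncation at $|\hat Z|\le\sqrt\epsilon$ combined with the tail bound $\Pr(A^c)\le 2\exp(-\epsilon m/2)$; the product $\epsilon m$ must be a sufficiently large constant (as guaranteed by $m\ge 4(c_2+1)/(c_1\epsilon)$) to ensure $\Pr(A)\ge$ some absolute constant close to $1$, preserving the $c_1/2$ lower bound on $\tilde F'$. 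All other steps are standard empirical CDF concentration arguments.
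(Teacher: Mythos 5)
Your three-step decomposition---(i) a density lower bound for the smoothed CDF $\tilde F := F * \mathcal{N}(0,1/m)$ near the median, (ii) a bias bound on $|\tilde F(\mu)-\tfrac12|$, and (iii) two-point empirical-CDF concentration---tracks the paper's own argument (Lemma~\ref{lem:a1}, Proposition~\ref{prop:cdf_diff}, and Lemma~\ref{lem:quant_ub} playing the corresponding roles) quite closely. Step~1 is sound, and using $|F''|\le c_2$ on the $\sqrt\epsilon$-window to propagate the density lower bound outward from $\mu$ is a reasonable route.

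The genuine gap is in Step~2, not in Step~1 as your ``main obstacle'' paragraph suggests. With $m = 4(c_2+1)/(c_1\epsilon)$ and $A=\{|\hat Z|\le\sqrt\epsilon\}$, you get $\sqrt{\epsilon m}=2\sqrt{(c_2+1)/c_1}$, a fixed constant, so $\P(A^c)$ is a fixed constant that does not shrink as $\epsilon\to 0$. A constant-probability tail is harmless in Step~1, where you only need $\tilde F' \ge (\text{const})\cdot c_1$. But in Step~2 the contribution of $A^c$ to $|\tilde F(\mu)-\tfrac12|$ is bounded only by $\P(A^c)$ (on $A^c$ you have no Taylor control, only the trivial $|F(\mu-\hat Z)-\tfrac12|\le\tfrac12$), so it is $\Theta(1)$, not the claimed $o(\epsilon)$, and for small $\epsilon$ it swamps the target $c_1\epsilon/8$. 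Symmetry of $\hat Z$ removes the first-order term but gives no cancellation of the zeroth-order piece restricted to $A^c$, and the bound $|F''|\le c_2$ is unavailable there, so nothing rescues the estimate as written.

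The fix is exactly what the paper does in Proposition~\ref{prop:cdf_diff}: truncate at $|\hat Z| \le k\, m^{-1/2}$ with $k = \Theta(\sqrt{\log m})$ rather than $k=\Theta(1)$, which drives the tail probability down to $O(1/m) = O(\epsilon)$ and thus makes the $A^c$ contribution $O(1/m)$, comparable to the Taylor remainder $c_2/(2m)$. The price is that $|F''|\le c_2$ is then needed on a window of width $\Theta(\sqrt{(\log m)/m})=\Theta(\sqrt{\epsilon\log(1/\epsilon)})$---a logarithmic widening the $\lesssim\sqrt\epsilon$ in Assumption~\ref{assum:med} is implicitly taken to absorb. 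With that change Step~2 closes, and Step~3 goes through modulo a constant mismatch between $32\log(4/\delta)$ and the stated $28\log(1/\delta)$.
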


\subsubsection{Maximum}
For maximum estimation, we require an assumption on the tail of $F$ as is common in the infinite-armed bandit literature.
\begin{assum}\label{assum:max}
% Let $\eta=\mathrm{max}(F) = F^{-1}(1)$.
There exist constants $0<c_1<c_2$ and $\beta>0$ such that
\begin{itemize}
    \item $1-F(F^{-1}(1)-t)\in[c_1 t^\beta,c_2 t^\beta]$, for all $0\leq t\lesssim \epsilon$.
\end{itemize}
\end{assum}
This assumption is also known as the $\beta$-regularity of $F$ around $F^{-1}(1)$, see \citep{wang2008algorithms}.  We present a suitable choice of $(n,m)$ in the following proposition.
\begin{prop}\label{prop:max_ub}
Suppose that Assumption \ref{assum:max} holds. By choosing $n\geq c_1^{-1}2^\beta \epsilon^{-\beta}\log(2/\delta)$ and $m\geq 4\epsilon^{-2}\log(2n/\delta)$, the estimator $G_{n,m}$ is an $(\epsilon,\delta)$-PAC approximation of $g(F)$. Therefore, the offline algorithm requires $O(\epsilon^{-\beta-2})$ samples.
\end{prop}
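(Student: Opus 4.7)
The plan is to decompose the error via the triangle inequality:
\[
|G_{n,m}-F^{-1}(1)| \le \bigl(F^{-1}(1)-X_{(n)}\bigr) + |G_{n,m}-X_{(n)}|,
\]
where $X_{(n)}=\max_{i\le n} X_i$ is the (unobserved) empirical maximum of the latent draws. Since $\alpha_1=\alpha_2=1$ makes $S_n=\{i^\star\}$ the singleton containing the argmax, we have $G_{n,m}=\hat X_{i^\star}=X_{(n)}+\hat Z_{i^\star}$ with $\hat Z_i\sim \mcN(0,1/m)$ independent of $X_1,\dots,X_n$. I will control each of the two terms by $\varepsilon/2$ with probability at least $1-\delta/2$ and conclude via a union bound. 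The roles of $n$ and $m$ separate cleanly: $n$ governs the extreme-value (population) gap, while $m$ governs the per-arm noise.

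For the extreme-value gap, Assumption~\ref{assum:max} provides the lower tail bound $\Pr(X\ge F^{-1}(1)-\varepsilon/2)\ge c_1(\varepsilon/2)^\beta$ for small enough $\varepsilon$. Since the $X_i$'s are i.i.d.,
\[
\Pr\bigl(X_{(n)}<F^{-1}(1)-\varepsilon/2\bigr) \le \bigl(1-c_1(\varepsilon/2)^\beta\bigr)^n \le \exp\bigl(-nc_1(\varepsilon/2)^\beta\bigr),
\]
which is at most $\delta/2$ precisely when $n\ge c_1^{-1}2^\beta\varepsilon^{-\beta}\log(2/\delta)$, matching the stated threshold exactly. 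Note that only the lower bound $c_1 t^\beta$ in Assumption~\ref{assum:max} is used here; the upper bound $c_2 t^\beta$ is not required for the sample complexity guarantee.

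For the noise gap, note that $|G_{n,m}-X_{(n)}|=|\hat Z_{i^\star}|$. Although bounding $|\hat Z_{i^\star}|$ for this single (random) index would suffice in principle, I take a union bound over all $n$ arms so that the analysis also covers the natural practical variant that returns $\max_i \hat X_i$. The Gaussian tail bound $\Pr(|\hat Z_i|>\varepsilon/2)\le 2\exp(-m\varepsilon^2/8)$ combined with a union bound yields $\Pr(\max_i|\hat Z_i|>\varepsilon/2)\le 2n\exp(-m\varepsilon^2/8)$, which is at most $\delta/2$ as soon as $m=\Theta\bigl(\varepsilon^{-2}\log(n/\delta)\bigr)$, matching the stated $m\geq 4\varepsilon^{-2}\log(2n/\delta)$ up to absolute constants. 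Multiplying $n$ and $m$ then yields the advertised total sample complexity $O\bigl(\varepsilon^{-(2+\beta)}\log^2(1/\delta)\bigr)$, i.e., $\tilde O(\varepsilon^{-(2+\beta)})$ as in \Cref{tab:uniform}.

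The main (minor) obstacle is bookkeeping rather than analysis: one must split the error budget $\varepsilon$ and failure budget $\delta$ between the extreme-value and noise events, and reconcile the one-sided versus two-sided Gaussian tails to land on the precise coefficients in the stated $n$ and $m$. There is no deeper obstruction—this is a standard best-arm-identification argument specialized to the regular extreme-value regime, with Assumption~\ref{assum:max} entering only through the lower tail bound that forces at least one of the $n$ draws near the boundary $F^{-1}(1)$.
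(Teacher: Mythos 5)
Your proof is correct and follows essentially the same route as the paper's: split the error into an extreme-value term $F^{-1}(1)-X_{(n)}$, controlled via the lower tail bound of Assumption~\ref{assum:max} and the i.i.d. structure of the $X_i$, and a noise term controlled via a Gaussian tail bound and a union bound over the $n$ arms (the paper phrases this as $\P(|X_i - \hat X_i|\le\epsilon/2)$ for all $i$, which is identical to your $\max_i|\hat Z_i|\le\epsilon/2$). Your constant bookkeeping is in fact cleaner than the paper's, which contains a small typo in the Gaussian tail step; only the observation that $X_{(n)}\le F^{-1}(1)$ deterministically (so the extreme-value gap is one-sided) is left implicit in your triangle inequality, but it holds trivially.
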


\subsection{Online estimation algorithm}

We now present our general algorithm (\Cref{alg:genalg}), an elimination-based $(\epsilon,\delta)$-PAC algorithm that efficiently estimates $g(F)$, where $g$ is a known input functional and $F$ is an unknown distribution from which we are able to sample $X_i$ independently, and observe noisy observations $Y_{i,j}$ of $X_i$. % as modeled.
%The goal of this algorithm is to utilize as input some information about the distribution $F$ to improve the requisite sample complexity beyond the na\"ive frequentist approach.

In order to exploit the Bayesian nature of the problem, we analyze the algorithm in two parts.
First, we use the fact that our arms are drawn from a common distribution to find some $n,m$ as in \Cref{thm:gen_uni} such that the plug-in estimator $G_{n,m}$ will be an $(\epsilon/2,\delta/2)$-PAC approximation of $g(F)$.
Second, we show that our adaptive algorithm is an $(\epsilon/2,\delta/2)$-PAC approximation of $G_{n,m}$, but is able to accomplish this using significantly fewer than $n\times m$ samples.

Notationally, we denote by $\hat{\mu}(r)$ the estimated mean vector of all arms at round $r$, and denote the $i$-th entry of this vector by $\hat{\mu}_i(r)$. 
We have that with high probability each arm's mean estimate stays within its width $b_r=2^{-r}$ confidence interval for each round $r$.
To analyze our algorithm, we denote $\mu_1^\mathrm{uni},\dots,\mu_n^\mathrm{uni}$ as the estimates of $X_1,\dots,X_n$ generated by the offline algorithm after sampling each arm $m$ times.
Then, we see that for the offline algorithm the arms relevant for the estimation task and the corresponding $n$ sample estimator are
\begin{equation}
    S_n = \left\{ i : \mu_{(\lfloor \alpha_1 n\rfloor)}^\mathrm{uni} \le \mu_i^\mathrm{uni} \le \mu_{(\lfloor \alpha_2 n\rfloor)}^\mathrm{uni}\right\}, \quad G_{n,m} = \frac{1}{|S_n|} \sum_{i\in S_n} \mu_i^\mathrm{uni}.
\end{equation}
Here $S_n$ indicates the arms that the offline algorithm believes are in $S$.
We show that our online algorithm is able to efficiently estimate the set $S_n$ as $\hat S_n$,
determining whether or not arms are in $S_n$, sampling these arms in $S_n$ sufficiently, and returning a plug-in estimator.
By construction each arm is only pulled by the adaptive algorithm at most $m$ times, as we know from the analysis of the offline algorithm that for the utilized $n,m$, if each arm is pulled $m$ times then the output is an $(\epsilon/2,\delta/2)$-PAC estimate of $g(F)$.
Thus, the online algorithm's objective is essentially emulating the output of the offline algorithm, for which it only needs to sample any arm at most $m$ times.

\begin{algorithm}[t]
\begin{algorithmic}[1]
\caption{\label{alg:genalg} \texttt{Meta Algorithm}}
\State \textbf{Input:} target accuracy $\epsilon$, error probability $\delta$, functional $g$ parameterized by $(\alpha_1,\alpha_2)$%, \info
\State Compute $(n,m)$ for $(\epsilon/2,\delta/2)$-PAC estimation of $g(F)$ based on \Cref{thm:gen_uni}
\State Construct active set $A_1=[n]$, and define $b_0=1$ and $t_0=0$
% \State Define $b_r=2^{-r}$ and $t_r=\min(m,\lceil8\ConWid^{-2}\log(16n\log(m)/\delta)\rceil)$ for $r \in \N$ %\jt{I think something is off. For example, you may want to explicitly define $t_0 = 0$}
\For{$r=1,2,\dots$}
\State Define $b_r=2^{-r}$ and $t_r=\min(m,\lceil8\ConWid^{-2}\log(16n\log(m)/\delta)\rceil)$
\State Pull each arm in $A_r$ for $t_r-t_{r-1}$ times, construct $\hat{\mu}(r)$% to reach accuracy $\ConWid/2=2^{-(r+1)}$ \jt{not clear how you achieved it}
% \State Compute $\hat{\mu}(r)$
% \State Set $\hat{\eta}_1=\hat{\mu}_{(\flr{\alpha_1 n})}(r), \hat{\eta}_2=\hat{\mu}_{(\flr{\alpha_2 n})}(r)$ %\jt{again you only used $\lfloor$}
\State Compute $A_{r+1}=\left\{ i : |\hat{\mu}_i(r)-\hat{\mu}_{(\flr{\alpha_1 n})}(r)| \le b_r \text{ or }  |\hat{\mu}_i(r)-\hat{\mu}_{(\flr{\alpha_2 n})}(r)| \le b_r \right\}$
\If{$t_r\equiv m$}
\State \textbf{Break}, exit For loop
\EndIf
\EndFor
\State Construct $\hat{S}_n = \{i : \hat{\mu}_{(\flr{\alpha_1 n})}(r) \le \hat{\mu}_i(r) \le \hat{\mu}_{(\flr{\alpha_2 n})}(r)  \}$
\If{$\alpha_1==\alpha_2$}
\State \Return $\frac{1}{|\hat{S}_n|} \sum_{i \in \hat{S}_n} \hat{\mu}_i(r)$
\Else
\State Draw one observation from each $i \in \hat{S}_n$, construct $\tilde{\mu}_i$
\State \Return $\frac{1}{|\hat{S}_n|} \sum_{i \in \hat{S}_n} \tilde{\mu}_i$
\EndIf
\end{algorithmic}
\end{algorithm}

Note that when $\alpha_1\neq \alpha_2$, we have many samples $X_i$ that are within $S$, with $|S_n| \ge \flr{n(\alpha_2-\alpha_1)}$.
In order to avoid issues of dependence, we discard all previous samples (as arms in $\hat{S}_n$ will have been sampled different numbers of times), and see that since we have $\Theta(n)$ arms in $\hat{S}_n$ we can construct a sufficiently accurate estimate by sampling each arm in $\hat S_n$ once.
Algorithmically, we denote this as obtaining one fresh observation and constructing $\tilde{\mu}_i$.

% \clearpage

To upper bound the sample complexity of our algorithm, we see that each arm only needs to be sampled to determine whether it is in $S_n$ or not.
As we show in \Cref{app:ubProofs}, the number of samples $N(i)$ needed for point $X_i$ satisfies
\begin{equation}
    N(i) \le \min\left(m, \frac{256\log \left(\frac{16n \log m}{\delta} \right)}{\left[\text{dist}(X_i,\partial\; \text{Conv}(\{{\mu}^\mathrm{uni}_i:i\in S_n\}))\right]^2}\right),
\end{equation}
with probability at least $1-\delta/4$ for all arms simultaneously, where $\partial A$ denotes the boundary of a set $A$, $\text{Conv}(A)$ denotes the convex hull of a set $A$, and $\text{dist}(X,A) = \min_{a \in A} |X-a|$. In the limit as $\epsilon \to 0$ we show that $\mu_{(\flr{\alpha_1n})}^\textrm{uni} \to F^{-1}(\alpha_1)$ (similarly with $\mu_{(\flr{\alpha_2n})}^\textrm{uni}$).
This allows us to state the following theorem regarding the expected sample complexity of \Cref{alg:genalg} with respect to the distribution's relevant set of values $S$ rather than the estimated indices $S_n$.

\begin{thm}[Meta algorithm]\label{thm:gen_ada}
%\jt{$M$ is not defined so we don't even know what this statement means}
For a functional $g$ satisfying \Cref{assum:gs}, \Cref{alg:genalg} provides an $(\epsilon,\delta)$-PAC estimate of $g(F)$ with $M$ samples when given the requisite inputs. Here $m$ and $n$ are calculated as in Theorem \ref{thm:gen_uni}, and the number of samples $M$ required satisfies
\begin{equation}
    \E[M]
    = O\left( n\E\left[ \min\left(m, \frac{\log(n/\delta)}{\left[\textnormal{dist}(X,\partial S)\right]^2} \right)\right]\right).
\end{equation}
\end{thm}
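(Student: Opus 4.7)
The plan is to couple Algorithm~\ref{alg:genalg} with the offline estimator $G_{n,m}$ of Theorem~\ref{thm:gen_uni} run on the \emph{same} latent samples $X_1,\dots,X_n\sim F$. Because line 2 calls the offline rule with target accuracy $\epsilon/2$ and confidence $\delta/2$, Theorem~\ref{thm:gen_uni} already guarantees that $G_{n,m}$ is an $(\epsilon/2,\delta/2)$-PAC estimate of $g(F)$. It thus suffices to show two things: (i) with probability at least $1-\delta/2$ the algorithm's output lies within $\epsilon/2$ of $G_{n,m}$, and (ii) the expected total pull count has the claimed form.

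First I would introduce the ``good event''
\[
\mathcal{E}=\bigcap_{r\ge 1}\bigcap_{i\in A_r}\bigl\{|\hat\mu_i(r)-X_i|\le b_r/2\bigr\}.
\]
By sub-Gaussian concentration of an average of $t_r$ unit-variance Gaussian errors, the choice $t_r=\lceil 8 b_r^{-2}\log(16n\log m/\delta)\rceil$ on line 5, and a union bound over the at most $n\log_2 m$ relevant (arm, round) pairs, $\P(\mathcal{E}^c)\le \delta/4$. On $\mathcal{E}$ a triangle-inequality argument shows that each empirical order statistic $\hat\mu_{(\lfloor\alpha_j n\rfloor)}(r)$ also lies within $b_r/2$ of the true $X_{(\lfloor\alpha_j n\rfloor)}$, so the elimination rule keeps every arm whose latent value is within $3b_r$ of one of the two boundary points and discards every arm that is $>3b_r$ away from both. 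Chaining across rounds yields $\hat S_n=S_n$ on $\mathcal{E}$.

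Next I would translate ``$\hat S_n=S_n$'' into proximity to $G_{n,m}$. When $\alpha_1=\alpha_2$ (quantile case) $|\hat S_n|=1$ and the surviving arm's estimate is already within $b_r\le\epsilon/2$ of $X_i=G_{n,m}$. When $\alpha_1<\alpha_2$, the conditional means $\E[\hat\mu_i(r)\mid i\in\hat S_n]$ are biased because survival in $\hat S_n$ depends on $\hat\mu_i(r)$ itself; line 15 sidesteps this by discarding all prior samples and drawing one fresh observation $\tilde\mu_i$ per surviving arm. Since $|\hat S_n|=\Theta(n)$ with $n$ chosen per Theorem~\ref{thm:gen_uni}, Hoeffding's inequality on the $|S_n|$ fresh unit-variance observations places their mean within $\epsilon/2$ of $|S_n|^{-1}\sum_{i\in S_n}X_i=G_{n,m}$ with probability at least $1-\delta/4$. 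Combining this with the offline PAC guarantee for $G_{n,m}$ yields the $(\epsilon,\delta)$-PAC correctness.

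For the sample complexity, on $\mathcal{E}$ any arm $i$ kept in $A_{r+1}$ must satisfy $|X_i-X_{(\lfloor\alpha_j n\rfloor)}|\le 3 b_r$ for some $j\in\{1,2\}$, so arm $i$ is eliminated at the first round with $b_r\le \tfrac13\,\dist\bigl(X_i,\partial\,\mathrm{Conv}\{X_{(\lfloor\alpha_1 n\rfloor)},X_{(\lfloor\alpha_2 n\rfloor)}\}\bigr)$. At that round it has received $t_r=O\!\bigl(b_r^{-2}\log(n\log m/\delta)\bigr)$ pulls, capped at $m$. Summing over $i$ and taking expectation over the i.i.d. $X_i$'s, the empirical quantiles concentrate at $F^{-1}(\alpha_j)$, so the random convex hull converges to $S$ and the per-arm bound promotes to $\E[\min(m,\log(n/\delta)/\dist(X,\partial S)^2)]$. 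The main obstacle is this final promotion step: one must justify replacing the random boundary $\partial\,\mathrm{Conv}\{X_{(\lfloor\alpha_1 n\rfloor)},X_{(\lfloor\alpha_2 n\rfloor)}\}$ by the deterministic $\partial S$ inside the expectation without the $\dist^{-2}$ factor blowing up near the boundary. This requires a separate order-statistic concentration argument showing that $|X_{(\lfloor\alpha_j n\rfloor)}-F^{-1}(\alpha_j)|=\tilde O(n^{-1/2})$ together with the mild density regularity of $F$ at $F^{-1}(\alpha_j)$ from Assumptions~\ref{assum:mean}--\ref{assum:max}, so that the random fluctuation of the boundary contributes only a lower-order correction to the integral.
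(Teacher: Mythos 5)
Your proposal follows the paper's overall two–part architecture (couple to the offline $G_{n,m}$, good-event elimination analysis, fresh-sampling for the trimmed-mean case, per-arm pull bound, then a promotion from the random boundary to $\partial S$), but there are two concrete problems in the correctness half of the argument.

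\textbf{Anchoring of the good event.} You define $\mathcal{E}=\bigcap_{r,i}\{|\hat\mu_i(r)-X_i|\le b_r/2\}$, i.e. you control deviations of $\hat\mu_i(r)$ from the \emph{latent} value $X_i$. The paper instead defines $\xi_1=\bigcap_{r,i}\{|\hat\mu_i(r)-\mu_i^{\mathrm{uni}}|\le b_r\}$, controlling deviations from the \emph{offline $m$-sample estimate} $\mu_i^{\mathrm{uni}}$. This is not cosmetic. The set $S_n$ that defines $G_{n,m}$ is determined by the \emph{ordering of the $\mu_i^{\mathrm{uni}}$'s}, not of the $X_i$'s: $S_n=\{i:\mu_{(\lfloor\alpha_1 n\rfloor)}^{\mathrm{uni}}\le\mu_i^{\mathrm{uni}}\le\mu_{(\lfloor\alpha_2 n\rfloor)}^{\mathrm{uni}}\}$. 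On your event $\mathcal{E}$ you can only conclude that the algorithm recovers a partition based on the $X_i$ ordering, which may disagree with the $\mu^{\mathrm{uni}}$ ordering for arms near the boundary. So the claim ``chaining yields $\hat S_n=S_n$'' does not follow from $\mathcal{E}$; it does follow from $\xi_1$. Once you anchor at $\mu_i^{\mathrm{uni}}$, the coupling (same underlying sample matrix) makes the equality $\hat S_n=S_n$ provable and — crucially — gives exact equality $\hat G=G_{n,m}$ in the $\alpha_1=\alpha_2$ case.

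\textbf{The $b_r\le\epsilon/2$ claim is false.} In the $\alpha_1=\alpha_2$ case you argue the surviving arm's estimate is ``within $b_r\le\epsilon/2$ of $X_i=G_{n,m}$.'' Two issues: (i) $G_{n,m}=\mu_i^{\mathrm{uni}}$, not $X_i$; (ii) the algorithm terminates when $t_r=m$, at which point $b_r\asymp\sqrt{\log(n/\delta)/m}$. For the median $m=\Theta(\epsilon^{-1})$, so $b_r\asymp\sqrt{\epsilon\log(1/\delta)}\gg\epsilon/2$; the inequality simply does not hold. The paper avoids this entirely: at termination the surviving arm has been pulled exactly $m$ times, so $\hat\mu_i(T)=\mu_i^{\mathrm{uni}}=G_{n,m}$, and no $b_r$-proximity argument is needed for correctness.

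The sample-complexity half of your sketch is structurally right and you correctly flag the ``promotion'' from the random boundary to $\partial S$ as the delicate step. The paper's implementation conditions on a secondary event $\xi_2$ where $|\mu^{\mathrm{uni}}_{(\lfloor\alpha_j n\rfloor)}-F^{-1}(\alpha_j)|=O(\sqrt{\log(1/\delta)/m})$, uses the $\min(m,\cdot)$ cap to neutralize the region where $\mathrm{dist}(X,\partial S_n)\lesssim\sqrt{\log(1/\delta)/m}$ (there the bound is trivially $m$), and uses the triangle inequality elsewhere; the $\P(\xi_2^c)nm$ contribution is absorbed because $\P(\xi_2^c)\le\delta/8$. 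Your proposed order-statistic concentration at rate $\tilde O(n^{-1/2})$ would work too (and is slightly stronger since $n\ge m$), but since you leave it as an acknowledged obstacle rather than carrying it out, and the correctness half contains the two errors above, the proof as written has genuine gaps.
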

\noindent The proof of this Theorem is deferred to \Cref{app:ubProofs}.

Evaluating this expression for different functionals under their corresponding assumptions yields the stated sample complexity upper bounds, as we show in \Cref{app:adaCorr}.

\section{Lower bounds via Wasserstein distance}\label{sec:lbs}
In this section we derive general lower bounds on the sample complexity of functional estimation for both offline and online algorithms, where two different Wasserstein distances play important roles. These Wasserstein-based lower bounds yield tight results for mean and maximum estimation. 

\subsection{General lower bounds based on Wasserstein distance}
A classical technique for proving minimax lower bounds is Le Cam's two-point method \citep{le2000asymptotics}: let $F_1$ and $F_2$ be two distributions with $|g(F_1) - g(F_2)|\ge 2\varepsilon$, and let $p_{\pi, F_1}$ and $p_{\pi, F_2}$ be the probability distributions of all observations queried by policy $\pi$ under the true population distributions $F_1$ and $F_2$, respectively. One version of Le Cam's two-point lower bound \cite[Theorem 2.2]{Tsybakov2009introduction} gives 
\begin{align*}
\inf_{\widehat{g}}\sup_{F\in \{F_1, F_2\}} \mathbb{P}_{F}(|\widehat{g} - g(F)|\ge \varepsilon) \ge \frac{1}{4}\exp\left(-D_{\text{KL}}(p_{\pi,F_1} \| p_{\pi,F_2})\right). 
\end{align*}
Consequently, to construct a lower bound on the PAC sample complexity of estimating $g(F)$, it suffices to find the largest $\varepsilon$ such that there exist $F_1, F_2$ with $|g(F_1) - g(F_2)|\ge 2\varepsilon$ while $D_{\text{KL}}(p_{\pi, F_1}\|p_{\pi, F_2})=O(1)$. 

A key step in the above analysis is to upper bound the KL divergence $D_{\text{KL}}(p_{\pi,F_1}\|p_{\pi,F_2})$, which differs significantly between offline and online algorithms. For offline algorithms, the learner samples $n$ arms i.i.d. from $F$ with average rewards $X_1,\cdots,X_n\sim F$, and each arm is pulled $m$ times with Gaussian observations. Consequently, $p_{\pi, F} = %(\mbE_{X\sim F}[\mcN(X,1/m)])^{\otimes n} 
= (F*\mcN(0,1/m))^{\otimes n}$, where $p^{\otimes n}$ denotes the $n$-fold product distribution and $*$ denotes the convolution operation. The following lemma presents an upper bound on the KL divergence for offline algorithms. 

\begin{lem}\label{lemma:Wasserstein_2}
For any offline algorithm $\pi$ defined in Section \ref{ssec:offline}, it holds that
\begin{align*}
D_{\text{\rm KL}}(p_{\pi,F_1}\| p_{\pi, F_2}) \le \frac{mn}{2}\mcW_2^2(F_1, F_2),
\end{align*}
where $\mcW_2(P,Q)$ is the Wasserstein-2 distance defined as $\mcW_2^2(P,Q)=\inf_{\gamma\in \Gamma}\mbE_{(X,Y)\sim \Gamma}[(X-Y)^2]$, with $\Gamma$ being the class of all couplings between $P$ and $Q$. 
\end{lem}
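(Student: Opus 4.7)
The plan is to combine three ingredients: tensorization of KL across independent arms, a sufficiency argument reducing each arm's $m$ pulls to its empirical mean, and the standard Wasserstein--KL inequality for Gaussian convolutions.

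\textbf{Step 1 (tensorization).} Since the $n$ arms are drawn i.i.d.\ from $F$ and the Gaussian noise is independent across both arms and pulls, $p_{\pi,F} = q_F^{\otimes n}$, where $q_F$ denotes the joint law of the $m$ observations of a single arm. KL divergence tensorizes under product measures, so $D_{\text{KL}}(p_{\pi,F_1}\|p_{\pi,F_2}) = n\cdot D_{\text{KL}}(q_{F_1}\|q_{F_2})$.

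\textbf{Step 2 (sufficiency).} For a single arm, the empirical mean $\hat X := m^{-1}\sum_{j=1}^m Y_j$ is a sufficient statistic for the latent $X$: writing $(Y_1,\dots,Y_m) = \hat X\cdot \mathbf 1 + R$, the residual $R$ is the projection of the Gaussian noise onto the hyperplane $\{\mathbf 1^\top r = 0\}$ and is independent of $\hat X$ with a law that does not involve $F$. Substituting this factorization into the KL integral, the residual factor cancels and one obtains $D_{\text{KL}}(q_{F_1}\|q_{F_2}) = D_{\text{KL}}(\tilde F_1\|\tilde F_2)$, where $\tilde F_i := F_i * \mcN(0,1/m)$.

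\textbf{Step 3 (Wasserstein bound).} For any coupling $\gamma$ of $(F_1,F_2)$, express $\tilde F_1$ and $\tilde F_2$ as mixtures driven by the same mixing measure $\gamma$:
\[
\tilde F_1 = \int \mcN(x_1,1/m)\,d\gamma(x_1,x_2), \qquad \tilde F_2 = \int \mcN(x_2,1/m)\,d\gamma(x_1,x_2).
\]
By joint convexity of KL divergence,
\[
D_{\text{KL}}(\tilde F_1\|\tilde F_2) \le \int D_{\text{KL}}\bigl(\mcN(x_1,1/m)\,\big\|\,\mcN(x_2,1/m)\bigr)\,d\gamma(x_1,x_2) = \frac{m}{2}\,\E_\gamma\bigl[(X_1-X_2)^2\bigr],
\]
using the closed-form KL between Gaussians of common variance $1/m$. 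Taking the infimum over couplings gives $(m/2)\mcW_2^2(F_1,F_2)$; combined with Step 1 this yields the lemma.

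The only delicate step is the joint-convexity inequality in Step 3. I would justify it explicitly by building $(X_1,X_2,Y)$ on a common probability space with $(X_1,X_2)\sim \gamma$ and, under the two models, $Y\mid(X_1,X_2)\sim \mcN(X_1,1/m)$ versus $\mcN(X_2,1/m)$. Since the $(X_1,X_2)$-marginals agree, the chain rule identifies the joint KL with $\E_\gamma[D_{\text{KL}}(\mcN(X_1,1/m)\|\mcN(X_2,1/m))]$, and the data processing inequality applied to the projection $(X_1,X_2,Y)\mapsto Y$ then yields the stated bound. No other step presents real difficulty; the argument is essentially bookkeeping around this one inequality.
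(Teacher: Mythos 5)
Your proof is correct and follows essentially the same route as the paper's: tensorize KL over the $n$ i.i.d.\ arms, bound the per-arm KL of the Gaussian-convolved laws by joint convexity under an arbitrary coupling, use the closed-form Gaussian KL, and infimize. Your Step 2 makes explicit a sufficiency reduction that the paper silently absorbs into the identity $p_{\pi,F}=(F*\mcN(0,1/m))^{\otimes n}$, which is a reasonable bit of added rigor but not a different argument.
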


% \jt{somewhere we should say that this guarantee is even true if we want to allocate different sampling budgets for each arm, but currently the statement is misleading since you are claiming it is true for any offline algorithm but you even have not defined what is the family of offline algorithms! }

For online algorithms the distribution $p_{\pi, F}$ is no longer a product distribution as actions can depend on past observations. As a result, the KL divergence becomes larger, but still enjoys an upper bound based on another Wasserstein distance. 
\begin{lem}\label{lemma:Wasserstein_infty}For any online algorithm $\pi$ which queries $T$ samples, it holds that
\begin{align*}
D_{\text{\rm KL}}(p_{\pi,F_1}\| p_{\pi, F_2}) \le \frac{T}{2}\mcW_\infty^2(F_1, F_2),
\end{align*}
where $\mcW_\infty(P,Q)$ is the Wasserstein-$\infty$ distance: $\mcW_\infty(P,Q)=\inf_{\gamma\in \Gamma}\mathit{\rm esssup}_{(X,Y)\sim\Gamma}|X-Y|$, with $\Gamma$ being the class of all couplings between $P$ and $Q$.
\end{lem}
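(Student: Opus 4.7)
The plan is to combine the chain rule for KL divergence with a coupling argument based on the $\mathcal{W}_\infty$-optimal transport of $F_1$ and $F_2$. Decomposing across the $T$ time steps yields
\begin{align*}
D_{\text{KL}}(p_{\pi, F_1} \| p_{\pi, F_2}) = \sum_{t=1}^T \mathbb{E}_{Y_{1:t-1}\sim p_{\pi, F_1}}\bigl[D_{\text{KL}}(p_{\pi, F_1}(Y_t|Y_{1:t-1}) \,\|\, p_{\pi, F_2}(Y_t|Y_{1:t-1}))\bigr].
\end{align*}
For a deterministic policy the action $a_t$ is a function of the history; randomized policies reduce to this by conditioning on external randomness and invoking joint convexity of KL. Conditioning on the history fixes a distribution $\mu_t^{(j)}$ on the selected arm's latent value under $F_j$ (either $F_j$ itself for a fresh arm, or a Bayesian posterior), and the conditional law of $Y_t$ is $\mu_t^{(j)} * \mathcal{N}(0,1)$.

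For each step, the Gaussian-shift coupling bound (joint convexity of KL plus the closed-form $D_{\text{KL}}(\mathcal{N}(x,1)\|\mathcal{N}(y,1))=(x-y)^2/2$) gives $D_{\text{KL}}(\mu_t^{(1)}*\mathcal{N}(0,1) \,\|\, \mu_t^{(2)}*\mathcal{N}(0,1)) \le \tfrac12 \mathbb{E}_{\tilde\gamma}[(X-Y)^2]$ for any coupling $\tilde\gamma$ of $\mu_t^{(1)},\mu_t^{(2)}$. For fresh-arm steps, the $\mathcal{W}_\infty$-optimal coupling of $F_1,F_2$ (with $|X-Y|\le W:=\mathcal{W}_\infty(F_1,F_2)$ a.s.) yields the per-step bound $W^2/2$ directly. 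A cleaner parallel route, which makes the summation transparent, is to use that for a deterministic policy the joint likelihood factorizes across arms: $p_{\pi,F_j}(Y_{1:T}) = \prod_i f_j^{(T_i)}(\bar Y_i)\cdot g(\text{residuals})$, where $f_j^{(\tau)}$ is the density of $F_j*\mathcal{N}(0,1/\tau)$, $T_i$ and $\bar Y_i$ are the per-arm count and sample mean, and $g$ does not depend on $j$. The $g$-factor cancels in the log-likelihood ratio, so $D_{\text{KL}} = \sum_i \mathbb{E}_{F_1}[\log(f_1^{(T_i)}(\bar Y_i)/f_2^{(T_i)}(\bar Y_i))]$, and each term, for fixed $\tau$, equals $D_{\text{KL}}(F_1*\mathcal{N}(0,1/\tau)\|F_2*\mathcal{N}(0,1/\tau))\le \tau W^2/2$ by the same coupling argument applied with variance $1/\tau$. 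Summing $\sum_i T_i = T$ delivers the target $(T/2)W^2$.

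The main obstacle is precisely the adaptive dependence of $T_i$ (or equivalently of $\mu_t^{(j)}$) on the full trajectory. On individual history realizations the posterior $\mathcal{W}_\infty$-distance can exceed $W$ -- e.g.\ when a rare observation greatly sharpens one posterior while leaving the other nearly unchanged -- so the per-step inequality cannot be applied pointwise. The fix is to interpret the bound in expectation over the $F_1$-history (which compensates for the rare large-KL events), or, in the factorization form, to apply an optional-stopping/Wald-type identity to the log-likelihood-ratio process restricted to arm $i$'s sub-filtration so as to bound the per-arm contribution by $\mathbb{E}[T_i]W^2/2$. Once this adaptivity is dispatched, summing either per-arm or per-step gives $D_{\text{KL}}(p_{\pi, F_1}\|p_{\pi, F_2})\le (T/2)\mathcal{W}_\infty^2(F_1,F_2)$ as claimed.
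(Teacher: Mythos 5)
Your chain-rule-over-time decomposition and the per-arm factorization are genuinely different routes from the paper's, and you correctly pinpoint the obstacle — conditioning on history yields posteriors whose $\mcW_\infty$ separation can exceed $\mcW_\infty(F_1,F_2)$ on individual realizations, so the per-step $W^2/2$ bound cannot be applied pointwise. But your proposed repairs (``interpret in expectation,'' ``optional-stopping/Wald identity on arm $i$'s sub-filtration'') are sketches, not arguments: the per-arm LLR process is not a martingale-plus-drift of the required form, because the increments from a single arm are exchangeable rather than i.i.d.\ given the unknown latent, and $T_i$ is a stopping time in the \emph{global} filtration, so arm $i$'s LLR is not adapted to any natural sub-filtration on which you can invoke Wald. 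You have identified the crux but not crossed it.

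The paper sidesteps the adaptivity issue entirely by coupling the latents \emph{before} running the policy. Pick any coupling $\gamma$ of $F_1,F_2$ and write $p_{\pi,F_j} = \E_{\{X_i\}\sim F_j}[p_\pi(\cdot\mid\{X_i\})]$; joint convexity of KL gives
\begin{align*}
D_{\rm KL}(p_{\pi,F_1}\|p_{\pi,F_2}) \le \E_{\gamma}\bigl[D_{\rm KL}\bigl(p_\pi(\cdot\mid\{X_i\})\,\big\|\,p_\pi(\cdot\mid\{X_i'\})\bigr)\bigr].
\end{align*}
Now the latents are fixed on both sides; the policy factors $p_\pi(a_t\mid\text{hist})$ cancel in the log-ratio, and what remains is $\sum_{t=1}^T \log\frac{\varphi(y_t-X_{a_t})}{\varphi(y_t-X'_{a_t})}$, whose conditional expectation given $a_t$ is exactly $(X_{a_t}-X'_{a_t})^2/2$. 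Since $|X_i-X_i'|\le \mcW_\infty(F_1,F_2)$ almost surely under the (near-)optimal coupling, each of the $T$ pulls contributes at most $W^2/2$ regardless of which arm is chosen, and the bound follows directly without ever touching posteriors, sufficient statistics, or stopping times. This is strictly cleaner than your route: the obstacle you flag simply never arises, because the observation noise is i.i.d.\ $\mcN(0,1)$ in both worlds once the latents are conditioned on.
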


As $\mcW_2(P,Q)\le \mcW_\infty(P,Q)$, the upper bound of Lemma \ref{lemma:Wasserstein_infty} is no smaller than that of Lemma \ref{lemma:Wasserstein_2}, showing the stronger power of online algorithms. The following corollary is then immediate from Lemmas \ref{lemma:Wasserstein_2} and \ref{lemma:Wasserstein_infty}. 

\begin{cor}\label{corollary:Wasserstein}
The sample complexity of $(\epsilon,.1)$-PAC estimation of $g(F)$ is 
$$
\Omega(1/\min\{\mcW_2^2(F_1,F_2): F_1, F_2\in \mathcal{F}, |g(F_1)-g(F_2)|\ge 2\varepsilon\})
$$
for offline algorithms, and is 
$$
\Omega(1/\min\{\mcW_\infty^2(F_1,F_2): F_1, F_2\in \mathcal{F}, |g(F_1)-g(F_2)|\ge 2\varepsilon\})
$$
for online algorithms.
\end{cor}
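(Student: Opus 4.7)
The plan is to directly combine Le Cam's two-point method (already recalled in the preamble of this section) with the two Wasserstein-based KL bounds of Lemmas \ref{lemma:Wasserstein_2} and \ref{lemma:Wasserstein_infty}. No new ingredient is needed; the corollary is essentially a rearrangement of those three facts.

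First, I would fix an arbitrary pair $F_1,F_2\in\mathcal{F}$ with $|g(F_1)-g(F_2)|\ge 2\varepsilon$. Applying Le Cam's two-point bound stated at the beginning of Section \ref{sec:lbs}, for any policy $\pi$ and estimator $\widehat{g}$,
\begin{equation*}
\max_{i\in\{1,2\}}\P_{F_i}\bigl(|\widehat{g}-g(F_i)|\ge \varepsilon\bigr)\ge \tfrac{1}{4}\exp\bigl(-D_{\text{KL}}(p_{\pi,F_1}\|p_{\pi,F_2})\bigr).
\end{equation*}
If $\pi$ together with $\widehat{g}$ yields an $(\varepsilon,0.1)$-PAC estimator on $\mathcal{F}$, the left-hand side is at most $0.1$, which forces $D_{\text{KL}}(p_{\pi,F_1}\|p_{\pi,F_2})\ge \log(5/2)=\Omega(1)$.

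Next, I plug in the two divergence bounds. For an offline policy $\pi$ drawing $n$ arms with $m$ pulls each, Lemma \ref{lemma:Wasserstein_2} gives $D_{\text{KL}}(p_{\pi,F_1}\|p_{\pi,F_2})\le \tfrac{mn}{2}\mcW_2^2(F_1,F_2)$; combined with the $\Omega(1)$ lower bound above, this yields $mn=\Omega(1/\mcW_2^2(F_1,F_2))$. For an online policy $\pi$ using $T$ total queries, Lemma \ref{lemma:Wasserstein_infty} analogously yields $T=\Omega(1/\mcW_\infty^2(F_1,F_2))$. Since these lower bounds hold for every admissible pair $(F_1,F_2)$, I optimize by taking the infimum of the right-hand sides over all $F_1,F_2\in\mathcal{F}$ with $|g(F_1)-g(F_2)|\ge 2\varepsilon$, which gives the two displayed expressions in the corollary.

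There is no genuine obstacle here; the only points that require a small amount of care are (i) translating the confidence level $0.1$ appearing in the corollary's hypothesis into the absolute constant $\log(5/2)$ on the KL side (any constant strictly less than $1/4$ would suffice and only affects the hidden constant in $\Omega$), and (ii) remembering that the infimum is taken over pairs of feasible distributions, so the lower bound is only meaningful when the two-point family is nontrivial. Both are cosmetic, so the proof essentially reduces to two lines of substitution.
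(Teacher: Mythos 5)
Your argument is correct and matches the route the paper intends (the paper states the corollary as ``immediate'' from Lemmas \ref{lemma:Wasserstein_2} and \ref{lemma:Wasserstein_infty}): Le Cam's two-point bound forces $D_{\text{KL}}(p_{\pi,F_1}\|p_{\pi,F_2})\ge\log(5/2)$ for any $(\varepsilon,0.1)$-PAC procedure, and plugging in the two Wasserstein bounds gives $mn=\Omega(1/\mcW_2^2(F_1,F_2))$ for offline and $T=\Omega(1/\mcW_\infty^2(F_1,F_2))$ for online, valid for every admissible pair. The one slip is in the final optimization sentence: since those bounds hold for \emph{every} admissible pair, the sharpest bound is obtained by taking the \emph{supremum} of the right-hand sides over pairs (equivalently, the infimum of $\mcW_2^2$, resp.\ $\mcW_\infty^2$, in the denominator), not the infimum of the right-hand sides as you wrote; the expression you ultimately display is nonetheless the correct one, so the conclusion is unaffected.
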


In the remainder of this section, we show that Corollary \ref{corollary:Wasserstein} leads to tight lower bounds for mean and maximum estimations for both offline and online settings. 

\subsection{Lower bounds for mean estimation}
Consider two distributions $F_1$ and $F_2$ which are Dirac masses supported on $1/2 - \varepsilon$ and $1/2 + \varepsilon$, respectively. Clearly $\mcW_2(F_1,F_2)=\mcW_\infty(F_1,F_2)=2\varepsilon$, which is the best possible as $\mcW_2(F_1,F_2)\ge |\mathit{\rm mean}(F_1)-\mathit{\rm mean}(F_2)|\ge 2\varepsilon$. Corollary \ref{corollary:Wasserstein} gives the following lower bounds. 
\begin{cor}\label{corollary:lowerbound_mean}
The $(\varepsilon,.1)$-PAC sample complexity for mean estimation is $\Omega(\varepsilon^{-2})$ for both offline and online algorithms. 
\end{cor}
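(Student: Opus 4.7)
}
The plan is to apply Corollary~\ref{corollary:Wasserstein} directly using the two-point construction proposed before the corollary. First I would verify the three ingredients of Le Cam's two-point method with $F_1 = \delta_{1/2 - \varepsilon}$ and $F_2 = \delta_{1/2 + \varepsilon}$: the functional gap satisfies $|g(F_1) - g(F_2)| = |\mathrm{mean}(F_1) - \mathrm{mean}(F_2)| = 2\varepsilon$, so the separation requirement in Corollary~\ref{corollary:Wasserstein} is met. Second, since both $F_1$ and $F_2$ are Dirac measures, the unique coupling is the product of the two point masses, so both $\mcW_2(F_1, F_2)$ and $\mcW_\infty(F_1, F_2)$ equal $|(1/2-\varepsilon)-(1/2+\varepsilon)| = 2\varepsilon$. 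Substituting this into Corollary~\ref{corollary:Wasserstein} yields lower bounds of $\Omega(1/(2\varepsilon)^2) = \Omega(\varepsilon^{-2})$ for both the offline and the online settings.

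Next, to confirm that this construction is not suboptimal, I would check that the two-point pair actually saturates the minimum in the numerators of Corollary~\ref{corollary:Wasserstein}. For any two distributions with mean gap at least $2\varepsilon$, Jensen's inequality applied to the coupling marginals gives
\[
\mcW_2^2(F_1, F_2) \;=\; \inf_{\gamma\in\Gamma}\mbE_{(X,Y)\sim\gamma}[(X-Y)^2] \;\ge\; \bigl(\mathrm{mean}(F_1) - \mathrm{mean}(F_2)\bigr)^2 \;\ge\; (2\varepsilon)^2,
\]
and the standard chain $\mcW_2 \le \mcW_\infty$ extends this lower bound to $\mcW_\infty$. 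Hence the Dirac construction realizes the minimum, so no alternative choice of $F_1, F_2$ can improve the rate obtained through Corollary~\ref{corollary:Wasserstein}.

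There is no serious technical obstacle here; the content lies in the observation that $\mcW_2$ and $\mcW_\infty$ coincide on point-mass pairs. This coincidence is precisely the reason why adaptivity confers no asymptotic advantage for mean estimation: the separation between the two Wasserstein distances, which drives the gap between online and offline rates for quantile-, maximum-, and trimmed-mean-type functionals, collapses to zero in the mean case. I would close by writing out the resulting bound on the PAC probability, using $\frac{1}{4}\exp(-D_{\text{KL}}(p_{\pi,F_1}\|p_{\pi,F_2})) \ge 0.1$ whenever $mn\varepsilon^2$ (offline) or $T\varepsilon^2$ (online) is a sufficiently small absolute constant, yielding the claimed $\Omega(\varepsilon^{-2})$ sample complexity.
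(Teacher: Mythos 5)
Your proof is correct and matches the paper's approach essentially verbatim: both use the two Dirac masses at $1/2\pm\varepsilon$, observe that $\mcW_2 = \mcW_\infty = 2\varepsilon$ for this pair, and invoke Corollary~\ref{corollary:Wasserstein} to obtain the $\Omega(\varepsilon^{-2})$ lower bound in both settings. Your additional observation that $\mcW_2(F_1,F_2)\ge|\mathrm{mean}(F_1)-\mathrm{mean}(F_2)|$ shows this two-point pair is extremal, which the paper also notes in passing.
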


\subsection{Lower bounds for maximum estimation}
For maximum estimation, the Wasserstein distances $\mcW_2$ and $\mcW_\infty$ behave differently, as summarized in the following lemma. Let $\mathcal{F}_\beta$ be the class of densities satisfying Assumption \ref{assum:max}. 
\begin{lem}\label{lemma:lowerbound_maximum}
For $\varepsilon\in (0,1/2)$, it holds that
\begin{align*}
\min\{\mcW_2(F_1, F_2): F_1, F_2 \in \mathcal{F}_\beta, |\mathit{\rm max}(F_1) - \mathit{\rm max}(F_2)|\ge 2\varepsilon\} &= O(\varepsilon^{\beta/2+1}); \\
\min\{\mcW_\infty(F_1, F_2): F_1, F_2 \in \mathcal{F}_\beta, |\mathit{\rm max}(F_1) - \mathit{\rm max}(F_2)|\ge 2\varepsilon\} &= O(\varepsilon); \\
\min\{D_{\text{\rm KL}}(F_1 \| F_2): F_1, F_2 \in \mathcal{F}_\beta, |\mathit{\rm max}(F_1) - \mathit{\rm max}(F_2)|\ge 2\varepsilon\} &= O(\varepsilon^\beta). 
\end{align*}
\end{lem}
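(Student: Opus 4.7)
The plan is to exhibit an explicit pair $(F_1, F_2) \in \mathcal{F}_\beta$ with $|\max(F_1)-\max(F_2)|=2\varepsilon$ that simultaneously witnesses all three bounds. The target scalings force the construction: $F_1$ and $F_2$ must differ only on a mass of $\Theta(\varepsilon^\beta)$ (matching the KL and $\mcW_2$ scalings) situated within $O(\varepsilon)$ of the two maxima (matching $\mcW_\infty$ and the $\mcW_2$ transport distance). An additional constraint is $\supp(F_1)\subseteq\supp(F_2)$, needed to keep $D_{\text{KL}}(F_1\|F_2)$ finite, which rules out any pure-shift construction.

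Concretely, I would fix a common ``body'' probability density $g$ supported on $[-1,-1/2]$, let $h(x)=\beta(-x-2\varepsilon)^{\beta-1}\varepsilon^{-\beta}\mathbf{1}_{[-3\varepsilon,-2\varepsilon]}(x)$ be the $\beta$-shaped tail peaked at $-2\varepsilon$, and $h'(x)=\beta(-x)^{\beta-1}\varepsilon^{-\beta}\mathbf{1}_{[-\varepsilon,0]}(x)$ its analogue peaked at $0$. With $p=\varepsilon^\beta/2$, define
\[ F_1 = (1-2p)\,g + 2p\cdot h, \qquad F_2 = (1-2p)\,g + p\cdot h + p\cdot h'. \]
A direct check yields $\max(F_1)=-2\varepsilon$, $\max(F_2)=0$, and both satisfy $1-F_i(F_i^{-1}(1)-t) \asymp t^\beta$ for $t\in[0,\varepsilon]$ with constants $c_1=1/2$, $c_2=1$, so both lie in $\mathcal{F}_\beta$. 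By construction $\supp(F_1)\subseteq\supp(F_2)$, keeping the KL divergence finite.

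The three bounds then reduce to routine computations. For $\mcW_\infty$ and $\mcW_2$, consider the coupling that is the identity on the body $(1-2p)\,g$ and on the shared $p\cdot h$ piece of the tail, and transports the remaining $p\cdot h$ mass of $F_1$ on $[-3\varepsilon,-2\varepsilon]$ to the $p\cdot h'$ mass of $F_2$ on $[-\varepsilon,0]$. Under this coupling the displacement is almost surely at most $3\varepsilon$, giving $\mcW_\infty\le 3\varepsilon = O(\varepsilon)$, while the quadratic transport cost is at most $p\cdot(3\varepsilon)^2 = O(\varepsilon^{\beta+2})$, giving $\mcW_2=O(\varepsilon^{\beta/2+1})$. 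For the KL, $f_1$ and $f_2$ agree on the body, satisfy $f_1/f_2=2$ on $[-3\varepsilon,-2\varepsilon]$, and $f_1\equiv 0$ on $[-\varepsilon,0]$, so $D_{\text{KL}}(F_1\|F_2) = \int_{-3\varepsilon}^{-2\varepsilon} 2p\cdot h(x)\log 2 \, dx = 2p\log 2 = O(\varepsilon^\beta)$. The main subtlety is the asymmetric design: placing a ``half-copy'' of $F_1$'s tail shape inside $F_2$ is precisely what guarantees support containment (hence small KL) without introducing any long-range transport that would inflate either Wasserstein cost.
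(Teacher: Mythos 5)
Your construction is correct and takes a genuinely different route from the paper. The paper's proof of this lemma uses \emph{three separate} extremal pairs: for $\mcW_2$, a pair built by composing the Beta-like quantile function with a cubic interpolation $G^{-1}$; for $\mcW_\infty$, a pure shift $F_1(x)=1-(1-x)^\beta$ versus $F_2(x)=1-(1-x+\varepsilon)^\beta$, coupled by $(X,X+\varepsilon)$; and for KL, the truncated-and-renormalized pair $F_1(x)=\frac{1-(1-x)^\beta}{1-\varepsilon^\beta}$ on $[0,1-\varepsilon]$ against $F_2(x)=1-(1-x)^\beta$ on $[0,1]$, bounded via a reverse Pinsker inequality. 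You instead exhibit a \emph{single} pair $(F_1,F_2)$ — common body plus two $\beta$-tail lumps of mass $\varepsilon^\beta$ arranged to enforce $\supp F_1\subseteq\supp F_2$ — and read off all three bounds from the same discrepancy structure, each by a one-line calculation (explicit coupling for the two Wasserstein bounds, direct integration for the KL). This is arguably cleaner and more transparent: it isolates the source of all three scalings as "a $\Theta(\varepsilon^\beta)$ chunk of mass moves $\Theta(\varepsilon)$," and it makes explicit why a pure shift cannot work for the KL bound. The paper's approach is more piecemeal but avoids the need to verify the regularity assumption for a bespoke construction.

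One small repair: with the body $g$ supported on $[-1,-1/2]$, the tail supports $[-3\varepsilon,-2\varepsilon]$ and $[-\varepsilon,0]$ are disjoint from it only when $\varepsilon<1/6$, whereas the lemma is stated for $\varepsilon\in(0,1/2)$; move the body to, say, $[-2,-3/2]$. Also note the natural coupling on the moving piece is the translation $X\mapsto X+2\varepsilon$ (since $h'(x)=h(x-2\varepsilon)$), giving displacement exactly $2\varepsilon$ rather than the worst-case $3\varepsilon$ you quote — immaterial for the $O(\cdot)$ conclusion, but worth stating since it also gives the explicit optimal transport map.
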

Note that we have included another term $D_{\text{KL}}(F_1 \| F_2)$ in Lemma \ref{lemma:lowerbound_maximum} as it can provide a better lower bound than using $\mcW_\infty$ if $\beta \ge 2$, as $D_{\text{\rm KL}}(p_{\pi,F_1}\| p_{\pi, F_2}) \le T\cdot D_{\text{KL}}(F_1\|F_2)$ always holds due to the data-processing inequality (i.e. assuming that all arm rewards are independent). Consequently, we have the following corollary on the sample complexity of maximum estimation.
\begin{cor}\label{corollary:lowerbound_maximum}
The $(\varepsilon, .1)$-PAC sample complexity for maximum estimation over $\mathcal{F}_\beta$ is $\Omega(\varepsilon^{-(\beta+2)})$ for offline algorithms, and $\Omega(\varepsilon^{-\max\{\beta,2\}})$ for online algorithms.
\end{cor}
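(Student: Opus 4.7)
The plan is to derive the corollary as a direct consequence of Corollary~\ref{corollary:Wasserstein} combined with the estimates of Lemma~\ref{lemma:lowerbound_maximum}, supplemented by one additional data-processing argument to handle the $\beta \ge 2$ regime online. I handle the two cases in turn.

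For the \textbf{offline bound}, I would simply invoke the $\mcW_2$ estimate of Lemma~\ref{lemma:lowerbound_maximum}: there exist $F_1, F_2 \in \mathcal{F}_\beta$ with $|\max(F_1)-\max(F_2)|\ge 2\varepsilon$ and $\mcW_2(F_1,F_2) = O(\varepsilon^{\beta/2+1})$. Plugging this into the offline half of Corollary~\ref{corollary:Wasserstein} yields the sample-complexity lower bound $\Omega(\mcW_2^{-2}) = \Omega(\varepsilon^{-(\beta+2)})$ immediately.

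For the \textbf{online bound}, I would produce two independent lower bounds and take the larger. The first comes from plugging $\mcW_\infty(F_1,F_2) = O(\varepsilon)$ into the online half of Corollary~\ref{corollary:Wasserstein}, giving $\Omega(\varepsilon^{-2})$. The second comes from the tensorization estimate $D_{\text{\rm KL}}(p_{\pi,F_1}\|p_{\pi,F_2}) \le T \cdot D_{\text{\rm KL}}(F_1\|F_2)$ for any $T$-query adaptive policy: every observation can be written as a measurable function of the (at most $T$) independent $F$-distributed latent arm rewards touched so far and of independent Gaussian noise, so the chain rule together with the data-processing inequality bounds the per-sample KL contribution by $D_{\text{\rm KL}}(F_1\|F_2)$. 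Combining with $D_{\text{\rm KL}}(F_1\|F_2) = O(\varepsilon^\beta)$ from Lemma~\ref{lemma:lowerbound_maximum} and Le Cam's two-point bound then yields $\Omega(\varepsilon^{-\beta})$. The maximum of the two is $\Omega(\varepsilon^{-\max\{\beta,2\}})$, as claimed.

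The \textbf{main obstacle} is not the corollary itself, which is a routine composition of the preceding results, but rather Lemma~\ref{lemma:lowerbound_maximum}, which demands an explicit pair $(F_1, F_2)$ attaining all three estimates simultaneously. A natural candidate is to take $F_1$ with density proportional to $\beta(1-x)^{\beta-1}$ in a right neighborhood of its endpoint $1$ (so $F_1 \in \mathcal{F}_\beta$), and $F_2$ a truncated/shifted version supported on $[0, 1-2\varepsilon]$ that agrees with $F_1$ outside a right-tail strip of width $\Theta(\varepsilon)$. The mass that must be transported is $\Theta(\varepsilon^\beta)$ across displacement $\Theta(\varepsilon)$, giving $\mcW_\infty = O(\varepsilon)$ and $\mcW_2^2 = O(\varepsilon^{\beta+2})$; the KL estimate $O(\varepsilon^\beta)$ then follows once the densities are matched carefully on the strip so that the log-likelihood ratio stays bounded wherever mass is supported. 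This density-matching step is the one place where care is needed, since a naive construction leaves unbounded likelihood ratios near the right endpoint of $F_1$.
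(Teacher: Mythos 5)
Your derivation of the corollary itself is correct and takes essentially the same route as the paper: the offline $\Omega(\varepsilon^{-(\beta+2)})$ bound follows by plugging the $\mcW_2 = O(\varepsilon^{\beta/2+1})$ estimate into Corollary~\ref{corollary:Wasserstein}, and the online bound is the larger of the $\mcW_\infty$-based $\Omega(\varepsilon^{-2})$ bound and the $\Omega(\varepsilon^{-\beta})$ bound obtained from $D_{\text{\rm KL}}(p_{\pi,F_1}\|p_{\pi,F_2}) \le T\cdot D_{\text{\rm KL}}(F_1\|F_2)$ (data-processing, using that the adaptive policy can touch at most $T$ i.i.d.\ latent arms) together with $D_{\text{\rm KL}}(F_1\|F_2)=O(\varepsilon^\beta)$ and Le Cam's two-point bound. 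This matches the paper's reasoning exactly.

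One small caution about your final paragraph: Lemma~\ref{lemma:lowerbound_maximum} does \emph{not} demand a single pair $(F_1,F_2)$ achieving all three estimates simultaneously --- it consists of three separate minimizations, and the paper's proof indeed uses three different constructions (one each for $\mcW_2$, $\mcW_\infty$, and KL). So the density-matching concern you raise at the end of your proposal, while relevant to proving the lemma, is moot for its use in the corollary, since you are free to pick the optimizing pair separately for each distance. Nothing in your actual derivation of the corollary relies on a single shared pair, so this does not affect its correctness.
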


\section{Lower bounds via thresholding phenomenon}\label{sec:medLB}
Although the Wasserstein distance-based approach in Section \ref{sec:lbs} provides general lower bounds for both offline and online algorithms, and these lower bounds are tight for mean and maximum estimation, sometimes this approach can be loose. For example, Lemma \ref{lemma:lowerbound_maximum} shows that using the $\mcW_\infty$ distance might be looser than using the original KL divergence for maximum estimation. This section provides tight lower bounds for median estimation, revealing a curious thresholding phenomenon. 

\subsection{Thresholding phenomenon for offline algorithms}
Let $\mcF$ denote the set of distributions satisfying Assumption \ref{assum:med}. To use Le Cam's two-point method to prove lower bounds for offline algorithms for median estimation, the key quantity is:  %\jt{needs to give a reference to $\mathcal{F}$ and the reader does not know what it is }
\begin{align*}
    \mathrm{KL}_\sigma(\varepsilon) := \min\{D_{\text{KL}}(F_1 * \mcN(0,\sigma^2)\| F_2*\mcN(0,\sigma^2) ): F_1, F_2\in \mathcal{F}, |F_1^{-1}(1/2) - F_2^{-1}(1/2)|\ge 2\varepsilon\}.   
\end{align*}
Its inverse $\mathrm{KL}_\sigma^{-1}(\epsilon)$ is %often 
referred to as the modulus of smoothness of the median with respect to the KL divergence under Gaussian convolution. The Wasserstein-based approach to upper bound $\mathrm{KL}_\sigma(\varepsilon)$ in Lemma \ref{lemma:Wasserstein_2} is the following: let $\mcW_{2,\sigma}(\varepsilon)$ be the counterpart of the above quantity with the KL divergence replaced by the Wasserstein-2 distance, Lemma \ref{lemma:Wasserstein_2} shows that
\begin{align}
\mathrm{KL}_\sigma(\varepsilon) \le \frac{\mcW_{2,\sigma}(\varepsilon)^2}{2\sigma^2} = \Theta\left(\frac{\varepsilon^{2.5}}{\sigma^2}\right), 
\end{align}
% \jt{why this important equation does not have a number? }
an upper bound decreasing continuously with $\sigma$, where the proof of the last identity is presented in the Appendix. However, this upper bound is not tight, as shown in the following lemma. 

\begin{lem}\label{lemma:median_offline}
For $\varepsilon\in (0,1/4)$, $\mathrm{KL}_\sigma(\varepsilon)$ can be characterized as follows: 
\begin{align*}
\mathrm{KL}_\sigma(\varepsilon) \begin{cases}
    \in [C_1\varepsilon^2, C_2\varepsilon^2] &\text{if } \sigma \le c\varepsilon^{1/2}, \\
    \le C(\theta, \kappa) \varepsilon^\kappa &\text{if } \sigma \ge \varepsilon^{1/2-\theta},
\end{cases}
\end{align*}
where $\theta\in (0,1/4), \kappa\in \mathbb{N}$ are arbitrary fixed parameters, and $c, C_1, C_2, C(\theta, \kappa)$ are absolute constants with the last one depending only on $(\theta, \kappa)$. 
\end{lem}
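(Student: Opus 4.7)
The lemma makes two distinct claims: a two-sided bound $\mathrm{KL}_\sigma(\varepsilon)\asymp \varepsilon^2$ in the small-noise regime $\sigma\le c\varepsilon^{1/2}$, and a one-sided upper bound $\mathrm{KL}_\sigma(\varepsilon)\le C(\theta,\kappa)\varepsilon^\kappa$ for arbitrarily large $\kappa$ once $\sigma\ge \varepsilon^{1/2-\theta}$. I would split the proof into three pieces: a lower bound and an upper bound in the small-noise regime, plus a moment-matching construction for the large-noise regime. The main obstacle is the last piece: the naive location-shift construction only yields $O(\varepsilon^2/\sigma^2)$, which cannot exceed any polynomial rate, so one needs a perturbation specifically engineered to be heavily suppressed by Gaussian smoothing.

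For the lower bound in the small-noise regime, I would use an information-theoretic argument combining Le Cam's method with an achievability result. Fix any $F_1,F_2\in\mathcal{F}$ with $|m_1-m_2|\ge 2\varepsilon$. A second-order Taylor expansion of $G_i=F_i*\phi_\sigma$ using Assumption \ref{assum:med} shows that $g_i$ has median $\tilde m_i$ with $|\tilde m_i-m_i|=O(\sigma^2)$ and density $g_i(\tilde m_i)\ge c_1/2$, so for $\sigma\le c\varepsilon^{1/2}$ with $c$ small enough we have $|\tilde m_1-\tilde m_2|\ge \varepsilon$. The sample median of $n$ i.i.d.\ observations from $g_i$ then concentrates at rate $1/\sqrt{n\,g_i(\tilde m_i)^2}$ by DKW, so the test that assigns the hypothesis whose median is closer to the sample median has error at most $1/4$ once $n\ge C_0/\varepsilon^2$. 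Combining this with the Le Cam bound $\mathrm{TV}(G_1^{\otimes n},G_2^{\otimes n})\ge 1/2$ and Pinsker's inequality $\mathrm{TV}^2\le n\,\mathrm{KL}(G_1\|G_2)/2$ gives $\mathrm{KL}_\sigma(\varepsilon)\ge C_1\varepsilon^2$.

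For the matching upper bound in the small-noise regime, I would take a smooth $f_0\in\mathcal{F}$ centered at $0$ with finite Fisher information $I(f_0)$ (e.g., a smoothly mollified uniform) and set $F_1(x)=F_0(x-\varepsilon)$, $F_2(x)=F_0(x+\varepsilon)$, both in $\mathcal{F}$ with medians separated by $2\varepsilon$. Since $g_i=(f_0*\phi_\sigma)(\cdot\mp\varepsilon)$ is a pure location shift, the standard second-order expansion of KL under shifts and the monotonicity of Fisher information under independent noise give
\begin{equation*}
\mathrm{KL}(g_1\|g_2)=2\varepsilon^2\,I(f_0*\phi_\sigma)+O(\varepsilon^3)\le 2\varepsilon^2\,I(f_0)+O(\varepsilon^3),
\end{equation*}
so $\mathrm{KL}_\sigma(\varepsilon)\le 3I(f_0)\varepsilon^2$ for small $\varepsilon$.

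The hard part is the large-noise upper bound. Pick a smooth compactly supported $\psi:[-1,1]\to\mathbb{R}$ with $k+1$ vanishing moments $\int u^j\psi(u)\,du=0$ for $0\le j\le k$, and with $b:=\int_{-1}^0\psi(u)\,du\neq 0$ (such $\psi$ exist because these constraints carve out a nonempty open subset of an infinite-dimensional space). With a fixed $f_0\in\mathcal{F}$ centered at $m_0$, define
\begin{equation*}
f_2(x)=f_0(x)+A\,\psi\!\left(\frac{x-m_0}{\eta}\right),\qquad \eta=c_\eta\varepsilon^{1/2},
\end{equation*}
and determine $A$ from the median-shift equation $A\eta\,b=2\varepsilon f_0(m_0)+O(\varepsilon^2)$, yielding $A=\Theta(\varepsilon^{1/2})$. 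For $c_\eta$ chosen sufficiently large (independent of $\varepsilon$), the bounds $A\|\psi\|_\infty=O(\varepsilon^{1/2})$ and $A\|\psi'\|_\infty/\eta=O(1)$ ensure $f_2\ge 0$, $|f_2'|\le c_2$, and $F_2\in\mathcal{F}$. Applying Plancherel with the vanishing-moment bound $|\tilde\psi(\omega)|\lesssim|\omega|^{k+1}$ near the origin,
\begin{equation*}
\|g_1-g_2\|_2^2=\frac{(A\eta)^2}{2\pi\eta}\int|\tilde\psi(\omega)|^2\,e^{-\omega^2\sigma^2/\eta^2}\,d\omega\lesssim \frac{\varepsilon^{k+3}}{\sigma^{2k+3}}.
\end{equation*}
Since $g_2=f_0*\phi_\sigma\gtrsim 1/\sigma$ on a $\sigma$-neighborhood of $m_0$ (which carries essentially all the mass of $g_1-g_2$), $\chi^2(g_1\|g_2)\lesssim \sigma\|g_1-g_2\|_2^2$, yielding $\mathrm{KL}_\sigma(\varepsilon)\lesssim \varepsilon^{k+3}/\sigma^{2k+2}$. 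Substituting $\sigma\ge \varepsilon^{1/2-\theta}$ gives $\mathrm{KL}_\sigma(\varepsilon)\lesssim \varepsilon^{2+\theta(2k+2)}$, and choosing $k\ge (\kappa-2)/(2\theta)-1$ closes the proof. The subtle points are the derivative bound, which forces $\eta\gtrsim \sqrt\varepsilon$ and thereby costs powers of $\varepsilon$ in the bound, and the $L^2$-to-KL conversion in the Gaussian tails, where one must argue that $|g_1-g_2|/g_2$ stays controlled by the matched exponential decay of both numerator and denominator.
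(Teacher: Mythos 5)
Your three-piece decomposition mirrors the paper's proof structure (small-noise lower bound, small-noise upper bound, large-noise upper bound via a moment-matched perturbation), but the technical execution differs in each piece, and the route is genuinely distinct.

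For the small-noise regime, the paper's upper bound is obtained by the explicit pair $F_1 = \mathcal N(0,1)$, $F_2 = \mathcal N(3\varepsilon,1)$ together with the data-processing inequality, which is considerably more economical than your location-shift plus Fisher-information argument (though yours is correct and more general, at the cost of having to verify monotonicity of Fisher information under convolution and a second-order expansion). For the small-noise lower bound, the paper argues directly: using Proposition \ref{prop:cdf_diff} it shows $F_1*\varphi_{\sigma^2}(0) - F_2*\varphi_{\sigma^2}(0)=\Omega(\varepsilon)$ for any valid pair, and then applies Pinsker once. Your route goes through an achievability construction (DKW concentration of the sample median of $n\asymp\varepsilon^{-2}$ observations), Le Cam, Pinsker, and the tensorization of KL. It arrives at the same conclusion, and the constants are uniform over $\mathcal F$ since everything is driven by $c_1,c_2$, so this is a valid alternative — but it is more roundabout, and you should make explicit that the density lower bound $g_i(\tilde m_i)\gtrsim c_1$ genuinely survives convolution at scale $\sigma\le c\sqrt\varepsilon$, which requires using both parts of Assumption \ref{assum:med} (the $\sqrt\varepsilon$-neighborhood smoothness, not just the $\varepsilon$-neighborhood density lower bound) and choosing $c$ small so that the Gaussian at scale $\sigma$ puts mass $\approx 1$ inside the $\sqrt\varepsilon$-window.

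For the large-noise regime the two approaches construct essentially the same perturbation (a compactly supported bump of width $\Theta(\sqrt\varepsilon)$, amplitude $\Theta(\sqrt\varepsilon)$, with $k$ vanishing moments; this is the paper's Lemma \ref{lem:construct}). Where they diverge is in converting the moment-matching into a KL bound. The paper uses a Hermite expansion of the Gaussian kernel (Lemma \ref{lem:pointwise}) to obtain a \emph{pointwise} bound $\lvert\log(p*\varphi_{\sigma^2}/q*\varphi_{\sigma^2})\rvert \le C\varepsilon(\zeta/\sigma)^{k+2}$ on all of $\mathbb R$, which makes the subsequent KL bound immediate and handles the tails automatically. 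Your proposal instead goes via Plancherel to an $L^2$ bound $\|g_1-g_2\|_2^2\lesssim \varepsilon^{k+3}/\sigma^{2k+3}$ and then invokes $\chi^2\lesssim \sigma\|g_1-g_2\|_2^2$. The Plancherel computation is correct and clean, but the $L^2$-to-$\chi^2$ step is precisely where your write-up has a genuine hole: the ratio $|g_1-g_2|/g_2$ must be bounded uniformly, and you only assert (without proof) that the matched Gaussian tails take care of it. This is not automatic — the claim hinges on the support of $f_1-f_2$ lying strictly inside the support of $f_2$ (so that the numerator's Gaussian tail is strictly faster), and one must carry this through quantitatively. The paper's pointwise lemma is exactly the device that closes this gap; I would either prove an analogue of it or make the tail-comparison argument explicit before this proof is complete. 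Apart from that acknowledged-but-open step, your derivation of $\mathrm{KL}\lesssim\varepsilon^{2+2\theta(k+1)}$ and the final choice $k\ge(\kappa-2)/(2\theta)-1$ match the paper's rate.
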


Lemma \ref{lemma:median_offline} shows a thresholding phenomenon as follows: when $\sigma$ increases from $0$ to $1$, the quantity $\mathrm{KL}_\sigma(\varepsilon)$ stabilizes at $\Theta(\varepsilon^2)$ whenever $\sigma \lesssim \varepsilon^{1/2}$; however, when $\sigma$ exceeds this threshold slightly (i.e. $\sigma\gtrsim \varepsilon^{1/2-\theta}$ for any constant $\theta>0$), this quantity immediately drops to $o(\varepsilon^\kappa)$ for every possible $\kappa$. The main intuition behind this thresholding phenomenon is that, if $\sigma = O(\varepsilon^{1/2})$, the ``bandwidth'' of $F_1-F_2$ exceeds that of $\mcN(0,\sigma^2)$, and the convolution is effectively using $\mcN(0,\sigma^2)$ as a Gaussian kernel (which preserves polynomials up to order $2$) %\jt{is it for order 2? I thought for nonnegative kernels it is order 1? } \yifei{Using the symmetricity of the Gaussian kernel, it is for order 2}
for smoothing $F_1 - F_2$ (which is second-order differentiable). In contrast, when $\sigma \gg \varepsilon^{1/2}$, the ``bandwidth'' of $F_1-F_2$ could be smaller than $\mcN(0,\sigma^2)$, and the convolution is effectively using $F_1 - F_2$ as a kernel (which could preserve polynomials up to any desired order) for smoothing $\mcN(0,1)$ (which is infinitely differentiable). Approximation theory tells us that the latter approximation error could be much smaller than the former, leading to the thresholding phenomenon. We remark that this phenomenon is not captured by using the $\mcW_2$ distance. 

This thresholding behavior has an important consequence for median estimation. By Lemma \ref{lemma:median_offline} with $\sigma={1/\sqrt{m}}$, PAC learning requires that $m = \Omega(\varepsilon^{2\theta-1})$ %\jt{you need to say $\sigma = 1/\sqrt{m}$ somewhere otherwise nobody can understand this reasoning} 
for any offline algorithm, as otherwise the KL divergence could be made arbitrarily small. When $m$ is large enough, the first line of Lemma \ref{lemma:median_offline} then requires $n=\Omega(\varepsilon^{-2})$ %\jt{this step is again not mentioned in the past. It is about the KL total written as $n KL_\sigma$, but please write it down more explicitly. } 
to result in a large KL divergence for PAC learning, which comes from the idendity that
$$
\DKL(P^{\otimes n}\|Q^{\otimes n})=n\DKL(P\|Q).
$$
Consequently, we have the following corollary for median estimation using offline algorithms. 

\begin{cor}\label{corollary:lowerbound_median_offline}
Fix any $\theta>0$. The $(\varepsilon,.1)$-PAC sample complexity for median estimation is $\Omega(\varepsilon^{-3+\theta})$ for any offline algorithm.
\end{cor}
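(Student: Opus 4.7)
The plan is to apply Le Cam's two-point method together with the thresholding characterization of the Gaussian-convolved KL divergence provided by Lemma \ref{lemma:median_offline}. For any offline uniform-sampling algorithm with $n$ arms each pulled $m$ times, the observation distribution is $p_{\pi,F} = (F * \mcN(0,1/m))^{\otimes n}$, so the KL divergence tensorizes:
\begin{equation*}
D_{\text{KL}}(p_{\pi, F_1} \| p_{\pi, F_2}) = n \cdot D_{\text{KL}}(F_1 * \mcN(0, 1/m) \| F_2 * \mcN(0, 1/m)).
\end{equation*}
Minimizing the right-hand side over pairs $F_1, F_2 \in \mcF$ with $|F_1^{-1}(1/2) - F_2^{-1}(1/2)| \ge 2\varepsilon$ gives exactly $n \cdot \mathrm{KL}_{1/\sqrt{m}}(\varepsilon)$. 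The Le Cam inequality then forces any $(\varepsilon, 0.1)$-PAC algorithm to satisfy $n \cdot \mathrm{KL}_{1/\sqrt{m}}(\varepsilon) \gtrsim 1$, so the remaining work is to lower bound $T = nm$ subject to this constraint.

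I would then split on the size of $m$. In Case 1, if $m \le \varepsilon^{2\theta - 1}$, then $\sigma = 1/\sqrt{m} \ge \varepsilon^{1/2 - \theta}$, and the second line of Lemma \ref{lemma:median_offline} gives $\mathrm{KL}_\sigma(\varepsilon) \le C(\theta, \kappa) \varepsilon^\kappa$ for any fixed $\kappa \in \mathbb{N}$. Choosing $\kappa = 3$ yields $n \gtrsim \varepsilon^{-3}$, hence $T \ge n \gtrsim \varepsilon^{-3}$, which already dominates $\varepsilon^{-3+2\theta}$. In Case 2, if $m > \varepsilon^{2\theta-1}$, I would first establish monotonicity of $\mathrm{KL}_\sigma(\varepsilon)$ in $\sigma$: for $\sigma' > \sigma$, writing $\mcN(0,\sigma'^2) = \mcN(0,\sigma^2) * \mcN(0, \sigma'^2 - \sigma^2)$ and applying the data-processing inequality pairwise, then taking the minimum over pairs, yields $\mathrm{KL}_{\sigma'}(\varepsilon) \le \mathrm{KL}_\sigma(\varepsilon)$. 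Combined with the first line of Lemma \ref{lemma:median_offline}, this produces $\mathrm{KL}_{1/\sqrt{m}}(\varepsilon) \le C_2 \varepsilon^2$ both directly when $1/\sqrt{m} \le c\varepsilon^{1/2}$ and via monotonicity in the intermediate window $c\varepsilon^{1/2} < 1/\sqrt{m} < \varepsilon^{1/2-\theta}$. Hence $n \gtrsim \varepsilon^{-2}$, and $T = nm \gtrsim \varepsilon^{-2} \cdot \varepsilon^{2\theta-1} = \varepsilon^{-3+2\theta}$.

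Both cases give $T = \Omega(\varepsilon^{-3+2\theta})$; reparameterizing $2\theta \leftarrow \theta$ (which still ranges over all positive reals) yields the stated $\Omega(\varepsilon^{-3+\theta})$ bound. The hard work is entirely absorbed into Lemma \ref{lemma:median_offline}: once the thresholding behavior of $\mathrm{KL}_\sigma(\varepsilon)$ is in hand, what remains is a short case split, with the only subtle bookkeeping being the monotonicity step used to cover the intermediate $\sigma$ regime not directly described by either line of the lemma.
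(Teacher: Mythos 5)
Your proof is correct and follows the same two-regime case split the paper sketches after Lemma \ref{lemma:median_offline}: if $m$ is too small then $\sigma=1/\sqrt{m}$ lands in the second regime and the KL collapses, forcing $n$ to be large; if $m$ is large enough the first regime forces $n\gtrsim\varepsilon^{-2}$, and multiplying gives $\Omega(\varepsilon^{-3+2\theta})$. You do go beyond the paper's informal reasoning in one useful place: Lemma \ref{lemma:median_offline} only describes $\mathrm{KL}_\sigma(\varepsilon)$ for $\sigma\le c\varepsilon^{1/2}$ and for $\sigma\ge\varepsilon^{1/2-\theta}$, leaving the window $c\varepsilon^{1/2}<\sigma<\varepsilon^{1/2-\theta}$ uncovered, and you close it with the clean monotonicity observation $\mathrm{KL}_{\sigma'}(\varepsilon)\le\mathrm{KL}_\sigma(\varepsilon)$ for $\sigma'\ge\sigma$ via $\mcN(0,\sigma'^2)=\mcN(0,\sigma^2)*\mcN(0,\sigma'^2-\sigma^2)$ and data processing. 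The paper elides this and simply asserts ``when $m$ is large enough, the first line requires $n=\Omega(\varepsilon^{-2})$,'' so your Case~2 argument is a genuine tightening of the write-up rather than a redundancy. The reparameterization $2\theta\leftarrow\theta$ at the end is fine (and since a lower bound for small $\theta$ implies it for larger $\theta$, the restriction $\theta\in(0,1/4)$ in the lemma costs nothing).
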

% \jt{here we have $\theta$, can we say it is just up to log factors? }% This is not a log factor and this theta is needed.

\subsection{Thresholding phenomenon for online algorithms}
To prove the PAC lower bound for online algorithms, one first wonders if the same observation in Lemma \ref{lemma:median_offline} could still work. However, a close inspection of the proof reveals an issue: the optimizers $(F_1, F_2)$ in the definition of $\mathrm{KL}_\sigma(\varepsilon)$ are different under the regimes $\sigma = O(\varepsilon^{1/2})$ and $\sigma=\Omega(\varepsilon^{1/2-\theta})$. An online learning algorithm could first identify the right scenario and then choose a proper sample size to tackle the problem, and thus the above lower bound arguments break down. 

To resolve this issue, we aim to choose a proper pair of distributions $(F_1, F_2)$ with $|\mathrm{median}(F_1) - \mathrm{median}(F_2)| \ge 2\varepsilon$, and investigate the behavior of $D_{\text{KL}}(F_1 * \mcN(0,\sigma^2) \| F_2 * \mcN(0,\sigma^2))$ as a function of $\sigma$ \emph{with $(F_1, F_2)$ fixed along the line}. The following lemma shows that, even for some fixed pair $(F_1, F_2)$, a similar thresholding phenomenon still holds for the KL divergence. 

\begin{lem}\label{lemma:median_online}
Fix any $\varepsilon, \theta\in (0,1/4)$, and $\kappa\in \mathbb{N}$. There exist two distributions $F_1, F_2\in \mathcal{F}$ with $|\mathrm{median}(F_1) - \mathrm{median}(F_2)| \ge 2\varepsilon$, and 
\begin{align*}
D_{\text{\rm KL}}(F_1 * \mcN(0,\sigma^2) \| F_2 * \mcN(0,\sigma^2)) \begin{cases}
    \in [C_1\varepsilon^{3/2}, C_2\varepsilon^{3/2}] &\text{if } \sigma \le c\varepsilon^{1/2}, \\
    \le C(\theta, \kappa) \varepsilon^\kappa &\text{if } \sigma \ge \varepsilon^{1/2-\theta},
\end{cases}
\end{align*}
where $c, C_1, C_2, C(\theta, \kappa)$ are absolute constants with the last one depending only on $(\theta, \kappa)$. 
\end{lem}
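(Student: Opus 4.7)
The plan is to construct a single pair $(F_1, F_2) \in \mathcal{F}$ whose densities $f_1, f_2$ differ by a localized, high-order-oscillatory perturbation $\phi = f_2 - f_1$ engineered so that (i) the medians are exactly $2\varepsilon$ apart and (ii) the first $K$ moments of $\phi$ vanish for some $K = K(\theta,\kappa)$. This second condition has no counterpart in the offline construction of Lemma~\ref{lemma:median_offline} and is precisely what enforces the weaker $\varepsilon^{3/2}$ (rather than $\varepsilon^{2}$) lower bound at small $\sigma$; in exchange it makes $\phi$ nearly invisible under Gaussian smoothing of moderate width, producing the thresholding at large $\sigma$.

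Concretely, I would fix any smooth density $f_1$ with median $m_1$ satisfying Assumption~\ref{assum:med} and set $f_2 = f_1 + \phi$ with $\phi(x) = (c/w)\,\psi((x - m_1)/w)$, where $w = \varepsilon^{1/2}$ and $c = -2\varepsilon f_1(m_1)/\alpha$. The mother function $\psi$ is obtained from a fixed smooth compactly supported bump $\rho$ on $[-1,1]$ by $\psi = \rho^{(K+1)}$ (composed with a small fixed shift if needed so that $\alpha := \int_{-\infty}^{0} \psi(u)\,du \neq 0$). Integration by parts gives $\int u^j \psi(u)\,du = 0$ for $j = 0,1,\ldots,K$, while $\alpha$ is a fixed nonzero constant depending only on $(\rho, K)$. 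Then $\int \phi = 0$ and $\int_{-\infty}^{m_1}\phi = c\alpha = -2\varepsilon f_1(m_1)$, which via a Taylor expansion of $F_2$ around $m_1$ gives a median shift of exactly $2\varepsilon$; moreover $\|\phi\|_\infty = O(\varepsilon^{1/2})$ and $\|\phi'\|_\infty = O(1)$, so $f_2 \in \mathcal{F}$ for $\varepsilon$ small enough.

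The KL bound reduces via the standard chi-squared comparison (valid since $\|\phi * G_\sigma\|_\infty / \inf f_1 = O(\varepsilon^{1/2})$) to estimating $\|\phi * G_\sigma\|_2^2$, where $G_\sigma$ denotes the density of $\mathcal{N}(0,\sigma^2)$. By Parseval, $\|\phi * G_\sigma\|_2^2 = (2\pi)^{-1}\int |\hat\phi(\omega)|^2 e^{-\sigma^2\omega^2}\,d\omega$ with $|\hat\phi(\omega)| = c\,|\hat\psi(w\omega)|$. The vanishing moments imply $|\hat\psi(\xi)| \le C_K|\xi|^{K+1}$ for $|\xi|\le 1$, while smoothness of $\rho$ yields Schwartz-class decay for $|\xi| \gg 1$. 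In the small-noise regime $\sigma \le c\varepsilon^{1/2}$ the Gaussian factor $e^{-\sigma^2\omega^2}$ stays bounded above and below by positive constants throughout the effective bandwidth $|\omega| \lesssim 1/w$, so $\|\phi * G_\sigma\|_2^2 \asymp \|\phi\|_2^2 \asymp c^2/w \asymp \varepsilon^{3/2}$, giving matching upper and lower bounds of order $\varepsilon^{3/2}$ on the KL divergence.

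In the large-noise regime $\sigma \ge \varepsilon^{1/2 - \theta}$ I would split the frequency integral at $|\omega| = 1/w$: the low-frequency piece is bounded by $c^2 w^{2K+2}\int \omega^{2K+2} e^{-\sigma^2\omega^2}\,d\omega \lesssim c^2 w^{2K+2}/\sigma^{2K+3} \lesssim \varepsilon^{3/2 + \theta(2K+3)}$, which is at most $C(\theta,\kappa)\varepsilon^\kappa$ once $K$ is chosen large enough depending on $(\theta, \kappa)$, while the high-frequency tail is killed by $e^{-\sigma^2/w^2} \le e^{-\varepsilon^{-2\theta}}$ and is super-polynomially small. The hardest part I anticipate is the simultaneous satisfaction of the three constraints on $\psi$ (compact support, $K$ vanishing moments, nonzero $\alpha$) together with the uniform-in-$\varepsilon$ verification that $f_2 \in \mathcal{F}$; the Fourier estimates themselves are standard, though the $K$-dependent constants blow up with $K$ and must be tracked carefully for absorption into $C(\theta, \kappa)$.
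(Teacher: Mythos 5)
Your construction is, up to presentation, the paper's: a density perturbation $\phi$ localized on a window of width $\sqrt{\varepsilon}$, magnitude $O(\sqrt{\varepsilon})$, with $K$ vanishing moments, shifting the median by $\Theta(\varepsilon)$. (The paper builds the perturbation as a scaled antisymmetric polynomial with coefficients solving a moment-vanishing linear system inside a uniform-$[-1,1]$ base density; you build it as a high derivative of a bump; these are equivalent in spirit.) Your Fourier/Parseval treatment of the large-$\sigma$ regime is a legitimate and rather clean alternative to the paper's Hermite-generating-function expansion (their Lemma~\ref{lem:pointwise}), and both exploit the vanishing moments in the same way. So for that half of the lemma, the approaches genuinely differ but both work.

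The gap is in the small-$\sigma$ \emph{lower} bound on the KL divergence, and secondarily in the way you pass between KL, $\chi^2$, and $\|\phi*G_\sigma\|_2^2$. You write that "the KL bound reduces via the standard chi-squared comparison\ldots to estimating $\|\phi*G_\sigma\|_2^2$," but $\chi^2$ only upper-bounds KL: it gives $D_{\mathrm{KL}} \le \chi^2 \lesssim \|\phi*G_\sigma\|_2^2$ (under bounded density ratio) but says nothing about a matching lower bound. Moreover, your boundedness certificate $\|\phi*G_\sigma\|_\infty/\inf f_1 = O(\sqrt{\varepsilon})$ doesn't hold globally for the \emph{convolved} densities: $f_1*G_\sigma$ decays to $0$ at infinity, so the ratio $|\phi*G_\sigma|/(f_1*G_\sigma)$ is not automatically controlled in the tails (this is what the paper's data-processing step $D_{\mathrm{KL}}(F_2*G_\sigma\|F_1*G_\sigma)\le D_{\mathrm{KL}}(F_2\|F_1)$ neatly avoids, since $F_1$ uniform on $[-1,1]$ makes the unconvolved $\chi^2$ integral a finite-interval computation). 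For the lower bound, observe that a na\"ive Pinsker application gives only $\Omega(\varepsilon^2)$, as the paper explicitly notes; the $\Omega(\varepsilon^{3/2})$ rate comes from the stronger observation that the signed difference $\phi*G_\sigma$ is concentrated on an interval of width $\Theta(\sqrt{\varepsilon})$ where both densities are $\Theta(1)$, so that one can restrict the nonnegative integrand $p\log(p/q)-p+q$ to that window, use $a\log(a/b)-a+b\asymp(a-b)^2$ there, and apply Cauchy--Schwarz to pass from the $\Omega(\varepsilon)$ mass displacement to an $\Omega(\varepsilon^{3/2})$ $L^2$ bound. Your $L^2$-on-the-Fourier-side estimate $\|\phi*G_\sigma\|_2^2 \gtrsim \varepsilon^{3/2}$ is consistent with this, but you still need the restricted quadratic expansion of KL to convert it into a KL lower bound; as written, that step is missing and "standard chi-squared comparison" does not supply it. Also pin down $f_1$ concretely (compactly supported, bounded away from zero on its support) rather than "any smooth density satisfying Assumption~\ref{assum:med}," since otherwise the tail contributions to both the $\chi^2$ weight and the KL-lower-bound region are uncontrolled.
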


Compared with Lemma \ref{lemma:median_offline}, Lemma \ref{lemma:median_online} still shows a similar thresholding phenomenon for the KL divergence when $\sigma\gg \varepsilon^{1/2}$, but the KL divergence becomes larger for small $\sigma$ due to the additional constraint that $(F_1, F_2)$ is held fixed. Under the choice of $(F_1, F_2)$ in Lemma \ref{lemma:median_online}, each arm should be pulled at least $\Omega(\varepsilon^{2\theta-1})$ times, while $\Omega(\varepsilon^{\theta-3/2})$ arms need to be pulled in view of the first line. The following theorem makes the above intuition formal. 

\begin{thm}\label{thm:lowerbound_median_online}
The $(\epsilon,.1)$-PAC sample complexity for median estimation is $\Omega(\varepsilon^{-5/2+\theta})$ for any fixed $\theta>0$ and any online algorithm.  % \jt{actually according to lemma 5 the number of arms one needs to pull is not $\epsilon^{-3/2}$ but $\epsilon^{\theta - 3/2}$? } \yifei{Yes, this extra $\theta$ comes from the estimation of the KL divergence. } %\jt{we also have $\theta$ here. is it just log? }
\end{thm}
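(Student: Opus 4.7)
The plan is to apply Le Cam's two-point method with the pair $(F_1,F_2)$ constructed in Lemma~\ref{lemma:median_online}, whose parameters $(\theta',\kappa)$ we choose below. Since $|\mathrm{median}(F_1)-\mathrm{median}(F_2)|\ge 2\varepsilon$, any $(\varepsilon,0.1)$-PAC estimator must distinguish $F_1$ from $F_2$, so the standard Le Cam reduction requires $D_{\text{\rm KL}}(p_{\pi,F_1}\|p_{\pi,F_2})=\Omega(1)$. The plan is therefore to upper bound this KL divergence by a quantity that is $o(1)$ whenever $T=o(\varepsilon^{-5/2+\theta})$.

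The central technical step is an arm-wise decomposition. Let $N_i$ be the (random) number of pulls of arm $i$ and $K$ the total number of distinct arms sampled, so that $\sum_{i=1}^K N_i=T$. Applying the chain rule of KL divergence along the adaptive observation sequence and using that the empirical mean $\bar Y_i:=N_i^{-1}\sum_{j=1}^{N_i}Y^i_j$ is a sufficient statistic for the latent $X_i$ (given $N_i$), one obtains
\begin{equation*}
D_{\text{\rm KL}}(p_{\pi,F_1}\|p_{\pi,F_2})\;\le\;\E_{F_1}\!\left[\sum_{i=1}^K D_{\text{\rm KL}}\!\left(F_1*\mcN(0,1/N_i)\;\big\|\;F_2*\mcN(0,1/N_i)\right)\right].
\end{equation*}
This is the Bayesian, adaptive analogue of the classical bandit divergence decomposition, with effective per-arm noise $\sigma_i=1/\sqrt{N_i}$ appearing in the convolution.

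We next split arms at the threshold $N_0:=\varepsilon^{2\theta'-1}$. For low-pull arms ($N_i\le N_0$), the noise satisfies $\sigma_i\ge\varepsilon^{1/2-\theta'}$, so the second regime of Lemma~\ref{lemma:median_online} bounds the per-arm KL by $C(\theta',\kappa)\varepsilon^\kappa$. For high-pull arms ($N_i>N_0$), we use the uniform bound $D_{\text{\rm KL}}(F_1*\mcN(0,\sigma^2)\|F_2*\mcN(0,\sigma^2))\le C_2\varepsilon^{3/2}$ valid for every $\sigma\ge 0$, which follows from the first regime of Lemma~\ref{lemma:median_online} combined with the data-processing inequality (convolution with a Gaussian is a Markov kernel, so KL is non-increasing in $\sigma$). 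The counting inequalities $K_H\le T/N_0=T\varepsilon^{1-2\theta'}$ and $K\le T$ then yield
\begin{equation*}
D_{\text{\rm KL}}(p_{\pi,F_1}\|p_{\pi,F_2})\;\le\;T\cdot C(\theta',\kappa)\varepsilon^\kappa\;+\;T\varepsilon^{1-2\theta'}\cdot C_2\varepsilon^{3/2}.
\end{equation*}
Choosing $\kappa>5/2$ makes the first term negligible, leaving an $O(T\varepsilon^{5/2-2\theta'})$ bound. Requiring this to be $\Omega(1)$ forces $T=\Omega(\varepsilon^{-5/2+2\theta'})$, and since $\theta'\in(0,1/4)$ is arbitrary, reparameterizing $\theta:=2\theta'$ yields $T=\Omega(\varepsilon^{-5/2+\theta})$ for every fixed $\theta>0$.

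The main obstacle is rigorously justifying the arm-wise KL decomposition. Because the pull counts $\{N_i\}$ and the interleaving of pulls are data-dependent, observations from different arms are \emph{not} independent under $p_{\pi,F}$, so one cannot invoke a product structure directly. The argument should proceed by applying the chain rule of KL divergence one observation at a time along the adaptive trajectory, re-grouping the conditional-KL terms by arm, and then invoking sufficiency of $\bar Y_i$ together with the fact that the conditional law of the individual Gaussian observations given $\bar Y_i$ is common to $F_1$ and $F_2$, thereby collapsing each arm's contribution into a single convolution-KL term. Once this decomposition is in place, the two-regime estimate of Lemma~\ref{lemma:median_online} together with the budget inequality $K_H N_0\le T$ finishes the proof.
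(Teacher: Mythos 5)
Your high-level plan — split arms by pull count $N_i$ around a threshold $N_0=\varepsilon^{2\theta'-1}$, invoke the thresholding phenomenon of Lemma~\ref{lemma:median_online} for lightly-sampled arms and a uniform $O(\varepsilon^{3/2})$ bound for heavily-sampled ones, then use the budget counting $K_H N_0\le T$ — is exactly the structure of the paper's argument. However, the central ``arm-wise decomposition'' inequality
\[
D_{\text{\rm KL}}(p_{\pi,F_1}\|p_{\pi,F_2})\le\E_{F_1}\Bigl[\sum_{i}D_{\text{\rm KL}}\bigl(F_1*\mcN(0,1/N_i)\,\big\|\,F_2*\mcN(0,1/N_i)\bigr)\Bigr]
\]
is not established, and the sketched route (``chain rule $+$ regroup by arm $+$ sufficiency of $\bar Y_i$'') does not deliver it. The sufficiency argument does show that the likelihood ratio depends only on the collection $\{(\bar Y_i,N_i)\}$ — indeed the paper computes the exact identity $D_{\text{\rm KL}}(p_{\pi,F_1}\|p_{\pi,F_2})=\E_{F_1}[\sum_j L_j]$ with $L_j=\log\frac{(F_1*\varphi_{1/N_j})(\bar Y_j)}{(F_2*\varphi_{1/N_j})(\bar Y_j)}$. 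But to turn $\E_{F_1}[L_j]$ into $\E_{F_1}[D_{\text{\rm KL}}(F_1*\varphi_{1/N_j}\|F_2*\varphi_{1/N_j})]$ you would need the conditional law of $\bar Y_j$ given $N_j$ to be $F_1*\varphi_{1/N_j}$, and that fails under adaptivity: a policy that decides $N_j$ by looking at the running mean of arm $j$ biases $\bar Y_j$ conditionally on $N_j$. A short computation with Gaussian $F_1,F_2=F_1(\cdot-\mu)$ shows $\E_{F_1}[L_{N}(\bar Y)]-\E_{F_1}[D_{\text{\rm KL}}(\nu_1(N)\|\nu_2(N))]=-\mu\,\E_{F_1}\bigl[\bar Y/(\tau^2+1/N)\bigr]$, which has either sign depending on the stopping rule; so the claimed inequality does not hold in general.

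The paper resolves exactly this adaptivity issue, but with tools you are missing. For lightly-sampled arms it does not use the second-regime \emph{KL} bound of Lemma~\ref{lemma:median_online}; it uses the \emph{pointwise} bound on the log-likelihood ratio from Proposition~\ref{prop:med_ada_exm_prep}, namely $\bigl|\log\frac{p_{F_1}*\varphi_{\sigma^2}(x)}{p_{F_2}*\varphi_{\sigma^2}(x)}\bigr|\le O(\varepsilon^3)$ for all $x$, which bounds $|L_j|$ uniformly and hence is insensitive to what distribution $\bar Y_j$ has under the adaptive policy. For heavily-sampled arms it applies the log-sum/convexity inequality of Lemma~\ref{lem:log_sum}, $(p*K)\log\frac{p*K}{q*K}\le(p\log\frac{p}{q})*K$ pointwise, to collapse the per-arm contribution to $D_{\text{\rm KL}}(F_1\|F_2)=O(\varepsilon^{3/2})$ after integrating out all the observed variables — again a bound that holds regardless of the data-dependence of $N_j$. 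To fix your proof you should replace the unproven decomposition with these two devices: use the pointwise log-ratio control (Lemma~\ref{lem:pointwise} / Proposition~\ref{prop:med_ada_exm_prep}) rather than a KL bound for the low-pull arms, and the log-sum inequality rather than a data-processing inequality on the latent for the high-pull arms.
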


The formal proof of Theorem \ref{thm:lowerbound_median_online} is more complicated and requires an explicit computation of the KL divergence $D_{\text{KL}}(p_{\pi, F_1} \| p_{\pi, F_2})$. We relegate the full proof to \Cref{app:lbProofsThresh}. This thresholding phenomenon of the noise level also applies to the case of trimmed mean, which is discussed further and an analogous result is proved in Appendix \ref{app:tm}.

\section{Conclusion}\label{sec:conc}
In this work we formulated and studied offline and online algorithms for estimating functionals of distributions. We developed unified algorithms for estimating the mean, median, maximum, and trimmed mean, providing sample complexity upper bounds. We additionally proved information theoretic lower bounds in these settings, which show that our algorithms are optimal up to $\epsilon^c$ where $c$ is a fixed constant arbitrarily close to zero. We used different Wasserstein distances to construct information theoretic lower bounds for mean and maximum estimation, and showed how fundamentally different techniques are required for median and trimmed mean estimation. The lower bounds for median and trimmed mean estimation elucidate an interesting thresholding phenomenon of the noise level to distinguish two distributions after Gaussian convolution, which may be of independent interest. Interesting directions of future work include extending our analysis to non-indicator-based functionals, such as the BH threshold and analyzing the limiting behavior as $\delta\to\infty$.

\section*{Acknowledgements}
Yifei Wang and David Tse were partially supported by NSF Grants CCF-1909499. Tavor Z. Baharav was supported in part by the NSF GRFP and the Alcatel-Lucent Stanford Graduate Fellowship. Yanjun Han is supported by a Simons-Berkeley research fellowship and Norbert Wiener postdoctoral fellowship. Jiantao Jiao was partially supported by NSF Grants IIS-1901252, and CCF-1909499.

% \clearpage
\bibliography{reference}

\begin{thebibliography}{}

\bibitem[Aziz et~al., 2018]{aziz2018pure}
Aziz, M., Anderton, J., Kaufmann, E., and Aslam, J. (2018).
\newblock Pure exploration in infinitely-armed bandit models with
  fixed-confidence.
\newblock In {\em Algorithmic Learning Theory}, pages 3--24. PMLR.

\bibitem[Berry et~al., 1997]{berry1997bandit}
Berry, D.~A., Chen, R.~W., Zame, A., Heath, D.~C., and Shepp, L.~A. (1997).
\newblock Bandit problems with infinitely many arms.
\newblock {\em The Annals of Statistics}, 25(5):2103--2116.

\bibitem[Bonald and Proutiere, 2013]{bonald2013two}
Bonald, T. and Proutiere, A. (2013).
\newblock Two-target algorithms for infinite-armed bandits with bernoulli
  rewards.
\newblock {\em Advances in Neural Information Processing Systems}, 26.

\bibitem[Carpentier and Valko, 2015]{carpentier2015simple}
Carpentier, A. and Valko, M. (2015).
\newblock Simple regret for infinitely many armed bandits.
\newblock In {\em International Conference on Machine Learning}, pages
  1133--1141. PMLR.

\bibitem[Chaudhuri and Kalyanakrishnan, 2017]{chaudhuri2017pac}
Chaudhuri, A.~R. and Kalyanakrishnan, S. (2017).
\newblock Pac identification of a bandit arm relative to a reward quantile.
\newblock In {\em Thirty-First AAAI Conference on Artificial Intelligence}.

\bibitem[Chaudhuri and Kalyanakrishnan, 2019]{chaudhuri2019pac}
Chaudhuri, A.~R. and Kalyanakrishnan, S. (2019).
\newblock Pac identification of many good arms in stochastic multi-armed
  bandits.
\newblock In {\em International Conference on Machine Learning}, pages
  991--1000. PMLR.

\bibitem[Cordy and Thomas, 1997]{cordy1997deconvolution}
Cordy, C.~B. and Thomas, D.~R. (1997).
\newblock Deconvolution of a distribution function.
\newblock {\em Journal of the American Statistical Association},
  92(440):1459--1465.

\bibitem[Dattner et~al., 2011]{dattner2011deconvolution}
Dattner, I., Goldenshluger, A., and Juditsky, A. (2011).
\newblock On deconvolution of distribution functions.
\newblock {\em The Annals of Statistics}, pages 2477--2501.

\bibitem[Delaigle et~al., 2008]{delaigle2008deconvolution}
Delaigle, A., Hall, P., and Meister, A. (2008).
\newblock On deconvolution with repeated measurements.
\newblock {\em The Annals of Statistics}, 36(2):665--685.

\bibitem[Hall and Lahiri, 2008]{hall2008estimation}
Hall, P. and Lahiri, S.~N. (2008).
\newblock Estimation of distributions, moments and quantiles in deconvolution
  problems.
\newblock {\em The Annals of Statistics}, 36(5):2110--2134.

\bibitem[Jamieson and Nowak, 2014]{jamieson2014best}
Jamieson, K. and Nowak, R. (2014).
\newblock Best-arm identification algorithms for multi-armed bandits in the
  fixed confidence setting.
\newblock In {\em Information Sciences and Systems (CISS), 2014 48th Annual
  Conference on}, pages 1--6. IEEE.

\bibitem[Kaufmann et~al., 2016]{kaufmann2016complexity}
Kaufmann, E., Capp{\'e}, O., and Garivier, A. (2016).
\newblock On the complexity of best-arm identification in multi-armed bandit
  models.
\newblock {\em The Journal of Machine Learning Research}, 17(1):1--42.

\bibitem[Lai et~al., 1985]{lai1985asymptotically}
Lai, T.~L., Robbins, H., et~al. (1985).
\newblock Asymptotically efficient adaptive allocation rules.
\newblock {\em Advances in applied mathematics}, 6(1):4--22.

\bibitem[Lattimore and Szepesv{\'a}ri, 2020]{lattimore2020bandit}
Lattimore, T. and Szepesv{\'a}ri, C. (2020).
\newblock {\em Bandit algorithms}.
\newblock Cambridge University Press.

\bibitem[Le~Cam et~al., 2000]{le2000asymptotics}
Le~Cam, L., LeCam, L.~M., and Yang, G.~L. (2000).
\newblock {\em Asymptotics in statistics: some basic concepts}.
\newblock Springer Science \& Business Media.

\bibitem[Li and Xia, 2017]{li2017infinitely}
Li, H. and Xia, Y. (2017).
\newblock Infinitely many-armed bandits with budget constraints.
\newblock In {\em Thirty-First AAAI Conference on Artificial Intelligence}.

\bibitem[Rashidinejad et~al., 2021]{rashidinejad2021bridging}
Rashidinejad, P., Zhu, B., Ma, C., Jiao, J., and Russell, S. (2021).
\newblock Bridging offline reinforcement learning and imitation learning: A
  tale of pessimism.
\newblock {\em Advances in Neural Information Processing Systems}, 34.

\bibitem[Schrittwieser et~al., 2021]{schrittwieser2021online}
Schrittwieser, J., Hubert, T., Mandhane, A., Barekatain, M., Antonoglou, I.,
  and Silver, D. (2021).
\newblock Online and offline reinforcement learning by planning with a learned
  model.
\newblock {\em Advances in Neural Information Processing Systems}, 34.

\bibitem[Son and Simon, 2012]{son2012distributed}
Son, L.~K. and Simon, D.~A. (2012).
\newblock Distributed learning: Data, metacognition, and educational
  implications.
\newblock {\em Educational Psychology Review}, 24(3):379--399.

\bibitem[Tsybakov, 2009]{Tsybakov2009introduction}
Tsybakov, A. (2009).
\newblock {\em Introduction to Nonparametric Estimation}.
\newblock Springer-Verlag.

\bibitem[Wang et~al., 2008]{wang2008algorithms}
Wang, Y., Audibert, J.-Y., and Munos, R. (2008).
\newblock Algorithms for infinitely many-armed bandits.
\newblock {\em Advances in Neural Information Processing Systems}, 21.

\bibitem[Wasserman, 2004]{wasserman2004all}
Wasserman, L. (2004).
\newblock {\em All of statistics: a concise course in statistical inference},
  volume~26.
\newblock Springer.

\bibitem[Zhang et~al., 2019]{zhang2019adaptive}
Zhang, M., Zou, J., and Tse, D. (2019).
\newblock Adaptive monte carlo multiple testing via multi-armed bandits.
\newblock In {\em International Conference on Machine Learning}, pages
  7512--7522. PMLR.

\bibitem[Zhang et~al., 2020]{zhang2020determining}
Zhang, M.~J., Ntranos, V., and Tse, D. (2020).
\newblock Determining sequencing depth in a single-cell rna-seq experiment.
\newblock {\em Nature communications}, 11(1):1--11.

\bibitem[Zhang et~al., 2021]{zhang2021reinforcement}
Zhang, Z., Ji, X., and Du, S. (2021).
\newblock Is reinforcement learning more difficult than bandits? a near-optimal
  algorithm escaping the curse of horizon.
\newblock In {\em Conference on Learning Theory}, pages 4528--4531. PMLR.

\end{thebibliography}
\bibliographystyle{apalike}
\clearpage
%%%%%%%%%%%%%%%%%%%%%%%%%%%%%%%%%%%%%%%%%%%%%%%%%%%%%%%%%%%%
\section*{Checklist}

%%% BEGIN INSTRUCTIONS %%%
% The checklist follows the references.  Please
% read the checklist guidelines carefully for information on how to answer these
% questions.  For each question, change the default \answerTODO{} to \answerYes{},
% \answerNo{}, or \answerNA{}.  You are strongly encouraged to include a {\bf
% justification to your answer}, either by referencing the appropriate section of
% your paper or providing a brief inline description.  For example:
% \begin{itemize}
%   \item Did you include the license to the code and datasets? \answerYes{See Section~\ref{gen_inst}.}
%   \item Did you include the license to the code and datasets? \answerNo{The code and the data are proprietary.}
%   \item Did you include the license to the code and datasets? \answerNA{}
% \end{itemize}
% Please do not modify the questions and only use the provided macros for your
% answers.  Note that the Checklist section does not count towards the page
% limit.  In your paper, please delete this instructions block and only keep the
% Checklist section heading above along with the questions/answers below.
%%% END INSTRUCTIONS %%%

\begin{enumerate}

\item For all authors...
\begin{enumerate}
  \item Do the main claims made in the abstract and introduction accurately reflect the paper's contributions and scope?
    \answerYes{Results are stated and properly qualified.}
  \item Did you describe the limitations of your work?
    \answerYes{}
  \item Did you discuss any potential negative societal impacts of your work?
    \answerNA{}
  \item Have you read the ethics review guidelines and ensured that your paper conforms to them?
    \answerYes{}
\end{enumerate}

\item If you are including theoretical results...
\begin{enumerate}
  \item Did you state the full set of assumptions of all theoretical results?
    \answerYes{All theorem's have clearly stated assumptions}
        \item Did you include complete proofs of all theoretical results?
    \answerYes{All proofs are detailed in the Appendix}
\end{enumerate}

\item If you ran experiments...
\begin{enumerate}
  \item Did you include the code, data, and instructions needed to reproduce the main experimental results (either in the supplemental material or as a URL)?
    \answerNA{}
  \item Did you specify all the training details (e.g., data splits, hyperparameters, how they were chosen)?
    \answerNA{}
        \item Did you report error bars (e.g., with respect to the random seed after running experiments multiple times)?
    \answerNA{}
        \item Did you include the total amount of compute and the type of resources used (e.g., type of GPUs, internal cluster, or cloud provider)?
    \answerNA{}
\end{enumerate}

\item If you are using existing assets (e.g., code, data, models) or curating/releasing new assets...
\begin{enumerate}
  \item If your work uses existing assets, did you cite the creators?
    \answerNA{}
  \item Did you mention the license of the assets?
    \answerNA{}
  \item Did you include any new assets either in the supplemental material or as a URL?
    \answerNA{}
  \item Did you discuss whether and how consent was obtained from people whose data you're using/curating?
    \answerNA{}
  \item Did you discuss whether the data you are using/curating contains personally identifiable information or offensive content?
    \answerNA{}
\end{enumerate}

\item If you used crowdsourcing or conducted research with human subjects...
\begin{enumerate}
  \item Did you include the full text of instructions given to participants and screenshots, if applicable?
    \answerNA{}
  \item Did you describe any potential participant risks, with links to Institutional Review Board (IRB) approvals, if applicable?
    \answerNA{}
  \item Did you include the estimated hourly wage paid to participants and the total amount spent on participant compensation?
    \answerNA{}
\end{enumerate}

\end{enumerate}

%%%%%%%%%%%%%%%%%%%%%%%%%%%%%%%%%%%%%%%%%%%%%%%%%%%%%%%%%%%%

\clearpage

\appendix 

\section{Extensions of the formulation}\label{app:extension}
The formulation of the functional $g$ can be extended in several different ways. First, we note that we can extend the set $S(F)$ to a finite union of disjoint closed intervals, i.e., $S(F)=\cup_{i=1}^kS_i(F)$, where $S_i(F)$ is a closed interval for $i\in[k]$. This is because $\mbE[X|S(F)]$ can be estimated based on estimations of $\mbE[X|S_i(F)]$ via
\begin{equation}
    \mbE[X|S(F)]=\frac{\mbE[X\mbI(X\in S(F))]}{\P(X\in S(F))}=\frac{\sum_{i=1}^k \P(X\in S_i(F)) \mbE[X|S_i(F)]}{\sum_{i=1}^k \P(X\in S_i(F))}.
\end{equation}
Observe that this definition naturally extends to cases where the distribution is continuous, where the density of $F$ at $X$ can be substituted for $\P(X \in S(F))$ for singleton sets $S(F)$. 
We can also consider a more general class of functionals
\begin{equation}
    g(F) = \E \left[ h(X) | X \in S(F) \right],
\end{equation}
where $h$ is a differentiable function.
However, when we take the limit $\epsilon\to 0$, we see that for any fixed distribution $F$ and fixed function $h$ the reweighting induced by $h$ does not matter.
Assuming that we knew whether $X_i\in S(F)$ for each $i$, we would simply want to sample $X_i \propto h'(X_i) \epsilon^{-r}$ for some $r$.
Since $h$ is differentiable, this is simply reweighting by a constant factor, which does not show up in our $\epsilon$ dependence.
Thus, we can safely only consider the weighting functional $h(x)=x$, which retains the central elimination aspect of this setting (determining whether a point is relevant or not). Loosely speaking, for any differentiable function $h$ and smooth and compactly supported $F$, we have that in the limit as $\epsilon \to 0$ it degenerates to one of these settings.

\section{Results for trimmed mean}\label{app:tm}
In this section, we present our upper and lower bound analysis for trimmed mean via both online and offline sampling algorithms.
\subsection{Upper bound for offline algorithms}
For trimmed mean, the following statements are assumed to hold:
\begin{assum}\label{assum:trim}
There exist constants $c_0,c_1,\dots,c_5$ such that 
\begin{itemize}
\item $\int x^2 dF(x)\leq c_0$.
    \item $F'(x)\geq c_1$ for $|x-F^{-1}(\alpha)|\lesssim \epsilon$ and $|x-F^{-1}(1-\alpha)|\lesssim \epsilon$.
    \item $|F^{(2)}(x)|\leq c_2$ for $|x-F^{-1}(\alpha)|\lesssim \sqrt{\epsilon}$ and $|x-F^{-1}(1-\alpha)|\lesssim \sqrt{\epsilon}$.
    % \item $|xF'(x)|\leq c_3$ for $|x-F^{-1}(\alpha)|\lesssim \epsilon$ and $|x-F^{-1}(1-\alpha)|\lesssim \epsilon$.
    \item $F'(x)\leq c_3$ for $|x-F^{-1}(\alpha)|\lesssim \epsilon$ and $|x-F^{-1}(1-\alpha)|\lesssim \epsilon$.
    \item $\max\{|F^{-1}(\alpha)|,|F^{-1}(1-\alpha)|\}\leq c_4$, $\min\{|F^{-1}(\alpha)|,|F^{-1}(1-\alpha)|\}\geq c_5$.
\end{itemize}
\end{assum}
The first assumption is to ensure that the mean and variance of $F$ is upper bounded, which is slightly stronger than the assumption for mean. The second assumption is to ensure that the $\alpha$ and $1-\alpha$ quantiles of $F$ is well-defined. The third assumption ensures that the distribution has Lipschitz-continuous density around the quantiles. The forth assumption precludes the distributions which have lots of mass around the $\alpha$ and $1-\alpha$ quantiles. The fifth assumption ensures that the $\alpha$ and $1-\alpha$ quantiles are upper-bounded and bounded away from $0$. The following proposition gives the choice of $(n,m)$ to obtain the $(\epsilon,\delta)$-PAC approximation of the trimmed mean.
\begin{prop}\label{prop:trim_ub}
Suppose that Assumption \ref{assum:trim} holds. Then, by choosing $m\geq C_1\epsilon^{-1}\log\epsilon^{-1}$ and $n\geq C_2\epsilon^{-2}\delta^{-1}$, the estimator $G_{n,m}$ is an $(\epsilon,\delta)$-PAC approximation of $g(F)$. Here $C_1,C_2$ are constants which can be expressed by $c_0,\dots,c_5$. Thus, the offline sampling algorithm takes overall $O(\epsilon^{-3}\log(1/\epsilon))$ samples. 
\end{prop}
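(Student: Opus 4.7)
The plan is to decompose $G_{n,m} - g(F)$ into a \emph{bias} term driven by the noise level $1/m$ and a \emph{stochastic fluctuation} term driven by the sample size $n$, then control each using Assumption~\ref{assum:trim}. Define the noisy marginal distribution $\tilde F := F * \mathcal{N}(0,1/m)$ (the distribution of each $\hat X_i$), its quantiles $\tilde q_\alpha := \tilde F^{-1}(\alpha)$ and $\tilde q_{1-\alpha} := \tilde F^{-1}(1-\alpha)$, and the trimmed mean under the noisy distribution, $\tilde g_m := \E_{\hat X \sim \tilde F}[\hat X \mid \hat X \in [\tilde q_\alpha, \tilde q_{1-\alpha}]]$. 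I would then split
\begin{equation}
G_{n,m} - g(F) = \underbrace{(G_{n,m} - \tilde g_m)}_{\text{fluctuation}} + \underbrace{(\tilde g_m - g(F))}_{\text{bias}},
\end{equation}
and show that each piece is $\le \epsilon/2$ with probability at least $1-\delta/2$.

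\textbf{Bias step.} A Taylor expansion of $\tilde F$ around $F$ using $|F''|\le c_2$ and $F'\ge c_1$ near the quantiles yields $|\tilde q_\alpha - F^{-1}(\alpha)| = O(1/m)$, and analogously for the upper quantile. Combining this with the boundedness $|F^{-1}(\alpha)|,|F^{-1}(1-\alpha)| \le c_4$ and a direct comparison of the two truncated means gives $|\tilde g_m - g(F)| = O(1/m)$, so $m = \Omega(\epsilon^{-1}\log\epsilon^{-1})$ is more than enough. The use of second-derivative information is essential to obtain the sharp $O(1/m)$ rate rather than $O(1/\sqrt m)$.

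\textbf{Fluctuation step.} I would first concentrate the empirical quantiles: a Chernoff bound on $\mathbf{1}\{\hat X_i \le \tilde q_\alpha\}$ combined with the smoothed density lower bound $\tilde F'(\tilde q_\alpha) \ge c_1/2$ (inherited from $F' \ge c_1$) gives $|\hat q_\alpha - \tilde q_\alpha| = O(\sqrt{\log(1/\delta)/n})$, and similarly for $\hat q_{1-\alpha}$. Introduce the oracle set $\tilde T_n := \{i : \hat X_i \in [\tilde q_\alpha, \tilde q_{1-\alpha}]\}$ (using the noisy-population quantiles rather than empirical ones) and write
\begin{equation}
G_{n,m} - \tilde g_m = \frac{1}{|S_n|}\sum_{i \in \tilde T_n}(\hat X_i - \tilde g_m) + \left(\frac{|\tilde T_n|}{|S_n|}-1\right)\tilde g_m + \frac{1}{|S_n|}\left[\sum_{i \in S_n\setminus \tilde T_n}\hat X_i - \sum_{i \in \tilde T_n\setminus S_n}\hat X_i\right].
\end{equation}
The first term is an i.i.d.\ sum of bounded-variance truncated variables (variance $\le c_0$), so Chebyshev gives concentration $O(1/\sqrt{n\delta})$, requiring $n = \Omega(\epsilon^{-2}\delta^{-1})$. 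The second is controlled by $|\tilde T_n|/|S_n| - 1 = O(|S_n \triangle \tilde T_n|/n)$ combined with the boundedness of $\tilde g_m$ by $c_4$. The boundary term is handled via $|S_n \triangle \tilde T_n| = O(n \cdot c_3 \sqrt{\log(1/\delta)/n})$ (density upper bound $c_3$ plus Bernstein) together with the observation that each $\hat X_i$ in the symmetric difference lies within $O(\sqrt{\log n / m})$ of a quantile on a high-probability Gaussian-tail event.

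\textbf{Main obstacle.} The hardest step is the boundary contribution: the random set $S_n$ depends on the noise $\hat Z_i$ through $\hat X_i$, so the summation range and summands are statistically coupled. The cleanest resolution is to first fix the noisy-population quantiles $\tilde q_\alpha, \tilde q_{1-\alpha}$ (turning the high-probability empirical-quantile event into a conditional bound), and then exploit that on the symmetric difference the values $\hat X_i$ are tightly concentrated around a single quantile so that flipped-in and flipped-out points nearly cancel. The logarithmic factor in $m \ge C_1 \epsilon^{-1}\log\epsilon^{-1}$ is precisely the safety margin needed so that the per-point Gaussian tail bound $\sqrt{\log n / m}$ multiplied by the boundary fraction $|S_n\triangle \tilde T_n|/|S_n|$ is of order $\epsilon$; this is also what prevents the scaling from inflating to $\tilde\Omega(\epsilon^{-2})$ under a cruder union bound.
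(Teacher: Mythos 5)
Your high-level decomposition into a deterministic bias term (trimmed mean of $\tilde F = F * \mcN(0,1/m)$ versus of $F$) plus a stochastic fluctuation term is exactly the paper's strategy, so the skeleton is right. Where you diverge is the fluctuation step. The paper's route is substantially shorter: it observes that $F_m$ itself satisfies Assumption~\ref{assum:trim} with only constant-factor changes to $(c_0,\dots,c_5)$ once $\epsilon$ is small enough, and then simply invokes the noiseless concentration result (Proposition~\ref{prop:tm_noiseless}, proved via Lemma~\ref{lem:quant_ub_dist} and a Chebyshev bound, Lemma~\ref{lem:trunc_mean}) with $F_m$ in place of $F$, immediately giving $n = O(\epsilon^{-2}\delta^{-1})$. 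Your approach instead re-derives the concentration from scratch by introducing the oracle set $\tilde T_n$ and controlling the symmetric difference $S_n \triangle \tilde T_n$; this is valid and arguably more explicit about where the boundary points live, but it repeats work the paper factors into the noiseless proposition.

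One part of your reasoning is off: you attribute the $\log(1/\epsilon)$ factor in the requirement $m \gtrsim \epsilon^{-1}\log\epsilon^{-1}$ to the boundary term in the fluctuation decomposition, and simultaneously assert the bias $|\tilde g_m - g(F)| = O(1/m)$ with no log. Both are wrong in opposite directions. The log actually lives in the bias: Lemma~\ref{lem:tm_noise} bounds $\bigl|\int_a^b x\,dF_m - \int_a^b x\,dF\bigr|$ by $O(m^{-1}\log m)$, where the $\log m$ enters through the Gaussian tail truncation at $k\sigma$ with $k = \sqrt{2\log(m/2)}$ (the $k^2\sigma^2$ terms), and the quantile shift $|F_m^{-1}(\alpha) - F^{-1}(\alpha)|$ inherits the same factor. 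So the bias alone forces $m \gtrsim \epsilon^{-1}\log\epsilon^{-1}$. Conversely, your boundary term does not need any log in $m$: on the high-probability event the symmetric difference has size $O(\sqrt{n\log(1/\delta)})$, every $\hat X_i$ in it is within $O(\sqrt{\log(1/\delta)/n})$ of a bounded quantile $\tilde q_\alpha$ or $\tilde q_{1-\alpha}$ (so each summand is $O(c_4)$ without appealing to any per-point Gaussian tail at scale $\sqrt{\log n / m}$), and dividing by $|S_n| = \Theta(n)$ already yields $O(\sqrt{\log(1/\delta)/n}) \ll \epsilon$ under $n \gtrsim \epsilon^{-2}\delta^{-1}$. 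You should move the log budget from the fluctuation step to the bias step, and replace the $O(1/m)$ bias claim with $O(m^{-1}\log m)$; with that correction, the proof closes.
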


\subsection{Lower bounds for offline algorithms}
Similar to the analysis for estimating median, we consider the following quantity
\begin{align*}
    \mathrm{KL}_\sigma(\varepsilon) := \min\{D_{\text{KL}}(F_1 * \mcN(0,\sigma^2)\| F_2*\mcN(0,\sigma^2) ): F_1, F_2\in \mathcal{F}, |g(F_1)-g(F_2)|\ge 2\varepsilon\}.   
\end{align*}
Analogously, we have the following bounds on the above quantity with respect to the magnitude of noise $\sigma$.
\begin{lem}\label{lemma:lowerbound_tm_offline}
For $\varepsilon\in (0,1/4)$, the following characterization of $\mathrm{KL}_\sigma(\varepsilon)$ holds as a function of $\sigma$: 
\begin{align*}
\mathrm{KL}_\sigma(\varepsilon) \begin{cases}
    \in [C_1\varepsilon^2, C_2\varepsilon^2] &\text{if } \sigma \le c\varepsilon^{1/2}, \\
    \le C(\theta, \kappa) \varepsilon^\kappa &\text{if } \sigma \ge \varepsilon^{1/2-\theta},
\end{cases}
\end{align*}
where $\theta\in (0,1/4), \kappa\in \mathbb{N}$ are arbitrary parameters, and $c, C_1, C_2, C(\theta, \kappa)$ are absolute constants with the last one depending only on $(\theta, \kappa)$.
\end{lem}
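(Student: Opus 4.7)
The plan is to mirror the proof of Lemma \ref{lemma:median_offline}, with the median functional replaced by the trimmed mean, and to adapt the perturbation constructions accordingly. The two regimes of $\sigma$ require fundamentally different arguments: a low-frequency perturbation analysis for $\sigma \le c\varepsilon^{1/2}$, and a high-frequency oscillatory construction for $\sigma \ge \varepsilon^{1/2-\theta}$.

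For the lower bound $\mathrm{KL}_\sigma(\varepsilon)\ge C_1\varepsilon^2$ in the small-$\sigma$ regime, I would first observe that under Assumption \ref{assum:trim} the trimmed mean is a Lipschitz functional of $F$ with respect to total variation: $|g(F_1)-g(F_2)|\le K\cdot \DTV(F_1,F_2)$ for a constant $K$ depending only on $(\alpha,c_4,c_5)$. Combined with the constraint $|g(F_1)-g(F_2)|\ge 2\varepsilon$, this gives $\DTV(F_1,F_2)\ge 2\varepsilon/K$. Since total variation is non-increasing under convolution, $\DTV(F_1*\mcN(0,\sigma^2), F_2*\mcN(0,\sigma^2))\ge 2\varepsilon/K$ still, and Pinsker's inequality then yields $\mathrm{KL}_\sigma(\varepsilon)\ge \Omega(\varepsilon^2)$. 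Crucially, this lower bound is dimension-free in $\sigma$, and therefore holds uniformly across the small-$\sigma$ range.

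For the upper bound $\mathrm{KL}_\sigma(\varepsilon)\le C_2\varepsilon^2$ in the small-$\sigma$ regime, I would explicitly construct a pair $(F_1,F_2)$ satisfying Assumption \ref{assum:trim} whose trimmed means differ by exactly $2\varepsilon$. Take $F_1$ to be a smooth density on a compact interval with the prescribed regularity at the trimming quantiles, and let $F_2=F_1+\delta$ where $\delta$ is a smooth signed perturbation supported strictly inside $[F_1^{-1}(\alpha)+\eta, F_1^{-1}(1-\alpha)-\eta]$ for some small constant $\eta>0$. Choosing $\delta$ to have $L^1$ mass of order $\varepsilon$ and concentrated away from the trimming boundaries ensures (i) the quantiles are unchanged and (ii) the trimmed mean shifts by $\Theta(\varepsilon)$. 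A second-order Taylor expansion of the KL divergence around $F_1*\mcN(0,\sigma^2)$, together with the elementary bound $\chi^2(F_2*\mcN\|F_1*\mcN)\le \int (\delta*\mcN_\sigma)^2 / (F_1*\mcN_\sigma)\,dx$, yields KL of order $\varepsilon^2$ uniformly in $\sigma\le c\sqrt{\varepsilon}$.

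For the upper bound $\mathrm{KL}_\sigma(\varepsilon)\le C(\theta,\kappa)\varepsilon^\kappa$ in the large-$\sigma$ regime, the perturbation will instead be an oscillatory function of bandwidth approximately $\varepsilon^{-(1/2-\theta)}$, reusing the construction from Lemma \ref{lemma:median_offline} but localized inside the trimming interval so as not to disturb the quantile boundaries. Since the Fourier transform of $\mcN(0,\sigma^2)$ decays like $\exp(-\sigma^2\xi^2/2)$, a perturbation whose spectrum is concentrated at frequencies $|\xi|\gtrsim \sigma^{-1}$ is suppressed by Gaussian convolution by a super-polynomial factor in $\varepsilon$. The quantitative version of this statement---smoothing by a kernel of bandwidth larger than the perturbation's bandwidth annihilates polynomials of arbitrarily high order---gives an upper bound $O(\varepsilon^\kappa)$ for any fixed $\kappa\in\mathbb{N}$. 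Choosing the perturbation's DC component to shift the trimmed mean by exactly $2\varepsilon$ without modifying the mass in any neighborhood of the trimming quantiles closes the argument.

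The main obstacle is the large-$\sigma$ construction: the oscillatory perturbation must simultaneously (i) change the trimmed mean by $2\varepsilon$, (ii) preserve Assumption \ref{assum:trim} and, in particular, leave the quantiles $F^{-1}(\alpha)$ and $F^{-1}(1-\alpha)$ fixed (since any shift in the quantiles would change the trimmed mean through the support of integration rather than through the conditional expectation), and (iii) produce a KL divergence of order $\varepsilon^\kappa$ after Gaussian smoothing. Localizing the perturbation strictly inside $[F^{-1}(\alpha)+\eta, F^{-1}(1-\alpha)-\eta]$ with a smooth window function, and modulating a high-frequency oscillation whose coefficients are tuned so the net first moment of the perturbation equals $2\varepsilon(1-2\alpha)$, resolves (i) and (ii); bounding the KL via Parseval and the Gaussian Fourier decay resolves (iii).
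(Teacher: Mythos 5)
Your proposal correctly identifies the two-regime structure and reuses the high-level template of Lemma~\ref{lemma:median_offline}, but it contains a genuine gap in each regime.

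\textbf{Small-$\sigma$ lower bound.} You argue that because total variation is non-increasing under convolution, $\DTV(F_1*\mcN(0,\sigma^2), F_2*\mcN(0,\sigma^2))\ge 2\varepsilon/K$ ``still.'' This is backwards: monotonicity of $\DTV$ under convolution yields an \emph{upper} bound on the post-convolution TV, not a lower bound. The whole content of the small-$\sigma$ regime is precisely that the convolution does not destroy distinguishability when $\sigma\lesssim\sqrt{\varepsilon}$ --- this must be proved, not cited as a generic fact. The paper proves it via a quantitative deconvolution estimate (Lemma~\ref{lem:tm_noise} together with the density lower bound in Assumption~\ref{assum:trim}), showing that the trimmed means of the \emph{convolved} measures still differ by $\Omega(\varepsilon)$ once $\sigma\le c\sqrt{\varepsilon}$, after which a Pinsker/distinguishability argument gives $\DKL=\Omega(\varepsilon^2)$. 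Your chain of inequalities skips this essential step and therefore does not establish the lower bound.

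\textbf{Large-$\sigma$ upper bound.} You deliberately support the perturbation $\delta$ strictly inside $[F^{-1}(\alpha)+\eta,\,F^{-1}(1-\alpha)-\eta]$ so that both trimming quantiles are fixed, and then ask for the first moment of $\delta$ to be $\Theta(\varepsilon)$ so the conditional expectation shifts. But Lemma~\ref{lem:pointwise}, which provides the $O(\varepsilon^\kappa)$ bound after Gaussian smoothing, requires $\int x^\ell\,\delta(x)\,dx=0$ for $\ell=0,\dots,k$, \emph{including $\ell=1$}. If the first moment vanishes and the quantiles are fixed, then the trimmed mean does not move at all: with $S=[F^{-1}(\alpha),F^{-1}(1-\alpha)]$ unchanged, the denominator $\P(X\in S)$ is unchanged because $\int\delta=0$, and the numerator $\int_S x\,dF$ is unchanged because $\int x\,\delta=0$. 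Your two requirements are mutually exclusive for a strictly interior perturbation. The paper's construction resolves this tension by centering the odd, moment-matched bump $h(\cdot-1-\alpha)$ \emph{at} the lower quantile $F_1^{-1}(\alpha)=1+\alpha$, so that half the bump lies outside the trimming interval: the CDF value $F_2(1+\alpha)$ changes by $\Theta(\varepsilon)$, the quantile $F_2^{-1}(\alpha)$ shifts by $\Theta(\varepsilon)$, and the trimmed mean changes through the shifted endpoint of integration, not through a nonzero first moment of $h$. This boundary placement is the essential structural difference from the median construction and is exactly what your proposal rules out.
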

In the same manner, we have the following corollary.
\begin{cor}\label{corollary:lowerbound_tm_offline}
Fix any $\theta>0$. The $(\varepsilon,.1)$-PAC sample complexity for trimmed mean estimation is $\Omega(\varepsilon^{-3+\theta})$ for any offline algorithm.
\end{cor}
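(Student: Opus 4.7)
The plan is to follow the argument for median estimation in Corollary~\ref{corollary:lowerbound_median_offline}, simply substituting Lemma~\ref{lemma:lowerbound_tm_offline} for Lemma~\ref{lemma:median_offline}. Since an offline algorithm samples $n$ latent values i.i.d.\ from $F$ and pulls each arm $m$ times with independent unit-variance Gaussian noise, the observation law factorizes as $p_{\pi,F} = (F * \mcN(0,1/m))^{\otimes n}$, and tensorization of the KL divergence gives
\begin{equation}
D_{\text{KL}}(p_{\pi,F_1}\|p_{\pi,F_2}) = n\, D_{\text{KL}}(F_1 * \mcN(0,1/m)\,\|\,F_2 * \mcN(0,1/m))
\end{equation}
for every $(F_1, F_2) \in \mathcal{F}$. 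I would let the adversary pick the pair attaining the minimum in the definition of $\mathrm{KL}_\sigma(\varepsilon)$ at $\sigma = 1/\sqrt{m}$, so that the right-hand side is at most $n \cdot \mathrm{KL}_{1/\sqrt{m}}(\varepsilon)$ while $|g(F_1) - g(F_2)| \ge 2\varepsilon$. Le Cam's two-point method then forces $D_{\text{KL}}(p_{\pi,F_1}\|p_{\pi,F_2}) \ge c_0$ for an absolute constant $c_0 > 0$ in order to achieve $(\varepsilon, 0.1)$-PAC estimation.

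Given any fixed $\theta > 0$, I would split on the per-arm budget $m$, applying Lemma~\ref{lemma:lowerbound_tm_offline} with parameter $\theta' := \theta/2$. If $m \le \varepsilon^{2\theta'-1} = \varepsilon^{\theta-1}$, i.e.\ $\sigma \ge \varepsilon^{1/2 - \theta'}$, the second branch of the lemma gives $\mathrm{KL}_\sigma(\varepsilon) \le C(\theta',\kappa)\varepsilon^\kappa$ for every $\kappa \in \mathbb{N}$; choosing $\kappa \ge 3 - \theta$ and invoking Le Cam forces $n = \Omega(\varepsilon^{-\kappa})$, hence $nm \ge n = \Omega(\varepsilon^{-(3-\theta)})$. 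In the complementary case $m > \varepsilon^{\theta-1}$, I would use monotonicity of $\sigma \mapsto \mathrm{KL}_\sigma(\varepsilon)$ (a direct consequence of the data-processing inequality applied to Gaussian smoothing, since $\mcN(0,\sigma_2^2)$-convolution factors through $\mcN(0,\sigma_1^2)$-convolution for $\sigma_1 < \sigma_2$) together with the first branch of Lemma~\ref{lemma:lowerbound_tm_offline} to conclude $\mathrm{KL}_\sigma(\varepsilon) \le C_2\varepsilon^2$ throughout this regime, whether $\sigma \le c\varepsilon^{1/2}$ or $\sigma$ lies in the intermediate window between $c\varepsilon^{1/2}$ and $\varepsilon^{1/2-\theta'}$. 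Le Cam then forces $n = \Omega(\varepsilon^{-2})$, and combining with $m > \varepsilon^{\theta-1}$ yields $nm = \Omega(\varepsilon^{-3+\theta})$. Taking the minimum over the two cases produces the claimed bound.

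The main technical burden lies entirely in Lemma~\ref{lemma:lowerbound_tm_offline}, where one must construct a pair of distributions in $\mathcal{F}$ whose trimmed means differ by $2\varepsilon$ but whose Gaussian-smoothed KL divergence decays faster than any polynomial in $\varepsilon$ once the smoothing level exceeds $\varepsilon^{1/2-\theta}$; this is precisely the thresholding phenomenon discussed in Section~\ref{sec:medLB} and mirrors the obstruction in proving Lemma~\ref{lemma:median_offline}. Given that lemma, the corollary itself is a short tensorization-and-Le-Cam argument with a two-case split on $m$, structurally identical to the median proof, and the only mildly delicate piece is the intermediate noise regime, which is absorbed by monotonicity of $\sigma \mapsto \mathrm{KL}_\sigma(\varepsilon)$.
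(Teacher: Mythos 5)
Your proof is correct and follows the same route the paper intends (the paper states this corollary only ``in the same manner'' by analogy with Corollary~\ref{corollary:lowerbound_median_offline}): tensorize the offline observation law as $(F*\mcN(0,1/m))^{\otimes n}$, invoke Le Cam's two-point method with the adversary choosing the pair attaining $\mathrm{KL}_{1/\sqrt m}(\varepsilon)$, and split on the per-arm budget $m$ according to the two branches of Lemma~\ref{lemma:lowerbound_tm_offline}. The one place you go beyond the paper's sketch is the intermediate noise regime $c\varepsilon^{1/2}<\sigma<\varepsilon^{1/2-\theta'}$, where neither branch of the lemma applies directly; your monotonicity observation --- that $\mcN(0,\sigma_2^2)=\mcN(0,\sigma_1^2)*\mcN(0,\sigma_2^2-\sigma_1^2)$ for $\sigma_1<\sigma_2$, hence by data processing $\sigma\mapsto\mathrm{KL}_\sigma(\varepsilon)$ is nonincreasing --- correctly propagates the $O(\varepsilon^2)$ upper bound from the low-noise branch to this whole window, a step the paper's phrase ``when $m$ is large enough'' (in the median version of the argument) leaves implicit. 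That refinement is valid and is exactly what is needed to make the two-case analysis airtight.
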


% We can construct two distributions with trimmed mean difference of $\epsilon$ while the total variation distance between them is small. Based on the above constructed examples, we present the lower bound for the trimmed mean case. 

% \begin{thm}\label{thm:lb_uni_tm}
% Let $\epsilon>0$. Consider any offline algorithm that takes the input of a probability distribution $F$, uses $n$ points and $m$ sample per point and outputs the $(\epsilon,.)$ estimation of the median. Then, for $F$ satisfying Assumption \ref{assum:med}, we have $mn\geq \Omega(\epsilon^{-3})$.
% \end{thm}

\subsection{Lower bounds for online algorithms}
Analogous to the results for median, we start with the following lemma to give bounds of KL divergence between two distributions after the convolution.

\begin{lem}\label{lemma:tm_online}
Fix any $\varepsilon, \theta\in (0,1/4)$, and $\kappa\in \mathbb{N}$. There exists two distributions $F_1, F_2\in \mathcal{F}$ with $|g(F_1) - g(F_2)| \ge 2\varepsilon$, and 
\begin{align*}
D_{\text{\rm KL}}(F_1 * \mcN(0,\sigma^2) \| F_2 * \mcN(0,\sigma^2)) \begin{cases}
    \in[C_1\varepsilon^{3/2}, C_2\varepsilon^{3/2}] &\text{if } \sigma \le c\varepsilon^{1/2}, \\
    \le C(\theta, \kappa) \varepsilon^\kappa &\text{if } \sigma \ge \varepsilon^{1/2-\theta},
\end{cases}
\end{align*}
where $c, C_1, C_2, C(\theta, \kappa)$ are absolute constants with the last one depending only on $(\theta, \kappa)$. 
\end{lem}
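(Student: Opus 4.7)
The plan is to follow the template of Lemma \ref{lemma:median_online}, adapting the construction to the trimmed-mean functional. Fix a smooth reference density $f_0$ satisfying Assumption \ref{assum:trim}, with quantiles $a = F_0^{-1}(\alpha)$ and $b = F_0^{-1}(1-\alpha)$, and take $f_i = f_0 + (-1)^i u$ for $i = 1, 2$, where $u$ is a signed, compactly supported perturbation built from smooth bumps of common scale $w \asymp \sqrt{\varepsilon}$ placed at a finite set of centers including points near $a$ and $b$. The bump amplitudes will be tuned so that $u$ satisfies simultaneously: (i) $\int u = 0$, so each $f_i$ remains a density; (ii) the sensitivity condition $\int_a^b U(x)\, dx = \Theta(\varepsilon)$ where $U(x) = \int_{-\infty}^x u$, which via the first-order perturbation identity $\delta g \cdot (1-2\alpha) = -\int_a^b U$ (obtained by combining the quantile-shift formula $\delta F^{-1}(p) = -U(F^{-1}(p))/f(F^{-1}(p))$ with integration by parts) guarantees $|g(F_1) - g(F_2)| \ge 2\varepsilon$; and (iii) vanishing polynomial moments $\int x^j u\, dx = 0$ for $j = 0, 1, \ldots, K-1$, with $K = K(\theta,\kappa)$ chosen large enough. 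These are $K+1$ linear constraints on the bump amplitudes, solvable by including enough bumps via a Vandermonde argument.

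For the small-$\sigma$ regime $\sigma \le c\sqrt{\varepsilon}$, the Gaussian convolution is narrower than the bump scale, so $D_{\text{\rm KL}}(F_1 * \mathcal{N}(0,\sigma^2) \,\|\, F_2 * \mathcal{N}(0,\sigma^2)) \asymp \tfrac{1}{2}\chi^2 \asymp \|u\|_2^2/\min_x f_0(x)$. Calibrating the bump amplitudes so that $\|u\|_2^2 \asymp \varepsilon^{3/2}$, which is the scale consistent with constraints (i)--(iii) at bandwidth $w \asymp \sqrt{\varepsilon}$, yields the two-sided bound $[C_1\varepsilon^{3/2}, C_2\varepsilon^{3/2}]$.

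For the large-$\sigma$ regime $\sigma \ge \varepsilon^{1/2-\theta}$, I apply Parseval and the Gaussian Fourier identity to write
\[
\bigl\|(f_1 - f_2) * \mathcal{N}(0,\sigma^2)\bigr\|_2^2 = \frac{1}{2\pi}\int |\hat u(\xi)|^2 e^{-\sigma^2 \xi^2}\, d\xi.
\]
The vanishing moments give $\hat u(\xi) = O(\xi^K)$ near $\xi = 0$, while the smoothness and compact support of $u$ control $\hat u$ at high frequencies. Splitting at $|\xi| \asymp 1/\sigma$ and using the Gaussian tail produces a bound of order $|M_K|^2/\sigma^{2K+1}$ from the low-frequency piece (with $M_K = \int x^K u$) plus a small high-frequency remainder. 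Choosing $K$ sufficiently large and arguing that the Vandermonde solution controls $|M_K|$ tightly enough, this is bounded by $C(\theta,\kappa)\varepsilon^\kappa$; the KL bound then follows from $D_{\text{\rm KL}} \le \chi^2 \lesssim \|(f_1 - f_2)*\mathcal{N}(0,\sigma^2)\|_2^2/\min_x (f_0*\mathcal{N}(0,\sigma^2))(x)$.

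The main obstacle I anticipate is the joint design of $u$ meeting all three conditions. Unlike the median setting of Lemma \ref{lemma:median_online}, the sensitivity functional $\int_a^b U$ is not itself a polynomial moment of $u$ but a piecewise-linear functional of the bump locations, so it cannot be neutralized by simply imposing polynomial-moment vanishing; one must distribute bumps across multiple centers (inside and outside $[a, b]$) and use a Vandermonde/quadrature structure to decouple sensitivity from the moment constraints. Establishing both the lower bound $\|u\|_2^2 \gtrsim \varepsilon^{3/2}$ (needed for the left endpoint of the small-$\sigma$ interval) and the matching upper bound while keeping $|M_K|$ small enough for the large-$\sigma$ regime, with constants depending only on $(\theta,\kappa)$, is where the real work lies.
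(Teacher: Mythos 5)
Your plan diverges from the paper's proof in several ways, and the most serious issue is the one you flagged at the end. The paper does not use multiple bumps: it reuses the single-perturbation construction from Lemma~\ref{lemma:lowerbound_tm_offline}, namely $F_1$ uniform on $[1,2]$ and $p_{F_2} = p_{F_1} + (b')^{-1}h(\cdot - 1 - \alpha)$, where $h$ is the odd vanishing-moment bump of Lemma~\ref{lem:construct} supported on a scale-$\sqrt{\varepsilon}$ window straddling $F^{-1}(\alpha)$. For the small-$\sigma$ lower bound the paper does not invoke $D_{\text{KL}}\asymp\chi^2$ (which is false in general) but reproduces the argument of Lemma~\ref{lemma:median_online}: a CDF gap of order $\varepsilon$ localised on a window of width $\sqrt{\varepsilon}$, then $a\log(a/b)-a+b\asymp(a-b)^2$ for densities bounded away from $0$ and $\infty$, then Cauchy--Schwarz. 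For the large-$\sigma$ upper bound the paper uses the pointwise log-likelihood bound of Lemma~\ref{lem:pointwise} (a Hermite generating-function expansion of the Gaussian), not Parseval; the pointwise form is what is actually needed downstream in Proposition~\ref{prop:tm_ada_exm_prep}, so switching to an $L^2$ estimate would break the subsequent online argument even if it sufficed for the lemma in isolation.

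The deeper gap is in the joint design of $u$, and the multi-bump Vandermonde idea does not escape it. Writing $\psi(y) = (b-y)_+ - (a-y)_+$, the sensitivity is $\int_a^b U = \int u\,\psi$, and since $\psi = \tfrac12\bigl[(b-a) + |y-b| - |y-a|\bigr]$, imposing only $\int u = \int yu = 0$ already forces
\begin{equation*}
\int_a^b U \;=\; \tfrac12\Bigl(\textstyle\int u\,|y-b| - \int u\,|y-a|\Bigr).
\end{equation*}
Each of these terms vanishes exactly when $u$ lies entirely on one side of the corresponding kink (because $|y-c|$ is then affine on $\mathrm{supp}(u)$ and annihilated by the first two moment constraints), and is $O(Aw^2)$ when $u$ straddles the kink with amplitude $A$ and half-width $w$. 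Under the scaling $A\asymp w\asymp\sqrt{\varepsilon}$ you are forced into — both to calibrate $\|u\|_2^2\asymp\varepsilon^{3/2}$ and to keep the support narrow enough that the large-$\sigma$ bound bites for $\sigma\ge\varepsilon^{1/2-\theta}$ — every kink contributes only $O(\varepsilon^{3/2})$, no matter how many bump centers you use or where you place them. Placing a bump at distance $\Theta(1)$ from the kinks does not help: it lands on an affine piece of $\psi$, so its contribution is killed by the first moment constraint. So your construction cannot achieve $|g(F_1)-g(F_2)|\ge 2\varepsilon$ while keeping the other requirements. Note that this is a genuine structural difference from the median, where the relevant sensitivity is a pointwise CDF value $U(\mathrm{median})$, not an integral that cancels against the linear moment; be careful, the paper's own direct computation of $|g(F_1)-g(F_2)|$ in the proof of Lemma~\ref{lemma:lowerbound_tm_offline} appears to overlook exactly this cancellation (the endpoint shift and the integrand change of the odd bump cancel to first order, leaving only $O(\varepsilon^2)$), so this step deserves careful scrutiny rather than imitation.
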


We then show the lower bound for trimmed mean via online sampling algorithms.

\begin{thm}\label{thm:lb_ada_tm}
Suppose that $\epsilon>0$. Denote $\mcF$ as the set of distributions satisfying Assumption \ref{assum:trim}. Consider an online algorithm $\pi$ with a fixed budget $t$ which outputs $\hat{G}$. Then, for any $\theta\in(0,1/4)$, there exists at least one distribution $F\in \mcF$, such that
\begin{equation}
    \P(|\hat{G}-g(F)|>\epsilon)\geq \frac{1}{4}\exp\pp{-ct\epsilon^{2.5-2\theta}},
\end{equation}
where $c>0$ is a constant. 
\end{thm}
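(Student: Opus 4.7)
The plan is to mirror the strategy behind Theorem \ref{thm:lowerbound_median_online} and apply Le Cam's two-point method with the fixed pair $(F_1, F_2)\in\mathcal{F}$ supplied by Lemma \ref{lemma:tm_online}, which satisfies $|g(F_1)-g(F_2)|\ge 2\epsilon$. Once a good bound on $D_{\text{KL}}(p_{\pi,F_1}\|p_{\pi,F_2})$ is available, the inequality
\begin{equation}
\max_{j\in\{1,2\}}\P_{F_j}\bigl(|\hat{G}-g(F_j)|>\epsilon\bigr)\ge \tfrac{1}{4}\exp\bigl(-D_{\text{KL}}(p_{\pi,F_1}\|p_{\pi,F_2})\bigr)
\end{equation}
will immediately give the theorem, so essentially all the work goes into controlling the KL divergence along the algorithm's trajectory.

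The key step is a per-arm decomposition of the trajectory KL. For the arms encountered by $\pi$ under $F_1$, let $N(i)$ denote the (random, history-dependent) number of times arm $i$ is pulled, so that $\sum_i N(i)=t$. Conditional on $N(i)$, the $N(i)$ noisy observations from arm $i$ are informationally equivalent to their sample mean $\bar{Y}_i\sim F*\mathcal{N}(0,1/N(i))$ together with ancillary residuals whose law does not depend on $F$. Combining this sufficient-statistic reduction with the chain rule for KL along the sequence of observations, I would establish
\begin{equation}
D_{\text{KL}}(p_{\pi,F_1}\|p_{\pi,F_2})\le \E_{\pi,F_1}\Bigl[\sum_i D_{\text{KL}}\bigl(F_1*\mathcal{N}(0,1/N(i))\,\big\|\,F_2*\mathcal{N}(0,1/N(i))\bigr)\Bigr],
\end{equation}
exactly as in the median lower bound. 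Each summand will then be controlled by splitting on whether $N(i)\le \epsilon^{2\theta-1}$ or not. For arms with small pull count, $\sigma=1/\sqrt{N(i)}\ge \epsilon^{1/2-\theta}$, so the second case of Lemma \ref{lemma:tm_online} gives a per-arm contribution of at most $C(\theta,\kappa)\epsilon^\kappa$ for any desired $\kappa$. For arms with large pull count, data-processing (monotonicity of KL under additional Gaussian convolution) reduces to the first case of the lemma at $\sigma=c\epsilon^{1/2}$, giving a per-arm contribution of at most $C_2\epsilon^{3/2}$.

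Once these pointwise bounds are in hand, counting arguments finish the job: the number of arms with $N(i)>\epsilon^{2\theta-1}$ is at most $t\epsilon^{1-2\theta}$ because each such arm consumes more than $\epsilon^{2\theta-1}$ of the budget, while the total number of arms ever sampled is at most $t$. Choosing $\kappa>5/2-2\theta$ absorbs the tail term, yielding
\begin{equation}
D_{\text{KL}}(p_{\pi,F_1}\|p_{\pi,F_2})\le C_2\,t\epsilon^{5/2-2\theta}+C(\theta,\kappa)\,t\epsilon^\kappa\lesssim t\epsilon^{5/2-2\theta},
\end{equation}
and the two-point inequality above delivers the stated $\tfrac{1}{4}\exp(-ct\epsilon^{5/2-2\theta})$ bound.

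The main obstacle I foresee is making the arm-wise KL decomposition fully rigorous in the adaptive setting: $N(i)$ is a stopping rule that depends on observations from \emph{all} arms, so the chain-rule step has to be spelled out carefully, using that in between pulls of arm $i$ the algorithm's decisions are measurable with respect to the already-observed sufficient statistics. This is the same delicate point that appears in the proof of Theorem \ref{thm:lowerbound_median_online}, and I expect the argument here to be obtained by directly adapting that bookkeeping. A secondary technical point is verifying the data-processing step for intermediate noise levels $\sigma\in(c\epsilon^{1/2},\epsilon^{1/2-\theta})$, which is immediate from writing $\mathcal{N}(0,\sigma^2)$ as a convolution of $\mathcal{N}(0,c^2\epsilon)$ with independent Gaussian noise and applying the DPI.
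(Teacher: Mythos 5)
Your high-level strategy matches the paper: Le Cam's two-point method, a per-arm decomposition of the trajectory KL, splitting arms by pull count, and a counting argument. The counting is also fine (with $\kappa>5/2-2\theta$). However, there is a genuine gap in the per-arm step, and it is exactly the ``delicate point'' you flag at the end.

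Your proposed inequality
\begin{equation*}
D_{\mathrm{KL}}(p_{\pi,F_1}\|p_{\pi,F_2})\le \E_{\pi,F_1}\Bigl[\sum_i D_{\mathrm{KL}}\bigl(F_1*\mcN(0,1/N(i))\,\big\|\,F_2*\mcN(0,1/N(i))\bigr)\Bigr]
\end{equation*}
is not what the paper proves, and it is not clear it holds. Conditional on the adaptive pull count $N(i)$, the sample mean $\bar Y_i$ does \emph{not} have law $F_1*\mcN(0,1/N(i))$ under $p_{\pi,F_1}$: the stopping rule introduces selection bias in $\bar Y_i$ given $N(i)$. So the per-arm contribution, which is $\E_{p_{\pi,F_1}}\bigl[\log\frac{(F_1*\mcN(0,1/N(i)))(\bar Y_i)}{(F_2*\mcN(0,1/N(i)))(\bar Y_i)}\bigr]$, cannot be identified with the conditional KL you write down by merely averaging over $N(i)$. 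Consequently, invoking the second case of Lemma~\ref{lemma:tm_online}, which is a bound on a KL divergence (i.e.\ an \emph{average} of the log-ratio under the convolved law), does not control the per-arm term for arms with few pulls.

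The paper avoids this obstruction by replacing the KL bound with a \emph{pointwise} bound on the log-likelihood ratio. Proposition~\ref{prop:tm_ada_exm_prep} (built on Lemma~\ref{lem:pointwise}) gives $\bigl|\log\frac{p_{F_1}*\varphi_{\sigma^2}(x)}{p_{F_2}*\varphi_{\sigma^2}(x)}\bigr|\le O(\epsilon^3)$ uniformly in $x$ once $\sigma^2\ge\epsilon^{1-\theta}$, which controls the per-arm term regardless of where the stopping rule places $\bar Y_i$. For arms with many pulls, the paper does not use the $\sigma$-dependent KL bound at all; it applies Lemma~\ref{lem:log_sum} (a pointwise Jensen-type inequality for convolutions) to upper bound the per-arm contribution by the $\sigma$-free quantity $D_{\mathrm{KL}}(F_1\|F_2)=O(\epsilon^{1.5-\theta})$, which again holds uniformly over trajectories. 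Both devices are pointwise in nature precisely because the adaptive stopping rule precludes averaging. To repair your argument you should swap Lemma~\ref{lemma:tm_online} for Proposition~\ref{prop:tm_ada_exm_prep} (pointwise bound at small pull counts) and use Lemma~\ref{lem:log_sum} (not data processing at $\sigma=c\epsilon^{1/2}$) for large pull counts; the thresholds and the counting then go through as the paper does, with the large-pull contribution $t\epsilon^{1-\theta}\cdot O(\epsilon^{1.5-\theta})=O(t\epsilon^{2.5-2\theta})$ dominating.
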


\section{Proofs of upper bounds for offline algorithms} \label{app:offlinenm}

% As we see, our $n$ sample estimate $g_n(Y)$ is equivalent to convolving $F$ with a Gaussian density with variance $1/m$ and evaluating $G_{n,m}$.

\subsection{Mean}
Here we present the proof of Proposition \ref{prop:uni_ub_mean}. 
\begin{proof}
Let $X\sim F$ and $Z\sim \mcN(0,1/m)$ are independent random variables. Then, we have
\begin{equation*}
    \mbE[X+Z]=\mbE[X]+\mbE[Z]=\mbE[X].
\end{equation*}
This implies that $g(F_m)=g(F)$ for any $m\geq 1$. Therefore, we can simply take $m=1$. Then, we note that
\begin{equation*}
    \Var[X+Z]=\Var[X]+\Var[Z]\leq c+1.
\end{equation*}
This implies that $\Var_{\hat{X}\sim F_1}[\hat{X}]\leq c+1$. According to the Chebyshev inequality, we have
\begin{equation*}
    \P(|G_{n,m}-g(F_m)|\geq \epsilon)\leq \frac{\Var_{\hat{X}\sim F_1}[\hat{X}]}{n\epsilon^2}\leq \frac{c+1}{n\epsilon^2}.
\end{equation*}
Therefore, by taking $n\geq \delta^{-1}(c+1)\epsilon^{-2}$, we have
\begin{equation*}
    \P(|G_{n,m}-g(F_m)|\leq \epsilon)\geq 1- \delta.
\end{equation*}
Hence, it takes $mn=O(\epsilon^{-2})$ samples to provide an $(\epsilon,\delta)$-PAC approximation of $g(F)$.
\end{proof}
\subsection{Median}
Consider the following conditions
\begin{enumerate}
    \item [(A1)] For $x\in \mbR$, there exists $c_1, t_1>0$ such that for all $t$ satisfying $0\leq |t-x|\leq t_1$, $F'(t)\geq c_1$.
    \item [(A2)] For $x\in\mbR$, there exists $c_2, t_2>0$ such that 
    $$
    |F'(x_1)-F'(x_2)||\leq c_2.
    $$
    for $x_1,x_2\in[x-t_2,x+t_2]$.
\end{enumerate}
We can view Assumption \ref{assum:med} as follows. Let $\eta=g(F)=F^{-1}(0.5)$. $F$ satisfies (A1) with $(\eta, c_1,t_1)$ and satisfies (A2) with $(\eta,c_2,t_2)$ while $t_1\gtrsim \epsilon$ and $t_2\gtrsim \sqrt{\epsilon}$. Denote $\rho(x)=F'(x)$. Let $\rho_m=\rho*\varphi_{1/m}$ as the pdf of the distribution of $\hat X_i$. We start with Lemma \ref{lem:a1} to show that under suitable choice of $m$, $F_m$ also satisfies (A1).

\begin{lem}\label{lem:a1}
Let $\eta=g(F)$. Assume that $F$ satisfies (A1) with $(\eta, c_1,t_1)$. Suppose that $m^{-1/2}\leq t_1/2$. Then, $F_{m}$ satisfies (A1) with $(\eta, c_1/4,t_1)$.
\end{lem}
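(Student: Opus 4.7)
The plan is to use the convolution representation $\rho_m = \rho * \varphi_{1/m}$, with $\varphi_{1/m}$ the density of $\mcN(0,1/m)$, and to transfer the local lower bound $\rho \ge c_1$ on the interval $\{|t-\eta|\le t_1\}$ from $\rho$ to $\rho_m$ up to a constant factor loss. The hypothesis $m^{-1/2}\le t_1/2$ is exactly what I need to ensure that the smoothing kernel is narrow enough that a constant fraction of its mass lies in the region where the original lower bound on $\rho$ applies.

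For any $t$ with $|t-\eta|\le t_1$, I would first write
\begin{equation*}
\rho_m(t) \;=\; \int \rho(t-u)\,\varphi_{1/m}(u)\,du \;\ge\; c_1\int_{|(t-u)-\eta|\le t_1}\varphi_{1/m}(u)\,du,
\end{equation*}
using (A1) on the restricted integration region, on which $\rho(t-u)\ge c_1$. Setting $\mu := t-\eta\in[-t_1,t_1]$, the inner event becomes $\{U\in[\mu-t_1,\mu+t_1]\}$ for $U\sim\mcN(0,1/m)$. By symmetry I may assume $\mu\ge 0$; then $\mu - t_1 \le 0 \le t_1 \le \mu + t_1$, so $[\mu-t_1,\mu+t_1]\supseteq[0,t_1]$, giving
\begin{equation*}
\Pr(U\in[\mu-t_1,\mu+t_1])\;\ge\;\Pr(U\in[0,t_1])\;=\;\Phi(t_1\sqrt{m})-\tfrac12.
\end{equation*}

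The hypothesis $m^{-1/2}\le t_1/2$ rewrites as $t_1\sqrt{m}\ge 2$, so $\Phi(t_1\sqrt{m})\ge \Phi(2)>3/4$, and the displayed lower bound exceeds $1/4$. Combining, $\rho_m(t)\ge c_1/4$ for every $t$ with $|t-\eta|\le t_1$, which is precisely (A1) for $F_m$ with parameters $(\eta,c_1/4,t_1)$. The only step that requires care is the worst-case offset $|\mu|=t_1$: the symmetric Gaussian interval around $\mu$ has length $2t_1\ge 4/\sqrt{m}$, so even after shifting by $t_1$ a constant fraction of the Gaussian mass still lives in the region where $\rho\ge c_1$, and the constant $1/4$ in the conclusion is comfortable (in fact slightly loose, since the true bound is closer to $c_1/2$).
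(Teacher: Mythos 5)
Your proof is correct and follows essentially the same route as the paper: both bound $\rho_m$ from below by applying the local lower bound $\rho\ge c_1$ on the relevant convolution region, reduce to showing $\Pr(U\in[0,t_1])\ge 1/4$ for $U\sim\mathcal{N}(0,1/m)$, and then invoke $t_1\sqrt{m}\ge 2$ together with $\Phi(2)-1/2>1/4$. The only cosmetic difference is that the paper splits the argument explicitly into the two cases $x\in[\eta,\eta+t_1]$ and $x\in[\eta-t_1,\eta]$, whereas you handle both at once via a symmetry reduction.
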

\begin{proof}
It is sufficient to show that for $x\in[\eta-t_1,\eta+t_1]$, $\rho_{m}(x)>c_1/4$. As $m^{-1/2}<t_1/2$, 
\begin{equation*}
    \int_{0}^{t_1} \varphi_{1/m}(x)dx\geq \int_{0}^{2m^{-1/2}} \varphi_{1/m}(x)dx=\int_0^2\varphi_1(x)dx\geq 1/4.
\end{equation*}
Therefore, for $x\in[\eta,\eta+t_1]$, we have
\begin{equation*}
    \rho_{m}(x) = \int_z \varphi_{1/m}(z)\rho(x-z)dz\geq c_1\int_{0}^{t_1} \varphi_{1/m}(z) dz\geq c_1/4.
\end{equation*}
Similarly, for $x\in[\eta-t_1,\eta]$, we have
\begin{equation*}
    \rho_{m}(x) = \int_z \varphi_{1/m}(z)\rho(x-z)dz\geq c_1\int_{-t_1}^{0} \varphi_{1/m}(z) dz=c_1\int_0^{t_1}\varphi_{1/m}(z) dz\geq c_1/4.
\end{equation*}
This completes the proof. 
\end{proof}
To prove Proposition \ref{prop:med_ub}, we introduce the following proposition to give a point-wise bound on the difference between $F(x)$ and $F_m(x)$.
\begin{prop}\label{prop:cdf_diff}
Suppose that $F$ satisfies (A2) with $(x, c_2,t_2)$ and $\sqrt{4\log(2m^{-1/2})}m^{-1/2}\leq t_2$. Then, we have
\begin{equation*}
|F_m(x)-F(x)|\leq\frac{c_2+1}{2}m^{-1}.
\end{equation*}
\end{prop}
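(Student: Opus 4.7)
The plan is to apply the standard kernel smoothing bias decomposition and then split the resulting integral into a ``near'' and a ``far'' piece. Since $F_m = F * \varphi_{1/m}$ is the CDF of $X+Z$ with $Z\sim \mcN(0,1/m)$, I would write
$$F_m(x) - F(x) = \int_{\mathbb{R}} \bigl[F(x-z) - F(x)\bigr]\, \varphi_{1/m}(z)\, dz,$$
and then split at $|z|=t_2$. The symmetry of the Gaussian kernel will kill the first-order Taylor term, so the only contributions come from a second-order remainder (controlled by (A2)) and a Gaussian tail (controlled by the hypothesis on $t_2$).

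For the near region $|z|\le t_2$, I would apply Taylor's theorem to $F$ about $x$. Reading (A2) as the natural Lipschitz condition $|F'(u)-F'(v)|\le c_2|u-v|$ for $u,v\in [x-t_2,x+t_2]$ (which matches Assumption \ref{assum:med}'s bound $|F''|\le c_2$ and is what the proof appears to require), Taylor's theorem gives $F(x-z)-F(x) = -zF'(x) + R(z)$ with $|R(z)|\le c_2 z^2/2$. The linear term $-F'(x)\int z\,\varphi_{1/m}(z)\,dz$ vanishes by symmetry, so
$$\left|\int_{|z|\le t_2}\bigl[F(x-z)-F(x)\bigr]\varphi_{1/m}(z)\,dz\right| \le \frac{c_2}{2}\int_{|z|\le t_2} z^2 \varphi_{1/m}(z)\,dz \le \frac{c_2}{2}\cdot \mathrm{Var}\bigl(\mcN(0,1/m)\bigr) = \frac{c_2}{2m}.$$

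For the far region $|z|>t_2$, I would use the trivial pointwise bound $|F(x-z)-F(x)|\le 1$, so the contribution is at most $\mathbb{P}(|Z|>t_2)$ for $Z\sim \mcN(0,1/m)$. The standard Gaussian tail bound gives $\mathbb{P}(|Z|>t_2)\le 2\exp(-t_2^2 m/2)$, and the hypothesis on $t_2$ (which, under the intended reading, rearranges to $t_2^2 m/2 \ge \log(4m)$) ensures this tail is at most $1/(2m)$. Combining the two pieces yields
$$|F_m(x)-F(x)| \le \frac{c_2}{2m} + \frac{1}{2m} = \frac{c_2+1}{2m},$$
which is the claimed inequality.

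I do not expect any serious obstacle here, as this is a routine kernel smoothing bias estimate. The two minor points to be careful about are (i) interpreting (A2) as a quantitative Lipschitz statement on $F'$ rather than a mere range bound (required to apply Taylor's theorem with a second-order remainder), and (ii) checking that the stated assumption on $t_2$ really does force the Gaussian tail probability to be $\le 1/(2m)$ so that both contributions are at the same $\Theta(1/m)$ scale.
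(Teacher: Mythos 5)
Your proposal is correct and takes essentially the same approach as the paper: split $\int[F(x-z)-F(x)]\varphi_{1/m}(z)\,dz$ into a near region handled by a second-order Taylor expansion (the linear term vanishing by Gaussian symmetry, the remainder controlled via $|F''|\le c_2$) and a far Gaussian tail controlled by the hypothesis on $t_2$, with the only cosmetic difference being that you split at $|z|=t_2$ whereas the paper splits at $|z|=\sqrt{4\log(2m^{1/2})}\,m^{-1/2}\le t_2$. You also correctly identified the intended readings behind the two typos (the exponent in the hypothesis should be $m^{1/2}$, and (A2) is used as a bound on $F''$ as in Assumption \ref{assum:med}, not merely a bound on the oscillation of $F'$).
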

\begin{proof}
% We can compute that
% \begin{equation*}
% \begin{aligned}
% F_\sigma(x)=&\int_{-\infty}^{x}p_\sigma*\rho(t)dt\\
% =&\int^x_{-\infty} \int \rho(t-y) p_\sigma(y) dy dt\\
% =&\int F(x-y) p_\sigma(y)dy.
% \end{aligned}
% \end{equation*}
With $k=\sqrt{4\log(2m^{-1/2})}$, we have
\begin{equation*}
\int^{-km^{-1/2}}_{-\infty} \varphi_{1/m}(y)dy=\int_{km^{-1/2}}^\infty \varphi_{1/m}(y)dy\leq e^{-k^2/2}\leq \frac{1}{4}m^{-1}.
\end{equation*}
For $|y|\leq t_2$, as $|F^{(2)}(x-y)|\leq c_2$, it follows that
\begin{equation*}
|F(x-y)-F(x)-y\rho(x)|\leq \frac{c_2y^2}{2}.
\end{equation*}
Note that $km^{-1/2}=\sqrt{4\log(2m^{-1/2})}m^{-1/2}\leq t_2$, we have
\begin{equation*}
\begin{aligned}
&|F_m(x)-F(x)|\\
=&\left|\int (F(x-y)-F(x)) \varphi_{1/m}(y)dy\right|\\
 \leq &\int^{-km^{-1/2}}_{-\infty}  \varphi_{1/m}(y)|F(x-y)-F(x)|dy+\int_{km^{-1/2}}^\infty  \varphi_{1/m}(y)|F(x-y)-F(x)|dy\\
 &+\left| \int_{-km^{-1/2}}^{km^{-1/2}} (F(x-y)-F(x)-y\rho(t))  \varphi_{1/m}(y)dy\right|+\left| \int_{-km^{-1/2}}^{km^{-1/2}} y\rho(t)  \varphi_{1/m}(y) dy\right|\\
 \leq &\frac{m^{-1}}{2}+\int_{-km^{-1/2}}^{km^{-1/2}} |F(x-y)-F(x)-y\rho(x)|  \varphi_{1/m}(y)dy\\
\leq&\frac{m^{-1}}{2}+\frac{c_2}{2}\int_{-km^{-1/2}}^{km^{-1/2}}y^2\varphi_{1/m}(y)dy\leq\frac{c_2+1}{2}m^{-1}.
\end{aligned}
\end{equation*}
This completes the proof.
\end{proof}
We restate Proposition \ref{prop:med_ub} as follows and present the proof. 
\begin{prop}
Suppose that $\delta\in(0,1)$. Assume that (A1) holds at $\eta$ with $(c_1,t_1)$ and (A2) holds at $\eta$ with $(c_2,t_2)$. Suppose that $t_1\geq\epsilon/2$ and $t_2\gtrsim \sqrt{\epsilon}$. %Then, for $\epsilon<\min\{t_1/2,t_2^2\}$, 
Then, with $m\geq \frac{4(c_2+1)\epsilon^{-1}}{c_1}$ and $n\geq \frac{28\epsilon^{-2}\log\delta^{-1}}{c_1^2}$, $G_{n,m}$ is an $(\epsilon,\delta)$-PAC approximation of $g(F)$.% we can find $\hat \eta$ satisfying $|\hat \eta-\eta|\leq \epsilon$ with probability at least $1-\delta$ using $O(\epsilon^{-3}\log(\delta^{-1}))$ samples.
\end{prop}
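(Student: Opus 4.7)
The plan is a standard bias-variance decomposition via the triangle inequality
\[
|G_{n,m} - g(F)| \le |G_{n,m} - g(F_m)| + |g(F_m) - g(F)|,
\]
where $F_m = F * \mcN(0,1/m)$ is the distribution of each noisy observation $\hat X_i$ and $G_{n,m}$ is the sample median of $\hat X_1,\ldots,\hat X_n$. I would bound each of the two terms by $\epsilon/2$, the first deterministically using $m$, and the second with probability $\ge 1-\delta$ using $n$.

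For the bias term $|g(F_m)-g(F)|$, write $\eta = g(F)$, so $F(\eta)=1/2$. By Proposition~\ref{prop:cdf_diff} applied at $x=\eta$, the hypothesis on $F''$ gives
\[
|F_m(\eta)-1/2| = |F_m(\eta)-F(\eta)| \le \frac{c_2+1}{2m}.
\]
By Lemma~\ref{lem:a1}, when $m^{-1/2}\le t_1/2$ the density $F_m'$ is bounded below by $c_1/4$ on an interval around $\eta$, so $F_m$ is strictly increasing there and I can invert the above inequality to conclude
\[
|F_m^{-1}(1/2) - \eta| \;\le\; \frac{4}{c_1}\cdot \frac{c_2+1}{2m} \;=\; \frac{2(c_2+1)}{c_1 m}.
\]
The stated choice $m \ge 4(c_2+1)/(c_1\varepsilon)$ makes this at most $\varepsilon/2$.

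For the variance term I apply Hoeffding's inequality to the empirical CDF of $\hat X_1,\ldots,\hat X_n$ at the two points $\eta_m \pm \varepsilon/2$, where $\eta_m = F_m^{-1}(1/2)$. The sample median exceeds $\eta_m+\varepsilon/2$ iff fewer than $n/2$ of the $\hat X_i$ lie in $(-\infty,\eta_m+\varepsilon/2]$, and each indicator has mean $F_m(\eta_m+\varepsilon/2) \ge 1/2 + (c_1/4)(\varepsilon/2)$ by the density lower bound from Lemma~\ref{lem:a1}. Hoeffding then yields a tail bound of the form $\exp(-n c_1^2 \varepsilon^2/32)$, and a symmetric argument handles the lower tail; a union bound gives the $\le \delta$ probability guarantee for $n$ of the stated order $\log(1/\delta)/(c_1\varepsilon)^2$. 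Combining with the bias bound via the triangle inequality yields $|G_{n,m}-g(F)|\le \varepsilon$ with probability $\ge 1-\delta$.

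The main technical obstacle is ensuring the density lower bound $F_m'\ge c_1/4$ holds on a neighborhood of $\eta$ that is large enough to cover both the bias window (where we use it to invert $F_m$ near $1/2$) and the Hoeffding window $[\eta_m-\varepsilon/2,\eta_m+\varepsilon/2]$ (where we convert a CDF gap into a linear gap in $\varepsilon$). Lemma~\ref{lem:a1} requires $m^{-1/2}\le t_1/2$ and provides the bound on $[\eta-t_1,\eta+t_1]$, so I must check that under the hypotheses $t_1\gtrsim\varepsilon$ and the choice $m\gtrsim 1/\varepsilon$, the required neighborhoods all fit inside this interval; this is where the explicit constants will be pinned down. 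Once that bookkeeping is done, the rest is the routine two-step argument sketched above.
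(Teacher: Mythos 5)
Your proposal is correct and takes essentially the same approach as the paper: the same bias--variance split through $\eta_m = F_m^{-1}(1/2)$, the same use of Proposition~\ref{prop:cdf_diff} together with Lemma~\ref{lem:a1} to get $|\eta_m - \eta|\le\epsilon/2$, and a Chernoff/Hoeffding concentration on the empirical CDF near $\eta_m$ converted to a gap in $x$ via the density lower bound (the paper packages this last step as Lemmas~\ref{lem:quant_ub} and~\ref{lem:quant_ub_dist}). The bookkeeping you flag at the end --- that the Prop.~\ref{prop:cdf_diff} window condition $\sqrt{4\log(2\sqrt{m})}\,m^{-1/2}\le t_2$ holds and that the inversion and Hoeffding windows sit inside $[\eta-t_1,\eta+t_1]$ --- is exactly what the hypotheses $t_1\gtrsim\epsilon$ and $t_2\gtrsim\sqrt{\epsilon}$ are there to absorb, and is handled in the same way in the paper's proof.
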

\begin{proof}
Suppose that we use $n$ points and $m$ samples per point. From our choice of $m$, we have 
$$
\sqrt{4\log(2m^{1/2})}m^{-1/2}\leq t_2,\frac{c_2+1}{2}m^{-1}\leq \frac{c_1\epsilon}{8}.
$$
%In other words, $m\geq t_2^{-2}$ and $m\geq \sigma_0^{-2}\frac{4(c_2+1)}{c_1\epsilon}$.
Let $\eta_m=g(F_m)$ and $\eta=g(F)$. From Proposition \ref{prop:cdf_diff}, we have
\begin{equation*}
    |F_m(\eta_m)-F_m(\eta)| = |F(\eta)-F_m(\eta)|\leq c_1\epsilon/8.
\end{equation*}
From Lemma \ref{lem:a1}, we note that $F_{m}$ satisfies (A1) with $(\eta,c_1/4,t_1)$. If $|\eta_\sigma-\eta|\geq t_1$, then, we have
\begin{equation*}
\begin{aligned}
    |F_m(\eta_\sigma)-F_m(\eta)|
    \geq \min\{ |F_m(\eta+t_1)-F_m(\eta)|,|F_m(\eta-t_1)-F_m(\eta)|\}
    \geq \frac{c_1t_1}{4},
\end{aligned}
\end{equation*}
which leads to a contradiction. Therefore, we have
\begin{equation*}
    c_1\epsilon/8\geq  |F_m(\eta_\sigma)-F_m(\eta)|\geq c_1|\eta_\sigma-\eta|/4.
\end{equation*}
This implies that $|\eta-\eta_m|\leq \epsilon/2$. As $\epsilon\leq t_1/2$, we note that $F_{m}$ satisfies (A1) with $(\eta_m,c_1/4,t_1/2)$. From the choice of $n$, according to Lemma \ref{lem:quant_ub}, we have
\begin{equation*}
   \P( |G_{n,m}-\eta_m|\leq \epsilon/2)\geq 1-\delta.
\end{equation*}
Under the event $|G_{n,m}-\eta_m|\leq \epsilon/2$, we have
\begin{equation*}
    |G_{n,m}-\eta|\leq  |G_{n,m}-\eta_m|+|\eta_m-\eta|\leq \epsilon.
\end{equation*}
This completes the proof. 
\end{proof}

\subsection{Maximum}
In this case, the estimator for the noiseless samples writes $G_n=\max_{i\in[n]}X_n$. We first show that for sufficiently large $n$, $F(G_n)$ can be close to $1$. 
\begin{prop}\label{prop:max_rel}
Suppose that $\epsilon>0$ and $\delta\in(0,1)$. Then, for $n\geq \epsilon^{-1}\log(2/\delta)$, we have  $\P(F(G_n)\geq 1-\epsilon)\geq 1-\delta/2$.
\end{prop}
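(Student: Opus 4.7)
The plan is a short computation based on the probability integral transform. I would first reduce the event $\{F(G_n) \ge 1-\varepsilon\}$ to a statement about each individual sample, noting that since $F$ is nondecreasing and $G_n = \max_{i\in[n]} X_i$, we have $F(G_n) = \max_{i\in[n]} F(X_i)$. Therefore the complementary event factors by independence:
\begin{equation*}
\P(F(G_n) < 1-\varepsilon) = \P\bigl(\max_{i \in [n]} F(X_i) < 1-\varepsilon\bigr) = \prod_{i=1}^n \P(F(X_i) < 1-\varepsilon).
\end{equation*}

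Next I would bound each factor using the fact that $F(X_1)$ is stochastically dominated by $\mathrm{Uniform}[0,1]$, regardless of any continuity hypothesis on $F$. Concretely, for $u \in (0,1)$, set $q_u = \inf\{x : F(x) \ge u\}$; then $F(X_1) < u$ iff $X_1 < q_u$, so $\P(F(X_1) < u) = F(q_u^-) \le u$ by definition of $q_u$. Applied with $u = 1-\varepsilon$ this gives $\P(F(X_i) < 1-\varepsilon) \le 1-\varepsilon$ for each $i$.

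Combining the two steps and using the elementary inequality $(1-\varepsilon)^n \le e^{-n\varepsilon}$, we get
\begin{equation*}
\P(F(G_n) < 1-\varepsilon) \le (1-\varepsilon)^n \le e^{-n\varepsilon}.
\end{equation*}
Finally I would substitute $n \ge \varepsilon^{-1} \log(2/\delta)$ to conclude $e^{-n\varepsilon} \le \delta/2$, which rearranges to the desired inequality $\P(F(G_n) \ge 1-\varepsilon) \ge 1 - \delta/2$.

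There is no genuine obstacle here; the only subtlety is that $F$ is not assumed continuous, so I want to avoid writing $F(X_1) \sim \mathrm{Uniform}[0,1]$ and instead use only the one-sided stochastic domination $\P(F(X_1) < u) \le u$, which holds for any CDF via the $q_u$ argument above.
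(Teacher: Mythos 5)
Your proof is correct and follows essentially the same route as the paper: reduce to the complementary event $\{F(G_n) < 1-\varepsilon\}$, factor by independence of the $X_i$, and bound $(1-\varepsilon)^n \le e^{-n\varepsilon} \le \delta/2$. The only difference is a welcome refinement: the paper writes $\P(F(X_i)\le 1-\varepsilon) = (1-\varepsilon)^n$ as an equality (implicitly assuming $F$ continuous so that $F(X_i)\sim\mathrm{Uniform}[0,1]$), whereas you correctly observe that the one-sided bound $\P(F(X_i) < u) \le u$ suffices and holds for an arbitrary CDF.
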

\begin{proof}
Consider a fixed number of points $n$. Note that $G_n=\max_{i\in[n]}X_i$. Therefore, we have
\begin{equation*}
\begin{aligned}
    &\P(F(G_n)\leq 1-\epsilon)= \P(F(X_i)\leq 1-\epsilon,\forall i\in[n])\\
    =&(1-\epsilon)^n \leq \exp(\epsilon^{-1}\log(2/\delta)\log(1-\epsilon))\leq \delta/2.
\end{aligned}
\end{equation*}
Here we utilize that $\log(1-\epsilon)\leq -\epsilon$. This completes the proof. 
\end{proof}
Then, based on the $\beta$-regularity of $F$, we show that $G_n$ can be close to $g(F)$ when $n$ is large.
\begin{prop}\label{prop:max_abs}
Let $\epsilon>0$ and $\delta\in(0,1)$. Denote $\eta=g(F)$. Suppose that Assumption \ref{assum:max} holds.  Then, with $n=c_1^{-1}\epsilon^{-\beta}\log(2/\delta)$ points, we have $\P(|G_n-\eta|\leq \epsilon)\geq 1-\delta/2$.
\end{prop}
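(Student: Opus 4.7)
The plan is to observe that the estimator $G_n = \max_i X_i$ with $X_i \sim F$ iid satisfies $G_n \le \eta$ almost surely, since $\eta = F^{-1}(1)$ is the essential supremum of $F$. Consequently $|G_n - \eta| = \eta - G_n$, and the one-sided event $\{\eta - G_n > \epsilon\}$ is exactly $\{X_i < \eta - \epsilon \text{ for all } i \in [n]\}$. By independence this event has probability $F(\eta - \epsilon)^n$.

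Next, I would invoke Assumption \ref{assum:max} with $t = \epsilon$ (which is in the allowed range since the assumption holds for $0 \le t \lesssim \epsilon$) to obtain the lower bound $1 - F(\eta - \epsilon) \ge c_1 \epsilon^\beta$, and hence $F(\eta - \epsilon) \le 1 - c_1 \epsilon^\beta \le \exp(-c_1 \epsilon^\beta)$ via the standard inequality $1 - x \le e^{-x}$. Raising to the $n$-th power,
\[
\P(\eta - G_n > \epsilon) \le \exp(-n c_1 \epsilon^\beta).
\]

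Finally, I would plug in $n = c_1^{-1} \epsilon^{-\beta} \log(2/\delta)$, which makes the right-hand side equal to $\delta/2$, giving $\P(|G_n - \eta| \le \epsilon) \ge 1 - \delta/2$ as claimed. This mirrors the argument in Proposition \ref{prop:max_rel}, only replacing the quantile bound $F(G_n) \ge 1 - \epsilon$ with the sharper absolute bound $\eta - G_n \le \epsilon$ furnished by the $\beta$-regularity of the tail.

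There is no real obstacle here; the computation is a direct consequence of the tail regularity assumption. The only subtlety worth double-checking is that $\epsilon$ lies in the range where the lower bound $1 - F(\eta - t) \ge c_1 t^\beta$ is guaranteed to hold, and that $G_n \le \eta$ with probability one (which is immediate from the definition of $\eta = F^{-1}(1)$ as the right endpoint of the support).
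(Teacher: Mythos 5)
Your proof is correct and follows essentially the same route as the paper: the paper's Proposition \ref{prop:max_rel} bounds $\P(F(G_n) \le 1 - \epsilon')$ by $(1-\epsilon')^n \le e^{-n\epsilon'}$ and Proposition \ref{prop:max_abs} applies it with $\epsilon' = c_1\epsilon^\beta$ and then converts the quantile gap to the metric gap via the $\beta$-regularity, whereas you inline the two steps into one computation $\P(G_n < \eta - \epsilon) \le F(\eta-\epsilon)^n \le (1-c_1\epsilon^\beta)^n \le e^{-nc_1\epsilon^\beta}$. The mathematical content and use of Assumption \ref{assum:max} are identical; yours just skips the intermediate proposition.
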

\begin{proof}
From Proposition \ref{prop:max_rel}, we note that 
\begin{equation*}
    \P(F(G_n)\geq 1-c_1\epsilon^\beta)\geq 1-\delta/2.
\end{equation*}
According to Assumption \ref{assum:max}, $F(G_n)\geq 1-c_1\epsilon^\beta$ implies that $G_n\geq \eta-\epsilon$. As $G_n=\max_{i\in[n]}X_i\leq \eta$, this completes the proof. 
\end{proof}

We first choose $n\geq c_1^{-1}(\epsilon/2)^{-\beta}\log(2/\delta)$. From Proposition \ref{prop:max_abs}, this guarantees that $\P(|G_n-\eta|\leq \epsilon/2)\geq 1-\delta/2$. Then, by choosing $m\geq 4\epsilon^{-2}\log(2n\delta^{-1})$, we have
\begin{equation*}
    \P(|X_i-\hat{X}_i|\leq \epsilon/2)\geq 1-e^{m^{-1}\epsilon^{-2}}\geq 1-\delta/(2n).
\end{equation*}
Here we utilize the tail bound of Gaussian distributions and the fact that $X_i-\hat{X}_i\sim\mcN(0,1/m)$. As $G_n=\max_{i\in[n]}X_i$ and $G_{n,m}=\max_{i\in[n]}\hat{X}_i$, conditioned on $\{|X_i-\hat{X}_i|\leq \epsilon/2,\forall i\in[n]\}$, we have $|G_n-G_{n,m}|\leq \epsilon/2$. Therefore, it follows that 
\begin{equation*}
    \P(|G_n-G_{n,m}|\leq \epsilon/2)\geq P\pp{|X_i-\hat{X}_i|\leq \epsilon/2,\forall i\in[n]}\geq 1-n\delta/(2n)=1-\delta/2.
\end{equation*}
In summary, we have $\P(|\eta-G_{n,m}|\leq \epsilon)\geq \P(|G_n-G_{n,m}|\leq \epsilon/2)+\P(|G_n-\eta|\leq \epsilon/2)-1\geq 1-\delta$. 

\subsection{Trimmed mean}
Consider the following conditions
\begin{enumerate}
\item [(B1)] There exists constant $c_0>0$ such that $\int_{-\infty}^{\infty} x^2 dF(x)\leq c_0$.
\item [(B2)] For $x\in \mbR$, there exists $c_1, t_1>0$ such that for all $t$ satisfying $0\leq |t-x|\leq t_1$, $ F'(t)\geq c_1$.
    \item [(B3)] For $x\in\mbR$, there exists $c_2, t_2>0$ such that for $x_1,x_2\in[x-t_2,x+t_2]$,
    $$
    |F'(x_1)-F'(x_2)|\leq c_2.
    $$
    \item [(B4)] For $x\in \mbR$, there exists $c_3, t_3>0$ such that for all $t$ satisfying $0\leq |t-x|\leq t_3$, $ F'(t)\leq c_3$.
    \item [(B5)] There exists constants $c_4,c_5>0$ such that $\max\{|F^{-1}(\alpha)|,|F^{-1}(1-\alpha)|\}\leq c_4$, $\min\{|F^{-1}(\alpha)|,|F^{-1}(1-\alpha)|\}\geq c_5$.
\end{enumerate}
We can view Assumption \ref{assum:trim} as follows. $F$ satisfies (B1) with $c_0$ and (B5) with $c_4,c_5$. At $F^{-1}(\alpha)$ and $F^{-1}(1-\alpha)$, $F$ satisfies (B2) with $(c_1,t_1)$, satisfies (B3) with $(c_2,t_2)$ and satisfies (B4) with $(c_3,t_3)$. Here $t_1,t_3\gtrsim \epsilon$ and $t_2\gtrsim \sqrt{\epsilon}$.

We first show that for $n\geq O(\epsilon^{-2})$, the empirical estimator of the trimmed mean from noiseless samples will be close to the trimmed mean.

\begin{prop}\label{prop:tm_noiseless}
Suppose that Assumption \ref{assum:trim} holds. Let $\delta\in(0,1)$. Suppose that $\epsilon>0$ is sufficiently small. For 
$$
n\geq (2\alpha-1)^{-2}(4c_3c_4+1)^{-2}\epsilon^{-2}\max\{16c_0\delta^{-1},4\log(4/\delta)c_1^{-2}\},
$$
with probability at least $1-\delta$, we have 
\begin{equation*}
    \left|\frac{1}{n}\sum_{i=\flr{\alpha n}}^{\flr{(1-\alpha)n}} X_{(i)}-\int_{F^{-1}(\alpha)}^{F^{-1}(1-\alpha)}xdF(x)\right|\leq \epsilon.
\end{equation*}
\end{prop}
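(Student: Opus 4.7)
The plan is to decompose the estimation error into a central sampling-error piece and a boundary piece caused by the mismatch between the empirical quantiles $X_{(\lfloor\alpha n\rfloor)}, X_{(\lfloor (1-\alpha)n\rfloor)}$ and the true quantiles $F^{-1}(\alpha), F^{-1}(1-\alpha)$. I would introduce the oracle estimator
\[
W \;=\; \frac{1}{n}\sum_{i=1}^{n} X_i\,\mathbb{I}\bigl(X_i\in[F^{-1}(\alpha),F^{-1}(1-\alpha)]\bigr),
\]
whose expectation is exactly the target $T := \int_{F^{-1}(\alpha)}^{F^{-1}(1-\alpha)} x\,dF(x)$, and write the actual estimator as $\hat T_n = \frac{1}{n}\sum_i X_i\,\mathbb{I}(X_i\in[X_{(\lfloor\alpha n\rfloor)},X_{(\lfloor(1-\alpha)n\rfloor)}])$. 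Then $|\hat T_n - T|\le |\hat T_n - W| + |W - T|$, and I would bound each piece separately before a union bound.

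First I would control the empirical quantiles. A Chernoff bound on the empirical CDF at $F^{-1}(\alpha)$ and $F^{-1}(1-\alpha)$ gives $|F_n(F^{-1}(\alpha))-\alpha|\le \sqrt{\log(4/\delta)/(2n)}$ (and the analogous bound at $1-\alpha$) with probability $\ge 1-\delta/2$. Combined with the density lower bound $F'\ge c_1$ near the quantiles (Assumption~B2), this translates to $|X_{(\lfloor\alpha n\rfloor)}-F^{-1}(\alpha)|\le \eta$ and the analogous bound at $1-\alpha$, where $\eta = O(c_1^{-1}\sqrt{\log(1/\delta)/n})$. Under the hypothesis on $n$, $\eta$ is much smaller than $\epsilon$, so in particular the boundary strips lie inside the regions where Assumptions~B2--B5 apply.

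Next, the central term $|W-T|$ is handled by Chebyshev. Each summand $X_i\,\mathbb{I}(X_i\in S)$ has variance at most $\mathbb{E}[X^2]\le c_0$ by Assumption~B1, so $\mathbb{P}(|W-T|\ge \epsilon/2)\le 4c_0/(n\epsilon^2)$, which is at most $\delta/4$ under the given lower bound on $n$ (this is where the $16c_0\delta^{-1}$ branch of the max comes in, with the $(2\alpha-1)^{-2}(4c_3c_4+1)^{-2}$ factor absorbed). For the boundary term $|W-\hat T_n|$, the difference of indicators is supported on the symmetric difference of the two intervals, whose endpoints agree up to $\eta$ by step 1. Every sample $X_i$ lying in such a boundary strip satisfies $|X_i|\le c_4+\eta$ by Assumption~B5. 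The number of such samples in each strip is at most $n\cdot c_3\eta + O(\sqrt{n\log(1/\delta)})$, since the true $F$-measure of the strip is at most $c_3\eta$ by the density upper bound in Assumption~B4, and the empirical measure concentrates around this value by DKW/Chernoff. Summing over the two strips and multiplying by $c_4+\eta$ gives a boundary error of $O\bigl(c_3c_4\eta+c_4\sqrt{\log(1/\delta)/n}\bigr)$, which is at most $\epsilon/2$ under the specified lower bound on $n$ (this is where the $4\log(4/\delta)c_1^{-2}$ branch and the $c_3c_4$ factor appear). A union bound over the quantile-concentration, Chebyshev, and boundary-concentration events yields the conclusion with probability at least $1-\delta$.

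The main obstacle is the boundary analysis: one must simultaneously use the density lower bound (B2) to translate the rank-level quantile concentration into a physical distance $\eta$, the density upper bound (B4) to control how many samples can populate a boundary strip of width $\eta$, and the magnitude bound (B5) to control the contribution of each such sample. Getting all three to compose cleanly and matching the precise constant $(2\alpha-1)^{-2}(4c_3c_4+1)^{-2}$ in the statement requires careful allocation of the error budget (e.g.\ $\epsilon/4$ for the central term and $\epsilon/4$ for each boundary strip), but the routine Chernoff/Chebyshev calculations are standard once the decomposition is in place.
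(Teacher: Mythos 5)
Your proposal is correct but takes a genuinely different route from the paper's. You use the decomposition $|\hat T_n - T| \le |\hat T_n - W| + |W - T|$ where $W$ is the truncated mean over the \emph{true} quantile interval $[F^{-1}(\alpha), F^{-1}(1-\alpha)]$: the central piece $|W-T|$ is handled by Chebyshev with variance bound $c_0$, and the boundary piece $|\hat T_n - W|$ is bounded by directly counting the samples falling in the two (random) boundary strips, using the quantile concentration and density lower bound (B2) to bound the strip width by $\eta$, the density upper bound (B4) to bound how many samples land in a strip of that width, and the magnitude bound (B5) to control each such sample's contribution. The paper instead, after conditioning on the quantile-concentration event, \emph{sandwiches} the trimmed sum between two truncated means taken over \emph{deterministic} intervals obtained by shifting each endpoint of $[F^{-1}(\alpha), F^{-1}(1-\alpha)]$ inward or outward by $\xi$ with a sign chosen so that the samples gained or lost at each end always move the sum in the favorable direction (this is the role of the $\sign(F^{-1}(\alpha))\xi$ and $\sign(F^{-1}(1-\alpha))\xi$ shifts, which rely on (B5) to keep the endpoint signs fixed); it then applies the Chebyshev lemma to each deterministic truncated mean and compares their expectations to $T$ via the density and magnitude bounds. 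The trade-off: the paper's sandwich converts everything to deterministic intervals up front, so plain Chebyshev/Chernoff suffices, at the cost of the somewhat opaque sign trick; your direct count of boundary samples is more transparent, but you should note that the strips are random --- one either invokes a DKW-type uniform bound, or (more economically) observes that on the quantile-concentration event each random strip is contained in a deterministic strip of width $\eta$, after which a single Chernoff bound at each fixed endpoint is enough. Both routes yield the same $\Theta(\epsilon^{-2})$ sample complexity; your $(c_3/c_1+1)^2 c_4^2$-type constants differ from the paper's $(4c_3c_4+1)^2 c_1^{-2}$ only by absolute factors, and your error-budget split of $\epsilon/2$ for the central piece and $\epsilon/2$ for the boundary is essentially what the paper achieves through its choice of $\xi$.
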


We defer the proof to Appendix \ref{proof_prop:tm_noiseless}. Then, we prove for the noisy case. 

\begin{lem}\label{lem:tm_noise}
%Let $a=F^{-1}(\alpha)$ and $b=F^{-1}(1-\alpha)$. %For fixed $a<b$ such that $(a,b)\subseteq \supp(F)$. Assume that the above assumptions hold around $a$ and $b$. Then,  we have
Suppose that Assumption \ref{assum:trim} holds. Then, we have
\begin{equation*}
\begin{aligned}
&\left|\int_{F^{-1}(\alpha)}^{F^{-1}(1-\alpha)} xdF(x)-\int_{F^{-1}(\alpha)}^{F^{-1}(1-\alpha)} x dF_{m}(x)\right|\\
\leq& m^{-1}\pp{4\sqrt{c_0}+k^2(2c_2c_4+2c_3)+4+2\sqrt{c_0}c_2k^2m^{-1/2}} =O(m^{-1}\log(m)),
\end{aligned}
\end{equation*}
where $k=\sqrt{2\log\pp{m/2}}$.
\end{lem}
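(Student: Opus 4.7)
The goal is to bound $\left|\int_a^b x\, dF(x) - \int_a^b x\, dF_m(x)\right|$ where $a = F^{-1}(\alpha)$, $b = F^{-1}(1-\alpha)$ and $F_m = F * \mcN(0, 1/m)$. My strategy is to exploit the convolution structure so that the difference is expressed as an integral depending on $F$ only in small neighborhoods of the endpoints $a$ and $b$, which is precisely where Assumption \ref{assum:trim} supplies local smoothness.

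Writing $\rho = F'$ and $\rho_m$ for the density of $F_m$, I interchange the order of integration in $\int_a^b x\, \rho_m(x)\, dx = \int \varphi_{1/m}(z) \int_a^b x \rho(x-z)\, dx\, dz$ and substitute $u = x - z$ in the inner integral. After simplification,
\begin{equation*}
\int_a^b x\, dF_m(x) - \int_a^b x\, dF(x) = \int \varphi_{1/m}(z)\, [A(z) + B(z)]\, dz,
\end{equation*}
where $A(z) = \int_{a-z}^a u\rho(u)\, du - \int_{b-z}^b u\rho(u)\, du$ and $B(z) = z\,[F(b-z) - F(a-z)]$. For $|z|$ small, both $A$ and $B$ involve $F$ only in neighborhoods of $a$ and $b$, where Assumption \ref{assum:trim} applies.

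Next I split the $z$-integration at $\pm k m^{-1/2}$ with $k = \sqrt{2\log(m/2)}$, chosen so that the standard Gaussian tail bound gives $\int_{|z| > km^{-1/2}} \varphi_{1/m}(z)\, dz \le 4/m$. On the tail $|z| > km^{-1/2}$ I use the crude bounds $|A(z)| \le 2\,\E_{X \sim F}|X| \le 2\sqrt{c_0}$ (via Cauchy--Schwarz and (B1)) and $|B(z)| \le |z|$; multiplying by the Gaussian tail probability and integrating the $|z|$ term yields a contribution of order $\sqrt{c_0}/m$, which matches the $4\sqrt{c_0}$ constant in the stated bound. On the central region $|z| \le km^{-1/2}$, I Taylor expand both $A(z)$ and $B(z)$ around $z=0$ using the local assumptions $|F''| \le c_2$, $F' \le c_3$, and $|a|, |b| \le c_4$. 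The linear-in-$z$ parts assemble into $z\{a\rho(a) - b\rho(b) + F(b) - F(a)\}$, which integrates to exactly zero against the symmetric Gaussian $\varphi_{1/m}$ by oddness. The quadratic terms are bounded using $z^2 \le k^2/m$ on the restricted support, producing a contribution of order $(k^2/m)(2c_2 c_4 + 2c_3)$, while the third-order Taylor remainder contributes the correction $\sqrt{c_0}\,c_2\,k^2 m^{-3/2}$.

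The main technical obstacle is verifying the exact cancellation of the linear-in-$z$ terms in $A(z) + B(z)$ upon integration against $\varphi_{1/m}$, which is what ensures the leading error is $O(m^{-1})$ rather than $O(m^{-1/2})$. Concretely, with $g(u) = u\rho(u)$, one must carefully track the quadratic coefficients $-\tfrac{1}{2}(g'(a) - g'(b)) = -\tfrac{1}{2}[(\rho(a) + a\rho'(a)) - (\rho(b) + b\rho'(b))]$ in $A$ and $-(\rho(b) - \rho(a))$ in $B$, and control these using the density and derivative bounds near the quantile points. Summing the tail and central contributions then yields the claimed bound $m^{-1}(4\sqrt{c_0} + k^2(2c_2 c_4 + 2c_3) + 4 + 2\sqrt{c_0}\,c_2\,k^2 m^{-1/2}) = O(m^{-1}\log m)$.
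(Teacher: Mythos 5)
Your proof is correct, and the route is genuinely different from the paper's. The paper writes $\int_a^b x\,dF_m(x)=\int_{a\le y+z\le b}(y+z)F'(y)\varphi_{\sigma^2}(z)\,dy\,dz$, separates a $y$-term and a $z$-term, and then partitions the $y$-range into an interior, two tails, and two boundary windows of width $k\sigma$ near each of $a$ and $b$; the key cancellation there is between $y=a+t$ and $y=a-t$ via the Lipschitz bound $|F'(a-t)-F'(a+t)|\le 2c_2 t$, tracked through several sub-cases. You instead integrate out the convolution variable directly, obtain the compact identity $\int_a^b x\,dF_m-\int_a^b x\,dF=\int\varphi_{\sigma^2}(z)\left[A(z)+B(z)\right]dz$ with $A(z)=\int_{a-z}^a u\rho\,du-\int_{b-z}^b u\rho\,du$ and $B(z)=z\bigl[F(b-z)-F(a-z)\bigr]$, split at $|z|=k\sigma$, and observe that $A(0)+B(0)=0$ so the leading term $(A'(0)+B'(0))z$ of the Taylor expansion is odd and kills itself against the even kernel over $[-k\sigma,k\sigma]$. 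This makes the essential point — why the error is $O(\sigma^2\log m)$ rather than the naive $O(\sigma)$ — visible in one step, and is a real streamlining of the argument, though both proofs ultimately exploit the same Lipschitz cancellation near the quantiles.

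One point needs tightening. You attribute the $\sqrt{c_0}\,c_2\,k^2 m^{-3/2}$ correction to ``the third-order Taylor remainder,'' but Assumption~\ref{assum:trim} only controls $F'$ and $F''$ near the quantiles; there is no bound on $F'''$, so a genuine third-order remainder of $A+B$ cannot be isolated. The right fix is to stop at second order in integral form, $A(z)+B(z)=(A'(0)+B'(0))z+\int_0^z(z-s)\bigl[A''(s)+B''(s)\bigr]ds$, and bound $\sup_{|s|\le k\sigma}|A''(s)+B''(s)|$ via $F'\le c_3$, $|F''|\le c_2$, $|a|,|b|\le c_4$. That already yields the full $O(k^2\sigma^2)=O(m^{-1}\log m)$ bulk term, so the lemma still follows. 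The explicit constants you produce will then differ from the paper's by bounded factors (your tail contributes about $2\sqrt{c_0}\sigma^2$ rather than $4\sqrt{c_0}\sigma^2$, and the quadratic coefficient picks up $c_4+k\sigma$ in place of $c_4$), which is harmless for the $O(m^{-1}\log m)$ conclusion but means the displayed constant is not reproduced exactly.
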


% For $\sigma\leq \tilde O(\epsilon^{1/2})$, from the median proof, analogous, we have

% The above lemma implies that
% \begin{equation*}
% \begin{aligned}
%     \left|\int_{a}^b x dF(x)-\int_a^b x dF_m(x)\right|\leq \tilde O(1/m).
% \end{aligned}
% \end{equation*}
We leave the proof in Appendix \ref{proof_lem:tm_noise}. From the median proof, analogously, we also have
\begin{equation*}
    |F^{-1}(\alpha)-F_{m}^{-1}(\alpha)|\leq O (m^{-1} \log(m)) \quad,\quad |F^{-1}(1-\alpha)-F_{m}^{-1}(1-\alpha)|\leq O(m^{-1}\log(m)).
\end{equation*}
Then, we have the bound %with $m=\tilde{O}(\epsilon^{-1})$, we have
\begin{equation*}
\begin{aligned}
      & \left|\int_{F_m^{-1}(\alpha)}^{F_m^{-1}(1-\alpha)}xdF_m(x)-\int_{F^{-1}(\alpha)}^{F^{-1}(1-\alpha)}xdF(x)\right|\\
    \leq&\left|\int_{F^{-1}(\alpha)}^{F^{-1}(1-\alpha)}xdF_m(x)-\int_{F^{-1}(\alpha)}^{F^{-1}(1-\alpha)}xdF(x)\right|\\
    &+\left|\int_{F^{-1}(\alpha)}^{F^{-1}(1-\alpha)}xdF_m(x)-\int_{F_m^{-1}(\alpha)}^{F_m^{-1}(1-\alpha)}xdF_m(x)\right|\\
    \leq & O(m^{-1} \log(m)).
\end{aligned}
\end{equation*}
% The first term is bounded by $\epsilon$ and the second term is also bounded by $\epsilon$.
Here we utilize that $|x|F'_m(x)$ is upper bounded. Therefore, by choosing $m=O(\epsilon^{-1}\log(1/\epsilon))$, we have 
\begin{equation*}
    \left|\int_{F_m^{-1}(\alpha)}^{F_m^{-1}(1-\alpha)}xdF_m(x)-\int_{F^{-1}(\alpha)}^{F^{-1}(1-\alpha)}xdF(x)\right|\leq \epsilon/2.
\end{equation*}
By choosing $\epsilon$ sufficiently small, $F_m$ also satisfies Assumption \ref{assum:trim} with constants $(2c_0,c_1/2,2c_2,2c_3,2c_4,c_5/2)$. Therefore, with $n\geq O(\epsilon^{-2}\delta^{-1})$, we have
\begin{equation*}
    \P\pp{\left|\int_{F_m^{-1}(\alpha)}^{F_m^{-1}(1-\alpha)}xdF_m(x)-G_{m,n}\right|\leq \epsilon/2}\geq 1-\delta.
\end{equation*}
This completes the proof.

\subsubsection{Proof of Proposition \ref{prop:tm_noiseless}}\label{proof_prop:tm_noiseless}
\begin{proof}
For $\xi>0$, denote the event
\begin{equation*}
    E(\xi)=\{|X_{(\flr{\alpha n})}-F^{-1}(\alpha)|\leq \xi , |X_{(\flr{(1-\alpha) n})}-F^{-1}(1-\alpha)|\leq \xi\}.
\end{equation*}
Choose $\xi<c_5$. From Lemma \ref{lem:quant_ub_dist}, with $n\geq 4\log(4/\delta)c_1^{-2}\xi^{-2}$, we have $\P(E(\xi))\geq 1-\delta/2$. Conditioned on $E(\xi)$, we note that 
\begin{equation*}\label{equ:tm_b1}
    \frac{1}{n}\sum_{i=\flr{\alpha n}}^{\flr{(1-\alpha)n}} X_{(i)}
    \geq \frac{1}{n}\sum_{i\in[n]} X_i\mbI(X_i\in[F^{-1}(\alpha)+\sign(F^{-1}(\alpha))\xi,F^{-1}(1-\alpha)-\sign(F^{-1}(1-\alpha))\xi]),
\end{equation*}
and
\begin{equation*}\label{equ:tm_b2}
    \frac{1}{n}\sum_{i=\flr{\alpha n}}^{\flr{(1-\alpha)n}} X_{(i)}
    \leq \frac{1}{n}\sum_{i\in[n]} X_i\mbI(X_i\in[F^{-1}(\alpha)-\sign(F^{-1}(\alpha))\xi,F^{-1}(1-\alpha)+\sign(F^{-1}(1-\alpha))\xi]).
\end{equation*}
We introduce the following lemma to show the convergence of trimmed mean.
\begin{lem}\label{lem:trunc_mean}
Let $a<b$. %Denote $F_{a,b}(x)=\min\{1,\max\{0,\frac{F(x)-F(a)}{F(b)-F(a)}\}\}$. 
% Denote $M_{a,b}=\int_a^b x dF(x)$. %Consider the random variable $\hat X_i=X_i\mbI(X\in[a,b])$. 
% Denote
% \begin{equation*}
%     \hat M_{a,b} = \frac{1}{n}\sum_{i=1}^n X_i\mbI(X_i\in[a,b]) 
% \end{equation*}
Then, with $n\geq c_0\epsilon^{-2}\delta^{-1}$, we have 
$$
\P\pp{\left|\int_a^b x dF(x)-\frac{1}{n}\sum_{i=1}^n X_i\mbI(X_i\in[a,b])\right|\leq \epsilon}\geq 1-\delta.
$$
\end{lem}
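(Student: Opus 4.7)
The plan is to recognize the sum as a standard i.i.d.\ sample mean of bounded-second-moment random variables and then apply Chebyshev's inequality.

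First I would define $Y_i := X_i \mathbb{I}(X_i \in [a,b])$ for $i=1,\dots,n$. Since $X_1,\dots,X_n$ are i.i.d.\ from $F$, so are the $Y_i$, with common mean
\[
\mu := \mathbb{E}[Y_1] = \int_a^b x \, dF(x),
\]
so that the quantity of interest $\tfrac1n\sum_{i=1}^n Y_i$ is an unbiased estimator of $\int_a^b x\,dF(x)$.

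Next I would bound the variance of $Y_1$. Using the elementary bound $\mathrm{Var}(Y_1) \le \mathbb{E}[Y_1^2]$, together with the fact that $\mathbb{I}(X_1\in[a,b])\in\{0,1\}$, I obtain
\[
\mathrm{Var}(Y_1) \le \mathbb{E}\bigl[X_1^2 \mathbb{I}(X_1\in[a,b])\bigr] \le \int x^2\,dF(x) \le c_0,
\]
where the last inequality invokes the second-moment assumption (B1) (equivalently, the first bullet of Assumption~\ref{assum:trim}). By independence this gives $\mathrm{Var}\bigl(\tfrac1n\sum_{i=1}^n Y_i\bigr) \le c_0/n$.

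Finally I would apply Chebyshev's inequality:
\[
\P\!\left(\left|\frac{1}{n}\sum_{i=1}^n Y_i - \mu\right| > \epsilon\right) \le \frac{\mathrm{Var}(Y_1)}{n\epsilon^2} \le \frac{c_0}{n\epsilon^2}.
\]
Choosing $n \ge c_0 \epsilon^{-2} \delta^{-1}$ makes the right-hand side at most $\delta$, which yields the desired probability lower bound after complementation.

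There is no real obstacle here: the only point that requires any care is that the truncation indicator keeps $Y_i$ square-integrable and, crucially, does not inflate the second moment beyond $c_0$, which is why the raw second-moment assumption on $F$ (rather than a variance assumption) suffices regardless of the truncation window $[a,b]$.
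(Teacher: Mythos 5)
Your proof is correct and follows essentially the same route as the paper: define $Y_i = X_i \mathbb{I}(X_i\in[a,b])$, bound $\mathrm{Var}(Y_i)\le \mathbb{E}[Y_i^2]\le \int x^2\,dF(x)\le c_0$ via Assumption~\ref{assum:trim}, and apply Chebyshev's inequality. No gaps.
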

\begin{proof}
Denote $Y_i=X_i\mbI(X_i\in[a,b])$. Then, $\mbE[Y_i]=\int_a^b x dF(x)$. Note that $\Var[Y_i]\leq \int_a^b x^2 dF(x)\leq \int_{-\infty}^{\infty}x^2dF(x)\leq c_0$. Therefore, from the Chebyshev inequality, we have
\begin{equation*}
\begin{aligned}
&P\pp{\left|\int_a^b x dF(x)-\frac{1}{n}\sum_{i=1}^n X_i\mbI(X_i\in[a,b])\right|\geq \epsilon}\leq \frac{c_0}{n\epsilon^2}\leq \delta.
\end{aligned}
\end{equation*}
This completes the proof. 
\end{proof}

From Lemma \ref{lem:trunc_mean}, for $n\geq 16c_0\xi^{-2}\delta^{-1}$, with probability at least $1-\delta/4$, we have
\begin{equation*}
\begin{aligned}
        &\frac{1}{n}\sum_{i\in[n]} X_i\mbI(X_i\in[F^{-1}(\alpha)+\sign(F^{-1}(\alpha))\xi,F^{-1}(1-\alpha)-\sign(F^{-1}(1-\alpha))\xi])\\
    \geq& \int_{F^{-1}(\alpha)+\sign(F^{-1}(\alpha)}^{F^{-1}(1-\alpha)-\sign(F^{-1}(\alpha)} xdF(x)-\xi
    \geq \int_{F^{-1}(\alpha)}^{F^{-1}(1-\alpha)}xdF(x)-(4c_3c_4+1)\xi,
\end{aligned}
\end{equation*}
and
\begin{equation*}
\begin{aligned}
        &\frac{1}{n}\sum_{i\in[n]} X_i\mbI(X_i\in[F^{-1}(\alpha)-\sign(F^{-1}(\alpha))\xi,F^{-1}(1-\alpha)+\sign(F^{-1}(1-\alpha))\xi])\\
    \leq& \int_{F^{-1}(\alpha)-\sign(F^{-1}(\alpha)}^{F^{-1}(1-\alpha)+\sign(F^{-1}(\alpha)} xdF(x)+\xi
    \leq \int_{F^{-1}(\alpha)}^{F^{-1}(1-\alpha)}xdF(x)+(4c_3c_4+1)\xi.
\end{aligned}
\end{equation*}
Here we utilize that $\xi\leq c_5\leq c_4$ and $|xF'(x)|\leq 2c_3c_4$ around $F^{-1}(\alpha)$ and $F^{-1}(1-\alpha)$. 
Combining the above bound with \eqref{equ:tm_b1} and \eqref{equ:tm_b2}, with probability at least $1-3\delta/4$, we have
\begin{equation*}
     \left|\frac{1}{n}\sum_{i=\flr{\alpha n}}^{\flr{(1-\alpha)n}} X_{(i)}-\int_{F^{-1}(\alpha)}^{F^{-1}(1-\alpha)}xdF(x)\right|\leq (4c_3c_5+1)\xi.
\end{equation*}
Therefore, by letting $\xi=\frac{1}{(2\alpha-1)(4c_3c_5+1)}\epsilon$, we complete the proof. \end{proof}

\subsubsection{Proof of Lemma \ref{lem:tm_noise}}\label{proof_lem:tm_noise}
\begin{proof}
Denote $\sigma=1/\sqrt{m}$. We also denote $F_{\sigma}=:F_m$.  According to the Cauchy-Schwartz inequality, we have
\begin{equation*}
    \pp{\int |x| dF(x)}^2\leq \pp{\int x^2 dF(x) }\pp{\int 1dF(x)}\leq c_0,
\end{equation*}
which implies that $\int |x|dF(x)\leq \sqrt{c_0}$. Let $k>0$ be a constant. Note that
\begin{equation*}
    \int_{k\sigma}^\infty \varphi_{\sigma^2}(x)dx = \int_k^\infty \varphi_{1}(x)dx, \int_{k\sigma}^\infty x\varphi_{\sigma^2}(x)dx=\sigma \int_k^\infty x\varphi_{1}(x)dx.
\end{equation*}
It follows that
\begin{equation*}
\begin{aligned}
     & \int_k^\infty x\varphi_{1}(x)dx =\frac{1}{\sqrt{2\pi}}\int_k^\infty e^{-\frac{x^2}{2}}d\frac{x^2}{2}=\frac{1}{\sqrt{2\pi}}e^{-k^2/2}.
    \end{aligned}
\end{equation*}
We note that $\int_k^\infty \varphi_{1}(x)dx\leq e^{-k^2/2}$. %Suppose that $\sigma\leq e^{-8}$, i.e., $-\log\pp{\sigma}\geq 8$. 
By taking $k=\sqrt{-2\log\pp{\sigma^2/2}}$, then, we have
\begin{equation*}
    \int_{k\sigma}^\infty \varphi_{\sigma^2}(x)dx\leq e^{-k^2/2} \leq \frac{1}{2}\sigma^2, \int_{k\sigma}^\infty x\varphi_{\sigma^2}(x)dx\leq \sigma e^{-k^2/2} \leq \frac{1}{2}\sigma^2.
\end{equation*}
We can compute that

\begin{align}
     &\left|\int_a^b x dF_{\sigma}(x) - \int_{a}^{b} y F'(y)dy \right|\notag\\ %=\int_a^b\int_{-\infty}^{\infty} xF'(y)\varphi_{\sigma^2}(x-y)dydx \notag \\
    =&\left|\int_{a\leq y+z\leq b}(y+z) F'(y)\varphi_{\sigma^2}(z)dydz - \int_{a}^{b} y F'(y)dy\right|\notag \\
    \leq &\left|\int_{a\leq y+z\leq b}y F'(y)\varphi_{\sigma^2}(z)dydz- \int_{a}^{b} y F'(y)dy\right|+\left|\int_{a\leq y+z\leq b}zF'(y)\varphi_{\sigma^2}(z)dydz\right|.\notag
\end{align}

In the following two lemmas, we show that both terms in the last line are upper bounded by $O(\sigma^2)$. 
\begin{lem}\label{lem:bnd_1}
We have the bound
\[
    \left|\int_{a\leq y+z\leq b}y F'(y)\varphi_{\sigma^2}(z)dydz- \int_{a}^{b} y F'(y)dy\right|\leq 4\sqrt{c_0}\sigma^2 +k^2\sigma^2((|b|+|a|)c_2+2c_4).\label{term1}
\]
\end{lem}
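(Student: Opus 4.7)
The plan is to split the outer $z$-integration into a tail region $|z| > k\sigma$ and a bulk region $|z| \le k\sigma$, writing the expression as
\[
\int \varphi_{\sigma^2}(z)\left[\int_{a-z}^{b-z} yF'(y)\,dy - \int_a^b yF'(y)\,dy\right] dz
\]
and treating the two regimes with different tools. For the tail I would use Cauchy--Schwarz together with $\int x^2\,dF(x) \le c_0$ to conclude $\int |y|\,F'(y)\,dy \le \sqrt{c_0}$, so that the bracketed difference is at most $2\sqrt{c_0}$ for every $z$. The Gaussian tail bound $\int_{|z|>k\sigma}\varphi_{\sigma^2}(z)\,dz \le \sigma^2$ was already established earlier in the proof of Lemma \ref{lem:tm_noise}, so the tail contribution is at most $4\sqrt{c_0}\sigma^2$, matching the first term of the claimed bound.

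For the bulk region I would rewrite the $y$-difference, assuming $z\ge 0$ (the $z<0$ case is symmetric), as
\[
\int_{a-z}^{b-z} yF'(y)\,dy - \int_a^b yF'(y)\,dy = \int_{a-z}^{a} yF'(y)\,dy - \int_{b-z}^{b} yF'(y)\,dy ,
\]
and Taylor-expand each short integral around its upper endpoint. For $f(y)=yF'(y)$ one has $\int_{a-z}^a f(y)\,dy = z\,f(a) + R_a(z)$ with $|R_a(z)|\le \tfrac{z^2}{2}\sup_{|y-a|\le k\sigma}|f'(y)|$, and analogously at $b$. The Lipschitz bound on $f'(y) = F'(y) + yF''(y)$ is controlled, on the relevant windows, by $F'\le c_3$, $|F''|\le c_2$, and $|y|\le |a|+k\sigma$ (resp.\ $|b|+k\sigma$) from Assumption \ref{assum:trim}, using that $k\sigma$ is small enough for these local hypotheses to apply throughout. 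The linear piece $z\bigl[aF'(a)-bF'(b)\bigr]$ integrates to zero against $\varphi_{\sigma^2}$ over the symmetric set $|z|\le k\sigma$, and the quadratic remainders are bounded pointwise by $\tfrac{z^2}{2}\cdot[2c_3 + (|a|+|b|)c_2]$ up to an $O(k\sigma)$ correction; since $z^2\le k^2\sigma^2$ on this set, this yields a contribution of the form $k^2\sigma^2\bigl((|a|+|b|)c_2 + 2c_3\bigr) + O(k^3\sigma^3)$. Absorbing the sub-leading term into the constants produces the second term of the claim.

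The main obstacle will be book-keeping the two different moment-style bounds in the two regions: the tail contribution uses only the global hypothesis $\int x^2\,dF\le c_0$, since far-away $y$ lies outside the local regularity window of $F'$, whereas the bulk contribution relies on the local regularity of $F'$ and $F''$ near the quantiles $a=F^{-1}(\alpha)$ and $b=F^{-1}(1-\alpha)$. A secondary delicacy is exploiting the symmetric cancellation of the $O(z)$ remainder: the integration window $|z|\le k\sigma$ must be symmetric around zero and the Taylor basepoints $a,b$ must be fixed (independent of $z$) so that only the $O(z^2)$ remainder survives, which is what gives the final $O(\sigma^2\log(1/\sigma))$ rate instead of $O(\sigma\sqrt{\log(1/\sigma)})$.
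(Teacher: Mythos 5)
Your proof is correct and takes a genuinely different decomposition from the paper's. You split along the noise variable $z$ into a bulk $|z|\le k\sigma$ and a tail $|z|>k\sigma$, whereas the paper splits along the latent variable $y$ into four pieces: the interior $[a+k\sigma,b-k\sigma]$, the far tails of $y$, and the two $\pm k\sigma$ windows around $a$ and $b$. Both extract the same second-order cancellation, but by different mechanisms: you Taylor-expand $\int_{a-z}^{a}yF'(y)\,dy$ about $z=0$ and exploit $\int_{-k\sigma}^{k\sigma}z\,\varphi_{\sigma^2}(z)\,dz=0$, while the paper pairs $y=a+t$ with $y=a-t$, symmetrizing about the endpoint $a$ rather than about $z=0$, and uses the evenness of $\varphi_{\sigma^2}$ to combine the two branches. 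Your decomposition is shorter and cleanly separates the two distinct ingredients (local regularity of $F'$ near $a,b$ for the bulk; the global moment $\int x^2\,dF\le c_0$ for the tail), whereas the paper's four regions each draw on both. What the paper's approach buys is an exact constant bound with no leftover: your Taylor remainder carries an extra $O(k^3\sigma^3 c_2)$ piece, coming from $|y|\le|a|+k\sigma$ rather than $|a|$ inside $\sup|f'(y)|$ for $f(y)=yF'(y)$, and since $c_2,c_4$ are fixed quantities from Assumption~\ref{assum:trim} it cannot literally be ``absorbed into the constants''; you would need to bound it explicitly (e.g.\ by $k^2\sigma^2 c_2$ using $k\sigma\le 1$) and state a slightly enlarged right-hand side. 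One point in your favor: your argument naturally produces $c_3$ (the local density upper bound, condition (B4)) in the second term rather than the $c_4$ appearing in the lemma's statement; the paper's own intermediate step invokes $t|F'(a-t)+F'(a+t)|\le 2c_3 t$ before writing $2c_4 t$ on the next line, so the $c_4$ in the statement appears to be a typo and your constant is the correct one.
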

\begin{lem}\label{lem:bnd_2}
We have the bound
\[
    \left|\int_{a\leq y+z\leq b}zF'(y)\varphi_{\sigma^2}(z)dydz\right|\leq 4\sigma^2+2\sqrt{c_0}c_2k^2\sigma^3.\label{term2}
    \]
\end{lem}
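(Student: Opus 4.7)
The plan is to reduce the two-dimensional integral to one dimension and then exploit an odd-in-$z$ symmetry. First, perform the inner $y$-integration using $\int_{a-z}^{b-z} F'(y)\,dy = F(b-z) - F(a-z)$ to rewrite the target as
\[
I := \int_{a\leq y+z\leq b} zF'(y)\varphi_{\sigma^2}(z)\,dy\,dz = \int z\varphi_{\sigma^2}(z)\bigl[F(b-z) - F(a-z)\bigr]\,dz.
\]
The key observation is that $z\varphi_{\sigma^2}(z)$ is odd in $z$, so the even-in-$z$ part of $F(b-z) - F(a-z)$, in particular the dominant constant $F(b) - F(a) = 1 - 2\alpha$, integrates to zero. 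I make this explicit via the substitution $z \to -z$ and averaging, yielding
\[
2I = \int z\varphi_{\sigma^2}(z)\bigl\{H_b(z) - H_a(z)\bigr\}\,dz, \qquad H_c(z) := F(c-z) - F(c+z),
\]
where each $H_c$ is an odd function of $z$. This symmetrization is essential: a naive absolute-value bound without it produces only an $O(\sigma)$ bound because of the $1-2\alpha$ offset.

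Next, I split the $z$-integration into the central window $|z|\leq k\sigma$ and the tails $|z|>k\sigma$, using the same $k=\sqrt{2\log(m/2)}$ chosen in the proof of Lemma \ref{lem:tm_noise}. For the tails, combine the crude bound $|H_c(z)|\leq 1$ with the Gaussian estimate $\int_{|z|>k\sigma}|z|\varphi_{\sigma^2}(z)\,dz \leq 2\sigma e^{-k^2/2}\leq \sigma^3$ (the last step using $e^{-k^2/2}\leq \sigma^2/2$ from the earlier choice of $k$), giving an $O(\sigma^3)$ contribution. For the center, the smoothness window $k\sigma \leq t_2$ (for $\sigma$ small enough) places both $a\pm z$ and $b\pm z$ in the neighborhoods where $|F''|\leq c_2$ and $F'\leq c_3$ from Assumption \ref{assum:trim}. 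A second-order Taylor expansion with remainder yields
\[
H_c(z) = -2zF'(c) + \tfrac{z^2}{2}\bigl[F''(\xi_1) - F''(\xi_2)\bigr], \qquad |\xi_i - c|\leq |z|,
\]
so the remainder is bounded pointwise by $c_2 z^2$. Integrating the linear term against $z\varphi_{\sigma^2}(z)$ produces $-2[F'(b)-F'(a)]\sigma^2$, which is absorbed into the $4\sigma^2$ piece using $F'(a), F'(b)\leq c_3$. The quadratic remainder contributes at most
\[
2c_2 \int_{|z|\leq k\sigma}|z|^3 \varphi_{\sigma^2}(z)\,dz \leq 2c_2 k^2\sigma^2 \cdot \int |z|\varphi_{\sigma^2}(z)\,dz,
\]
yielding the $O(c_2 k^2 \sigma^3)$ term.

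The main obstacle is identifying and executing the symmetrization in the first step: without it, the constant offset $1-2\alpha$ dominates and destroys the claimed $\sigma^2$ rate. Once symmetry is used, the remainder of the argument is careful bookkeeping of Taylor remainders, tail contributions, and constants. The $\sqrt{c_0}$ prefactor in the cubic term is expected to enter through $\int |y|\,dF(y) \leq \sqrt{c_0}$ (Cauchy--Schwarz against $\int y^2\,dF\leq c_0$), which is invoked when controlling a $y$-weighted boundary contribution in the Taylor remainder analysis, analogous to how $\sqrt{c_0}$ appears in the companion bound of Lemma \ref{lem:bnd_1}.
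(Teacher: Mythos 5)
Your symmetrization is a genuinely different route from the paper's. The paper partitions the $y$-integration into an interior band $[a+k\sigma,b-k\sigma]$, the far exterior, and two boundary bands of width $O(k\sigma)$, exploiting the oddness of $z\varphi_{\sigma^2}(z)$ locally within each band. You instead perform the $y$-integral first, obtaining $\int z\varphi_{\sigma^2}(z)\bigl[F(b-z)-F(a-z)\bigr]\,dz$, then symmetrize in $z$ and Taylor-expand the odd function $H_c(z)=F(c-z)-F(c+z)$; this is a clean, unified way to see the cancellation that the paper achieves piecemeal. However, two points prevent your argument from reproducing the stated bound. First, after dividing by two, your leading central contribution is $\bigl|F'(b)-F'(a)\bigr|\sigma^2\le c_3\sigma^2$, where $c_3$ is the local density upper bound in Assumption~\ref{assum:trim}; you claim this is ``absorbed into the $4\sigma^2$ piece,'' but that only works if $c_3$ is small, which is not assumed. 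The paper's proof avoids any $c_3$ dependence in this lemma: in the interior band the inner $z$-integral is bounded by $\sigma^2$ as a pure Gaussian-tail estimate (never touching $F'$ pointwise), and the outer $y$-integral is crudely bounded by $\int F'(y)\,dy\le 1$. Second, your attribution of the $\sqrt{c_0}$ factor is off: in the paper it appears only as a generous upper bound on $\int_t^{b-a+t}z\varphi_{\sigma^2}(z)\,dz\le\sigma/\sqrt{2\pi}$ inside the boundary bands (implicitly taking $\sqrt{c_0}\ge 1/\sqrt{2\pi}$), not from a Cauchy--Schwarz bound on $\int|y|\,dF(y)$; indeed your route naturally gives the cleaner $O(c_2k^2\sigma^3)$ with no $\sqrt{c_0}$. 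Both routes establish the same $O(\sigma^2)+O(c_2k^2\sigma^3)$ rate that Lemma~\ref{lem:tm_noise} needs, so these are not fatal gaps, but the constants you state do not match the lemma as written and the $\sqrt{c_0}$ step would need to be reworked or dropped.
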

The proofs are left in Appendix \ref{proof_lem:bnd_1} and \ref{proof_lem:bnd_2}.  In summary, we have the bound
\begin{equation*}
\begin{aligned}
    &\left|\int_a^b xdF(x)-\int_a^b x dF_{\sigma}(x)\right|\\
\leq &4\sqrt{c_0}\sigma^2 +k^2\sigma^2((|b|+|a|)c_2+2c_4)+4\sigma^2+2\sqrt{c_0}c_2k^2\sigma^3\\
=&\sigma^2\pp{4\sqrt{c_0}+k^2((|b|+|a|)c_2+2c_4)+4+2\sqrt{c_0}c_2k^2\sigma}
\end{aligned}
\end{equation*}
This completes the proof. 
%The absolute value of the first term is bounded by $\sigma^2$ and the second term is bounded by $c_2k^2\sigma^2$ utilizing the Lipschitz continuity of $F'$. We can bound the fifth term in the similar manner.
\end{proof}

\subsubsection{Proof of Lemma \ref{lem:bnd_1}}\label{proof_lem:bnd_1}
We first upper-bound the LHS in \eqref{term1} by the following parts:
\begin{align}
     &\left|\int_{a\leq y+z\leq b}y F'(y)\varphi_{\sigma^2}(z)dydz-\int_{a}^{b} y F'(y)dy \right| \notag \\
    \leq &\left|\int_{a+k\sigma}^{b-k\sigma} \pp{\int_{a-y}^{b-y}\varphi_{\sigma^2}(z)dz}y F'(y)dzdy-\int_{a+k\sigma}^{b-k\sigma}y F'(y)dy \right|\label{term1_1}\\
    &+\left|\pp{\int_{-\infty}^{a-k\sigma}+\int_{b+k\sigma}^\infty}\pp{\int_{a-y}^{b-y}\varphi_{\sigma^2}(z)dz} y F'(y)dy\right|\label{term1_2}\\
    &+\left|\int_{a-k\sigma}^{a+k\sigma} \pp{\int_{a-y}^{b-y}\varphi_{\sigma^2}(z)dz}y F'(y)dy-\int_a^{a+k\sigma} y F'(y)dy\right|\label{term1_4}\\
    &+\left|\int_{b-k\sigma}^{b+k\sigma} \pp{\int_{a-y}^{b-y}\varphi_{\sigma^2}(z)dz}y F'(y)dy-\int_{b-k\sigma}^b y F'(y)dy\right|.\label{term1_5}
\end{align}
For the term \eqref{term1_1}, as $y\in[a+k\sigma,b-k\sigma]$, we have $[-k\sigma,k\sigma]\subseteq [a-y,b-y]$, which implies that
\begin{equation*}
    \left| \int_{a-y}^{b-y}\varphi_{\sigma^2}(z)dz-1\right|\leq \pp{\int_{-\infty}^{-k\sigma}+\int_{k\sigma}^{\infty}}\varphi_{\sigma^2}(z)dz\leq \sigma^2.
\end{equation*}
Hence, we have
\begin{equation*}
\begin{aligned}
     &\left|\int_{a+k\sigma}^{b-k\sigma} \pp{\int_{a-y}^{b-y}\varphi_{\sigma^2}(z)dz}y F'(y)dzdy-\int_{a+k\sigma}^{b-k\sigma} y F'(y)dy\right|\\
    \leq &\sigma^2\int_{a+k\sigma}^{b-k\sigma}|y|F'(y)dy \leq \sigma^2\int_{-\infty}^{\infty} |y|F'(y)dy \leq \sqrt{c_0}\sigma^2.
\end{aligned}
\end{equation*}
%Therefore,  the term \eqref{term1_1} is close to $\int_{a+k\sigma}^{b-k\sigma} y F'(y)dy$. 

For the term \eqref{term1_2}, we note that
\begin{equation*}
\begin{aligned}
    &\left|\pp{\int_{-\infty}^{a-k\sigma}+\int_{b+k\sigma}^\infty}\pp{\int_{y-a}^{y-b}\varphi_{\sigma^2}(z)dz} y F'(y)dy\right|\\
    \leq &\frac{1}{2}\sigma^2\pp{\int_{-\infty}^{a-k\sigma}+\int_{b+k\sigma}^\infty}|y|F'(y)dy\leq \frac{\sqrt{c_0}}{2}\sigma^2.
\end{aligned}
\end{equation*}
Here we utilize that for $y\geq b+k\sigma$ or $y\leq a-k\sigma$, we have
\begin{equation*}
    \int_{a-y}^{b-y}\varphi_{\sigma^2}(z)dz\leq \int_{k\sigma}^\infty \varphi_{\sigma^2}(z)dz\leq \frac{1}{2}\sigma^2.
\end{equation*}
% Similarly, for $y\geq b+k\sigma$, we have
% \begin{equation*}
%     \int_{a-y}^{b-y}\varphi_{\sigma^2}(z)dz\leq \int_{-\infty}^{k\sigma} \varphi_{\sigma^2}(z)dz\leq \frac{1}{2}\sigma^2.
% \end{equation*}
% Similarly, the term \eqref{term1_3} can be bounded by
% \begin{equation*}
%     \left|\int_{b+k\sigma}^\infty\pp{\int_{y-a}^{y-b}\varphi_{\sigma^2}(z)dz} y F'(y)dy\right|
%     \leq \frac{1}{2}\sigma^2\int_{b+k\sigma}^\infty|y|F'(y)dy\leq \frac{\sqrt{c_0}}{2}\sigma^2.
% \end{equation*}
For the term \eqref{term1_4}, we note that
\begin{equation*}
\begin{aligned}
         &\left|\int_{a-k\sigma}^{a+k\sigma}y F'(y)\pp{\int_{a-y}^{b-y}\varphi_{\sigma^2}(z)}dzdy-\int_{a}^{a+k\sigma}yF'(y)dy\right|\notag \\
    % =&\left|\int_0^{k\sigma}(a+t) \pp{F'(a+t)\int_{-t}^{b-a-t}+F'(a-t)\int_{t}^{b-a+t}}\varphi_{\sigma^2}(z)dzdt-\int_{a}^{a+k\sigma}yF'(y)dy\right|\notag \\
    \leq &\left|\int_0^{k\sigma}(a+t) F'(a+t)\pp{\int_{-(b-a)-t}^{b-a-t}\varphi_{\sigma^2}(z)dz}dt-\int_{a}^{a+k\sigma}yF'(y)dy\right|%\label{term_1_4_1}
    \\
    &+\left|\int_0^{k\sigma}\pp{(a-t) F'(a-t)-(a+t)F'(a+t)}\pp{\int_{t}^{b-a+t}\varphi_{\sigma^2}(z)}dzdt\right|%.\label{term_1_4_2}
    \\
    \leq &\left|\int_0^{k\sigma}(a+t) F'(a+t)\pp{\int_{-(b-a)-t}^{b-a-t}\varphi_{\sigma^2}(z)dz}dt-\int_{0}^{k\sigma}(a+t)F'(a+t)dt\right|\\
    &+\int_0^{k\sigma}|a|F'(a-t)-F'(a+t)|+t|F'(a-t)+F'(a+t)|dt\\
    \leq& \sigma^2 \int_0^{k\sigma}|a+t|F'(a+t)dt +\int_0^{k\sigma}\pp{2|a|c_2t+2c_4t}dt\\
    \leq &\sqrt{c_0}\sigma^2 +k^2\sigma^2(|a|c_2+c_4).
\end{aligned}
\end{equation*}

% This implies that 
% \begin{equation*}
% \begin{aligned}
%  &\left|\int_0^{k\sigma}(a+t) F'(a+t)\pp{\int_{-(b-a)-t}^{b-a-t}\varphi_{\sigma^2}(z)dz}dt-\int_{a}^{a+k\sigma}yF'(y)dy\right|\\
% =&\left|\int_0^{k\sigma}(a+t) F'(a+t)\pp{\int_{-(b-a)-t}^{b-a-t}\varphi_{\sigma^2}(z)dz}dt-\int_{0}^{k\sigma}(a+t)F'(a+t)dt\right|\\
% \leq& \sigma^2 \int_0^{k\sigma}|a+t|F'(a+t)dt\leq \sqrt{c_0}\sigma^2.
% \end{aligned}
% \end{equation*}
% For the term \eqref{term_1_4_2}, we note that
% \begin{equation*}
% \begin{aligned}
%      &\left|\int_0^{k\sigma}\pp{(a-t) F'(a-t)-(a+t)F'(a+t)}\pp{\int_{t}^{b-a+t}\varphi_{\sigma^2}(z)}dzdt\right|\\
%     \leq&\int_0^{k\sigma}|a(F'(a-t)-F'(a+t))|+|t||F'(a-t)+F'(a+t)|dt\\
%     \leq&\int_0^{k\sigma}\pp{2|a|c_2t+2c_4t}dt\\
%     =&k^2\sigma^2(|a|c_2+c_4).
% \end{aligned}
% \end{equation*}
% This implies that 
% \begin{equation*}
% \begin{aligned}
%  &\left|\int_{a-k\sigma}^{a+k\sigma}y F'(y)\pp{\int_{a-y}^{b-y}\varphi_{\sigma^2}(z)}dzdy-\int_{a}^{a+k\sigma}yF'(y)dy\right|\\
%  \leq &\sqrt{c_0}\sigma^2 +k^2\sigma^2(|a|c_2+c_4).
% \end{aligned}
% \end{equation*}
Similarly, for the term \eqref{term1_5}, we have the bound
\begin{equation*}
\begin{aligned}
\left|\int_{b-k\sigma}^{b+k\sigma}y F'(y)\pp{\int_{a-y}^{b-y}\varphi_{\sigma^2}(z)}dzdy-\int_{b}^{b+k\sigma}yF'(y)dy\right|
 \leq \sqrt{c_0}\sigma^2 +k^2\sigma^2(|b|c_2+c_4).
\end{aligned}
\end{equation*}
%We then show that the absolute value of the term \eqref{term2} can be upper bounded by $O(\sigma^2)$. 
\subsubsection{Proof of Lemma \ref{lem:bnd_2}}\label{proof_lem:bnd_2}
For the LHS in \eqref{term2}, we can decompose it into
\begin{align}
     &\left|\int_{a\leq y+z\leq b}z F'(y)\varphi_{\sigma^2}(z)dydz\right|\notag\\
    \leq &\left|\int_{a+k\sigma}^{b-k\sigma}  F'(y)\pp{\int_{a-y}^{b-y}z\varphi_{\sigma^2}(z)}dzdy\right|\label{term2_1}\\
    &+\left|\pp{\int_{-\infty}^{a-k\sigma}+\int_{b+k\sigma}^\infty}\pp{\int_{a-y}^{b-y}z\varphi_{\sigma^2}(z)dz}  F'(y)dy\right|\label{term2_2}\\
    %&+\int_{b+k\sigma}^\infty\pp{\int_{a-y}^{b-y}z\varphi_{\sigma^2}(z)dz} F'(y)dy\label{term2_3}\\
    &+\left|\int_{a-k\sigma}^{a+k\sigma}  F'(y)\pp{\int_{a-y}^{b-y}z\varphi_{\sigma^2}(z)}dzdy\right|\label{term2_4}\\
    &+\left|\int_{b-k\sigma}^{b+k\sigma}  F'(y)\pp{\int_{a-y}^{b-y}z\varphi_{\sigma^2}(z)}dzdy\right|\label{term2_5}
\end{align}

%We can show the absolute value of \eqref{term2_1},\eqref{term2_2} and \eqref{term2_3}  are bounded by $\sigma^2$. 
For the term \eqref{term2_1}, we note that
\begin{equation*}
\begin{aligned}
     & \left|\int_{a+k\sigma}^{b-k\sigma}  F'(y)\pp{\int_{a-y}^{b-y}z\varphi_{\sigma^2}(z)}dzdy\right|
    \leq &\sigma^2\int_{a+k\sigma}^{b-k\sigma} F'(y)dy\leq\sigma^2.
\end{aligned}
\end{equation*}
Here we utilize that for $y\in[a+k\sigma,b-k\sigma]$, %we have % $[-k\sigma,k\sigma]\subseteq [a-y,b-y]$, which implies that
\begin{equation*}
\begin{aligned}
     \left| \int_{a-y}^{b-y}z\varphi_{\sigma^2}(z)dz\right| =&\left| \pp{\int^{a-y}_{-\infty}+\int_{b-y}^\infty}z\varphi_{\sigma^2}(z)dz\right| \\
     &\leq\pp{\int^{a-y}_{-\infty}+\int_{b-y}^\infty}|z|\varphi_{\sigma^2}(z)dz\\
     &\leq\pp{\int^{-k\sigma}_{-\infty}+\int_{k\sigma}^\infty}|z|\varphi_{\sigma^2}(z)dz\leq \sigma^2.
\end{aligned}
\end{equation*} 
We can bound the term \eqref{term2_2} by
\begin{equation*}
\begin{aligned}
    &\left|\pp{\int_{-\infty}^{a-k\sigma}+\int_{b+k\sigma}^\infty}\pp{\int_{a-y}^{b-y}z\varphi_{\sigma^2}(z)dz}  F'(y)dy\right|\\
    \leq&\frac{1}{2}\sigma^2 \pp{\int_{-\infty}^{a-k\sigma}+\int_{b+k\sigma}^\infty}F'(y)dy\leq \frac{1}{2}\sigma^2.
\end{aligned}
\end{equation*}
Here we utilize that for $y\leq a-k\sigma$ or $y\geq b+k\sigma$, 
\begin{equation*}
\begin{aligned}
     \left| \int_{a-y}^{b-y}z\varphi_{\sigma^2}(z)dz\right|
     \leq  \int_{a-y}^{b-y}|z|\varphi_{\sigma^2}(z)dz
     \leq \int_{k\sigma}^\infty |z|\varphi_{\sigma^2}(z)dz\leq \frac{1}{2}\sigma^2.
\end{aligned}
\end{equation*}
% Similarly, we can bound \eqref{term2_3} by
% \begin{equation*}
%     \left|\int_{b+k\sigma}^\infty\pp{\int_{a-y}^{b-y}z\varphi_{\sigma^2}(z)dz}  F'(y)dy\right|\leq\frac{1}{2}\sigma^2 \int_{b+k\sigma}^\infty F'(y)dy\leq \frac{1}{2}\sigma^2.
% \end{equation*}

% We note that
% \begin{equation*}
% \begin{aligned}
%      &\int_{-\infty}^{\infty}|z|\varphi_{\sigma^2}(z)dz\\
%     =&\sigma \int_{-\infty}^{\infty}|z|\varphi_{1}(z)dz\\
%     \leq&\sigma\pp{\int_{-\infty}^{\infty}z^2\varphi_{1}(z)dz}=\sigma.
% \end{aligned}
% \end{equation*}
% This implies that 

For the term \eqref{term2_4}, we note that
\begin{equation*}
\begin{aligned}
 &\quad\left|\int_{a-k\sigma}^{a+k\sigma}  F'(y)\pp{\int_{a-y}^{b-y}z\varphi_{\sigma^2}(z)}dzdy\right|\\
% =&\int_{0}^{k\sigma} F'(a+t)\pp{\int_{-t}^{b-a-t}z\varphi_{\sigma^2}(z)}dzdt+\int_{0}^{k\sigma} F'(a-t)\pp{\int_{t}^{b-a+t}z\varphi_{\sigma^2}(z)}dzdt\\
&=\left|\int_{0}^{k\sigma} F'(a+t)\pp{\int_{a-b-t}^{b-a-t}z\varphi_{\sigma^2}(z)}dzdt\right|\\
&\quad+\left|\int_{0}^{k\sigma} (F'(a-t)-F'(a+t))\pp{\int_{t}^{b-a+t}z\varphi_{\sigma^2}(z)}dzdt\right|\\
&\leq  \sigma^2\int_{0}^{k\sigma} F'(a+t)dt+\int_{0}^{k\sigma}2c_2t\sigma\sqrt{c_0}dt \\
&\leq  \sigma^2+\sqrt{c_0}c_2k^2\sigma^3.
\end{aligned}
\end{equation*}
Here we utilize that for sufficiently small $\sigma$ such that $2k\sigma\leq b-a$, 
\begin{equation*}
\begin{aligned}
     &\left|\int_{a-b-t}^{b-a-t}z\varphi_{\sigma^2}(z)\right|
    \leq& \int_{k\sigma}^\infty |z|\varphi_{\sigma^2}(z)+\int^{-k\sigma}_{-\infty} |z|\varphi_{\sigma^2}(z)\leq \sigma^2.
\end{aligned}
\end{equation*}
% Therefore, we can bound
% \begin{equation*}
% \begin{aligned}
%      &\left|\int_{0}^{k\sigma} F'(a+t)\pp{\int_{a-b-t}^{b-a-t}z\varphi_{\sigma^2}(z)}dzdt\right|\\
%     \leq &\sigma^2\int_{0}^{k\sigma} F'(a+t)dt\leq \sigma^2.
% \end{aligned}
% \end{equation*}
% We can also bound
% \begin{equation*}
% \begin{aligned}
%     &\left|\int_{0}^{k\sigma} (F'(a-t)-F'(a+t))\pp{\int_{t}^{b-a+t}z\varphi_{\sigma^2}(z)}dzdt\right|\\
% \leq &\int_{0}^{k\sigma}2c_2t\sigma\sqrt{c_0}dt 
% \leq \sqrt{c_0}c_2k^2\sigma^3.
% \end{aligned}
% \end{equation*}
% Thus, for the term \eqref{term2_4}, we have the bound
% \begin{equation*}
% \begin{aligned}
%  &\left|\int_{a-k\sigma}^{a+k\sigma}  F'(y)\pp{\int_{a-y}^{b-y}z\varphi_{\sigma^2}(z)}dzdy\right|\\
%  \leq & \sigma^2+\sqrt{c_0}c_2k^2\sigma^3.
% \end{aligned}
% \end{equation*}
Similarly, we can bound the term \eqref{term2_5} by
\begin{equation*}
\begin{aligned}
 \left|\int_{b-k\sigma}^{b+k\sigma}  F'(y)\pp{\int_{a-y}^{b-y}z\varphi_{\sigma^2}(z)}dzdy\right|\leq \sigma^2+\sqrt{c_0}c_2k^2\sigma^3.
\end{aligned}
\end{equation*}
This completes the proof.

\subsection{Auxiliary results}
We introduce the following auxilary lemmas to extend the median results to quantile.
\begin{lem}\label{lem:quant_ub}
Let $\alpha \in(0,1)$. Suppose that $0<\epsilon<\min\{\alpha,1-\alpha\}/5$ and $\delta\in(0,1)$. % Let $c(\delta)=7\log (2\delta^{-1})$. 
For $n\geq 4\epsilon^{-2}\log(2/\delta)$ points, with probability at least $1-\delta$, $\hat \eta=X_{(\flr{\alpha n})}$ satisfies 
\begin{equation*}
    |F(\hat{\eta})-\alpha|\leq \epsilon.
\end{equation*}
\end{lem}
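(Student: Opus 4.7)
The plan is to reduce this quantile concentration statement to a binomial tail bound via the standard equivalence between order statistics and empirical CDF counts, and then apply Hoeffding's inequality. Specifically, for any threshold $x$, letting $N(x) = |\{i : X_i \le x\}| \sim \mathrm{Bin}(n, F(x))$, we have the identity $X_{(k)} \le x \Leftrightarrow N(x) \ge k$. This converts the event $|F(\hat{\eta}) - \alpha| > \epsilon$ into events about binomial tails.

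Concretely, the condition $0 < \epsilon < \min\{\alpha, 1-\alpha\}/5$ ensures that both $\alpha + \epsilon$ and $\alpha - \epsilon$ lie strictly in $(0,1)$ with room to spare, so we may pick $x_+ = F^{-1}(\alpha + \epsilon)$ and $x_- = F^{-1}(\alpha - \epsilon)$ using the generalized inverse. I would bound the two failure modes separately. For the upper deviation, $F(\hat\eta) > \alpha + \epsilon$ implies $\hat\eta > x_+$, which means fewer than $\lfloor \alpha n\rfloor$ samples fall in $(-\infty, x_+]$, so
\begin{equation*}
\Pr(F(\hat\eta) > \alpha+\epsilon) \le \Pr\!\left(\tfrac{N(x_+)}{n} - F(x_+) < -\epsilon\right) \le e^{-2n\epsilon^2},
\end{equation*}
by Hoeffding. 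For the lower deviation, $F(\hat\eta) < \alpha - \epsilon$ implies $\hat\eta \le x_-$, which means $N(x_-) \ge \lfloor \alpha n\rfloor \ge n\alpha - 1$, hence
\begin{equation*}
\Pr(F(\hat\eta) < \alpha-\epsilon) \le \Pr\!\left(\tfrac{N(x_-)}{n} - F(x_-) \ge \epsilon - \tfrac{1}{n}\right).
\end{equation*}

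The main subtlety, and the reason the hypothesis $n \ge 4\epsilon^{-2}\log(2/\delta)$ carries a factor of $4$ rather than $2$, is this floor term $1/n$. Since $n \ge 4\epsilon^{-2}\log(2/\delta) \ge 4/\epsilon^2 \ge 2/\epsilon$ (using $\epsilon \le 1$ and $\log(2/\delta) \ge \log 2$), we get $1/n \le \epsilon/2$, so the deviation $\epsilon - 1/n \ge \epsilon/2$, and Hoeffding gives $e^{-n\epsilon^2/2}$. Taking the union over the two sides yields total failure probability at most $2e^{-n\epsilon^2/2}$, and substituting $n \ge 4\epsilon^{-2}\log(2/\delta)$ bounds this by $2\cdot (\delta/2)^2 \le \delta$ for $\delta \in (0,1)$.

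Two minor bookkeeping points to handle: if $F$ is discontinuous, one should use $x_+ = \inf\{x : F(x) \ge \alpha + \epsilon\}$ and verify the one-sided inequalities go through with continuity-from-the-right of $F$; and one should check that $\lfloor \alpha n\rfloor \ge 1$ and $\lfloor \alpha n\rfloor \le n$ so that the order statistic is well-defined, which follows from $n \ge 2/\epsilon \ge 10/\min(\alpha,1-\alpha)$. Neither is a genuine obstacle, just care in writing up. The core of the proof is a two-line Hoeffding computation once the order-statistic-to-binomial translation is in place.
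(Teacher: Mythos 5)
Your proof is correct and takes essentially the same route as the paper: translate the deviation event for the order statistic $\hat\eta = X_{(\lfloor\alpha n\rfloor)}$ into a binomial tail event on the count of samples falling below or above the $\alpha\pm\epsilon$ quantile level, then apply a concentration inequality. The paper phrases this via indicator variables on $F(X_i)$ and a multiplicative Chernoff bound, whereas you use the empirical CDF and additive Hoeffding; the two give the same rate, and you are in fact a bit more explicit than the paper's proof about the $\lfloor\alpha n\rfloor$ rounding.
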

\begin{proof}
Assume that $\epsilon<\min\{\alpha,1-\alpha\}/5$. Consider the random variable $Z_i=1$ if $F(X_i)\leq \alpha-\epsilon$ and $0$ otherwise. Let $Z=\sum_{i=1}^n Z_i$.  %Let $n=7h\epsilon^{-2}$, where $h>1$. 
By the Chernoff bound, 
\begin{equation*}
\begin{aligned}
    \P(F(\hat \eta)\leq \alpha -\epsilon)
    \leq \P(Z\geq \alpha n)
    \leq \P(Z\geq (1+\epsilon/\alpha)\mbE[Z_1])
    % \leq \exp\pp{-\frac{\epsilon^2(\alpha-\epsilon)n}{3\alpha^2}}
    \leq \exp\pp{-\frac{4\epsilon^2n}{15\alpha}}.
\end{aligned}
\end{equation*}
On the other hand, consider the random variable $Z_i'=1$ if $F(X_i)\geq \alpha+\epsilon$ and $0$ otherwise. Let $Z'=\sum_{i=1}^n Z_i'$. According to the Chernoff bound, 
\begin{equation*}
    \P(F(\hat{\eta})\geq \alpha+\epsilon)\leq \P( Z'\geq (1-\alpha)n)\leq \P( Z'\geq (1+\epsilon/(1-\alpha))\mbE[Z_i'])\leq \exp\pp{-\frac{4\epsilon^2n}{15(1-\alpha)}}.
\end{equation*}
In summary, we have
\begin{equation*}
    \P(|F(\hat \eta)- \alpha|\leq \epsilon)\leq 1-2\exp\pp{-\frac{4\epsilon^2 n}{15}}\leq 1-2\exp\pp{-\frac{\epsilon^2 n}{4}}.
\end{equation*}
Therefore, by taking $n=4\epsilon^{-2}\log(2/\delta)$, we have $\P(|F(\hat \eta)- \alpha|\leq \epsilon)\leq\delta$. This completes the proof. 
\end{proof}

\begin{lem}\label{lem:quant_ub_dist}
Assume that $F$ satisfies (B2) At $F^{-1}(\alpha)$ with $(c_1,t_1)$.  Suppose that $\epsilon\leq \min\{t_1, \min\{\alpha,1-\alpha\}/(5c_1)\}$ and $\delta\in(0,1)$. With $n\geq 4\log(2/\delta)c_1^{-2}\epsilon^{-2}$ points, we have  $\P(|X_{(\flr{\alpha n})}-F^{-1}(\alpha)|\leq \epsilon)\geq 1-\delta$.
\end{lem}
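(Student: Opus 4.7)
The plan is to reduce the lemma to the quantile-in-CDF-space result in Lemma \ref{lem:quant_ub} and then convert a bound on $|F(\hat\eta)-\alpha|$ into a bound on $|\hat\eta - F^{-1}(\alpha)|$ by using the lower bound on the density $F'\ge c_1$ provided by condition (B2).

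First I would apply Lemma \ref{lem:quant_ub} with target CDF-accuracy $\epsilon' = c_1\epsilon$ in place of $\epsilon$. The hypothesis $\epsilon \le \min\{\alpha,1-\alpha\}/(5c_1)$ guarantees $\epsilon' < \min\{\alpha,1-\alpha\}/5$, so Lemma \ref{lem:quant_ub} applies, and the requisite sample size is $n \ge 4(\epsilon')^{-2}\log(2/\delta) = 4c_1^{-2}\epsilon^{-2}\log(2/\delta)$, matching exactly the $n$ assumed here. Let $\hat\eta = X_{(\flr{\alpha n})}$. Thus with probability at least $1-\delta$, we have
\begin{equation*}
|F(\hat\eta) - \alpha| \le c_1 \epsilon.
\end{equation*}

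Next I would translate this back into an absolute error on $\hat\eta$. Suppose for contradiction that on this high-probability event we had $|\hat\eta - F^{-1}(\alpha)| > \epsilon$. Consider the case $\hat\eta > F^{-1}(\alpha) + \epsilon$ (the other case is symmetric). Since $\epsilon \le t_1$, condition (B2) gives $F'(t) \ge c_1$ for every $t \in [F^{-1}(\alpha),\, F^{-1}(\alpha)+\epsilon]$, and hence by the fundamental theorem of calculus
\begin{equation*}
F(\hat\eta) > F\bigl(F^{-1}(\alpha) + \epsilon\bigr) \ge \alpha + c_1\epsilon,
\end{equation*}
yielding $|F(\hat\eta)-\alpha| > c_1\epsilon$, which contradicts the bound obtained from Lemma \ref{lem:quant_ub}. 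Therefore $|\hat\eta - F^{-1}(\alpha)| \le \epsilon$ on this event, establishing the claim.

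There is no real obstacle: the argument is a routine ``local Lipschitz inversion'' of a CDF bound using a lower bound on the density, and the constants have already been chosen in the hypothesis precisely to make the application of Lemma \ref{lem:quant_ub} work out. The only point requiring mild care is ensuring that $\epsilon \le t_1$, so that the interval on which the density lower bound is valid is large enough to reach the contradiction; this is exactly the role of the $\epsilon \le t_1$ condition in the hypothesis.
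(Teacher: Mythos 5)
Your proposal is correct and takes essentially the same approach as the paper: apply Lemma~\ref{lem:quant_ub} with CDF-accuracy $c_1\epsilon$ (the constant choices in the hypothesis line up exactly with its requirements), then invert the resulting bound $|F(\hat\eta)-\alpha|\le c_1\epsilon$ via the density lower bound $F'\ge c_1$ on the $t_1$-neighborhood of $F^{-1}(\alpha)$ guaranteed by (B2), using $\epsilon\le t_1$. Your single contradiction argument is slightly more streamlined than the paper's two-case split (which in fact contains a couple of typographical slips around $c_1 t_1$ vs.\ $t_1$), but it is the same underlying argument.
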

\begin{proof}
Note that $c_1\epsilon\leq c_1t_1\leq \frac{1}{5}\min\{\alpha,1-\alpha\}$. From Lemma \ref{lem:quant_ub}, with $n\geq 4\log(2/\delta)c_1^{-2}\epsilon^{-2}$, we have 
$$
\P(|F(X_{(\flr{\alpha n})})-F(F^{-1}(\alpha))|\leq c_1\epsilon)\geq 1-\delta.
$$
Let $\eta=F^{-1}(\alpha)$ and $\hat{\eta}=X_{(\flr{\alpha n})}$. If $|\hat{\eta}-\eta|>c_1t_1$, as $F$ satisfies (B2) at $\eta$ with $(c_1,c_3,t_1)$, we have
\begin{equation*}
    |F(X_{(\flr{\alpha n})})-F(F^{-1}(\alpha))|\geq \min \{|F(\eta+t_1)-F(\eta)|,|F(\eta-t_1)-F(\eta)|\}\geq c_1t_1>c_1 \epsilon,
\end{equation*}
which leads to a contradiction. If $|\hat{\eta}-\eta|\leq c_1t_1$, then, in the same manner, 
\begin{equation*}
    c_1\epsilon |F(X_{(\flr{\alpha n})})-F(F^{-1}(\alpha))|\geq c_1|\hat\eta|.
\end{equation*}
This implies that $|X_{(\flr{\alpha n})}-F^{-1}(\alpha)|\leq \epsilon$. 
\end{proof}

\section{Proofs of upper bounds for online algorithms} \label{app:ubProofs}
In this Appendix we provide the proof of \Cref{thm:gen_ada}. 
As discussed, in order to exploit the Bayesian nature of the problem, we analyze the algorithm in two parts.
First, we use the fact that our arms are drawn from a common distribution to find some $n,m$ as in \Cref{thm:gen_uni} such that the plug-in estimator $G_{n,m}$ will be an $(\epsilon/2,\delta/2)$-PAC approximation of $g(F)$.
Second, we show that our adaptive algorithm is an $(\epsilon/2,\delta/2)$-PAC approximation of $G_{n,m}$, but is able to accomplish this using significantly fewer samples.
We begin by proving the correctness of our algorithm, afterwards analyzing its sample complexity.

% We then show that an online algorithm can recover the set of arms relevant for the estimation task efficiently, requiring much fewer than $m$ samples for most arms.
% This shows that the online algorithm recovers the same result as the offline sampling algorithm.

% Note that by taking $m$ samples for each of our $n$ points, %we have by the construction of $n,m$ as in \eqref{eq:genn},\eqref{eq:genm} that
% we have
% \begin{equation*}
%     \P(|g(F) - G_{n,m}|>\epsilon/2) \le \delta/4.
% \end{equation*}

% Thus, all we need to show is that our algorithm outputs a value close to the naive offline sampling solution $G_{n,m}$ with high probability, using the stated number of samples.

\subsection{Correctness}

To show correctness, we need to condition on the $n\times m$ matrix of observed samples $Y$, where $A_{i,j} = X_i + Z_{i,j}$, where $Z_{i,j}$ are i.i.d. $\CN(0,1)$. 
We couple the randomness in the analysis of the offline and online algorithms, considering our random arm pulls for both to be jointly generated, and the same matrix $Y$ fed into each algorithm. Analyzing the online algorithm, we show that it recovers the result of the offline sampling algorithm within error $\epsilon/2$ with probability at least $1-\delta/4$.

Notationally, let $\mu_1^\mathrm{uni},\dots,\mu_n^\mathrm{uni}$ be the estimates of samples of offline sampling algorithm and online sampling algorithms with given $(m,n)$. Let $N(i)$ be the number of samples for point $X_i$ from the online algorithm.

%%%%%%% begin Tavor added
Defining $g_n$ as the $n$-sample version of the functional $g$, we proceed by showing that the output of our algorithm is close to the output of the $n,m$ offline sampling algorithm, which is close to $g(F)$.
Concretely, for our algorithm output $\hat{G}$, we have that
\begin{align*}
    \P(|g(F) - \hat{G}| \ge \epsilon)
    &\le \P(|g(F) - G_{n,m}| \ge \epsilon/2)
    +\P(|G_{n,m} - \hat{G}| \ge \epsilon/2).
\end{align*}
We see from the previous arguments regarding offline sampling that for $n,m$ as selected, we have that
\begin{equation*}
    \P(|g(F) - G_{n,m}| \ge \epsilon/2) \le \delta/2.
\end{equation*}
Now all that remains is to show that the second term is small.
We show that when $\alpha_1=\alpha_2$, the online algorithm exactly recovers the output of the offline sampling algorithm on the event that the confidence intervals hold.
When $\alpha_1 \neq \alpha_2$ (the case of the trimmed mean), we show that our estimate $\hat{G}$ is within $\epsilon/2$ of $G_{n,m}$ with probability at least $1-\delta/4$ on the event that the confidence intervals hold.

We begin by defining $\xi_1$ as the good event where our arms stay within their confidence intervals:
\begin{equation*} \label{eq:confInt}
    \xi_1 = \bigcap_{r \in \mathbb{N}, i \in [n]} \{ |\hat{\mu}_i(r) - \mu_i^\mathrm{uni}| \le \ConWid\}. 
\end{equation*}
We give a lower bound on the probability of the good event $\xi$ in the following lemma. 
\begin{lem}[Confidence intervals] \label{lem:conf}
The event $\xi_1$ defined in \eqref{eq:confInt}, where the confidence intervals of $\hat{\mu}_i^r$ about $\mu_i^\text{uni}$ hold, satisfies $\P(\xi_1) \ge 1-\delta/4$.
\end{lem}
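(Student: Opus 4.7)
The plan is to exploit the coupling between the offline and online samples, which makes the difference $\hat{\mu}_i(r)-\mu_i^{\text{uni}}$ a mean-zero Gaussian with small variance, and then to take a union bound over arms and rounds.

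First I would formalize the coupling. For every arm $i\in[n]$ we order the observations $Y_{i,1},Y_{i,2},\ldots,Y_{i,m}$ and use the \emph{same} matrix $Y$ for both algorithms. Then, on the event that arm $i$ is still in $A_r$,
\[
\hat{\mu}_i(r)=\frac{1}{t_r}\sum_{j=1}^{t_r}Y_{i,j}\quad\text{and}\quad \mu_i^{\text{uni}}=\frac{1}{m}\sum_{j=1}^{m}Y_{i,j}.
\]
Writing $Y_{i,j}=X_i+Z_{i,j}$ with $Z_{i,j}\stackrel{\text{iid}}{\sim}\mathcal{N}(0,1)$ and conditioning on $X_i$, the $X_i$ contributions cancel, so
\[
\hat{\mu}_i(r)-\mu_i^{\text{uni}}=\left(\tfrac{1}{t_r}-\tfrac{1}{m}\right)\sum_{j=1}^{t_r}Z_{i,j}-\tfrac{1}{m}\sum_{j=t_r+1}^{m}Z_{i,j}\sim \mathcal{N}\!\left(0,\tfrac{1}{t_r}-\tfrac{1}{m}\right),
\]
whenever $t_r\le m$ (and the difference is exactly $0$ when $t_r=m$). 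The key point is the cancellation of the dependence on $X_i$, which lets us work unconditionally afterwards.

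Next I would invoke a standard Gaussian tail bound. Since the variance of the difference is at most $1/t_r$, and the definition of $t_r$ in Algorithm~\ref{alg:genalg} guarantees $t_r\ge 8 b_r^{-2}\log(16n\log(m)/\delta)$ whenever $t_r<m$, we get
\[
\P\!\left(|\hat{\mu}_i(r)-\mu_i^{\text{uni}}|>b_r\right)\le 2\exp\!\left(-\tfrac{t_r b_r^2}{2}\right)\le 2\left(\tfrac{16n\log m}{\delta}\right)^{-4}.
\]
For $t_r=m$ the difference is zero and the bound is trivial, so the same bound holds in all rounds that the algorithm reaches. Since the algorithm terminates once $t_r=m$, and $t_r$ doubles geometrically in $b_r^{-2}$, the number of rounds executed is at most $R=O(\log m)$, and for each arm $i$ there are at most $R$ relevant values of $r$ to control.

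Finally I would apply a union bound over the $nR\le n\log_2 m$ relevant $(i,r)$ pairs to obtain
\[
\P(\xi_1^c)\le 2nR\cdot \left(\tfrac{16n\log m}{\delta}\right)^{-4}\le \tfrac{\delta}{4},
\]
with plenty of slack, which proves the claim. The only step with any subtlety is the variance computation under the coupling, where one must be careful that the two estimators share samples and are therefore positively correlated; once $X_i$ drops out, everything is routine Gaussian concentration plus a union bound.
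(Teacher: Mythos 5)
Your proposal is correct and, at the level of strategy, is the same one the paper uses: Gaussian concentration of the running sample means, a union bound over the $n$ arms and the at-most-$O(\log m)$ rounds, and then plugging in the algorithm's choice $t_r \ge 8 b_r^{-2}\log(16n\log m/\delta)$. The one place where you genuinely diverge is in how you control $\P(|\hat{\mu}_i(r)-\mu_i^{\mathrm{uni}}|>b_r)$. The paper goes through the latent arm mean $X_i$ via the triangle inequality, splitting $b_r$ into two legs $|\hat{\mu}_i(r)-X_i|\ge b_r/2$ and $|X_i-\mu_i^{\mathrm{uni}}|\ge b_r/2$, each of which is an ordinary sample-mean deviation; this never needs the shared-sample coupling and would work equally well if the offline and online algorithms used independent observations. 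You instead exploit the coupling to write the difference as a linear combination of the same noise variables, so the $X_i$ cancel and $\hat{\mu}_i(r)-\mu_i^{\mathrm{uni}}\sim\mathcal{N}(0,1/t_r-1/m)$ exactly; your variance computation $(1/t_r-1/m)^2 t_r + (m-t_r)/m^2 = 1/t_r-1/m$ is right, and the case $t_r=m$ is handled for free since the difference is then identically zero. This buys you a cleaner and marginally tighter exponent (no factor-of-$2$ halving of $b_r$), though with the slack built into $t_r$ both routes close the argument. One small caveat worth flagging in your write-up: the identity $\hat{\mu}_i(r)=t_r^{-1}\sum_{j\le t_r}Y_{i,j}$ presupposes that under the coupling the online algorithm's observations of arm $i$ are taken to be the first $t_r$ entries of the same matrix $Y$ used by the offline algorithm; this is exactly the coupling the paper sets up just before the lemma, so the step is justified, but it is a hypothesis rather than something automatic, and the paper's triangle-inequality proof happens to be agnostic to it.
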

\begin{proof}
Recall that
\begin{equation*}
    t_r \ge 8 \ConWid^{-2} \log \left(\frac{16 n \log m}{\delta}\right).
\end{equation*}

With this choice of $t_r$, we have
\begin{align*}
    \P(\xi_1^c) 
    &\le \sum_{r \in \mathbb{N},i\in[n]} \P(|\hat{\mu}_i(r) - \mu_i^\mathrm{uni}| > \ConWid)\\
    &\le n\sum_{r \in \mathbb{N}} \P(|\hat{\mu}_1(r) - \mu_1| \ge \ConWid/2) + \P(|\mu_1 - \mu_1^\mathrm{uni}| \ge \ConWid/2)\\
    &\le 4n\sum_{r \le \lceil\log(m)\rceil } \P(|\hat{\mu}_1(r) - \mu_1| \ge \ConWid/2) \\
    % &\le n\log(m) 2\P(|\hat{\mu}_1(1) - \mu_1| \le \ConWid/2)\\
    &\le 4n\sum_{r \le \lceil\log(m)\rceil } 2\cdot \exp\left(-2t_r(\ConWid/2)^2\right)
    \le \delta/4.
\end{align*}
This completes the proof.
\end{proof}
% The proof of this lemma is relegated to \Cref{app:ubProofs}.

On this good event $\xi_1$, we show that our online algorithm exactly recovers the partitioning of arms performed by the offline sampling algorithm.
That is, for a given matrix of observations $Y$ overloading notation we can see that $g_n(Y)$, the output of the offline sampling algorithm, satisfies
\begin{equation*}
    g_n(Y) = \frac{1}{|S_n|} \sum_{i\in S_n} \mu_i^\mathrm{uni},
\end{equation*}
where $S_n$ is the set of relevant arms (i.e. those close to the boundary of $S$). 
Note that $S_n$ is a function of $g$. In the following lemma, we show that the online algorithm correctly identifies the arms in this set.
\begin{lem}\label{lem:correct_sy}
On the event $\xi_1$ we have that $\hat{S}_n$ is identical to $S_n$.
\end{lem}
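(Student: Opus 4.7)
The plan is to establish $\hat{S}_n = S_n$ by combining the confidence-interval guarantee of $\xi_1$ with an order-statistic perturbation bound and the coupling between the offline and online algorithms. The core idea is that the elimination rule removes an arm only when its estimate is clearly far from both boundary order statistics, while arms that remain active through termination have been sampled the full $m$ times and therefore inherit the offline estimate exactly.

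First, I would prove an order-statistic perturbation bound: on $\xi_1$, since $|\hat{\mu}_i(r) - \mu_i^{\mathrm{uni}}| \le b_r$ uniformly in $i$, each order statistic satisfies $|\hat{\mu}_{(k)}(r) - \mu_{(k)}^{\mathrm{uni}}| \le b_r$ for every $k$ and $r$, by a standard sorted-sequence sliding-window argument. Then I would split the arms into two cases: those eliminated at some round $r_0$ strictly before termination, and those still active at the final round $r$. For an eliminated arm, the elimination criterion forces $\hat{\mu}_i(r_0)$ to lie more than $b_{r_0}$ away from both $\hat{\mu}_{(\lfloor \alpha_1 n\rfloor)}(r_0)$ and $\hat{\mu}_{(\lfloor \alpha_2 n\rfloor)}(r_0)$; combined with the confidence intervals on $\xi_1$ and the order-statistic bound, a short chain of triangle inequalities places $\mu_i^{\mathrm{uni}}$ outside $[\mu_{(\lfloor \alpha_1 n\rfloor)}^{\mathrm{uni}}, \mu_{(\lfloor \alpha_2 n\rfloor)}^{\mathrm{uni}}]$ on the same side as $\hat{\mu}_i(r_0)$, so $i \notin S_n$, and the identical chain at the final round (using $\hat{\mu}_i(r) = \hat{\mu}_i(r_0)$ since eliminated arms are no longer pulled) shows $i \notin \hat{S}_n$.

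For arms still active at termination, the for-loop exits because $t_r = m$, so each such arm has been pulled the full $m$ times; by the coupling that both algorithms are fed the same matrix $Y$, this gives $\hat{\mu}_i(r) = \mu_i^{\mathrm{uni}}$ exactly. The arms achieving the boundary order statistics $\mu_{(\lfloor \alpha_j n\rfloor)}^{\mathrm{uni}}$ for $j = 1, 2$ are never eliminated, since the elimination rule's closeness condition is automatically triggered at the boundary, so $\hat{\mu}_{(\lfloor \alpha_j n\rfloor)}(r) = \mu_{(\lfloor \alpha_j n\rfloor)}^{\mathrm{uni}}$, and the $\hat{S}_n$-membership comparison for active arms reduces exactly to the $S_n$-membership comparison.

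The main obstacle is propagating strict inequalities through the triangle-inequality chains for eliminated arms: a naive application of the $b_{r_0}$-wide confidence intervals against the $b_{r_0}$-tight elimination gap leaves zero slack, which could leave near-boundary arms ambiguously classified. This is resolved by invoking the finer decomposition used inside the proof of Lemma \ref{lem:conf}, where $|\hat{\mu}_i(r) - X_i|$ and $|X_i - \mu_i^{\mathrm{uni}}|$ are each bounded by $b_r/2$ individually; this supplies enough slack to turn all comparisons into strict inequalities for any arm not at the exact boundary, while boundary arms survive until $t_r = m$ and then receive exact matching via the coupling. Careful bookkeeping of where each half-width bound is applied, and of the fact that eliminated arms' estimates freeze at the elimination round while the order statistics of $\hat{\mu}(r)$ continue to be recomputed over the whole pool, is the most delicate ingredient.
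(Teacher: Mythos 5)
There is a genuine gap, and it concerns arms eliminated from the \emph{interior} of the interval. You argue that when arm $i$ is eliminated at round $r_0$, the criterion forces $\hat\mu_i(r_0)$ to lie more than $b_{r_0}$ away from both $\hat\mu_{(\lfloor\alpha_1 n\rfloor)}(r_0)$ and $\hat\mu_{(\lfloor\alpha_2 n\rfloor)}(r_0)$, and from this you conclude that $\mu_i^{\mathrm{uni}}$ lands \emph{outside} $[\mu_{(\lfloor\alpha_1 n\rfloor)}^{\mathrm{uni}},\mu_{(\lfloor\alpha_2 n\rfloor)}^{\mathrm{uni}}]$, hence $i\notin S_n$ and $i\notin\hat S_n$. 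But being more than $b_{r_0}$ away from both thresholds admits three cases, not two: below the interval, above the interval, and \emph{strictly inside} it. When $\alpha_1\neq\alpha_2$ (trimmed mean), the elimination rule also retires arms that have been conclusively classified as interior, i.e.\ $\hat\mu_{(\lfloor\alpha_1 n\rfloor)}(r_0)+b_{r_0}<\hat\mu_i(r_0)<\hat\mu_{(\lfloor\alpha_2 n\rfloor)}(r_0)-b_{r_0}$. Such an arm \emph{is} in $S_n$ and must also end up in $\hat S_n$, so your blanket conclusion ``$i\notin S_n$'' is wrong for this case; your case analysis is incomplete and needs a third branch.

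A second, subtler issue: you assert that the arm realizing each boundary order statistic ``is never eliminated,'' and conclude $\hat\mu_{(\lfloor\alpha_j n\rfloor)}(r)=\mu_{(\lfloor\alpha_j n\rfloor)}^{\mathrm{uni}}$ at termination via the coupling. What is trivially true is that whichever arm holds rank $\lfloor\alpha_j n\rfloor$ in $\hat\mu(r)$ survives round $r$; but that arm's identity can change across rounds, and at the final round the rank-$\lfloor\alpha_j n\rfloor$ position in the full vector $\hat\mu(T)$ (which includes stale estimates of previously eliminated arms) could a priori be held by an arm frozen at some earlier $b_{t_i}$-wide estimate. Your ``order-statistic perturbation bound'' $|\hat\mu_{(k)}(r)-\mu_{(k)}^{\mathrm{uni}}|\le b_r$ does not hold uniformly across the pool: on $\xi_1$ an arm eliminated at round $t_i<r$ only satisfies $|\hat\mu_i(r)-\mu_i^{\mathrm{uni}}|\le b_{t_i}$, not $\le b_r$, which is precisely why the paper's proof carries the arm-dependent radii $b_{\min(r,t_i)}$ in its definitions of $G(r),L(r),U(r)$. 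The paper then sidesteps the order-statistic-identity question entirely via a counting argument: $|U(T+1)|=0$ forces $|L(T+1)|=\lfloor\alpha_1 n\rfloor$, after which a single sandwich $\mu_i^{\mathrm{uni}}\le\hat\mu_{(\lfloor\alpha_1 n\rfloor)}(T+1)\le\mu_j^{\mathrm{uni}}$ for $i\in L(T+1),\,j\in G(T+1)$ identifies the partition without ever needing the boundary order statistic of $\hat\mu$ and of $\mu^{\mathrm{uni}}$ to be realized by the same arm. Your half-width decomposition from Lemma~\ref{lem:conf} does give extra slack, but it does not repair either of these structural gaps.
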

The proof of this Lemma conditions on the good event where all confidence intervals hold, and shows that in this case the boundaries of $\hat{S}_n$ stay accurate throughout the course of the algorithm, and no arms are spuriously eliminated.
\begin{proof}
In this proof we focus on showing that $\hat{S}_n$ correctly partitions those elements smaller than $\mu_{(\flr{\alpha_1 n})}^\mathrm{uni}$ from those greater than this threshold.
Identical arguments hold for the analysis of $\mu_{(\flr{\alpha_2 n})}^\mathrm{uni}$, which together imply the correctness of $\hat{S}_n$.

In round $r$, we use $(r)$ to represent the corresponding values before the sampling, for example, $\hat{\mu}(r)$. Let $t_i$ be the round that the $i$-th arm is eliminated from the active set. Suppose that the algorithm ends in $T$ rounds. We denote 
\begin{equation*}
\begin{aligned}
    G(r)=&\{i:\hat{\mu}_i(r)> \hat{\mu}_{(\flr{\alpha_1n})}(r)+b_{\min\{r,t_i\}}\},\\
    L(r)=&\{i:\hat{\mu}_i(r)< \hat{\mu}_{(\flr{\alpha_1n})}(r)-b_{\min\{r,t_i\}}\},\\
    U(r)=&\{i:|\hat{\mu}_i(r)-\hat{\mu}_{(\flr{\alpha_1n})}(r)|\leq b_{\min\{r,t_i\}}\}.
\end{aligned}
\end{equation*}
From the definition of $G(r)$ and $L(r)$, it is easy to observe that $|L(T+1)| \leq \flr{\alpha_1 n}$ and $|G(T+1)| \leq n-\flr{\alpha_1n}$. We note that $|U(T+1)|  =0$ and this implies that $|G(T+1)| =n-\flr{\alpha_1n}$ and $|L(T+1)| =\flr{\alpha_1n}$. From the definition of $L(T+1)$, it consists of $\flr{\alpha_1n}$ points with minimal $\hat{\mu}_i(T+1)$, i.e.,
$$
\hat{\mu}_{(\flr{\alpha_1n})}(T+1) = \max_{i\in L(T+1)}\hat{\mu}_i(T+1).
$$
Conditioned on the good event $\xi_1$, we have
\begin{equation*}
  \hat{\mu}_{i}(T+1) -b_{t_i} \leq \mu_i^\mathrm{uni}\leq \hat{\mu}_{i}(T+1) +b_{t_i}, i\in[n].  
\end{equation*}
Note that $b_{T+1}=0$. Therefore, for arbitrary $i\in L(T+1)$ and $j\in G(T+1)$, we have
$$
\mu_i^\mathrm{uni}\leq \hat{\mu}_{i}(T+1) +b_{t_i}\leq \hat{\mu}_{(\flr{\alpha_1n})}(t) \leq \hat{\mu}_{j}(T+1)-b_{t_j}\leq \mu_j^\mathrm{uni}.
$$
Hence, the maximal element in $\{\mu^\mathrm{uni}_i\}_{i\in L(T+1)}$ is $\mu^\mathrm{uni}_{(\flr{\alpha_1n})}$, which is the $\alpha_1$-th quantile of $\{\mu^\mathrm{uni}_i\}_{i=1}^n$. %This also implies that
This also implies that
\begin{equation*}
    \{i:\hat{\mu}_i(T+1)\geq \hat{\mu}_{(\flr{\alpha_1n})}(T+1)\} = \{i:\mu_i^\text{uni}\geq \mu^\mathrm{uni}_{(\flr{\alpha_1n})}\}.
\end{equation*}
On the other hand, analogously, we note that 
\begin{equation*}
   \{i:\hat{\mu}_i(T+1)\leq \hat{\mu}_{(\flr{\alpha_2n})}(T+1)\} = \{i:\mu_i^\text{uni}\leq \mu^\mathrm{uni}_{(\flr{\alpha_2n})}\}.
\end{equation*}
By combining the above two equations together, we completes the proof. 
\end{proof}

We now split our analysis into cases. 
When $\alpha_1=\alpha_2$, we see that all arms in $\hat{S}_n$ will be pulled exactly $m$ times, and so for $i \in \hat{S}_n$ we have that $\mu_i(r)=\mu_i^\mathrm{uni}$ for the final round $r$.
This implies that $\hat{G} = G_{n,m}$ for this given $Y$.

When $\alpha_1 \neq \alpha_2$, we have that some arms in the set $\hat{S}_n$ have not been pulled $m$ times; they were determined to be in $S_n$ using fewer samples, and removed from the active set as they did not require further sampling.
Thus, we will not have that $\hat{G} = G_{n,m}$.
Instead, we show that because there are so many points in $\hat{S}_n$, by sampling each of them only once and averaging the results, we obtain $\hat{G}$ which is within $\epsilon/2$ of $G_{n,m}$ with probability at least $1-\delta/4$.

\begin{lem}
On the event $\xi_1$, \Cref{alg:genalg} satisfies $\P(|G_{n,m} - \hat{G}| \ge \epsilon/2) \le \delta/4$.
\end{lem}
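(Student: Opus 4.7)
On the event $\xi_1$, Lemma~\ref{lem:correct_sy} guarantees $\hat{S}_n = S_n$. Writing $\tilde{\mu}_i = X_i + W_i$ for the fresh samples ($W_i \sim \mcN(0,1)$, drawn \emph{after} $\hat{S}_n$ is determined and therefore independent of all prior randomness) and $\mu_i^{\mathrm{uni}} = X_i + \hat{Z}_i$ with $\hat{Z}_i \sim \mcN(0,1/m)$, the difference cleanly decomposes as
\begin{equation*}
\hat{G} - G_{n,m} = \frac{1}{|S_n|}\sum_{i \in S_n}(W_i - \hat{Z}_i) =: \bar{W}_{S_n} - \bar{\hat{Z}}_{S_n}.
\end{equation*}
The key observation is that, conditional on the offline data $\mathcal{D} = (X_i, \hat{Z}_i)_{i=1}^n$, the summand $\bar{\hat{Z}}_{S_n}$ is deterministic while $\bar{W}_{S_n}\mid\mathcal{D} \sim \mcN(0, 1/|S_n|)$ (noting that $|S_n| = \lfloor \alpha_2 n\rfloor - \lfloor \alpha_1 n\rfloor + 1$ is almost surely deterministic, since the $\mu_i^{\mathrm{uni}}$'s are continuous). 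Thus the plan is to bound $|\bar{W}_{S_n}|$ and $|\bar{\hat{Z}}_{S_n}|$ separately, each by $\epsilon/4$, on appropriately large probability sets.

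For $|\bar{W}_{S_n}|$, a standard Gaussian tail bound gives $\P(|\bar{W}_{S_n}| \ge \epsilon/4\mid \mathcal{D}) \le 2\exp(-\epsilon^2|S_n|/32)$; since $|S_n| \ge \lfloor(\alpha_2-\alpha_1)n\rfloor$ and Theorem~\ref{thm:gen_uni} requires $n = \Theta(\epsilon^{-2}\delta^{-1})$ for the trimmed mean, this is $\le \delta/8$ for suitable constants. For $|\bar{\hat{Z}}_{S_n}|$, I would use the identity $\bar{\hat{Z}}_{S_n} = G_{n,m} - \bar{X}_{S_n}$ and the triangle inequality $|\bar{\hat{Z}}_{S_n}| \le |G_{n,m} - g(F)| + |\bar{X}_{S_n} - g(F)|$. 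The first term is $\le \epsilon/8$ with probability $\ge 1-\delta/16$ by a mild sharpening of Proposition~\ref{prop:trim_ub}. The second term requires controlling the sample mean $\bar{X}_{S_n}$ of the \emph{noiseless} $X_i$'s on the random index set $S_n$: decomposing $S_n$ into a bulk piece (arms whose $X_i$ lies comfortably interior to $[F^{-1}(\alpha_1), F^{-1}(\alpha_2)]$, which belong to $S_n$ with overwhelming probability) and a boundary piece (arms with $X_i$ within $O(m^{-1/2}\log m)$ of the endpoints, whose membership in $S_n$ depends on $\hat{Z}_i$), the bulk part concentrates to $g(F)$ at the $O(n^{-1/2})$ rate by Bernstein's inequality, while the boundary piece contains $O(n\cdot m^{-1/2}\log m)$ terms each close to the bounded endpoints $F^{-1}(\alpha_1), F^{-1}(\alpha_2)$, for an aggregate deviation that is $o(\epsilon)$ under Assumption~\ref{assum:trim}.

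The main obstacle is controlling $\bar{X}_{S_n} - g(F)$: because $\{i\in S_n\}$ depends on $\hat{Z}_i$, the summands in $\bar{X}_{S_n}$ are correlated, and naive moment bounds on $\bar{\hat Z}_{S_n}$ (e.g.\ Cauchy--Schwarz) are too loose by factors of $\sqrt{m}$ and fail to give the required accuracy. The technical heart of the argument is therefore the bulk/boundary decomposition using the Lipschitz density and the bounded, separated quantiles from Assumption~\ref{assum:trim}, combined with exchangeability of the i.i.d.\ pairs $(X_i,\hat Z_i)$ to bound how many near-boundary arms can flip in or out of $S_n$. A union bound over the offline success event, the bulk concentration event, and the conditional Gaussian tail event then yields $\P(|\hat{G} - G_{n,m}| \ge \epsilon/2,\, \xi_1) \le \delta/4$.
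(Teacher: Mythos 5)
Your proof starts from the same decomposition as the paper — on $\xi_1$ Lemma~\ref{lem:correct_sy} gives $\hat{S}_n = S_n$, so $\hat{G} - G_{n,m} = \tfrac{1}{|S_n|}\sum_{i\in S_n}(\tilde{\mu}_i - \mu_i^\mathrm{uni}) = \bar W_{S_n} - \bar{\hat Z}_{S_n}$ — but then diverges, and in fact your divergence fixes a real gap. The paper's one-line argument asserts that $\tfrac{1}{|S_n|}\sum_{i\in S_n}(\tilde{\mu}_i - \mu_i^\mathrm{uni})$ is $\mcN\bigl(0,\tfrac{1+1/m}{|S_n|}\bigr)$ and applies a Gaussian tail bound, silently treating the $\hat Z_i$ restricted to $i\in S_n$ as an i.i.d.\ $\mcN(0,1/m)$ sample. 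That is exactly the selection-bias issue you flag: $S_n$ is built from the $\mu_i^\mathrm{uni} = X_i + \hat Z_i$, so the $\hat Z_i$ you sum over are not a fresh i.i.d.\ sample, and the ``$\mcN(0,(1+1/m)/|S_n|)$'' line is not a theorem but a heuristic. Your conditioning on $\CD=(X_i,\hat Z_i)_{i=1}^n$ cleanly separates the genuinely Gaussian $\bar W_{S_n}\mid\CD$ from the $\CD$-measurable $\bar{\hat Z}_{S_n}$, and the identity $\bar{\hat Z}_{S_n} = G_{n,m} - \bar X_{S_n}$ is a nice reduction. In that sense your plan is more careful than the paper's own proof.

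Two caveats. First, you omit the $\alpha_1 = \alpha_2$ branch, which the paper handles trivially ($\hat G = G_{n,m}$ exactly because the $\hat S_n$ arm is pulled the full $m$ times and no fresh sample is drawn); your proof as written only covers the trimmed-mean case. Second, the quantitative part of your bulk/boundary bound needs to be tightened before it closes. A quick accounting: the boundary region has width $\Theta(\sqrt{\log(n/\delta)/m})$, so by the bounded-density assumption it holds $\Theta(n\sqrt{\log(n/\delta)/m})$ arms, each contributing at most $O\bigl(\sqrt{\log(n/\delta)/m}\bigr)$ in $\hat Z_i$-magnitude (or bounded $X_i$-value). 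The resulting aggregate on $\bar{\hat Z}_{S_n}$ is $\Theta(\log(n/\delta)/m) = \Theta(\epsilon)$ with the paper's $m = \Theta(\epsilon^{-1}\log\epsilon^{-1})$, not $o(\epsilon)$. So the boundary term is not negligible — it is exactly the same order as the $\epsilon/4$ target, and you would need to track constants (or slightly enlarge $m$) for the union bound to yield $\delta/4$. The argument as sketched is the right one, but the ``aggregate deviation that is $o(\epsilon)$'' claim should be downgraded to ``aggregate deviation that is $O(\epsilon)$ with constant controllable by the choice of $m$.''
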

\begin{proof}
If $\alpha_1=\alpha_2$ then $G_{n,m} = \hat{G}$, and so the result holds trivially.

If $\alpha_1 \neq \alpha_2$, then on the good event $\xi_1$ where our confidence intervals hold, we have that our online algorithm correctly identifies $S_n(Y)$.
Then, 
\begin{align*}
\P(|G_{n,m} - \hat{G}| \ge \epsilon/2 \ |\  \xi_1)
% &\le \P(\xi_1^c) + \P(|G_{n,m} - \hat{G}| \ge \epsilon/2 | \xi_1)\\
&\le 2\P\left(\left|\frac{1}{|S_n|} \sum_{i \in S_n} \left(\tilde{\mu}_i - \mu_i^\textrm{uni}\right)\right| \ge \epsilon/2\right)\\
&= 2\P\left(\left|\mathcal{N}\left(0,\frac{1+1/m}{|S_n|}\right)\right| \ge \epsilon/2\right)\le \delta/4,
\end{align*}
as this sum is normally distributed with variance decaying with $S_n$, and so for sufficiently large $n$ we have the desired result (as when $\alpha_1\neq \alpha_2$, $|S_n| \ge \flr{n(\alpha_2-\alpha_1)}$).
\end{proof}

Thus, we see that our algorithm's output $\hat{G}$ will be close to $g(F)$ with high probability.

\subsection{Sample complexity analysis}

We now turn to bounding the sample complexity of our online algorithm. For simplicity, we overload $S_n$ in our analysis as $S_n=\text{Conv}(\{\mu_i^\mathrm{uni}:i\in S_n\})$. Useful in this analysis will be the distance from $X$ to the boundary of $S_n$ (essentially the gap of $X$), which we define as
\begin{equation*}
    \text{dist}(X,\partial S_n) = \min\left(|X - \mu_{(\flr{\alpha_1n})}^\textrm{uni}|, |X - \mu_{(\flr{\alpha_2n})}^\textrm{uni}|\right)
\end{equation*}
To this end, we provide the following Lemma:
\begin{lem}
On the good event $\xi_1$, we have that a given arm $X_i$ will be pulled $N(i)$ times where
\begin{equation*}\label{eq:complexity_raw_bound}
    N(i) \le \min\left(m, \frac{256\log \left(\frac{16n \log m}{\delta}\right)}{\left[\textnormal{dist}(X_i,\partial S_n)\right]^{2}} \right).
\end{equation*}
\end{lem}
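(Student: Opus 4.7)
The plan is to exploit the geometric confidence widths $b_r = 2^{-r}$ together with the event $\xi_1$ to show that arm $i$ is dropped from the active set $A_r$ after a number of pulls proportional to $\dist(X_i, \partial S_n)^{-2}$ up to the stated logarithmic factor. The trivial bound $N(i) \le m$ is free since line~5 of Algorithm~\ref{alg:genalg} caps $t_r$ at $m$, so the substance of the proof lies in the other term of the minimum. To produce it, I would unwind the elimination rule in terms of the ``target'' quantities $\mu_i^{\mathrm{uni}}$, and only at the very end convert to $X_i$.

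First I would note that on $\xi_1$ every $|\hat{\mu}_j(r) - \mu_j^{\mathrm{uni}}| \le b_r$, and a short pigeonhole argument transfers this bound to the order statistics: if $\hat{\mu}_{(k)}(r)$ exceeded $\mu_{(k)}^{\mathrm{uni}} + b_r$, at least $n-k+1$ of the $\mu_j^{\mathrm{uni}}$ would exceed $\mu_{(k)}^{\mathrm{uni}}$, contradicting the definition of the $k$-th order statistic; similarly on the other side. Hence $|\hat{\mu}_{(\lfloor \alpha_j n \rfloor)}(r) - \mu_{(\lfloor \alpha_j n \rfloor)}^{\mathrm{uni}}| \le b_r$ for $j=1,2$. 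Setting $\Delta_i^{\mathrm{uni}} := \dist(\mu_i^{\mathrm{uni}}, \partial S_n) = \min_{j=1,2}|\mu_i^{\mathrm{uni}} - \mu_{(\lfloor \alpha_j n \rfloor)}^{\mathrm{uni}}|$, the triangle inequality yields
\begin{equation*}
|\hat{\mu}_i(r) - \hat{\mu}_{(\lfloor \alpha_j n \rfloor)}(r)| \ge |\mu_i^{\mathrm{uni}} - \mu_{(\lfloor \alpha_j n \rfloor)}^{\mathrm{uni}}| - 2 b_r,
\end{equation*}
so the elimination rule triggers at both thresholds as soon as $b_r < \Delta_i^{\mathrm{uni}}/3$.

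Because $b_r = 2^{-r}$, the first round $r^\star$ at which this occurs satisfies $b_{r^\star} \ge \Delta_i^{\mathrm{uni}}/6$, hence $b_{r^\star}^{-2} \le 36/(\Delta_i^{\mathrm{uni}})^2$. Plugging into the schedule $t_{r^\star} \le 8 b_{r^\star}^{-2} \log(16 n \log m /\delta) + 1$ produces an upper bound of the form $O(\log(16n\log m/\delta)/(\Delta_i^{\mathrm{uni}})^2)$ on $N(i)$. The final step is to translate $\Delta_i^{\mathrm{uni}}$ back to $\dist(X_i, \partial S_n)$: using $\mu_i^{\mathrm{uni}} = X_i + \hat{Z}_i$ with $\hat{Z}_i \sim \mathcal{N}(0,1/m)$ and incorporating a standard tail bound on $\max_i |\hat{Z}_i|$ into a high-probability refinement of $\xi_1$ gives $\Delta_i^{\mathrm{uni}} \gtrsim \dist(X_i, \partial S_n)$ uniformly in $i$, which only loosens the constant and lands inside the quoted $256$. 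The main obstacle is precisely this reconciliation: the clean analysis lives in $\mu_i^{\mathrm{uni}}$-space, and controlling $\hat{Z}_i$ simultaneously across all $n$ arms without inflating the union-bound terms already paid in Lemma~\ref{lem:conf} is the one place where the numerical constants must be tracked carefully.
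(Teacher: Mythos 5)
Your proposal follows the paper's argument in its essentials: cap $N(i)\le m$ via the schedule, argue that arm $i$ must be eliminated once $b_r$ drops below a constant multiple of its gap, use the geometric decay of $b_r$ to lower-bound $b_{r^\star}$ at the elimination round, and plug into $t_{r^\star}$. Where you go beyond the paper is in making explicit two steps that the paper's one-paragraph proof elides: you give the pigeonhole argument showing that the order-statistic thresholds $\hat\mu_{(\lfloor\alpha_j n\rfloor)}(r)$ stay within $b_r$ of $\mu_{(\lfloor\alpha_j n\rfloor)}^{\mathrm{uni}}$ on $\xi_1$, and you note the mismatch between the gap $\Delta_i^{\mathrm{uni}}$ in $\mu^{\mathrm{uni}}$-space (which is what the confidence-interval argument naturally controls) and $\dist(X_i,\partial S_n)$ as literally defined in terms of $X_i$. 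The paper simply asserts ``an arm $X_i$ must be eliminated when $2b_r\le\dist(X_i,\partial S_n)$'' without spelling out either step, implicitly conflating $X_i$ and $\mu_i^{\mathrm{uni}}$; your reconciliation via a tail bound on $\hat Z_i$ is the right fix, and it is harmless in the regime where the bound is non-vacuous (when $\dist(X_i,\partial S_n)\gg 1/\sqrt{m}$, one has $\Delta_i^{\mathrm{uni}}\asymp\dist(X_i,\partial S_n)$; otherwise the $m$ cap dominates).

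One small inaccuracy: your constant does not land inside $256$. With the elimination trigger $3b_r<\Delta_i^{\mathrm{uni}}$ you get $b_{r^\star}^{-2}\le 36/(\Delta_i^{\mathrm{uni}})^2$, hence $t_{r^\star}\le 8\cdot 36\cdot\log(16n\log m/\delta)/(\Delta_i^{\mathrm{uni}})^2+1 = 288\log(16n\log m/\delta)/(\Delta_i^{\mathrm{uni}})^2+1$, and the $\mu^{\mathrm{uni}}$-to-$X$ conversion loosens it further. The paper reaches $256=16\times16$ by using the weaker trigger $2b_r\le\dist$, which would be correct only if the confidence radii were $b_r/2$ rather than the radius-$b_r$ intervals that $\xi_1$ actually guarantees. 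Both derivations are loose about absolute constants, and neither affects the orderwise result, but you should not claim to recover the $256$ exactly from your threshold.
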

\begin{proof}
To begin, no arm can be pulled more than $m$ times by our adaptive algorithm, due to the structure of $t_r$.
We now additionally see that by the construction of our $\ConWid$ confidence intervals, we have that 
On the good event where our confidence intervals hold, we see that an arm $X_i$ must be eliminated when $2b_r \le \text{dist}(X_i,\partial S_n)$.
Due to the iterative halving of $b_r$, this means that arm $i$ must be eliminated by round $r$ where $4b_r \ge \text{dist}(X_i,\partial S_n)$, and so $b_r^{-2} \le 16 \left[\text{dist}(X_i,\partial S_n)\right]^{-2}$.
\end{proof}

This is to say, it cannot be pulled more than $m$ times, and if it is far from the boundary of $S_n$ then it can be determined whether it is in the set or not using many fewer samples, only scaling with $\left[ \text{dist}(X,\partial S_n)\right]^{-2}$.

Thus, the total sample complexity of our online algorithm (for a given matrix of observed samples $Y$ with corresponding arm mean vector $\mu^\textrm{uni}$) is upper bounded by 
\begin{equation*} 
    \sum_{i=1}^n N(i) \le \sum_{i=1}^n \min\left(m, \frac{256\log \left(\frac{n \log m}{\delta}\right)}{\left[\textnormal{dist}(X,\partial S_n)\right]^{2}} \right).
\end{equation*}

We know by the Glivenko-Cantelli theorem that in the limit $\mu_{(\flr{\alpha_1n})}^\textrm{uni} \to F^{-1}(\alpha_1)$, but we require finite sample rates to give a useful bound.

\begin{lem} \label{lem:threshClose}
For $n=\Theta(\epsilon^{-2})$, with probability at least $1-\delta/8$ in the randomness in $Y$, we have for $i \in \{1,2\}$ that simultaneously 
\begin{equation*}
    |\mu_{(\flr{\alpha_in})}^\textrm{uni} - F^{-1}(\alpha_i)| = O\left( \sqrt{\frac{\log(1/\delta)}{m}} \right).
\end{equation*}
Denote this event as $\xi_2$.
\end{lem}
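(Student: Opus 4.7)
Let $F_m := F * \mathcal{N}(0, 1/m)$ denote the marginal distribution of a noisy arm mean, so that $\mu_1^{\mathrm{uni}}, \dots, \mu_n^{\mathrm{uni}}$ are i.i.d.\ samples from $F_m$. The natural decomposition is by the triangle inequality: for $i\in\{1,2\}$,
$$|\mu_{(\lfloor \alpha_i n\rfloor)}^{\mathrm{uni}} - F^{-1}(\alpha_i)| \;\le\; \underbrace{|\mu_{(\lfloor \alpha_i n\rfloor)}^{\mathrm{uni}} - F_m^{-1}(\alpha_i)|}_{\text{sampling error}} \;+\; \underbrace{|F_m^{-1}(\alpha_i) - F^{-1}(\alpha_i)|}_{\text{bias from convolution}}.$$
I will bound each of the two pieces by $O(\sqrt{\log(1/\delta)/m})$ separately, then union bound over $i\in\{1,2\}$.

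For the bias, Proposition~\ref{prop:cdf_diff}, applied at $x = F^{-1}(\alpha_i)$ under the relevant smoothness assumption on $F$ (Assumption~\ref{assum:med} or \ref{assum:trim}), yields $|F_m(F^{-1}(\alpha_i)) - \alpha_i| \le (c_2+1)/(2m) = O(1/m)$. Lemma~\ref{lem:a1} guarantees that once $m$ is a large enough constant, $F_m' \ge c_1/4$ in a fixed neighborhood of $F^{-1}(\alpha_i)$. Inverting through this local density bound gives $|F_m^{-1}(\alpha_i) - F^{-1}(\alpha_i)| \le 2(c_2+1)/(c_1 m) = O(1/m)$, and for the choices of $(n,m)$ in Theorem~\ref{thm:gen_uni} this is itself $O(\sqrt{\log(1/\delta)/m})$.

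For the sampling error, the $\lfloor \alpha_i n\rfloor$-th order statistic of $n$ i.i.d.\ samples from $F_m$ concentrates at $F_m^{-1}(\alpha_i)$ at the usual $1/\sqrt{n}$ rate. Running the Chernoff argument of Lemma~\ref{lem:quant_ub_dist} with $F$ replaced by $F_m$ (and using the density lower bound $F_m'\ge c_1/4$ from Lemma~\ref{lem:a1} to translate a CDF error into a quantile error), for $n = \Theta(\varepsilon^{-2})$ we obtain, with probability at least $1-\delta/16$,
$$|\mu_{(\lfloor \alpha_i n\rfloor)}^{\mathrm{uni}} - F_m^{-1}(\alpha_i)| \;\lesssim\; \frac{1}{c_1}\sqrt{\frac{\log(1/\delta)}{n}}.$$
For the relevant regimes ($n = \Theta(\varepsilon^{-2})$, $m \lesssim \varepsilon^{-1}\log(\varepsilon^{-1})$), we have $\sqrt{\log(1/\delta)/n} = O(\sqrt{\log(1/\delta)/m})$, so this piece also matches the target rate.

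Combining the two bounds and taking a union bound over $i\in\{1,2\}$ at level $\delta/16$ each gives the claimed probability $1-\delta/8$. The only delicate point is verifying that the neighborhood on which $F_m' \ge c_1/4$ is large enough to contain both the bias displacement and the random fluctuation of the empirical quantile, so that the local linearization $|F_m^{-1}(\alpha_i) - F_m^{-1}(\beta)| \lesssim |\alpha_i - \beta|/c_1$ is applicable; this follows directly from Assumption~\ref{assum:med} (resp.\ Assumption~\ref{assum:trim}) combined with Lemma~\ref{lem:a1}, which transfers the density lower bound from $F$ to $F_m$ on the same interval. With that in hand, no further work is required.
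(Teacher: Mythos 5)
Your proposal is correct and takes essentially the same approach as the paper: the paper's own proof is a one-line deferral to the already-proven offline quantile analysis (Proposition~\ref{prop:med_ub} via Proposition~\ref{prop:cdf_diff}, Lemma~\ref{lem:a1}, and Lemma~\ref{lem:quant_ub_dist}), and your bias-plus-sampling-error decomposition is exactly that argument spelled out. Your explicit use of the regime $n\gtrsim m$ to absorb the $O(\sqrt{\log(1/\delta)/n})$ sampling term into $O(\sqrt{\log(1/\delta)/m})$ also matches the paper's remark immediately after the lemma that it utilizes $n\ge m$.
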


\begin{proof}
This lemma is simply a statement about the correctness of the offline sampling algorithm for estimating the $\alpha_1$/$\alpha_2$-th quantiles, which we have already proven.
\end{proof}
Note that we are not conditioning on $\xi_2$ occurring in order for our algorithm to provide the correct output; we are simply utilizing this event to bound our algorithm's sample complexity.

On the good event in \Cref{lem:threshClose} and the good event where our algorithm correctly outputs an $\epsilon$ accurate estimate and has sample complexity as in \eqref{eq:complexity_raw_bound}, we have that
\begin{align*}
    N(i) &\le \min\left(m, \frac{256 \log \left(\frac{16n \log m}{\delta}\right)}{\left[\min\left(|X_i - \mu_{(\flr{\alpha_1n})}^\textrm{uni}|, |X_i - \mu_{(\flr{\alpha_2n})}^\textrm{uni}|\right)\right]^{2}} \right)\\
    &\le 
    \begin{cases}
        m & \text{ when } \text{dist}(X,\partial S_n) \le C\sqrt{\frac{\log(1/\delta)}{m}},\\
        \frac{1024 \log \left(\frac{16n \log m}{\delta}\right)}{\left[\text{dist}(X_i,\partial S)\right]^{2}} & \text{ when } \text{dist}(X,\partial S_n) > C\sqrt{\frac{\log(1/\delta)}{m}},
    \end{cases}\\
    & \le \min\left(m, \frac{\log \left(\frac{n \log m}{\delta}\right)}{\left[\text{dist}(X_i,\partial S)\right]^{2}} \right),
\end{align*}
where $C$ is an absolute constant and $\text{dist}(X_i,\partial S)$ is defined analogously as $\text{dist}(X_i,\partial S) = \min\left(|X_i - F^{-1}(\alpha_1)|, |X_i - F^{-1}(\alpha_2)|\right)$.

We then have that our sample complexity $M$ is bounded as, conditioned on $\xi_1$ we have that
\begin{align*}
    \E[M]
    &= O\left( \E\left[\sum_{i=1}^n \min\left(m, \log \left(\frac{n \log m}{\delta}\right)\left[\text{dist}(X_i,\partial S_n)\right]^{-2} \right) \right]\right)\\
    &\le O\left( n\E\left[ \min\left(m, \log(n/\delta)\left[\text{dist}(X_i,\partial S_n)\right]^{-2} \right)  \left|\xi_2 \right. \right] + nm\P(\xi_2^c)\right)\\
    &\stepa{\le} O\left( n\log(n/\delta)\E\left[ \min\left(m, \text{dist}(X,\partial S)^{-2} \right)\right]\right).
\end{align*}
where we defined the good event $\xi_2$ as in \Cref{lem:threshClose} where our $\mu_{(\flr{\alpha_in})}^\textrm{uni}$ are within $\sqrt{\log(1/\delta)/m}$ of their distributional values, i.e. $F^{-1}(\alpha_i)$. 
%Here $\xi^c$ is the complementary event of $\xi$. 
We utilize the fact that $n \ge m$ to simplify the sample complexity.
(a) comes from that $\P(\xi_2^c)\leq \delta/8$ from Lemma \ref{lem:threshClose} and that for events $E$ with probability greater than $1/2$ and positive random variables $X$, we have that $\E [X|E] \le 2\E[X]$. This gives us the desired result.

\begin{thm}[Restating \Cref{thm:gen_ada}]
\Cref{alg:genalg} succeeds in estimating $g(F)$ to within accuracy $\epsilon$ with probability at least $1-\delta$, and requires at most
\begin{equation}\label{equ:up_ada_bnd}
    O\left( n\log(n/\delta)\E\left[ \min\left(m, \textnormal{dist}(X,\partial S)^{-2} \right)\right]\right)
\end{equation}
observations in expectation.
\end{thm}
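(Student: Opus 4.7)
The plan is to decompose the argument into a correctness guarantee and a sample-complexity bound, leveraging the Bayesian structure of the problem. First I would fix $(n,m)$ as prescribed by Theorem~\ref{thm:gen_uni}, so that the offline plug-in estimator $G_{n,m}$ is already an $(\epsilon/2,\delta/2)$-PAC approximation of $g(F)$. It then suffices to show that Algorithm~\ref{alg:genalg} recovers $G_{n,m}$ to within $\epsilon/2$ with probability at least $1-\delta/2$, while using many fewer than $nm$ observations. Coupling the randomness of the offline and online procedures via a single $n\times m$ matrix of observations $Y$ turns the task into emulating the deterministic output of the offline algorithm on $Y$.

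For correctness, I would condition on the good event $\xi_1$ that every empirical mean $\hat{\mu}_i(r)$ stays within $b_r$ of its $m$-sample value $\mu_i^{\mathrm{uni}}$. A union bound over rounds $r \le \lceil \log m \rceil$ and arms $i\in[n]$, using a Gaussian tail together with the choice $t_r \ge 8 b_r^{-2}\log(16n\log m/\delta)$, gives $\P(\xi_1^c)\le \delta/4$. On $\xi_1$ the halving elimination rule preserves the correct ordering of arms near both quantile boundaries $\mu^{\mathrm{uni}}_{(\lfloor \alpha_j n\rfloor)}$, so $\hat{S}_n = S_n$. If $\alpha_1=\alpha_2$ the returned estimate equals $G_{n,m}$ exactly. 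If $\alpha_1\ne \alpha_2$ then $|S_n|\ge \lfloor n(\alpha_2-\alpha_1)\rfloor = \Theta(n)$, and averaging the fresh samples $\tilde{\mu}_i$ (each with Gaussian error of variance $1+1/m$) deviates from $G_{n,m}$ by at most $\epsilon/2$ with probability $\ge 1-\delta/4$ by a standard Gaussian tail bound.

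For the sample complexity, the halving $b_r = 2^{-r}$ combined with the confidence intervals on $\xi_1$ forces arm $i$ to be eliminated as soon as $2 b_r < \mathrm{dist}(X_i,\partial S_n)$, so
\[
N(i) \le \min\!\left(m,\ \frac{256\log(16 n\log m/\delta)}{\mathrm{dist}(X_i,\partial S_n)^2}\right).
\]
To swap the data-dependent boundary $\partial S_n$ for the distributional boundary $\partial S = \{F^{-1}(\alpha_1), F^{-1}(\alpha_2)\}$, I would invoke the empirical-quantile concentration of Lemma~\ref{lem:threshClose}: with probability $\ge 1-\delta/8$, the event $\xi_2 = \{|\mu^{\mathrm{uni}}_{(\lfloor \alpha_i n\rfloor)} - F^{-1}(\alpha_i)| = O(\sqrt{\log(1/\delta)/m})\ \text{for } i=1,2\}$ holds. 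On $\xi_2$, either $\mathrm{dist}(X_i,\partial S)\lesssim \sqrt{\log(1/\delta)/m}$, in which case $N(i)\le m$ already matches $\log(1/\delta)/\mathrm{dist}(X_i,\partial S)^2$ up to constants, or the two boundaries agree up to a constant factor. In either case $N(i)\lesssim \min(m,\ \log(n/\delta)/\mathrm{dist}(X_i,\partial S)^2)$.

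Finally, taking expectations over the i.i.d.\ draws $X_1,\ldots,X_n\sim F$, summing over $i$, and absorbing the low-probability contribution $nm\cdot\P(\xi_2^c)\le nm\delta/8$ via the bookkeeping $\E[Y\mid \xi_2]\le 2\E[Y]$ valid when $\P(\xi_2)\ge 1/2$, delivers
\[
\E[M] = O\!\left(n\log(n/\delta)\,\E\!\left[\min\!\left(m,\ \mathrm{dist}(X,\partial S)^{-2}\right)\right]\right).
\]
The main obstacle I anticipate is precisely this boundary-swap step: moving from the random $\partial S_n$ to the deterministic $\partial S$ while keeping the expectation integrable requires the $m$-truncation together with the Glivenko--Cantelli-style quantile rate, and it is the only place where the $\log(n/\delta)$ factor and the exact exponent structure genuinely interact. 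Everything else is a clean union-bound and halving-argument exercise built on the offline guarantee.
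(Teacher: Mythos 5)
Your proposal matches the paper's proof essentially step for step: the coupling via a shared $n\times m$ observation matrix, the good event $\xi_1$ and union bound giving $\P(\xi_1^c)\le\delta/4$, the recovery $\hat{S}_n=S_n$ on $\xi_1$, the case split on $\alpha_1=\alpha_2$ versus $\alpha_1\ne\alpha_2$, the per-arm bound $N(i)\le\min(m,\,O(\log(n\log m/\delta))/\mathrm{dist}(X_i,\partial S_n)^2)$, the event $\xi_2$ from Lemma~\ref{lem:threshClose} used to swap the data-dependent boundary $\partial S_n$ for the deterministic $\partial S$, and the final bookkeeping $\E[\cdot\,|\,\xi_2]\le 2\E[\cdot]$ together with $nm\,\P(\xi_2^c)$ absorbed into the bound. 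This is the same decomposition and the same key lemmas as the paper, so nothing further needs to be added.
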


\subsection{Functional-specific upper bounds} \label{app:adaCorr}
From \Cref{thm:gen_ada}, we are able to derive the upper bound sampling complexity of online algorithms in Table \ref{tab:functionals} for mean, median, maximum and trimmed mean estimation by analyzing \eqref{equ:up_ada_bnd} under the functional specific assumptions.

\subsubsection{Mean estimation}
\begin{proof}
For mean estimation, from \Cref{thm:gen_uni} we have that $n=\Theta(\epsilon^{-2})$ and $m=\Theta(1)$ is sufficient. Therefore, we have an expected sample complexity of
\begin{equation}
    \begin{aligned}
    \E[M] = &O\left( n\log(n/\delta) \E\left[ \min\left(m, \text{dist}(X,\partial S)^{-2} \right)\right]\right)\\
    =& O(n\log(n/\delta))=O(\epsilon^{-2}\log(1/\epsilon)).
\end{aligned}
\end{equation}
This completes the proof. 
\end{proof}

\subsubsection{Median estimation}
\begin{proof}
For median estimation, from \Cref{thm:gen_uni} we have that $n=\Theta(\epsilon^{-2})$ and $m=\Theta(\epsilon^{-1})$ is sufficient. We can compute that
\begin{align*}
    &\E\left[ \min\left(m, \text{dist}(X,\partial S)^{-2}\right)\right]\\
    =& O(1)\int \min\pp{\epsilon^{-1},(x-F^{-1}(0.5))^{-2}}dF(x)\\
    =& O(1) \pp{\int_{-\infty}^{F^{-1}(0.5)-\sqrt{\epsilon}}(x-F^{-1}(0.5))^{-2}dF(x)+\int_{F^{-1}(0.5)+\sqrt{\epsilon}}^{\infty}(x-F^{-1}(0.5))^{-2}dF(x)}\\
    &+ O(1)\pp{\int_{F^{-1}(0.5)-\sqrt{\epsilon}}^{F^{-1}(0.5)+\sqrt{\epsilon}}\epsilon^{-1}dx}.
\end{align*}
The first term can be bounded using integration by parts, where we note that
\begin{align*}
    &\int_{-\infty}^{F^{-1}(0.5)-\sqrt{\epsilon}}(x-F^{-1}(0.5))^{-2}dF(x)\\
    =&-\int_{-\infty}^{F^{-1}(0.5)-\sqrt{\epsilon}}F'(x)d(x-F^{-1}(0.5))^{-1}\\
    =&-\left. F'(x)(x-F^{-1}(0.5))^{-1}\right|_{-\infty}^{F^{-1}(0.5)-\sqrt{\epsilon}}+\int_{-\infty}^{F^{-1}(0.5)-\sqrt{\epsilon}}F^{(2)}(x)(x-F^{-1}(0.5))^{-1}dx\\
    \leq &\sqrt{\epsilon^{-1}}F'(F^{-1}(0.5)-\sqrt{\epsilon})+\sqrt{\epsilon^{-1}}\int_{-\infty}^{F^{-1}(0.5)-\sqrt{\epsilon}}F^{(2)}(x)\\
    \leq&2\sqrt{\epsilon^{-1}}F'(F^{-1}(0.5)-\sqrt{\epsilon})=O(\sqrt{\epsilon^{-1}}).
\end{align*}
Here we utilize that $F'(x)$ is upper bounded at $F^{-1}(0.5)-\sqrt{\epsilon}$. Similarly, we have 
$$\int_{F^{-1}(0.5)+\sqrt{\epsilon}}^{\infty}(x-F^{-1}(0.5))^{-2}dF(x)\leq O(\sqrt{\epsilon}).
$$
In summary, we have
\begin{equation*}
    \E\left[ \min\left(m, \text{dist}(X,\partial S)^{-2}\right)\right]\leq O(\sqrt{\epsilon^{-1}}),
\end{equation*}
and this implies that 
   \begin{equation*}
   \begin{aligned}
           \E[M] = &O\left( n\log(n/\delta)\E\left[ \min\left(m, \text{dist}(X,\partial S)^{-2} \right)\right]\right)
    \leq O(n\log(n/\delta)\epsilon^{-0.5})\\
    =&O(\epsilon^{-2.5} \log(1/\epsilon)).
   \end{aligned}
\end{equation*}
This completes the proof.
\end{proof}

\subsubsection{Maximum estimation}
\begin{proof}
For maximum estimation, from \Cref{thm:gen_uni} we have that $n=\Theta(\epsilon^{-\beta})$ and $m=\Theta(\epsilon^{-2})$. We note that
\begin{align*}
    &\E\left[ \min\left(m, \text{dist}(X,\partial S)^{-2}\right)\right]\\
    =& O(1)\int \min\pp{\epsilon^{-2},(x-F^{-1}(1))^{-2}}dF(x)\\
    =& O(1) \pp{\int_{-\infty}^{F^{-1}(1)-\epsilon}(x-F^{-1}(1))^{-2}dF(x)+\int_{F^{-1}(1)-\epsilon}^{\infty}\epsilon^{-2}dF(x)}.
\end{align*}
For $\beta< 2$, we can compute that
\begin{align*}
    \int_{-\infty}^{F^{-1}(1)-\epsilon}(x-F^{-1}(1))^{-2}dF(x)&=\int_{-\infty}^{F^{-1}(1)-\epsilon}F'(x)(x-F^{-1}(1))^{-2}dx\\
    &=\int_{\epsilon}^{\infty} F'(F^{-1}(1)-x)x^{-2}dx\\
    &\leq \int_{\epsilon}^{\infty} c_2\beta x^{\beta-1} x^{-2}dx=\frac{c_2\beta}{2-\beta}\epsilon^{\beta-2},
\end{align*}
and
\begin{align*}
    \int_{F^{-1}(1)-\epsilon}^{\infty}\epsilon^{-2}dF(x)=\epsilon^{-2}(1-F(F^{-1}(1)-\epsilon))\leq c_2\epsilon^{\beta-2}.
\end{align*}
This implies that 
\begin{equation*}
    \E\left[ \min\left(m, \text{dist}(X,\partial S)^{-2}\right)\right]=O(\epsilon^{\beta-2}).
\end{equation*}
As a result, we have 
$$
\E[M] = O\left( n\log(n/\delta)\E\left[ \min\left(m, \text{dist}(X,\partial S)^{-2} \right)\right]\right)=O(\epsilon^{-2}\log(1/\epsilon)).
$$
For $\beta=2$, we can compute that 
\begin{align*}
    \int_{-\infty}^{F^{-1}(1)-\epsilon}(x-F^{-1}(1))^{-2}dF(x)&=\int_{-\infty}^{F^{-1}(1)-\epsilon}F'(x)(x-F^{-1}(1))^{-2}dx\\
    &=\int_{\epsilon}^{F^{-1}(1)-F^{-1}(0)} F'(F^{-1}(1)-x)x^{-2}dx\\
    &\leq \int_{\epsilon}^{F^{-1}(1)-F^{-1}(0)} 2c_2 x x^{-2}dx=O(\log \epsilon^{-1}).
\end{align*}
Hence, we have 
$$
\E[M] = O\left( n\log(n/\delta)\E\left[ \min\left(m, \text{dist}(X,\partial S)^{-2} \right)\right]\right)
\leq O(\epsilon^{-2} \log(1/\epsilon)).
$$
For $\beta>2$, we note that
\begin{align*}
    \int_{-\infty}^{F^{-1}(1)-\epsilon}(x-F^{-1}(1))^{-2}dF(x)=&\int_{-\infty}^{F^{-1}(1)-\epsilon}F'(x)(x-F^{-1}(1))^{-2}dx\\
    &=\int_{\epsilon}^{\infty} F'(F^{-1}(1)-x)x^{-2}dx\\
    &\leq \int_{\epsilon}^{F^{-1}(1)-F^{-1}(0)} c_2\beta x^{\beta-1} x^{-2}dx=O(1).
\end{align*}
Hence, we have
$$
\E[M] = O\left( n\log(n/\delta)\E\left[ \min\left(m, \text{dist}(X,\partial S)^{-2} \right)\right]\right)
\leq O(\epsilon^{-\beta}\log(1/\epsilon)).
$$
In summary, we have 
$$
\E[M] = O\left( n\log(n/\delta)\E\left[ \min\left(m, \text{dist}(X,\partial S)^{-2} \right)\right]\right)
\leq O(\epsilon^{-\max\{\beta,2\}}\log(1/\epsilon)).
$$
This completes the proof. 
\end{proof}

\subsubsection{Trimmed mean estimation}
\begin{proof}
For trimmed mean, we note that the analysis is similar to the case of median. This gives that 
   \begin{equation*}
    O\left( n\log(n/\delta)\E\left[ \min\left(m, \text{dist}(X,\partial S)^{-2} \right)\right]\right)\leq O(n\log(n)\epsilon^{-0.5}\log(1/\epsilon))
    =O(\epsilon^{-2.5}\log^2(1/\epsilon)).
\end{equation*}
\end{proof}

%%%%%%%%%%%%%%%%%% end copied text

\section{Proofs in Section \ref{sec:lbs}}

\subsection{Proof of Lemma \ref{lemma:Wasserstein_2}}
\begin{proof}
Denote $p_{F_1}$ and $p_{F_2}$ as the pdf of $F_1$ and $F_2$ respectively. Let $\sigma=1/\sqrt{m}$ and let $\varphi_{\sigma^2}$ be the pdf of $\mcN(0,\sigma^2)$. As $p_{\pi,F_1}=\pp{p_{F_1}*\varphi_{\sigma^2}}^{\otimes n}$ and $p_{\pi,F_2}=\pp{p_{F_2}*\varphi_{\sigma^2}}^{\otimes n}$, we have
$$
\DKL(p_{\pi,F_1}\|p_{\pi,F_2})=n\DKL(p_{F_1}*\varphi_{\sigma^2}\|p_{F_2}*\varphi_{\sigma^2}). 
$$
On the other hand, we note that $$
p_{F_1}*\varphi_{\sigma^2}(y)=\int_{-\infty}^\infty p_{F_1}(x)\varphi_{\sigma^2}(y-x)dx=\mbE_{X\sim F_1}[\varphi_{\sigma^2}(y-X)].
$$
Similarly, we have $p_{F_1}*\varphi_{\sigma^2}(y)=\mbE_{X'\sim F_2}[\varphi_{\sigma^2}(y-X')]$. 

Let $\gamma\in\Gamma$ be a coupling of $F_1$ and $F_2$. Namely, it is a joint distribution of $(X,X')$ and its marginal distribution on $X$ ($X'$) are $F_1$ ($F_2$). Then,  utilizing the convexity of KL divergence, we have
\begin{equation*}
\begin{aligned}
    &\DKL(p_{F_1}*\varphi_{\sigma^2}\|p_{F_2}*\varphi_{\sigma^2})=\DKL(\mbE_{(X,X')\sim \gamma}[\varphi_{\sigma^2}(y-X)]\|\mbE_{(X,X')\sim \gamma}[\varphi_{\sigma^2}(y-X)]) \\
    \leq &\mbE_{(X,X')\sim \gamma}\DKL(\varphi_{\sigma^2}(y-X)\| \varphi_{\sigma^2}(y-X'))
    =\mbE_{(X,X')\sim \gamma} \frac{\|X-X'\|_2^2}{2\sigma^2}.
\end{aligned}
\end{equation*}
By taking the infimum w.r.t. all possible coupling $\gamma$, we note that
\begin{equation*}
\begin{aligned}
    &\DKL(p_{\pi,F_1}\|p_{\pi,F_2})=n\DKL(F_1*\mcN(0,\sigma^2)\|F_2*\mcN(0,\sigma^2))\\
    \leq& n\inf_{\gamma\in\Gamma}\mbE_{(X,X')\sim \gamma} \frac{\|X-X'\|_2^2}{2\sigma^2}= \frac{mn}{2}\mcW(F_1,F_2)^2.
\end{aligned}
\end{equation*}
This completes the proof. 
\end{proof}

\subsection{Proof of Lemma \ref{lemma:Wasserstein_infty}}
\begin{proof}
%We present the calculation of the joint distribution of $\{(Z_i,A_i,Y_i)\}_{i=1}^t$ as follows. %Let $O_t=\{(A_i,Y_i)\}_{i=1}^t$ denote the observations and $H_t=\{X_i\}_{i=1}^t$ denote the hidden states. 
For a given underlying distribution $F_1$ and a given algorithm $\pi$, the joint distribution of $\{(X_i,A_i,Y_i)\}_{i=1}^t$ has the following probability density function
\begin{equation*}
    p_{\pi,F_1}(\{(x_i,a_i,y_i)\}_{i=1}^t)
    =\prod_{i=1}^tp_{F_1}(x_i)p_{\pi}(a_i|(a_j,y_j)_{j=1}^{i-1})p(y_i|a_{i},\{x_j\}_{j=1}^{i}).
\end{equation*}
Thus, we can also write 
\begin{equation*}
    p_{\pi,F_1}(\{(x_i,a_i,y_i)\}_{i=1}^t) = p_{\pi}(\{(a_i,y_i)\}_{i=1}^t|\{x_i\}_{i=1}^t)\prod_{i=1}^tp_{F_1}(x_i),
\end{equation*}
where
\begin{equation*}
    p_{\pi}(\{(a_i,y_i)\}_{i=1}^t|\{x_i\}_{i=1}^t) = \prod_{i=1}^tp_\pi(a_i|(a_j,y_j)_{j=1}^{i-1})p(y_i|x_{a_i}).
\end{equation*}
Thus, the marginal distribution on $\{(a_i,y_i)\}_{i=1}^t$ follows
\begin{equation*}
\begin{aligned}
    p_{\pi,F_1}(\{(a_i,y_i)\}_{i=1}^t)     &= \int p_{\pi,F_1}(\{(x_i,a_i,y_i)\}_{i=1}^t) dz_1\dots dz_t\\
    &=\mbE_{(X_i)_{i=1}^t\sim F_1} [p_{\pi}(\{(a_i,y_i)\}_{i=1}^t|\{X_i\}_{i=1}^t)].
\end{aligned}
\end{equation*}

Let $F_2$ be a distribution different from $F_1$. We want to bound the KL divergence from $p_{\pi,F_1}(\{(a_i,y_i)\}_{i=1}^t)$ to $p_{\pi,F_2}(\{(a_i,y_i)\}_{i=1}^t)$. Let $\gamma\in \Gamma$ be a joint distribution with marginals $F_1$ and $F_2$. For simplicity, we write $\mbE_\gamma=\mbE_{(X_i,X_i')_{i=1}^t\sim \gamma}$. Utilizing the convexity of KL divergence, we note that
\begin{equation}\label{inequ:1_kl}
\begin{aligned}
    \DKL(p_{\pi,F_1} \| p_{\pi,F_2}) %\int \log \frac{ p_{\pi,\nu}(o_t)}{ p_{\pi,\hat{\nu}}(o_t)} p_{\pi,\nu}(o_t) d o_t\\
  &= \DKL(\mbE_{\gamma}[p_{\pi}(\cdot|\{X_i\}_{i=1}^t)] \| \mbE_{\gamma}[p_{\pi}(\cdot|\{X_i'\}_{i=1}^t)])\\
     &\leq\mbE_{\gamma}\DKL(p_{\pi}(\cdot|\{X_i\}_{i=1}^t) \| p_{\pi}(\cdot|\{X_i'\}_{i=1}^t)).
     %=&\mbE_{G_t, G_t'}\bb{\mbE_{o_t\sim }p_{\pi,\nu}(\cdot,G_t)\bb{\log \frac{}{}}}
    % =&\mbE_{H_t, H_t'}
\end{aligned}
\end{equation}

Given the pair of underlying states $(\{X_i\}_{i=1}^t,\{X_i'\}_{i=1}^t)$, we can compute that
\begin{equation*}
    \begin{aligned}
    &\DKL(p_{\pi}(\cdot|\{X_i\}_{i=1}^t)) \| p_{\pi}(\cdot|\{X_i'\}_{i=1}^t)))\\
    % =&\mbE_{\{(A_i,Y_i)\}_{i=1}^t\sim p_{\pi}(\cdot|\{X_i\}_{i=1}^t)}\bb{\log\frac{p_{\pi}(\{(A_i,Y_i)\}_{i=1}^t|\{X_i\}_{i=1}^t))}{p_{\pi}(\{(A_i,Y_i)\}_{i=1}^t|\{X_i'\}_{i=1}^t))}}\\
    =&\mbE_{\{(A_i,Y_i)\}_{i=1}^t\sim p_{\pi}(\cdot|\{X_i\}_{i=1}^t)}\bb{\sum_{i=1}^t\log\frac{p(Y_i|X_{A_i})}{p(Y_i|X'_{A_i})}}\\
    % =&\mbE_{\{A_i\}_{i=1}^t\sim p_{\pi}(\cdot|\{X_i\}_{i=1}^t)}\mbE_{\{Y_i\}_{i=1}^t\sim p_{\pi}(\cdot|\{A_i\}_{i=1}^t,\{X_i\}_{i=1}^t)}\bb{\sum_{i=1}^t\log\frac{p(Y_i|X_{A_i})}{p(Y_i|X_{A_i}')}}\\
    =&\mbE_{\{A_i\}_{i=1}^t\sim p_{\pi}(\cdot|\{X_i\}_{i=1}^t)}\bb{\sum_{i=1}^t\sum_{j=1}^t\mbI(A_j=i)\frac{1}{2}|X_i-X_i'|^2}\\
    =&\sum_{i=1}^t \frac{C_i(\{X_j\}_{j=1}^t)}{2}|X_i-X_i'|^2.
    \end{aligned}
\end{equation*}
Here we let $C_i(\{X_j\}_{j=1}^t)=\mbE_{\{A_i\}_{i=1}^t\sim p_{\pi}(\cdot|\{X_i\}_{i=1}^t)}\bb{\sum_{j=1}^t\mbI(A_j=i)}$. This implies that 
\begin{equation*}
\begin{aligned}
    &\mbE_{\gamma}\DKL(p_{\pi,F_1}(\cdot|\{X_j\}_{j=1}^t) \| p_{\pi,F_2}(\cdot|\{X_i'\}_{i=1}^t))    =&\mbE_{\gamma}\bb{\sum_{i=1}^t \frac{C_i(\{X_j\}_{j=1}^t)}{2}|X_i-X_i'|^2}.
\end{aligned}
\end{equation*}
We note that $\sum_{i=1}^t C_i(\{X_j\}_{j=1}^t) = t$ and 
\begin{equation*}
    \mbE_{\gamma}|X_i-X_i'|^2\leq \mathit{\rm esssup}_{(X,X')\sim \gamma}|X-X'|^2, i=1,\dots,t
\end{equation*}
% and
% \begin{equation*}
%     \sum_{i=1}^t C_i(\{X_j\}_{j=1}^t) = t.
% \end{equation*}
This implies that 
\begin{equation*}
    \mbE_{\gamma}\DKL(p_{\pi,F_1}(\cdot|\{X_j\}_{j=1}^t) \| p_{\pi,F_2}(\cdot|\{X_i'\}_{i=1}^t)) \leq \frac{t}{2} \cdot \mathit{\rm esssup}_{(X,X')\sim \gamma}|X-X'|^2.
\end{equation*}
By taking the infinimum w.r.t. $\gamma$, we have
\begin{equation*}
\begin{aligned}
    \DKL(p_{\pi,F_1}\|p_{\pi,F_2})
        &\leq \inf_{\gamma\in \Gamma }\mbE_{\gamma}\DKL(p_{\pi,F_1}(\cdot|\{X_j\}_{j=1}^t) \| p_{\pi,F_2}(\cdot|\{X_i'\}_{i=1}^t)) \\
        &\leq t \inf_{\gamma\in \Gamma}\mathit{\rm esssup}_{(X,X')\sim \gamma}\frac{|X-X'|^2}{2}=\frac{t}{2} \mcW_\infty(F_1,F_2)^2.
\end{aligned}
\end{equation*} 
This completes the proof. 
\end{proof}

\subsection{Proof of Lemma \ref{lemma:lowerbound_maximum}}
\begin{proof}
We first give the example for the Wasserstein-2 distance. Let $G(s)$ be defined as
\begin{equation*}
    G^{-1}(s) = \left\{\begin{aligned}
    &g_1(s),& t\in[0, 2\epsilon],\\
    &s, &s\in[2\epsilon,1],\\
    \end{aligned}\right.
\end{equation*}
where $g(s)$ is a monotonic cubic interpolation satisfying that 
\begin{equation*}
    g_1(0)=\epsilon, g_1'(0)=1, g_1(2\epsilon)=2\epsilon, g_1'(2\epsilon)=1,
\end{equation*}

We note that the image of $G^{-1}$ is $[\epsilon,1]$. Therefore, the domain of $G$ is $[\epsilon,1]$. We also note that for $s\in[0,\epsilon]$, we have
\begin{equation*}
    |G(s)-s|\leq \epsilon.
\end{equation*}
Consider the following two distributions. We consider a distribution with
\begin{equation*}
    F_1(x) = 1-(1-x)^\beta
\end{equation*}
in its support $[0,1]$ and another distribution with CDF
\begin{equation*}
    F_2(x)=1-(G(1-x))^{\beta}
\end{equation*}
in its support $[0,1-\epsilon]$. We can verify that $F_1$ and $F_2$ satisfy Assumption \ref{assum:max} and $|\max(F_1)-\max(F_2)| =\epsilon$. We note that
\begin{equation*}
    F_1^{-1}(s) = 1-(1-s)^{1/\beta},    F_2^{-1}(s) = 1-G^{-1}((1-s)^{1/\beta}),
\end{equation*}
We can verify that $F_1$ and $F_2$ satisfy Assumption \ref{assum:max} and $|\max(F_1)-\max(F_2)|=\epsilon$. The Wasserstein-2 distance between $F_1$ and $F_2$ can be computed as
\begin{equation*}
\begin{aligned}
    \mcW_2(F_1,F_2)^2 &= \int_0^1 \pp{ F_1^{-1}(s) - F_2^{-1}(s) }^2 ds\\
    % =&\int_{1-\epsilon^\beta}^1 \pp{ (1-s)^{1/\beta} - G((1-s)^{1/\beta}) }^2 ds\\
    &= \int_0^{\epsilon^\beta}\pp{ s^{1/\beta} - G(s^{1/\beta}) }^2 ds
    \\
    &\leq  \int_0^{\epsilon^\beta}\epsilon^2 ds=\epsilon^{\beta+2}.
\end{aligned}
\end{equation*}
We then give the example for the Wasserstein-$\infty$ distance. Consider a distribution with CDF
\begin{equation*}
    F_1(x) = 1-(1-x)^\beta
\end{equation*}
in its support $[0,1]$ and another distribution with CDF
\begin{equation*}
    F_2(x)=1-(1-x+\epsilon)^{\beta}
\end{equation*}
in its support $[\epsilon,1+\epsilon]$. Let $\gamma$ be the joint distribution of $(X,X+\epsilon)$, where $X$ follows $F_1$. Then, $\gamma\in \Gamma$ is the coupling of $F_1$ and $F_2$. We can compute that
\begin{equation*}
    \mathit{\rm esssup}_{(X,X')\sim \gamma}|X-X'| =\epsilon.
\end{equation*}
This implies that $\mcW_\infty(F_1,F_2)\leq \epsilon$.

Finally, we give the example for the KL divergence. Consider two distributions with following CDFs:
\begin{equation*}
    F_{1}(x) =\begin{cases}
    \begin{aligned}
    & \frac{1-(1-x)^\beta}{1-\epsilon^\beta},&x\in[0,1-\epsilon],\\
    &0, &x\in(-\infty,0),\\
    &1, &x\in(1-\epsilon,\infty).
    \end{aligned}
    \end{cases}
\end{equation*}
\begin{equation*}
F_{2}(x) = 
\begin{cases}
    \begin{aligned}
    & 1-(1-x)^\beta,&x\in[0,1],\\
    &0, &x\in(-\infty,0),\\
    &1, &x\in(1-\epsilon,\infty).
    \end{aligned}
    \end{cases}
\end{equation*}
We can verify that $F_1$ and $F_2$ satisfy Assumption \ref{assum:max} and $|\max(F_1)-\max(F_2)|=\epsilon$. We note that
\begin{equation*}
    \sup_{x\in[0,1-\epsilon]} \frac{p_{F_1}(x)}{p_{F_2}(x)} =\frac{1}{1-\epsilon^\beta}=:\zeta.
\end{equation*}
Thus, according to the reverse Pinsker inequality, we have
\begin{equation*}
    \DKL(F_1\|F_2)\leq \frac{\log \zeta }{1-\zeta^{-1}} \DTV(F_1\|F_2).
\end{equation*}
We note that $\lim_{\epsilon\to 0} \frac{\log \zeta }{1-\zeta^{-1}}= \lim_{\epsilon\to 0}\frac{-\log(1-\epsilon^\beta)}{\epsilon^{\beta}}= 1$. For $x\in [0,1-\epsilon]$. 
$$
F_{1}(x)-F_{2}(x) =\frac{\epsilon^\beta}{1-\epsilon^\beta}F_{2}(x).
$$
Therefore, we have
\begin{equation*}
\begin{aligned}
    \DTV(F_1,F_2)= &\max_{x\in[0,1]}( F_{1}(x)-F_{2}(x))\leq \frac{\epsilon^\beta}{1-\epsilon^\beta}=O(\epsilon^{\beta}).
\end{aligned}
\end{equation*}
This implies that
\begin{equation*}
    \DKL(F_1 \| F_2)\leq O(\epsilon^{\beta}).
\end{equation*}
This completes the proof. 
\end{proof}

\section{Proof of lower bounds for median estimation} \label{app:lbProofsThresh}

We start with an auxiliary lemma to pointwisely bound the log-likelihood difference of two distributions.
\begin{lem}\label{lem:pointwise}
Consider two densities supported on $[-1,1]$ with pdf $p(x)$ and $q(x)$ such that $p(x), q(x) \ge 1/4$ and $|p(x)-q(x)|\le \varepsilon\cdot \mathbbm{1}(|x|\le \zeta)$ for all $x\in [-1,1]$, and $\int_{-\zeta}^{\zeta} x^\ell(p(x)-q(x))dx=0$ for all $\ell=0,\cdots,k$. Then for $\sigma\le 1/2$, 
\begin{align*}
    \left| \log \frac{p*\varphi_{\sigma^2}(x)}{q*\varphi_{\sigma^2}(x)} \right| \le C\varepsilon\left(\frac{\zeta}{\sigma}\right)^{k+2}, \quad \forall x\in \mathbb{R}, 
\end{align*}
where $\varphi_{\sigma^2}$ is the density function of $\mathcal{N}(0,\sigma^2)$, and $C>0$ is an absolute constant. 
\end{lem}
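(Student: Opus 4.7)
The plan is to reduce the pointwise log-ratio bound to estimating the additive discrepancy
\[
\Delta(x) := \frac{((p-q)*\varphi_{\sigma^2})(x)}{(q*\varphi_{\sigma^2})(x)};
\]
since $q\ge 1/4$, once $|\Delta(x)|\le 1/2$ we have $|\log(1+\Delta(x))|\le 2|\Delta(x)|$, so it suffices to prove $|\Delta(x)|\le (C/2)\varepsilon(\zeta/\sigma)^{k+2}$. I would then exploit the vanishing moments of $p-q$ on $[-\zeta,\zeta]$ via a Taylor expansion of the Gaussian kernel in $y$ around $y=0$.

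For the numerator I would write
\[
\varphi_{\sigma^2}(x-y)=\sum_{\ell=0}^{k}\frac{(-y)^\ell}{\ell!}\varphi_{\sigma^2}^{(\ell)}(x) + R_{k+1}(x,y)
\]
and use the moment hypothesis to annihilate the polynomial part upon integration against $p-q$, leaving only the remainder term. The probabilists' Hermite identity $\varphi_{\sigma^2}^{(n)}(z)=(-1)^n \sigma^{-n}H_n(z/\sigma)\varphi_{\sigma^2}(z)$, together with the coefficient-sum bound $|H_n(u)|\le C_n(1+|u|)^n$, yields
\[
|\varphi_{\sigma^2}^{(k+1)}(z)|\le C_k\,\sigma^{-(k+1)}\,(1+|z|/\sigma)^{k+1}\,\varphi_{\sigma^2}(z),
\]
from which
\[
|((p-q)*\varphi_{\sigma^2})(x)|\le C_k\,\varepsilon\,\zeta^{k+2}\,\sigma^{-(k+1)}\sup_{|\eta|\le\zeta}(1+|x-\eta|/\sigma)^{k+1}\varphi_{\sigma^2}(x-\eta).
\]
For the denominator I would use $(q*\varphi_{\sigma^2})(x)\ge\tfrac{1}{4}\int_{-1}^{1}\varphi_{\sigma^2}(x-y)\,dy$ and split into two cases: for $|x|\le 1$ the integral is bounded below by an absolute constant uniform in $\sigma\le 1/2$, while for $|x|>1$ a Mills' ratio estimate gives $\int_{-1}^{1}\varphi_{\sigma^2}(x-y)\,dy\gtrsim \sigma^2\varphi_{\sigma^2}(|x|-1)/(|x|-1)$.

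Forming the ratio, the key cancellation is that $\zeta\le 1$ forces $\varphi_{\sigma^2}(x-\eta)\le \varphi_{\sigma^2}(|x|-1)$ for $|x|\ge 1$ and $|\eta|\le\zeta$, so the dominant Gaussian factors cancel and one is left with $(\zeta/\sigma)^{k+2}$ times polynomial corrections in $|x|/\sigma$ and $|x|-1$ that are absorbed into $C_k$ by the residual exponential suppression present in the Mills estimate. The main obstacle is precisely this large-$|x|$ regime, where both numerator and denominator decay like $e^{-\Theta(x^2/\sigma^2)}$; the sharper polynomial-type Hermite bound $|H_n(u)|\lesssim(1+|u|)^n$ is essential, since the cruder Cramér bound $|H_n(u)|\lesssim e^{u^2/4}$ would introduce an uncompensated factor $e^{x^2/(4\sigma^2)}$ that would overwhelm the Gaussian decay of the denominator and break the uniform bound.
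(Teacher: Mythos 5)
Your overall structure---Taylor expansion of the kernel, moment cancellation, a Hermite-derivative bound, and a Mills-type lower bound on the denominator---is reasonable, but the step where you ``form the ratio'' is not sound as described, and the gap is precisely in the large-$|x|$ regime you flag as the obstacle. If you bound the remainder's Gaussian factor by $\varphi_{\sigma^2}(x-\eta)\le\varphi_{\sigma^2}(|x|-1)$ so that the dominant Gaussian factors ``cancel'' against the Mills estimate, then you have used up the only exponential in the denominator; there is no ``residual exponential suppression present in the Mills estimate'' left to absorb the polynomial factor $(1+|x|/\sigma)^{k+1}(|x|-1)/\sigma^2$, which is unbounded in $x$. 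What actually saves the argument is that the numerator's Gaussian $\varphi_{\sigma^2}(|x|-\zeta)$ is \emph{strictly smaller} than the denominator's $\varphi_{\sigma^2}(|x|-1)$, so their ratio supplies $\exp(-(1-\zeta)(2|x|-1-\zeta)/(2\sigma^2))$. This is enough to absorb the polynomial, but the resulting constant scales like $(1-\zeta)^{-(k+2)}$, not as an absolute constant, and the suppression disappears as $\zeta\to 1$ (or more precisely as $\zeta+\sigma\to 1$). So your proof only gives the statement under an additional assumption such as $\zeta\le 1/2$, with a constant that depends on $1-\zeta$, whereas the lemma claims an absolute constant for the full range.

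The paper sidesteps this entirely by not truncating the expansion. It writes $\varphi_{\sigma^2}(x-y)=\varphi_{\sigma^2}(x)\sum_{\ell\ge 0}H_\ell(x/\sigma)(y/\sigma)^\ell/\ell!$ via the Hermite generating function, so the Gaussian factor stays pinned at $x$ (no Lagrange-remainder shift by $\eta$). After the moment conditions kill $\ell\le k$ and the tail is summed explicitly, the bound on $h*\varphi_{\sigma^2}$ is $4e^{1/2}\varepsilon\zeta(\zeta/\sigma)^{k+1}\varphi_{\sigma^2}(|x|-\sigma)$, and the denominator is lower-bounded by $\sigma\varphi_{\sigma^2}(|x|-\sigma)/(4e^2)$---the \emph{same} Gaussian, by a direct construction rather than a Mills asymptotic. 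The $x$-dependence then cancels identically, giving the absolute constant $16e^{5/2}$. The essential idea you are missing is to keep the Gaussian factor anchored at $x$ (rather than at an interior Lagrange point) and to match the numerator and denominator Gaussians exactly; that is what the paper's choice of $\varphi_{\sigma^2}(|x|-\sigma)$ on both sides accomplishes.
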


\begin{proof}
Write $h = p - q$, then
\begin{align*}
|h*\varphi_{\sigma^2}(x)| &= \left|\int_{-\zeta}^\zeta h(y)\varphi_{\sigma^2}(x-y)dy \right|\\
&\stepa{=} \left|\int_{-\zeta}^\zeta h(y)\varphi_{\sigma^2}(x)\sum_{\ell=0}^\infty H_\ell\left(\frac{x}{\sigma}\right) \frac{y^\ell}{\ell!\sigma^\ell}dy \right|\\
&= \left|\sum_{\ell=0}^\infty \varphi_{\sigma^2}(x)H_\ell\left(\frac{x}{\sigma}\right) \int_{-\zeta}^\zeta h(y) \frac{y^\ell}{\ell!\sigma^\ell}dy\right| \\
&\stepb{\le} 2\varepsilon\zeta\left(\frac{\zeta}{\sigma}\right)^{k+1} \cdot \sum_{\ell=k+1}^\infty \frac{\varphi_{\sigma^2}(x)}{\ell!}\left|H_\ell\left(\frac{x}{\sigma}\right)\right| \\
&\stepc{\le} 2\varepsilon\zeta\left(\frac{\zeta}{\sigma}\right)^{k+1} \cdot \sum_{\ell=k+1}^\infty \varphi_{\sigma^2}(x) \sum_{m=0}^{\lfloor \ell/2\rfloor} \frac{(|x|/\sigma)^{\ell-2m}}{(\ell-2m)!m!2^m} \\
&\le 2\varepsilon\zeta\left(\frac{\zeta}{\sigma}\right)^{k+1} \cdot \sum_{m=0}^\infty \frac{\varphi_{\sigma^2}(x)}{m!2^m}\sum_{\ell=2m}^\infty \frac{(|x|/\sigma)^{\ell-2m}}{(\ell-2m)!} \\
&\le 2\varepsilon\zeta\left(\frac{\zeta}{\sigma}\right)^{k+1} \cdot 2\varphi_{\sigma^2}(x)e^{|x|/\sigma} \\
&= 4e^{1/2}\varepsilon\zeta\left(\frac{\zeta}{\sigma}\right)^{k+1} \cdot \varphi_{\sigma^2}(|x| - \sigma), 
\end{align*}
where $H_\ell(x) = \ell!\cdot \sum_{m=0}^{\lfloor \ell/2 \rfloor} \frac{(-1)^m x^{\ell-2m}}{m!(\ell-2m)!2^m}$ is the Hermite polynomial, (c) uses its analytical form, and (a) uses its exponential generating function: 
\begin{align*}
\sum_{\ell=0}^\infty H_\ell(x)\frac{t^\ell}{\ell!} = \exp\left(xt - \frac{x^2}{2}\right). 
\end{align*}
As for the step (b), we use the assumed property of $h$ to conclude that
\begin{align*}
    \left|\int_{-\zeta}^\zeta y^{\ell}h(y)dy \right| \le 2\varepsilon\zeta^{\ell+1}\cdot \mathbbm{1}(\ell > k). 
\end{align*}

On the other hand, to lower bound the denominator $q*\phi_{\sigma^2}(x)$, we have the following observations: as $\sigma\le 1/2$, 
\begin{align*}
\int_{-1}^1 \mathbbm{1}(\varphi_{\sigma^2}(x-y) \ge \varphi_{\sigma^2}(|x| -\sigma)/e^2)dy &\ge \int_{-1}^1 \mathbbm{1}(\varphi_{\sigma^2}(x-y) \ge \varphi_{\sigma^2}(2\sigma))dy \\
&\ge \int_{-1}^1 \mathbbm{1}(0\le y\cdot \text{sign}(x)\le \sigma) dy \\
&\ge \sigma, \quad \text{if } |x|\le 2\sigma; \\
\int_{-1}^1 \mathbbm{1}(\varphi_{\sigma^2}(x-y) \ge \varphi_{\sigma^2}(|x| -\sigma))dy &\ge \int_{-1}^1 \mathbbm{1}(\sigma\le y\cdot \text{sign}(x) \le 2\sigma)dy \\
&\ge \sigma, \quad \text{if } |x| > 2\sigma. 
\end{align*}
Consequently, by Markov's inequality, 
\begin{align*}
q * \varphi_{\sigma^2}(x)
&\ge \frac{1}{4} \int_{-1}^1 \varphi_{\sigma^2}(x-y)dy \\
&\ge \frac{\varphi_{\sigma^2}(|x| - \sigma)}{4e^2}\cdot \int_{-1}^1 \mathbbm{1}(\varphi_{\sigma^2}(x-y)\ge \varphi_{\sigma^2}(|x| -\sigma)/e^2)dy \\
&\ge \frac{\varphi_{\sigma^2}(|x| - \sigma)}{4e^2}\cdot \sigma. 
\end{align*}
A combination of the above inequalities leads to
\begin{align*}
    \left|\frac{p*\varphi_{\sigma^2}(x)}{q*\varphi_{\sigma^2}(x)} - 1 \right| = \frac{|h*\varphi_{\sigma^2}(x)|}{q*\varphi_{\sigma^2}(x)} \le 16e^{5/2}\varepsilon\left(\frac{\zeta}{\sigma}\right)^{k+2},
\end{align*}
and therefore the claimed result. 
\end{proof}

Then, we introduce a lemma for constructing two distributions with matched moments.

\begin{lem}\label{lem:construct}
Let $\epsilon>0$. For any $k=1,2,\dots$, there exists a constant $b>0$ and a function $h(x)$ supported in $[-\tbd,\tbd]$ such that

    \begin{equation*}
\int_0^{\infty} h(x)dx=\epsilon, \quad \int h(x)x^i dx=0, i=0,1,\dots,2k,
\end{equation*}
For $i>k$, 
\begin{equation*}
\left|\int h(x)x^{2i-1} dx\right |\leq 2b\epsilon^{i+1/2}, \quad \int h(x)x^{2i} dx=0. 
\end{equation*}
We further have $|h(x)|\leq b\tbd$ and $h(x)$ is $b$-Lipschitz continuous. Here the constant $b$ only depends on $k$ and not on $\epsilon$.
%The function $g(x)$ shall be Lipschitz continuous and bounded.
\end{lem}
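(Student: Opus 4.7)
The natural ansatz is the self-similar rescaling
\begin{align*}
    h(x) \;=\; \tbd\,\tilde h\!\left(\frac{x}{\tbd}\right),
\end{align*}
where $\tilde h:\mathbb{R}\to\mathbb{R}$ is a fixed ($\epsilon$-independent) odd Lipschitz function supported in $[-1,1]$. This gives $\supp h\subseteq[-\tbd,\tbd]$ automatically, makes $h$ odd so that $\int h(x) x^{2i}dx=0$ for every $i\ge 0$, and converts each remaining requirement into an $\epsilon$-free condition on $\tilde h$ via a single change of variables: $\int_0^\infty h\,dx=\epsilon\int_0^1\tilde h(u)du$, so the normalization becomes $\int_0^1\tilde h(u)du=1$; and $\int h(x) x^{2i-1}dx=\epsilon^{i+1/2}\int_{-1}^1\tilde h(u) u^{2i-1}du$, so vanishing for $1\le i\le k$ becomes $\int_{-1}^1\tilde h(u) u^{2i-1}du=0$, while for $i>k$ the trivial bound $|\int_{-1}^1\tilde h(u)u^{2i-1}du|\le 2\|\tilde h\|_\infty$ supplies the claimed estimate $2b\epsilon^{i+1/2}$. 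Moreover $|h(x)|\le\tbd\|\tilde h\|_\infty$ and $h$ inherits the Lipschitz constant $\mathrm{Lip}(\tilde h)$ (the map $x\mapsto x/\tbd$ is isometric in Lipschitz norm), so setting $b:=\max(\|\tilde h\|_\infty,\mathrm{Lip}(\tilde h))$ simultaneously closes every bound in the lemma with a constant depending only on $k$.

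It therefore suffices to exhibit one $\tilde h$ in the space $X$ of odd Lipschitz functions $\mathbb{R}\to\mathbb{R}$ supported in $[-1,1]$ satisfying $L_0(\tilde h)=1$ and $L_i(\tilde h)=0$ for $1\le i\le k$, where
\begin{align*}
    L_0(\tilde h):=\int_0^1\tilde h(u)\,du, \qquad L_i(\tilde h):=\int_{-1}^1\tilde h(u)\,u^{2i-1}\,du,\quad 1\le i\le k.
\end{align*}
I would prove this by showing the linear map $L:=(L_0,\ldots,L_k):X\to\mathbb{R}^{k+1}$ is surjective. Were it not, its image would lie in a proper hyperplane, yielding a nontrivial relation $c_0 L_0+\sum_{i=1}^k c_i L_i\equiv 0$ on $X$. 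Using the oddness of functions in $X$ to rewrite each $L_i$ with $i\ge 1$ as $2\int_0^1 \tilde h(u) u^{2i-1}du$, this identity becomes
\begin{align*}
    \int_0^1 \tilde h(u)\!\left(c_0+2\sum_{i=1}^k c_i u^{2i-1}\right)\!du \;=\; 0\qquad\text{for every }\tilde h\in X.
\end{align*}
Restrictions to $[0,1]$ of functions in $X$ include all Lipschitz functions on $[0,1]$ vanishing at $0$ and $1$, a dense subset of $L^2([0,1])$, so the bracketed polynomial must vanish identically on $[0,1]$, and linear independence of $\{1,u,u^3,\ldots,u^{2k-1}\}$ then forces all $c_i=0$, a contradiction. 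Hence $L$ is surjective and the required $\tilde h$ exists.

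The main obstacle is exactly this solvability step; the rescaling identities and the moment/pointwise bounds are bookkeeping. A more constructive alternative is to parameterize $\tilde h(u)=\sum_{j=0}^k a_j\, u^{2j+1}(1-u^2)_+$ and to solve an explicit $(k+1)\times(k+1)$ linear system in the coefficients $a_j$ whose entries are Beta integrals; the same kernel-vanishing argument, applied to this finite-dimensional subspace of $X$, establishes nonsingularity of the matrix and delivers $\tilde h$ in closed form.
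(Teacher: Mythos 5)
Your proof is correct and follows the same two high-level moves as the paper: the self-similar rescaling $h(x)=\sqrt{\epsilon}\,\tilde h(x/\sqrt{\epsilon})$ to strip out all $\epsilon$-dependence, and the choice of an \emph{odd} base function on $[-1,1]$ so that the even-moment conditions hold for free and only $\int_0^1\tilde h=1$ plus $k$ odd-moment vanishing conditions remain. The scaling bookkeeping (moments pick up $\epsilon^{i+1/2}$, the sup-norm scales by $\sqrt{\epsilon}$, the Lipschitz constant is invariant) matches the paper exactly, as does the observation that the $i>k$ bound is just the trivial estimate $|\int_{-1}^1\tilde h\,u^{2i-1}|\le 2\|\tilde h\|_\infty$.

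Where you genuinely diverge is the existence step. The paper posits a polynomial ansatz $h_1(x)=\sum_{i=1}^{k+2}a_ix^i$ on $[0,1]$, writes down the $(k+2)\times(k+2)$ linear system (endpoint condition, normalization, and $k$ moment-vanishing equations), and simply declares the solution unique --- i.e.\ it asserts, without argument, that the resulting Cauchy-type matrix is nonsingular. Your route instead proves surjectivity of the functional map $L=(L_0,\ldots,L_k)$ on the full space $X$ of odd Lipschitz functions supported in $[-1,1]$, by a duality argument: a nontrivial annihilating covector $(c_i)$ would force $\int_0^1\tilde h\bigl(c_0+2\sum_{i\ge1}c_iu^{2i-1}\bigr)du=0$ for all $\tilde h\in X$, and since restrictions of $X$ to $[0,1]$ contain all Lipschitz functions vanishing at the endpoints (dense in $L^2[0,1]$), the bracketed polynomial must vanish identically, contradicting linear independence of $\{1,u,u^3,\ldots,u^{2k-1}\}$. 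This actually supplies the proof of invertibility that the paper elides. Your constructive alternative $\tilde h(u)=\sum_{j=0}^k a_j\,u^{2j+1}(1-u^2)_+$ is also sound and has the small aesthetic advantage over the paper's ansatz that the endpoint conditions $\tilde h(0)=\tilde h(\pm1)=0$ are built in rather than imposed as an equation, reducing the system to $(k+1)\times(k+1)$; the nonsingularity again follows from your kernel-vanishing argument restricted to that finite-dimensional subspace. In short: same construction, but you close a gap the paper leaves open.
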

\begin{proof}
Note that we only need to prove the lemma for $\varepsilon=1$, as $h(x)=\sqrt{\varepsilon}h_0(x/\sqrt{\varepsilon})$ only properly scales the moments and preserves Lipschitzness. For $h_0$, consider the following form 
\begin{equation*}
h_0(x)=\begin{cases}
-h_1(-x), &x\in[-1,0]\\
h_1(x), &x\in[0,1]\\
0, &\text{otherwise}
\end{cases},
\end{equation*}
where $h_1(x)$ is a polynomial taking the form
\begin{equation*}
h_1(x)=\sum_{i=1}^{k+2} a_ix^i.
\end{equation*} 
Let $(a_1,\dots,a_{k+2})$ be the unique solution to the following linear system:
\begin{equation*}
\begin{aligned}
\sum_{i=1}^{k+2} a_i=0,\;\quad\sum_{i=1}^{k+2}\frac{a_i}{i+1}=1,\;\quad \sum_{i=1}^{k+2}\frac{a_i}{2j+i}=0,\; j=1,\dots,k.
\end{aligned}
\end{equation*}
Let $b:=\sum_{i=1}^{k+2} i|a_i|$. Then clearly $h_1(0)=h_1(1)=0$, $|h_1(x)|\le \sum_{i=1}^{k+2}|a_i|\le b$, and $|h_1'(x)|\le \sum_{i=1}^{k+2} i|a_i|\le b$. It remains to check the odd moments of $h_0$ (all even moments are zero by symmetry). Specifically, 
\begin{align*}
\int_0^\infty h_0(x)dx &= \int_0^\infty h_1(x)dx = \sum_{i=1}^{k+2} \frac{a_i}{i+1}=1; \\
\int_{-\infty}^\infty h_0(x)x^{2j-1}dx &= 2\int_0^\infty h_1(x)x^{2j-1}dx = 2\sum_{i=1}^{k+2} \frac{a_i}{2j+i} = 0, \quad 1\le j\le k; \\
\left|\int_{-\infty}^\infty h_0(x)x^{2j-1}dx\right| &= 2\left|\int_0^1 h_1(x)x^{2j-1}dx\right| \le 2b, \quad j>k.
\end{align*}
This completes the proof.
\end{proof}

% \subsection{Proof of Lemma \ref{lem:tv}}
% \begin{proof}
% Denote $H_j$ as the probabilistic Hermite polynomial of degree $j$. According to the exponential generating function of Hermite polynomials \citep[22.9.17]{abramowitz1988handbook}, we have
% $$
% e^{xz-z^2/2}=\sum_{j\geq 0} \frac{H_j(x)}{j!} z^j.
% $$
% This implies that
% \begin{equation}
%     \varphi_{\sigma^2}(x-z) = \varphi_{\sigma^2}(x) e^{\frac{xz}{\sigma^{2}}-\frac{z^2}{2\sigma^2}} = \varphi_{\sigma^2}(x) \sum_{j\geq 0} \frac{H_j\pp{\sigma^{-1}x}}{j!} \pp{\sigma^{-1}z}^j.%= \varphi_{\sigma^2}(x) \sum_{j\geq 0} \frac{H_j^{(\sigma^2)}\pp{x}}{j!} \pp{\sigma^{-2}z}^j
% \end{equation}
% Hence, we have
% \begin{equation}
% p_{F}*\varphi_{\sigma^2}(x)=\int \varphi_{\sigma^2}(x-z)p_F(z)dz = \varphi_{\sigma^2}(x)\sum_{j\geq 0}\frac{H_j\pp{\sigma^{-1}x}}{j!}\sigma^{-j}m_j(p_F).
% \end{equation}

% Therefore, we can bound the TV distance as follows
% \begin{equation}
% \begin{aligned}
% &\DTV(F_1*\mcN(0,\sigma^2),F_2*\mcN(0,\sigma^2))\\
% =&\frac{1}{2}\int |(p_{F_1}-p_{F_2})*\varphi_{\sigma^2}(x)|dx\\
% =&\frac{1}{2}\int \varphi_{\sigma^2}(x)\left|\sum_{j\geq 0}\frac{H_j\pp{\sigma^{-1}x}}{j!}\sigma^{-j}\Delta m_j\right| dx\\
% \leq&\frac{1}{2}\pp{\int \varphi_{\sigma^2}(x)\left|\sum_{j\geq 0}\frac{H_j\pp{\sigma^{-1}x}}{j!}\sigma^{-j}\Delta m_j\right|^2 dx}^{1/2}\\
% =&\frac{1}{2}\pp{\sum_{j\geq 0} \frac{\sigma^{-2j}(\Delta m_j)^2}{j!}}^{1/2}.
% \end{aligned}
% \end{equation}
% This completes the proof. 
% \end{proof}

\subsection{Proof of Lemma \ref{lemma:median_offline}}
For $\sigma\leq c\epsilon^{1/2}$, we consider two Gaussian distribution $F_1$ as the CDF of $N(0,1)$ and $F_2$ as the CDF of $N(3\epsilon,1)$. Then, $g(F_2)-g(F_1)=3\epsilon$ and $\DKL(F_1\|F_2)=O(\varepsilon^2)$. From the data-processing inequality, we have
\begin{equation*}
    \DKL(F_1*\mcN(0,\sigma^2) \| F_2*\mcN(0,\sigma^2))\leq \DKL(F_1 \| F_2)=O(\epsilon^2).
\end{equation*}

To show the lower bound $\Omega(\varepsilon^2)$, without loss of generality we assume that $F_1^{-1}(0.5)=0$ and $F_2^{-1}(0.5)\ge \varepsilon$. Proposition \ref{prop:cdf_diff} and the density lower bound in Assumption \ref{assum:med} show that
\begin{align*}
F_1*\varphi_{\sigma^2}(0) &\ge F_1(0) - \frac{c_2+1}{2}\sigma^2 = \frac{1}{2} - \frac{c_2+1}{2}\sigma^2, \\
F_2*\varphi_{\sigma^2}(0) &\le F_2(0) + \frac{c_2+1}{2}\sigma^2 \le F_2(\varepsilon) - c_1 \varepsilon + \frac{c_2+1}{2}\sigma^2 \le \frac{1}{2} - c_1 \varepsilon + \frac{c_2+1}{2}\sigma^2.
\end{align*}
Consequently, for $\sigma\le c\varepsilon^{1/2}$ with a small constant $c>0$, it holds that
\begin{align*}
F_1*\varphi_{\sigma^2}(0) - F_2*\varphi_{\sigma^2}(0) = \Omega(\varepsilon). 
\end{align*}
Therefore, Pinsker's inequality gives
\begin{align*}
     \DKL(F_1*\mcN(0,\sigma^2)\|F_2*\mcN(0,\sigma^2))&\geq \DTV(F_1*\mcN(0,\sigma^2)\|F_2*\mcN(0,\sigma^2))^2\\
     &\geq \left|F_2*\varphi_{\sigma^2}(0)- F_1*\varphi_{\sigma^2}(0)\right|^2
     =\Omega(\epsilon^2). 
 \end{align*}

For $\sigma\geq \epsilon^{1/2-\theta}$, let $F_1$ be uniform on $[-1,1]$. Then, $g(F_1)=0$. Clearly $F_1$ satisfies Assumption \ref{assum:med}. To construct $F_2$, we take the construction of $h(x)$ in Lemma \ref{lem:construct} with $k\ge 2\kappa/\theta$ and support on $[-\varepsilon_1, \varepsilon_1]$, with $\varepsilon_1 = b\varepsilon/c_2$. Here $b$ is the Lipschitz constant in Lemma \ref{lem:construct}, and $c_2$ is the smoothness constant in Assumption \ref{assum:med}. The density of $F_2$ is then taken to be
\begin{align*}
    p_{F_2}(x) = p_{F_1}(x) + \frac{c_2}{b}h(x). 
\end{align*}
As long as $\varepsilon$ is sufficiently small, we have $|p_{F_2}(x)-1/2| \le c_2/b \cdot b\sqrt{\varepsilon_1} \le 1/4$ everywhere on $x\in [-1,1]$. In other words, $p_{F_2}(x)\in [1/4, 3/4]$ on its support. Moreover, $p_{F_2}'(x)\le c_2/b\cdot b=c_2$. This shows that $F_2$ satisfies Assumption \ref{assum:med} as well. 

We first show that the median difference between $F_1$ and $F_2$ is at least $\varepsilon$. In fact, by the density upper bound $p_{F_2}(x)\le 3/4$, we have
\begin{align*}
g(F_2) \ge \frac{4}{3}\left(\frac{1}{2}-F_2(0)\right) = -\frac{4}{3}\cdot \frac{c_2}{b}\int_{-\varepsilon_1}^0 h(x)dx = \frac{4\varepsilon}{3} > \varepsilon. 
\end{align*}

Next we upper bound the KL divergence between Gaussian convolutions. By choosing $\epsilon$ sufficiently small, we have $\epsilon_1\leq \epsilon^{1-\theta}$. From Lemma \ref{lem:pointwise} and the property of $h(x)$, we immediately have
 \begin{align*}
     &\DKL(F_1*\mcN(0,\sigma^2)\|F_2*\mcN(0,\sigma^2))\\
     \leq&
     \max_{x\in\mbR}\frac{\log (p_{F_1}*\varphi_{\sigma^2}(x))}{\log(p_{F_2}*\varphi_{\sigma^2}(x))}\\
     =&O\left( \left(\frac{\sqrt{\varepsilon_1}}{\sigma}\right)^{k+2} \right) =O(\epsilon^{k\theta/2})=O(\epsilon^\kappa).
 \end{align*}
% \begin{equation*}
% \begin{aligned}
%     &4\DTV(p_{F_1}*\mcN(0,\sigma^2),p_{F_2}*\mcN(0,\sigma^2))^2\\
%     \leq& \sum_{j\geq k+1} \frac{\sigma^{-2(2j-1)}(\Delta m_{2j-1})^2}{(2j-1)!}\\
%     \leq& b^2\sum_{j\geq k+1} \frac{\sigma^{-4j+2}\tilde \epsilon^{2j+1}}{(2j-1)!}\\
%     \leq& b^2\sum_{j\geq k+1} \frac{\sigma^{-4j+2}(\epsilon^{1+\theta/2})^{2j+1}}{(2j-1)!}\\
%     \leq&O((\epsilon^{1-\theta/2}/\sigma^2)^{2k+1})\leq O(\epsilon^6).
% \end{aligned}
% \end{equation*}
% This implies that $\DTV(F_1*\mcN(0,\sigma^2),F_2*\mcN(0,\sigma^2))\leq O(\epsilon^3)$.

% We introduce the Bretagnolle–Huber inequality \citep{bretagnolle1978estimation} to lower bound the probability of error.
% \begin{lem}[Bretagnolle–Huber inequality]\label{lem:bh_inequ}
% Let $P$ and $Q$ be probability measures on the same measurable space $(\Omega, \mcF)$ and let $A\in \mcF$ be an arbitrary event. Then,
% \begin{equation*}
%     P(A)+Q(A^c)\geq \frac{1}{2}\exp(-\DKL(P,Q)).
% \end{equation*}
% Here $A^c$ is the complement of the event $A$.
% \end{lem}
% In spite of the KL divergence, we can upper bound the TV distance between the Cartesian products of the densities by the TV distance between the densities. Indeed, from the sub-additivity of total variation distance, we have
% \begin{equation*}
%     \DTV(p_{F_1}^{\otimes n},p_{F_2}^{\otimes n})\leq n \DTV(p_{F_1}, p_{F_2}).
% \end{equation*}
% This implies that to distinguish $F_1$ and $F_2$ with probability at least $1-\delta$, we require at least $n\geq \frac{1-\delta}{p_{F_1}, p_{F_2})}$ samples. 

\subsection{Proof of Lemma \ref{lemma:median_online}}
We construct the same pair of distributions $(F_1, F_2)$ as in Lemma \ref{lemma:median_offline}, and it suffices to prove that when $\sigma\le c\varepsilon^{1/2}$, we have
\begin{align*}
\DKL(F_2*\mcN(0,\sigma^2) \| F_1*\mcN(0,\sigma^2)) = \Theta(\varepsilon^{1.5}). 
\end{align*}

For the upper bound, we simply use the data-processing inequality: 
\begin{align*}
\DKL(F_2*\mcN(0,\sigma^2) \| F_1*\mcN(0,\sigma^2))&\le \DKL(F_2 \| F_1) \le \chi^2(F_2 \| F_1) \\
&= \int_{-1}^1 \frac{(p_{F_1}(x) - p_{F_2}(x))^2}{p_{F_1}(x)}dx \\
&\le \left(\frac{c_2}{b}\right)^2 \int_{-1}^1 \frac{h(x)^2}{1/4}dx \\
&\le 4\left(\frac{c_2}{b}\right)^2\cdot \int_{-\sqrt{\varepsilon_1}}^{\sqrt{\varepsilon_1}} (b\sqrt{\varepsilon_1})^2 dx = O(\varepsilon^{1.5}). 
\end{align*}

For the lower bound, the same proof of Lemma \ref{lemma:median_offline} shows that $\DTV(F_1*\mcN(0,\sigma^2), F_2*\mcN(0,\sigma^2))=\Omega(\varepsilon)$. A na\"ive application of Pinsker's inequality only leads to an $\Omega(\varepsilon^2)$ lower bound on the KL divergence. A better lower bound is obtained by noticing that the signed measure $(F_1 - F_2)*\mcN(0,\sigma^2)$ is effectively supported on $[-\Theta(\sqrt{\varepsilon_1}), \Theta(\sqrt{\varepsilon_1})]$. 

To this end, recall from the proof of Lemma \ref{lemma:median_offline} that
\begin{align*}
F_1*\mcN(0,\sigma^2) (0) - F_2*\mcN(0,\sigma^2) (0) = \Omega(\varepsilon).
\end{align*}
On the other hand, Proposition \ref{prop:cdf_diff} tells that
\begin{align*}
|F_1*\mcN(0,\sigma^2) (-\sqrt{\varepsilon_1}) - F_2*\mcN(0,\sigma^2) (-\sqrt{\varepsilon_1}) | &\le |F_1 (-\sqrt{\varepsilon_1}) - F_2 (-\sqrt{\varepsilon_1}) | + (c_2+1)\sigma^2 \\
&= (c_2+1)\sigma^2.
\end{align*}
Therefore, for $\sigma\le c\varepsilon^{1/2}$ with a small enough $c>0$, we have
\begin{align*}
&\int_{-\sqrt{\varepsilon_1}}^0 |p_{F_1}*\mcN(0,\sigma^2)(x) -  p_{F_2}*\mcN(0,\sigma^2)(x)|dx \\
\ge& |F_1*\mcN(0,\sigma^2) (0) - F_1*\mcN(0,\sigma^2) (-\sqrt{\varepsilon_1}) - (F_2*\mcN(0,\sigma^2) (0)-F_2*\mcN(0,\sigma^2) (-\sqrt{\varepsilon_1}))| \\
\ge& F_1*\mcN(0,\sigma^2) (0) - F_2*\mcN(0,\sigma^2) (0) - |F_1*\mcN(0,\sigma^2) (-\sqrt{\varepsilon_1}) - F_2*\mcN(0,\sigma^2) (-\sqrt{\varepsilon_1}) | \\
= &\Omega(\varepsilon). 
\end{align*}

Let $p(x)$ and $q(x)$ be the shorthands of $p_{F_2}*\mcN(0,\sigma^2)(x)$ and $p_{F_1}*\mcN(0,\sigma^2)(x)$, respectively. The KL-divergence can be lower bounded as follows:
\begin{align*}
\DKL(F_2*\mcN(0,\sigma^2) \| F_1*\mcN(0,\sigma^2)&= \int_{-\infty}^\infty p(x)\log\frac{p(x)}{q(x)}dx \\
&= \int_{-\infty}^\infty \left(p(x)\log\frac{p(x)}{q(x)} - p(x) + q(x)\right)dx \\
&\stepa{\ge} \int_{-\sqrt{\varepsilon_1}}^0 \left(p(x)\log\frac{p(x)}{q(x)} - p(x) + q(x)\right)dx \\
&\stepb{\ge} \Omega(1)\cdot \int_{-\sqrt{\varepsilon_1}}^0 (p(x)-q(x))^2 dx \\
&\stepc{\ge} \Omega(1)\cdot \frac{1}{\sqrt{\varepsilon_1}}\cdot \left(\int_{-\sqrt{\varepsilon_1}}^0|p(x)-q(x)|dx \right)^2 \\
&= \Omega(\varepsilon^{1.5}),
\end{align*}
where
\begin{itemize}
    \item (a) is due to the non-negativity of $a\log(a/b)-a+b\ge 0$; 
    \item (b) follows from $a\log(a/b)-a+b\asymp (a-b)^2$ whenever $a,b = \Theta(1)$. The latter follows from $|p(x)-p_{F_1}(x)| =O(\sigma)=O(1)$ from Proposition \ref{prop:cdf_diff}, and similarly for $q(x)$; 
    \item (c) makes use of the Cauchy-Schwarz inequality. 
\end{itemize}
This completes the proof. 

\subsection{Proof of Theorem \ref{thm:lowerbound_median_online}}
% We restate Theorem \ref{thm:lowerbound_median_online} as follows.
From  Le Cam’s two-point lower bound, it is sufficient to show that the following proposition holds.
\begin{prop}\label{prop:med_ada_exm}
Suppose that $\epsilon>0$. Let $\mcF$ denote the set of distributions satisfying Assumption \ref{assum:med}. Consider an online algorithm $\pi$ with a fixed budget $t$ which outputs $\hat{G}$. Given the distribution $F\in \mcF$ of the underlying arms and the algorithm $\pi$, let $p_{\pi,F}(\{(a_i,y_i)\}_{i=1}^t)$ denote the distribution of the action-observation pairs up to the $t$-th iteration. Then, for any $\theta\in(0,1/4)$, there exist $F_1,F_2\in\mcF$ with median $g(F_1)$ and $g(F_2)$ such that $|g(F_1)-g(F_2)|\geq \epsilon$ and %if $\DKL(p_{\pi,\nu},\p_{\pi,\hat{\nu}})\leq O(1)$, then $t\geq O\pp{\epsilon^{-2.5}}$.
\begin{equation*}
    \DKL(p_{\pi,F_1} \| p_{\pi,F_2})\leq O(\epsilon^{2.5-\theta}t). 
\end{equation*}
\end{prop}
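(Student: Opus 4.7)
The plan is to apply Le Cam's two-point method with the pair $(F_1, F_2)$ furnished by Lemma~\ref{lemma:median_online}, invoked with parameter $\theta/2$ (in place of $\theta$) and with an integer $\kappa > 5/2$. This produces distributions in $\mcF$ with $|g(F_1)-g(F_2)| \ge 2\varepsilon$ for which $\DKL(F_1 * \mcN(0,\sigma^2) \| F_2 * \mcN(0,\sigma^2)) \le C_2\varepsilon^{3/2}$ when $\sigma \le c\varepsilon^{1/2}$, and $\le C(\theta/2,\kappa)\varepsilon^\kappa$ when $\sigma \ge \varepsilon^{1/2-\theta/2}$. For $\sigma$ in the uncovered intermediate range, the data-processing inequality combined with a direct $\chi^2$-style estimate on the explicit $(F_1, F_2)$ construction yields the uniform bound $\DKL(F_1*\mcN(0,\sigma^2) \| F_2*\mcN(0,\sigma^2)) \le \DKL(F_1\|F_2) = O(\varepsilon^{3/2})$, identical in scaling to the low-noise regime.

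The crucial step is a per-arm KL decomposition for the online algorithm. Using the chain rule for KL divergence (the algorithm-dependent action probabilities cancel) together with the independence of the arm latents $\{X_a\}$ under each hypothesis, the log-likelihood ratio of the observation-action sequence telescopes into a sum of arm-wise contributions. For arm $a$ pulled $N(a)$ times, the contribution depends on the observations only through the empirical mean $\bar Y_a$ (the sufficient statistic for the Gaussian location family) and matches the KL divergence between $F_1 * \mcN(0,1/N(a))$ and $F_2 * \mcN(0,1/N(a))$. Taking expectations yields
\begin{equation*}
\DKL(p_{\pi,F_1} \| p_{\pi,F_2}) \le \sum_a \mbE_{F_1,\pi}\!\left[\DKL\!\left(F_1 * \mcN(0,1/N(a)) \,\big\|\, F_2 * \mcN(0,1/N(a))\right)\right].
\end{equation*}

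With this decomposition in hand, set $T^* := \lceil \varepsilon^{-1+\theta}\rceil$ and split the arms by pull count. For arms with $N(a) > T^*$, the effective noise $1/\sqrt{N(a)}$ falls below $\varepsilon^{1/2-\theta/2}$, so each per-arm KL is bounded by $O(\varepsilon^{3/2})$ via the uniform estimate; since each such arm consumes at least $T^*$ pulls, there are at most $t/T^* \le t\varepsilon^{1-\theta}$ of them, contributing $O(t\varepsilon^{5/2-\theta})$ in total. For arms with $N(a) \le T^*$, we have $1/\sqrt{N(a)} \ge \varepsilon^{1/2-\theta/2}$ and the high-noise branch of Lemma~\ref{lemma:median_online} gives per-arm KL at most $O(\varepsilon^\kappa)$; since at most $t$ distinct arms are ever pulled, their combined contribution is $O(t\varepsilon^\kappa)$, which is negligible once $\kappa > 5/2$. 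Summing the two parts yields $\DKL(p_{\pi,F_1} \| p_{\pi,F_2}) = O(t\varepsilon^{5/2-\theta})$, after which Le Cam's inequality closes the proof of Proposition~\ref{prop:med_ada_exm} and hence of Theorem~\ref{thm:lowerbound_median_online}.

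The main technical obstacle lies in rigorously justifying the per-arm decomposition: because $N(a)$ is a random quantity shaped by the algorithm's adaptive behavior, and its distribution depends on the underlying hypothesis through the observed history, the naive factorization used in the offline Wasserstein argument does not apply verbatim. The remedy is to work at the level of individual pulls, telescoping the per-step log-likelihood-ratio increments over all pulls of a given arm, and then invoking the sufficient-statistic property of Gaussian observations to identify the accumulated log-ratio with the convolution-based KL at noise level $1/N(a)$; this identification is what converts the adaptive, time-indexed chain rule into the clean per-arm sum to which the thresholding bounds of Lemma~\ref{lemma:median_online} can be applied.
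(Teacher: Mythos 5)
Your proposal follows the same blueprint as the paper: identical choice of $(F_1,F_2)$ from Lemma~\ref{lemma:median_online}, a per-arm decomposition of the online KL divergence, a split at pull-count threshold $\varepsilon^{-1+\theta}$, and the deterministic counting argument $\sum_a N(a)=t$ to bound the number of heavily-pulled arms by $t\varepsilon^{1-\theta}$. The final arithmetic and the choice $\kappa>5/2$ also match. So the overall structure is right.

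The one place where the argument is not yet tight is the claimed per-arm inequality
\begin{equation*}
\DKL(p_{\pi,F_1} \| p_{\pi,F_2}) \le \sum_a \E_{F_1,\pi}\!\left[\DKL\!\left(F_1 * \mcN(0,1/N(a)) \,\big\|\, F_2 * \mcN(0,1/N(a))\right)\right].
\end{equation*}
You correctly flag that the adaptivity of $N(a)$ is the obstacle, but the inequality as stated does not follow from the sketch you give. The problem is that, conditional on $N(a)=n$, the distribution of $\bar Y_a$ is \emph{not} $F_i*\mcN(0,1/n)$ -- the decision to stop pulling arm $a$ after $n$ samples is itself a function of those samples, so there is selection bias in the conditional law of $\bar Y_a$. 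Consequently one cannot simply recognize the accumulated per-arm log-ratio as the KL at noise level $1/N(a)$ and take an expectation over $N(a)$. The paper sidesteps this precisely by not asserting your inequality: it establishes Lemma~\ref{lem:log_sum}, the pointwise log-sum inequality $(p*K)\log\frac{p*K}{q*K}\le\left(p\log\frac{p}{q}\right)*K$, and applies it \emph{inside} the raw integral for the $j$-th arm's contribution. After Fubini, the Gaussian kernel integrates out to one, leaving the unconditional bound $\DKL(F_1\|F_2)=O(\varepsilon^{3/2})$ per arm -- without ever needing to condition on $N(j)$. Meanwhile the small-$n_j$ case is handled by a genuinely pointwise bound on the log-ratio (the second branch of Proposition~\ref{prop:med_ada_exm_prep}), which is insensitive to the conditioning issue. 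Your proof would be complete once you replace the stated expectation inequality with this log-sum argument (or an equivalent one that handles the selection bias); everything else carries over.
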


% Then, we continue with the proof of Theorem \ref{thm:lb_ada_med}.
% \begin{proof}
% Consider two distributions $F_1$ and $F_2$ defined in Proposition \ref{prop:med_ada_exm} with median $g(F_1)$ and $g(F_2)$, where $g(F_2)=g(F_1)+2.1\epsilon$. According to Proposition \ref{prop:med_ada_exm}, we note that
% \begin{equation*}
% \DKL(p_{\pi,F_1},p_{\pi,F_2}) \leq C t\epsilon^{2.5-\theta}.    
% \end{equation*}
% where $C>0$ is a constant. Consider the event $E_1=\{|\hat{G}-g(F_1)|\leq \epsilon\}$ and $E_2=\{|\hat{G}-g(F_2)|\leq \epsilon\}$. According to Lemma \ref{lem:bh_inequ}, we note that
% \begin{equation*}
%     P_{p_{\pi,F_1}}(E^c)+P_{p_{\pi,F_2}}(E)\geq \frac{1}{2}\exp(-\DKL(p_{\pi,F_1},p_{\pi,F_2})) \geq \frac{1}{2} \exp(-c t\epsilon^{2.5})
% \end{equation*}
% Therefore, we can lower bound the probability of error by
% \begin{equation*}
% \begin{aligned}
%     &\max\{P_{p_{\pi,F_1}}(E_1^c), P_{p_{\pi,F_2}}(E_2^c)\}\\
% \geq&\frac{1}{2}\pp{P_{p_{\pi,F_1}}(E_1^c)+P_{p_{\pi,F_2}}(E_2^c)}\\
% \geq & \frac{1}{2}\pp{P_{p_{\pi,F_1}}(E_1^c)+P_{p_{\pi,F_2}}(E_2)}\\
% \geq &\frac{1}{4} \exp(-c t\epsilon^{2.5-\theta}).
% \end{aligned}
% \end{equation*}
% This completes the proof. 
% \end{proof}

% \subsection{Proof of Proposition \ref{prop:med_ada_exm}}
We start with a general log-sum inequality.
\begin{lem}\label{lem:log_sum}
Suppose that $p,q,K$ are probability density functions. Then, we have the following inequality:
\begin{equation*}
    (p*K)\log\frac{p*K}{q*K}\leq \pp{p\log\frac{p}{q}}*K.
\end{equation*}
\end{lem}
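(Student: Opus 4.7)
The plan is to establish the inequality pointwise for each $x \in \mathbb{R}$, reducing it to the integral version of the log-sum inequality (equivalently, Jensen's inequality for the convex function $\phi(t) = t\log t$). This is the natural continuous analogue of the classical log-sum inequality, and the discrete version $\sum_i a_i \log(a_i/b_i) \geq (\sum_i a_i) \log(\sum_i a_i / \sum_i b_i)$ already exhibits the shape of the target estimate, so the main work is just translating it to convolutions.

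First, I would fix $x \in \mathbb{R}$ and introduce the non-negative functions $a(y) := p(y) K(x-y)$ and $b(y) := q(y) K(x-y)$. The crucial observation is that at any $y$ with $K(x-y) > 0$, the ratio $a(y)/b(y)$ coincides with $p(y)/q(y)$. Therefore
\begin{equation*}
(p*K)(x) = \int a(y)\, dy, \quad (q*K)(x) = \int b(y)\, dy, \quad \Big((p\log(p/q))*K\Big)(x) = \int a(y)\log\frac{a(y)}{b(y)}\, dy,
\end{equation*}
so the claim becomes: for non-negative measurable $a,b$ with finite integrals,
\begin{equation*}
\left(\int a\, dy\right)\log\frac{\int a\, dy}{\int b\, dy} \;\leq\; \int a\log\frac{a}{b}\, dy.
\end{equation*}

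Second, I would prove this by rewriting the right-hand side as an expectation and invoking Jensen's inequality. Setting $\nu := \int b\, dy$ and defining the probability measure $d\mu(y) := b(y)\, dy / \nu$, we have $\int a\log(a/b)\, dy = \nu \int \phi(a/b)\, d\mu$ with $\phi(t)=t\log t$ convex. Jensen gives $\int \phi(a/b)\, d\mu \geq \phi\bigl(\int (a/b)\, d\mu\bigr) = \phi(\int a\, dy / \nu)$, which rearranges to exactly the displayed inequality.

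The only delicate point is handling the degenerate sets $\{y : K(x-y) = 0\}$, $\{y : q(y) = 0\}$, and the subset where $p(y) > 0$ but $q(y) = 0$. Under the standard conventions $0\log 0 = 0$ and $t\log(t/0) = +\infty$ for $t > 0$, the vanishing-kernel set contributes nothing on either side, while any singular set where $p > 0 = q$ makes the right-hand side infinite and the inequality trivial. So there is no real obstacle; the argument is essentially a direct application of convexity once the pointwise reformulation is made.
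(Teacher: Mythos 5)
Your proof is correct and takes essentially the same route as the paper: both apply Jensen's inequality to the convex function $t \mapsto t\log t$ with respect to the probability measure $q(y)K(x-y)\,dy/(q*K)(x)$, evaluated at $p/q$. Your framing via the continuous log-sum inequality and your explicit treatment of the degenerate sets where $K$ or $q$ vanish are minor additions, but the underlying computation is identical to the paper's.
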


Then, we observe that the pair of distributions $(F_1, F_2)$ constructed in the proof of Lemma \ref{lemma:median_online} satisfies the following property. 
\begin{prop}\label{prop:med_ada_exm_prep}
Suppose that $\theta>0$ is a given constant. Let $\mcF$ denote the set of distributions satisfying Assumption \ref{assum:med}. Then, there exists two distribution $F_1,F_2\in\mcF$ with median $g(F_1)$ and $g(F_2)$ such that $g(F_2)=g(F_1)+\epsilon$ and they satisfy that 
$$
\DKL(F_1\|F_2)=O(\epsilon^{1.5}),
$$
Denote $\varphi_{\sigma^2}$ as the pdf of $\mcN(0,\sigma^2)$. For sufficiently small $\epsilon$ and for $\sigma$ satisfying $\sigma^2\geq \epsilon^{1-\theta}$, we further have
\begin{equation*}
    \left|\log \frac{p_{F_1}*\varphi_{\sigma^2}(x)}{p_{F_2}*\varphi_{\sigma^2}(x)}\right|\leq O(\epsilon^3).
\end{equation*}

\end{prop}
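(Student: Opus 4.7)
The plan is to reuse the $(F_1, F_2)$ construction from the proof of Lemma \ref{lemma:median_online}, tuning the matched-moment order $k$ in Lemma \ref{lem:construct} according to $\theta$. Specifically, I take $F_1$ to be uniform on $[-1,1]$ and set $p_{F_2}(x) = p_{F_1}(x) + (c_2/b)\,h(x)$, where $h$ is obtained from Lemma \ref{lem:construct} with scale parameter $\epsilon_1 = b\varepsilon/c_2$ (so $h$ is supported on $[-\sqrt{\epsilon_1},\sqrt{\epsilon_1}]$, satisfies $|h|\le b\sqrt{\epsilon_1}$, and has matched moments up to a sufficiently high order $k=k(\theta)$ chosen below). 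Membership $F_i\in \mathcal{F}$ and the median separation $g(F_2) - g(F_1) \ge \varepsilon$ were already verified in the proof of Lemma \ref{lemma:median_online} (after a minor rescaling of $h$ to hit exactly $\varepsilon$ if desired).

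For the unconditional KL bound, I would repeat the $\chi^2$ computation from the proof of Lemma \ref{lemma:median_online}: since $p_{F_1}\ge 1/4$ on its support while $|p_{F_2}-p_{F_1}|\le c_2\sqrt{\epsilon_1}=\Theta(\sqrt{\varepsilon})$ on a set of Lebesgue measure $O(\sqrt{\varepsilon})$,
\[
\DKL(F_1\|F_2) \le \chi^2(F_1\|F_2) = O\bigl(\sqrt{\varepsilon}\cdot \varepsilon\bigr) = O(\varepsilon^{3/2}).
\]

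For the pointwise log-ratio bound under $\sigma^2\ge \varepsilon^{1-\theta}$, I would invoke Lemma \ref{lem:pointwise} with the ``$\varepsilon$'' parameter there equal to $\Theta(\sqrt{\varepsilon})$ (from the sup-norm of $p_{F_1}-p_{F_2}$), $\zeta=\sqrt{\epsilon_1}=\Theta(\sqrt{\varepsilon})$, and the $k$ matched moments guaranteed by construction. The lemma delivers
\[
\left|\log\frac{p_{F_1}*\varphi_{\sigma^2}(x)}{p_{F_2}*\varphi_{\sigma^2}(x)}\right| \le C\sqrt{\varepsilon}\left(\frac{\sqrt{\varepsilon}}{\sigma}\right)^{k+2} \le C\,\varepsilon^{1/2+\theta(k+2)/2}.
\]
Choosing $k\ge 5/\theta - 2$ (achievable from Lemma \ref{lem:construct} by taking the internal order there to be at least $\lceil 5/(2\theta)\rceil -1$) renders the right-hand side $O(\varepsilon^3)$, as required. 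For the degenerate regime $\sigma>1/2$ outside the scope of Lemma \ref{lem:pointwise}, the data-processing inequality combined with the unconvolved KL bound already forces the ratio to be even smaller, so that case is handled separately in a single line.

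The main obstacle is bookkeeping across three scales -- the sup-norm of $p_{F_1}-p_{F_2}$, the support radius $\zeta$ of $h$, and the moment-matching order $k$ -- and verifying that the hypotheses of Lemma \ref{lem:pointwise} (in particular $p_{F_1},p_{F_2}\ge 1/4$ on $[-1,1]$ and vanishing moments of the difference up to order $k$) hold throughout. Once these are tracked, the selection $k=k(\theta)$ making $1/2+\theta(k+2)/2 \ge 3$ is routine.
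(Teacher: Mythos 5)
Your proposal is correct and follows essentially the same approach as the paper, which states this proposition without proof, remarking only that the pair $(F_1, F_2)$ constructed in the proof of Lemma~\ref{lemma:median_online} has the required properties. You fill in precisely the details one would infer from the parallel treatment of the trimmed mean in Proposition~\ref{prop:tm_ada_exm_prep}: the $\chi^2$ bound from Lemma~\ref{lemma:median_online}'s proof gives $\DKL(F_1\|F_2)=O(\epsilon^{3/2})$, and Lemma~\ref{lem:pointwise} applied with sup-norm parameter $\Theta(\sqrt{\epsilon})$ and support radius $\zeta=\Theta(\sqrt{\epsilon})$ gives the pointwise log-ratio bound $O(\epsilon^{1/2+(k+2)\theta/2})$, which is $O(\epsilon^3)$ once the moment-matching order is taken large enough relative to $1/\theta$.

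One inaccuracy worth flagging: your one-line dismissal of the regime $\sigma>1/2$ via ``data-processing plus the unconvolved KL bound'' does not actually work, because a KL bound between two measures says nothing about the pointwise magnitude of the log-likelihood ratio, which is what the proposition demands. This regime does arise in the downstream use (Proposition~\ref{prop:med_ada_exm} sets $\sigma=1/\sqrt{n_j}$, so $n_j\in\{1,2,3\}$ gives $\sigma>1/2$). The correct fix is to observe that the numerator control in Lemma~\ref{lem:pointwise} (the Hermite expansion) is valid for any $\sigma$, and the denominator lower bound $q*\varphi_{\sigma^2}(x)\gtrsim \sigma\,\varphi_{\sigma^2}(|x|-\sigma)$ still holds for $\sigma\le 1$ after replacing the ``$\sigma\le y\cdot\mathrm{sign}(x)\le 2\sigma$'' window by a narrower sub-interval contained in $[-1,1]$, changing only the constant. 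This is a cosmetic adjustment that the paper itself elides, so it does not undermine the proposal; just replace your justification for that case with the extension of Lemma~\ref{lem:pointwise} rather than appealing to data-processing.
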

We then continue with the proof of Proposition \ref{prop:med_ada_exm}.
\begin{proof}
Consider two densities defined in Proposition \ref{prop:med_ada_exm_prep} with the parameter $\theta$. Suppose that $x_1,\dots,x_t$ are i.i.d. samples from either $F_1$ or $F_2$. Then, we note that $a_i\in\mbN$ for $i\in[t]$ and $y_i\sim \mcN(x_{a_i},1)$ for $i\in[t]$. For simplicity, we write $a^t=(a_1,\dots,a_t)$ and $y^t=(y_1,\dots,y_t)$. Hence, we can write the probability distribution of $(a^t,y^t)$ as follows
\begin{equation*}
\begin{aligned}
p_{\pi,F_1}(a^t,y^t)&=\mbE_{\{x_i\}_{i=1}^t\sim F_1}\bb{\prod_{i=1}^t\pp{p_{\pi}(a_i|a^{i-1},y^{i-1}) \frac{1}{\sqrt{2\pi}}\exp\pp{-\frac{1}{2}(y_i-x_{a_i})^2}}}\\
&=\prod_{i=1}^tp_{\pi}(a_i|a^{i-1},y^{i-1})\prod_{j\in \mathbb{N}}\mbE_{x_j\sim F_1}\bb{\prod_{i\leq t, a_i=j}\frac{1}{\sqrt{2\pi}}\exp\pp{-\frac{1}{2}(y_i-x_j)^2}}.
\end{aligned}
\end{equation*}
Let we write $n_j=\sum_{i\leq t}\mathbbm{1}(a_i=j)$ and $\bar{y}_j=\frac{1}{n_j}\sum_{i\leq t, a_i=j}y_i$. Then we can write
\begin{equation*}
\mbE_{x_j\sim F_1}\left[\prod_{i\leq t, a_i=j}\frac{1}{\sqrt{2\pi}}\exp\pp{-\frac{1}{2}(y_i-x_j)^2}\right]=\pp{p_{F_1}*K_j(\cdot, \bar{y}_j)}\cdot f_j(\{y_i\}_{a_i=j}),
\end{equation*}
Here we denote
$$
K_j(x,y)=(2\pi)^{-n_j/2}\exp(-n_j(x-y)^2/2)),
$$
and 
$$
f_j(\{y_i\}_{a_i=j})=\exp\pp{n_j\bar{y}_j^2/2-\sum_{i:a_i=j}y_j^2}.
$$
Therefore, we can write the log-likelihood ratio as
\begin{equation*}
\begin{aligned}
    \log\frac{p_{\pi,F_1}(a^t,y^t)}{p_{\pi,F_2}(a^t,y^t)}=&\sum_{j\in\mbN}\log\frac{\mbE_{x_j\sim F_1}\bb{\prod_{i\leq t, a_i=j}\frac{1}{\sqrt{2\pi}}\exp\pp{-\frac{1}{2}(y_i-x_j)^2}}}{\mbE_{x_j\sim F_2}\bb{\prod_{i\leq t, a_i=j}\frac{1}{\sqrt{2\pi}}\exp\pp{-\frac{1}{2}(y_i-x_j)^2}}}\\
    =&\sum_{j\in\mbN}\log\frac{\mbE_{x_j\sim F_1}\bb{\exp\pp{-\frac{n_j}2{(\bar{y}_j-x_j)}}}}{\mbE_{x_j\sim F_2}\bb{\exp\pp{-\frac{n_j}2{(\bar{ y}_j-x_j)}}}}.
\end{aligned}
\end{equation*}
Then, we can compute that
\begin{equation*}
\begin{aligned}
    \DKL(p_{\pi,F_1}\|p_{\pi,F_2})
    &=\int \sum_{a^T}p_{\pi,F_1}(a^T,y^T)\log\frac{p_{\pi,F_1}(a^t,y^t)}{p_{\pi,F_2}(a^t,y^t)}dy^t\\
    &=\sum_{j\in\mbN}\int \sum_{a^T} \prod_{i=1}^t p_{\pi}(a_i|a^{i-1},y^{i-1}) \prod_{k\in\mbN}\mbE_{x_k\sim F_1}\bb{\prod_{i:a_i=k}\frac{1}{\sqrt{2\pi}}\exp\pp{-\frac{1}{2}(y_i-x_k)^2}} \\
    &\qquad\times \log\frac{\mbE_{x_j\sim F_1}\bb{\exp\pp{-\frac{n_j}2{(\bar{y}_j-x_j)}}}}{\mbE_{x_j\sim F_2}\bb{\exp\pp{-\frac{n_j}2{(\bar{ y}_j-x_j)}}}}dy^t.
\end{aligned}
\end{equation*}
%Let $\theta\in(0,1/2)$ be a parameter. 
For $n_j\leq \epsilon^{\theta-1}$, from Proposition \ref{prop:med_ada_exm_prep}, we have
$$
\left|\log\frac{F_1*\mcN(0,1/n_i)}{F_2*\mcN(0,1/n_i)}(\bar{y}_j)\right|\leq O(\epsilon^{3}).
$$
On the other hand, for $n_j\geq \epsilon^{\theta-1}$, by utilizing Lemma \ref{lem:log_sum}, we note that
\begin{equation*}
\begin{aligned}
&\int \sum_{a^T} \prod_{i=1}^t p_{\pi}(a_i|a^{i-1},y^{i-1}) \prod_{k\in\mbN}\mbE_{x_k\sim F_1}\bb{\prod_{i:a_i=k}\frac{1}{\sqrt{2\pi}}\exp\pp{-\frac{1}{2}(y_i-x_k)^2}} \\
&\times \log\frac{\mbE_{x_j\sim F_1}\bb{\exp\pp{-\frac{n_j}2{(\bar{y}_j-x_j)}}}}{\mbE_{x_j\sim F_2}\bb{\exp\pp{-\frac{n_j}2{(\bar{ y}_j-x_j)}}}}dy^t\\
=&\iint \sum_{a^T} \prod_{i=1}^t p_{\pi}(a_i|a^{i-1},y^{i-1}) \prod_{k\neq j}\mbE_{x_k\sim F_1}\bb{\prod_{i:a_i=k}\frac{1}{\sqrt{2\pi}}\exp\pp{-\frac{1}{2}(y_i-x_k)^2}}\\
&\times f(\{y_i\}_{a_i=j}) [p_{F_1}*K_j(\cdot,\bar{y}_j)](x_j) \log\frac{[p_{F_1}*K_j(\cdot,\bar{y}_j)](x_j)}{[p_{F_2}*K_j(\cdot,\bar{y}_j)](x_j)}dx_jdy^t\\
\leq &\iint \sum_{a^T} \prod_{i=1}^t p_{\pi}(a_i|a^{i-1},y^{i-1}) \prod_{k\neq j}\mbE_{x_k\sim F_1}\bb{\prod_{i:a_i=k}\frac{1}{\sqrt{2\pi}}\exp\pp{-\frac{1}{2}(y_i-x_k)^2}}\\
&\times  f_j(\{y_i\}_{a_i=j}) p_{F_1}(x_j)\log \frac{p_{F_1}(x_j)}{p_{F_2}(x_j)}K_j(x_j,\bar{y}_j)dx_j dy^t\\
=&\iint p_{F_1}(x_j)\log \frac{p_{F_1}(x_j)}{p_{F_2}(x_j)} \sum_{a^T}\prod_{i=1}^t p_{\pi}(a_i|a^{i-1},y^{i-1})\\
&\times \int \prod_{k\neq j}\mbE_{x_k\sim F_1}\bb{\prod_{i:a_i=k}\frac{1}{\sqrt{2\pi}}\exp\pp{-\frac{1}{2}(y_i-x_k)^2}}\\
&\times \bb{\prod_{i:a_i=j}\frac{1}{\sqrt{2\pi}}\exp\pp{-\frac{1}{2}(y_i-x_j)^2}} dx_jdy^t\\
= &\int p_{F_1}(x_j)\log \frac{p_{F_1}(x_j)}{p_{F_2}(x_j)} dx_j =O(\epsilon^{1.5}).
\end{aligned}
\end{equation*}
Therefore, we have
\begin{equation*}
    \DKL(p_{\pi,F_1}\|p_{\pi,F_2})\leq \sum_{i\in\mbN}\pp{ O(\epsilon^{1.5})\mbI(n_i\geq \epsilon^{\theta-1})+O(\epsilon^{3})\mbI(n_i\leq \epsilon^{\theta-1})}.
\end{equation*}
Note that
\begin{equation*}
    t=\sum_{i\in\mbN}n_i\geq \epsilon^{\theta-1}\sum_{i\in\mbN}\mbI(n_i\geq \epsilon^{\theta-1}), 
\end{equation*}
and
\begin{equation*}
    t\geq \sum_{i\in \mbN}\mbI(n_i\leq \epsilon^{\theta-1}).
\end{equation*}
The above inequalities imply that
\begin{equation*}
    \DKL(p_{\pi,F_1}\|p_{\pi,F_2})\leq O(\epsilon^{2.5-\theta}t).
\end{equation*}
This completes the proof.
\end{proof}

\subsection{Proof of Lemma \ref{lem:log_sum}}
\begin{proof}
We note that the function $f(x)=x\log x$ is strictly convex. Suppose that $x\in\mbR$. We note that
$$
\int \frac{q(y)K(x-y)}{(K*q)(x)} dy =1.
$$
By the Jensen's inequality, we have
\begin{equation*}
    \int f\pp{\frac{p(y)}{q(y)}} \frac{q(y)K(x-y)}{(K*q)(x)} dy\geq f\pp{\int  \frac{p(y)}{q(y)}\frac{q(y)K(x-y)}{(K*q)(x)} dy}.
\end{equation*}
This implies that 
\begin{equation*}
      \frac{\pp{K*\pp{p\log\frac{p}{q}}}(x)}{(K*q)(x)}\geq \frac{(p*K)(x)}{(q*K)(x)}\log \frac{(p*K)(x)}{(q*K)(x)}.
\end{equation*}
This completes the proof.

\end{proof}

\section{Proof of Lower Bounds for trimmed mean estimation}
\subsection{Proof of Lemma \ref{lemma:lowerbound_tm_offline}}
Firstly, for $\sigma\leq C\epsilon^{1/2}$, we consider two Gaussian distribution $F_1$ as the CDF of $N(0,1)$ and $F_2$ as the CDF of $N(3\epsilon,1)$. Then, $g(F_2)-g(F_1)=3\epsilon$ and $\DKL(F_1\|F_2)=\frac{9\epsilon^2}{2}$. From the data-processing inequality, we have
\begin{equation*}
    \DKL(F_1*\mcN(0,\sigma^2)\|F_2*\mcN(0,\sigma^2))\leq \DKL(F_1\|F_2)=O(\epsilon^2).
\end{equation*}
To show the lower bound $\Omega(\varepsilon^2)$, without loss of generality we assume that $\int_{F_1^{-1}(\alpha)}^{F_1^{-1}(1-\alpha)}xdF_1(x)=0$ and $\int_{F_2^{-1}(\alpha)}^{F_2^{-1}(1-\alpha)}xdF_2(x)\le -\varepsilon$. Lemma \ref{lem:tm_noise} and the density lower bound in Assumption \ref{assum:trim} show that
\begin{align*}
&\int_{(F_1*\varphi_{\sigma^2})^{-1}(\alpha)}^{(F_1*\varphi_{\sigma^2})^{-1}(1-\alpha)}xdF_1*\varphi_{\sigma^2}(x)\geq \int_{F_1^{-1}(\alpha)}^{F_1^{-1}(1-\alpha)}xdF_1(x) - \tilde C\sigma^2 = - \tilde C\sigma^2, \\
&\int_{(F_2*\varphi_{\sigma^2})^{-1}(\alpha)}^{(F_2*\varphi_{\sigma^2})^{-1}(1-\alpha)}xdF_2*\varphi_{\sigma^2}(x)\leq \int_{F_2^{-1}(\alpha)}^{F_1^{-1}(1-\alpha)}xdF_2(x) + \tilde C\sigma^2 \leq -\varepsilon- \tilde C\sigma^2,
\end{align*}
Here $\tilde C>0$ is a constant. Consequently, for $\sigma\le c\varepsilon^{1/2}$ with a small constant $c>0$, it holds that
\begin{align*}
\int_{(F_1*\varphi_{\sigma^2})^{-1}(\alpha)}^{(F_1*\varphi_{\sigma^2})^{-1}(1-\alpha)}xdF_1*\varphi_{\sigma^2}(x) - \int_{(F_2*\varphi_{\sigma^2})^{-1}(\alpha)}^{(F_2*\varphi_{\sigma^2})^{-1}(1-\alpha)}xdF_2*\varphi_{\sigma^2}(x) = \Omega(\varepsilon). 
\end{align*}
From the algorithm for trimmed mean estimation, we can distinguish $F_1*\varphi_{\sigma^2}$ and $F_2*\varphi_{\sigma^2}$ using $O(\epsilon^{-2})$ samples. This implies that $ \DKL(F_1*\mcN(0,\sigma^2) \| F_2*\mcN(0,\sigma^2)) \geq \Omega(\epsilon^2)$.

For $\sigma\geq \epsilon^{1-\theta}$, %let $F_1$ be uniform on $[-1,1]$. Then, $g(F_1)=0$. As $p_{F_1}$ is $c_2$-Lipshitz continuous in $[-1,1]$ and $p_{F_1}(x)=0.5$ for $x\in[-1,1]$, $F$ satisfies Assumption \ref{assum:med}. By taking $k\geq\frac{2\kappa}{\theta}$, we can construct $h(x)$ satisfying the conditions in Lemma \ref{lem:construct} with $\tilde \epsilon=2\frac{b}{c_2}\epsilon$. Let $F_2$ be the distribution with pdf  $p_{F_2}(x)=p_{F_1}(x)+\frac{c_2}{b}h(x)$.
let $F_1$ be uniform on $[1,2]$. Without the loss of generality, we may assume that $\sigma^2\geq 4$, $c_1\leq 0.5$, $c_3\geq 2$, $c_4\geq 2$, $c_5\leq 1$. Then, $F_1$ satisfies Assumption \ref{assum:trim}. By taking $k\geq\frac{2\kappa}{\theta}$, we can construct $h(x)$ satisfying the conditions in Lemma \ref{lem:construct} with $ \epsilon_1=4b'\epsilon$, where $b'=\max\{\frac{b}{c_2},4\}$. Consider the distribution $F_2$ with pdf $p_{F_2}(x)=p_{F_1}(x)+(b')^{-1}h(x-1-\alpha)$. By choosing $\epsilon$ sufficiently small such that $\sqrt{\epsilon_1}\leq \min\{\alpha,1-2\alpha\}$, the density function $p_{F_2}$ is supported in $[1,2]$ and $F_2^{-1}(1-\alpha)=2-\alpha$. 

As $|h(x)|\leq b\sqrt{{\epsilon_1}}$, for ${\epsilon_1}\leq \frac{1}{4c_2^2}$, we have $(b')^{-1}h(x)\leq \frac{c_2}{b}h(x)\leq c_2\tbd\leq \frac{1}{2}$. This implies that $p_{F_2}(x)\in[1/2,3/2]$ for $x\in[1,2]$. Therefore, $p_{F_2}$ is a density function.  Because $p_{F_1}'(x)=0$ for $x\in[1,2]$, $p_{F_2}(x)$ is $c_2$-Lipschitz continuous in $[1,2]$. 

Note that $F_2(1+\alpha)=F_1(1+\alpha)+(b')^{-1}\int_{-\infty}^0 h(x)dx= \alpha-4\epsilon$. Hence, it follows that
$$
4\epsilon=F_2(F_2^{-1}(\alpha))-F_2(1+\alpha)\leq 3/2 (F_2^{-1}(\alpha)-1-\alpha).
$$
This implies that  $F_2^{-1}(\alpha)\geq1+\alpha+ \frac{2}{3}\cdot 4\epsilon\geq1+\alpha+2\epsilon$. Therefore, 
\begin{equation*}
\begin{aligned}
\int_{F_2^{-1}(\alpha)}^{F_2^{-1}(1-\alpha)} xdF_2(x)&\leq \int_{1+\alpha+\epsilon}^{2-\alpha} xdF_2(x)= 2-2\alpha-2\epsilon-\int_{\epsilon}^{\sqrt{{\epsilon_1}}}(b')^{-1}h(x)dx\\
&\leq 2-2\alpha-2\epsilon+2\epsilon\sqrt{(b')^{-1}}\leq 2-2\alpha-\epsilon.
\end{aligned}
\end{equation*}
Here we utilize that $b'\geq 4$. Note that $\int_{F_1^{-1}(\alpha)}^{F^{-1}(1-\alpha)}xdF_1(x)=2-2\alpha$. This implies that
\begin{equation*}
    g(F_2)-g(F_1)\leq -\epsilon.
\end{equation*}

By choosing $\epsilon$ sufficiently small, we have $ \epsilon_1\leq \epsilon^{1-\theta/2}$. From Lemma \ref{lem:pointwise} and the property of $h(x)$, we immediately have
 \begin{align*}
     &\DKL(F_1*\mcN(0,\sigma^2)\|F_2*\mcN(0,\sigma^2))
    %  =&\int \frac{\log (p_{F_1}*\varphi_{\sigma^2}(x))}{\log(p_{F_2}*\varphi_{\sigma^2}(x))}p_{F_1}*\varphi_{\sigma^2}(x)dx\\
     \leq \max_{x\in\mbR}\frac{\log (p_{F_1}*\varphi_{\sigma^2}(x))}{\log(p_{F_2}*\varphi_{\sigma^2}(x))}\\
     \leq&O(1)\cdot \frac{b\sqrt{{\epsilon_1}} \sqrt{{\epsilon_1}}^{2k+2}}{\sigma^{2k+2}}=O(\epsilon^{(2k+1)\theta/2})=O(\epsilon^\kappa).
 \end{align*}
 This completes the proof.

\subsection{Proof of Lemma \ref{lemma:tm_online}}
We construct the same pair of $F_1$ and $F_2$ as in Lemma \ref{lemma:lowerbound_tm_offline}. It is sufficient to prove that when $\sigma\le c\varepsilon^{1/2}$,
\begin{align*}
\DKL(F_2*\mcN(0,\sigma^2) \| F_1*\mcN(0,\sigma^2)) = \Theta(\varepsilon^{1.5}). 
\end{align*}
For the upper bound, we simply use the data-processing inequality: 
\begin{align*}
\DKL(F_2*\mcN(0,\sigma^2) \| F_1*\mcN(0,\sigma^2))&\le \DKL(F_2 \| F_1) \le \chi^2(F_1 \| F_2) \\
&= \int_1^2 \frac{(p_{F_1}(x) - p_{F_2}(x))^2}{p_{F_1}(x)}dx \\
&\le \left(b'\right)^2 \int_{-\sqrt{\varepsilon_1}}^{\sqrt{\varepsilon_1}} h(x)^2dx \\
&\le \left(b'\right)^2\cdot \int_{-\sqrt{\varepsilon_1}}^{\sqrt{\varepsilon_1}} (b\sqrt{\varepsilon_1})^2 dx = O(\varepsilon^{1.5}). 
\end{align*}
For the lower bound, we note that $F_2(1+\alpha)=\alpha-4\epsilon=F_1(1+\alpha)-4\epsilon$. Similar to the proof of Lemma \ref{lemma:median_online}, we have
\begin{align*}
F_1*\mcN(0,\sigma^2) (1+\alpha) - F_2*\mcN(0,\sigma^2) (1+\alpha) = \Omega(\varepsilon).
\end{align*}
Analogously, we can derive the same lower bound
$$
\DKL(F_2*\mcN(0,\sigma^2)\|F_1*\mcN(0,\sigma^2))\geq \Omega(\epsilon^{3/2}).
$$

% From the data-processing inequality, for $\sigma\leq c\epsilon^{1/2}$, we have
% $$
% \DKL(F_1*\mcN(0,\sigma^2)\|F_2*\mcN(0,\sigma^2))\leq \DKL(F_1\|F_2).
% $$
% We show that $\DKL(F_1\|F_2)=O(\epsilon^{1.5-\theta})$. 
% On one hand,  $F_1$ is a uniform distribution on $[-1,1]$. On the other hand, the densities of $F_1$ and $F_2$ only have non-zero difference on $[1+\alpha-\sqrt{\epsilon_1},1+\alpha+\sqrt{\epsilon_1}]$ and the difference is upper bounded by $O(\sqrt{\tilde \epsilon})=O(\sqrt{ \epsilon^{1-\theta/2}})$. Therefore, we have
% \begin{equation*}
% \begin{aligned}
%     &\DKL(F_1\|F_2)\\
%     % =&\int p_{F_1}\log \frac{p_{F_1}}{p_{F_2}}dx\\
%     =&-\int_{1+\alpha-\sqrt{\epsilon_1}}^{1+\alpha+\sqrt{\epsilon_1}} p_{F_1}\log \frac{p_{F_2}}{p_{F_1}}dx\\
%     =&-\int_{1+\alpha-\sqrt{\epsilon_1}}^{1+\alpha+\sqrt{\epsilon_1}} p_{F_1}\pp{\pp{\frac{p_{F_2}}{p_{F_1}}-1}-\frac{1}{2}\pp{\frac{p_{F_2}}{p_{F_1}}-1}^2+O(\epsilon_1^{3/2})}dx\\
%     =&\frac{1}{2}\int_{1+\alpha-\sqrt{\epsilon_1}}^{1+\alpha+\sqrt{\epsilon_1}}\pp{\frac{p_{F_2}}{p_{F_1}}-1}^2dx+O(\epsilon^2)\\
%     =&O(\epsilon_1^{\frac{3}{2}})=O(\epsilon^{\frac{3}{2}(1-\theta/2)})=O(\epsilon^{1.5-\theta}).
% \end{aligned}
% \end{equation*}
% This completes the proof.

\subsection{Proof of Theorem \ref{thm:lb_ada_tm}}
By applying the Le Cam's two point lower bound, it is sufficient to show that the following proposition holds.

\begin{prop}\label{prop:tm_ada_exm}
Suppose that $\epsilon>0$. Denote $\mcF$ as the set the set of distributions satisfying Assumption \ref{assum:trim}. Consider an online algorithm $\pi$ with a fixed budget $t$ which outputs $\hat{G}$. Given the distribution with CDF $F\in \mcF$ of the underlying arms and the algorithm $\pi$, let $p_{\pi,F}(\{(a_i,y_i)\}_{i=1}^t)$ denote the distribution of the action-observation pairs up to the $t$-th iteration. Then, for any $\theta\in(0,1/4)$, there exist $F_1,F_2\in\mcF$ with trimmed means $g(F_1)$ and $g(F_2)$ such that $|g(F_1)-g(F_2)|\geq \epsilon$ and %if $\DKL(p_{\pi,\nu},\p_{\pi,\hat{\nu}})\leq O(1)$, then $t\geq O\pp{\epsilon^{-2.5}}$.
\begin{equation*}
    \DKL(p_{\pi,F_1} \| p_{\pi,F_2})\leq O(\epsilon^{2.5-2\theta}t). 
\end{equation*}
\end{prop}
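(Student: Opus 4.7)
The strategy closely parallels the proof of Proposition \ref{prop:med_ada_exm} for the median, with two adjustments reflecting (i) the different scaling $\epsilon^{3/2}$ rather than $\epsilon^2$ of the relevant per-arm KL, and (ii) the shifted noise threshold $\sigma \ge \epsilon^{1/2 - \theta}$ (equivalently $\sigma^2 \ge \epsilon^{1-2\theta}$) appearing in Lemma \ref{lemma:tm_online}. The plan is to take $(F_1, F_2)$ to be exactly the pair constructed in the proof of Lemma \ref{lemma:tm_online}, for which $|g(F_1) - g(F_2)| \ge \epsilon$ already holds.

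Before decomposing the KL divergence, I will upgrade the integral bound of Lemma \ref{lemma:tm_online} to a \emph{pointwise} bound on the log-likelihood ratio: for any $\sigma^2 \ge \epsilon^{1-2\theta}$ and every $x \in \mathbb{R}$,
\begin{equation*}
\left|\log \frac{p_{F_1} * \varphi_{\sigma^2}(x)}{p_{F_2} * \varphi_{\sigma^2}(x)}\right| \le O(\epsilon^3).
\end{equation*}
This follows from Lemma \ref{lem:pointwise} applied to $p_{F_1} - p_{F_2} = -(b')^{-1} h(\cdot - 1 - \alpha)$: $h$ is supported in $[-\sqrt{\epsilon_1}, \sqrt{\epsilon_1}]$, is $O(\sqrt{\epsilon})$ in magnitude, and has matched moments up to order $2k$ by construction, so choosing $k$ large enough forces $(\sqrt{\epsilon_1}/\sigma)^{k+2} = O(\epsilon^{\theta(k+2)})$ to dominate any desired polynomial rate.

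Next, mirroring the computation in the proof of Proposition \ref{prop:med_ada_exm}, the next step is to write out $p_{\pi,F}(a^t, y^t)$ by conditioning on the latent arm means, which factorizes the log-likelihood ratio as a sum of per-arm terms of the form $\log \bigl( p_{F_1} * K_j(\cdot, \bar y_j) / p_{F_2} * K_j(\cdot, \bar y_j) \bigr)$, where $K_j$ is a Gaussian kernel with bandwidth $1/\sqrt{n_j}$ and $n_j$ denotes the number of pulls of arm $j$. Taking expectations under $p_{\pi, F_1}$, I will then split on whether arm $j$ is lightly sampled ($n_j \le \epsilon^{2\theta-1}$, so effectively $\sigma^2 = 1/n_j \ge \epsilon^{1-2\theta}$) or heavily sampled. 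For lightly sampled arms, the pointwise bound above immediately gives an $O(\epsilon^3)$ contribution per arm. For heavily sampled arms I will invoke the log-sum inequality (Lemma \ref{lem:log_sum}) exactly as in the median case, which collapses the per-arm contribution to $D_{\text{KL}}(F_1 \| F_2) \le \chi^2(F_1 \| F_2) = O(\epsilon^{3/2})$, reusing the $\chi^2$ calculation already carried out in the proof of Lemma \ref{lemma:tm_online}.

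Combining the two regimes using $\sum_j n_j = t$, the number of heavily sampled arms is at most $t \epsilon^{1-2\theta}$ while the number of lightly sampled arms is at most $t$, giving
\begin{equation*}
D_{\text{KL}}(p_{\pi, F_1} \| p_{\pi, F_2}) \le O(\epsilon^{3/2}) \cdot t \epsilon^{1-2\theta} + O(\epsilon^3) \cdot t = O(\epsilon^{5/2 - 2\theta} t),
\end{equation*}
which is the desired bound. The main technical obstacle is the pointwise log-ratio estimate: the perturbation in Lemma \ref{lemma:tm_online} is localized near $F_1^{-1}(\alpha)$ rather than the origin, so care is needed to shift coordinates before applying Lemma \ref{lem:pointwise} and to verify that the moment matching of $h$ cancels all the leading Hermite terms in the expansion of the convolution. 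Once this estimate is in place, the remainder of the argument is a nearly verbatim adaptation of the median proof.
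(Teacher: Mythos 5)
Your proposal is correct and mirrors the paper's proof: both use the moment-matched perturbation from Lemma \ref{lemma:tm_online}, the pointwise log-likelihood-ratio bound from Lemma \ref{lem:pointwise} for lightly sampled arms, the log-sum inequality (Lemma \ref{lem:log_sum}) to collapse heavily sampled arms to a per-arm KL (or $\chi^2$) bound, and the budget constraint $\sum_j n_j = t$ to close the accounting. The only differences are cosmetic bookkeeping choices --- you place the light/heavy threshold at $n_j \le \epsilon^{2\theta-1}$ (so $\sigma^2 \ge \epsilon^{1-2\theta}$, matching Lemma \ref{lemma:tm_online}'s range) and use the tight $\chi^2(F_1\|F_2)=O(\epsilon^{3/2})$ bound, whereas the paper (via Proposition \ref{prop:tm_ada_exm_prep}) uses $n_i \ge \epsilon^{\theta-1}$ and the slightly lossier $D_{\mathrm{KL}}(F_1\|F_2)=O(\epsilon^{1.5-\theta})$; both bookkeeping schemes yield the identical $O(\epsilon^{2.5-2\theta}t)$.
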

Similar to the proof of Proposition \ref{prop:med_ada_exm}, we start with the following proposition.
\begin{prop}\label{prop:tm_ada_exm_prep}
Suppose that $\theta>0$ is a given constant. Let $\mcF$ denote the set of distributions satisfying Assumption \ref{assum:trim}. Then, there exists two distribution $F_1,F_2\in\mcF$ with trimmed mean $g(F_1)$ and $g(F_2)$ such that $g(F_2)\leq g(F_1)-\epsilon$ and they satisfy that 
$$
\DKL(F_1\|F_2)=O(\epsilon^{1.5-\theta}),
$$
Denote $\varphi_{\sigma^2}$ as the pdf of $\mcN(0,\sigma^2)$. For sufficiently small $\epsilon$ and for $\sigma$ satisfying $\sigma^2\geq \epsilon^{1-\theta}$, we further have
\begin{equation*}
    \left|\log \frac{p_{F_1}*\varphi_{\sigma^2}(x)}{p_{F_2}*\varphi_{\sigma^2}(x)}\right|\leq O(\epsilon^3).
\end{equation*}
\end{prop}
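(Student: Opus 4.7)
My plan is to reuse the ``large-$\sigma$'' construction from Lemma \ref{lemma:lowerbound_tm_offline} (which is also the pair already used in Lemma \ref{lemma:tm_online}), verify that both required properties hold simultaneously for that \emph{single} fixed pair $(F_1, F_2)$, and control the pointwise log-likelihood ratio using the matched-moment function supplied by Lemma \ref{lem:construct}. Concretely, I would fix an integer $k \geq 3/\theta$, set $b' = \max\{b/c_2, 4\}$ and $\epsilon_1 = 4 b' \epsilon$, let $h$ be the function from Lemma \ref{lem:construct} with support in $[-\sqrt{\epsilon_1}, \sqrt{\epsilon_1}]$ and matched moments $\int x^i h(x)\,dx = 0$ for $i = 0,\ldots,2k$, take $F_1$ uniform on $[1,2]$, and define
\begin{equation*}
p_{F_2}(x) = p_{F_1}(x) + (b')^{-1} h(x - 1 - \alpha).
\end{equation*}

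For $\epsilon$ small enough, $(b')^{-1}|h| \leq 1/2$ on $[1,2]$, so $p_{F_2}$ is a valid density with values in $[1/2, 3/2]$ supported on $[1,2]$. The $b$-Lipschitz continuity of $h$ combined with $b' \geq b/c_2$ yields $|p_{F_2}'| \leq c_2$, and compact support in $[1,2]$ takes care of (B1) and (B5) of Assumption \ref{assum:trim}. Hence $F_1, F_2 \in \mcF$. The trimmed-mean separation $g(F_2) \leq g(F_1) - \epsilon$ is then exactly the calculation at the end of Lemma \ref{lemma:lowerbound_tm_offline}: the asymmetric perturbation around $1+\alpha$ forces $F_2(1+\alpha) = \alpha - 4\epsilon$, which shifts $F_2^{-1}(\alpha)$ rightward by at least $(8/3)\epsilon$ while keeping $F_2^{-1}(1-\alpha) = 2 - \alpha$ unchanged, so $\int_{F_2^{-1}(\alpha)}^{F_2^{-1}(1-\alpha)} x\,dF_2(x) \leq 2 - 2\alpha - \epsilon = g(F_1) - \epsilon$.

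For the $\DKL$ bound I would use the chi-squared upper bound $\DKL(F_1 \| F_2) \leq \chi^2(F_1 \| F_2) = \int (p_{F_1} - p_{F_2})^2/p_{F_2}\,dx \leq 2(b')^{-2} \cdot (b\sqrt{\epsilon_1})^2 \cdot 2\sqrt{\epsilon_1} = O(\epsilon^{3/2})$, which is in fact sharper than the claimed $O(\epsilon^{1.5-\theta})$. For the pointwise log-ratio bound I would apply Lemma \ref{lem:pointwise} (after the harmless affine rescaling $x \mapsto 2x-3$ that puts both densities on $[-1,1]$ and the support of the perturbation in a window of width $2\sqrt{\epsilon_1}$ inside $[-1,1]$), using $\zeta = \sqrt{\epsilon_1} = \Theta(\sqrt{\epsilon})$, $\varepsilon_{\text{lem}} = O(\sqrt{\epsilon})$, and $2k$ matched moments:
\begin{equation*}
\left| \log \frac{p_{F_1} * \varphi_{\sigma^2}(x)}{p_{F_2} * \varphi_{\sigma^2}(x)} \right| \leq C \sqrt{\epsilon}\left(\frac{\sqrt{\epsilon}}{\sigma}\right)^{2k+2} \leq C \epsilon^{1/2 + (k+1)\theta} = O(\epsilon^3),
\end{equation*}
where the middle inequality uses $\sigma \geq \epsilon^{(1-\theta)/2}$ and the final equality uses $k \geq 3/\theta$.

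\textbf{Main obstacle.} Nearly every step carries over unchanged from the offline construction in Lemma \ref{lemma:lowerbound_tm_offline} and from the median online analysis (Lemmas \ref{lemma:median_online} and \ref{lem:pointwise}); the only substantive new choice is picking $k$ large enough (depending on $\theta$) so that Lemma \ref{lem:pointwise} delivers the required $O(\epsilon^3)$ bound uniformly over all $\sigma^2 \geq \epsilon^{1-\theta}$. The main points to check carefully are that the compact-support and positive-density-lower-bound hypotheses of Lemma \ref{lem:pointwise} are preserved under the affine rescaling (both are immediate since $p_{F_1}, p_{F_2} \in [1/2, 3/2]$), and that the single fixed pair $(F_1, F_2)$ truly satisfies both the $\DKL$ bound and the pointwise bound at once --- this is the online-vs-offline distinction that motivated Lemma \ref{lemma:median_online}, and it falls out here because the pair does not depend on $\sigma$ at all.
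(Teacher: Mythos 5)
Your proposal is correct and follows essentially the same route as the paper: reuse the fixed pair $(F_1,F_2)$ from Lemma~\ref{lemma:lowerbound_tm_offline}, verify Assumption~\ref{assum:trim} and the $\epsilon$ separation of trimmed means from that lemma's computation, bound $\DKL$ using the small support and $O(\sqrt{\epsilon_1})$ magnitude of the perturbation, and invoke Lemma~\ref{lem:pointwise} (via translation/rescaling) with a $\theta$-dependent number of matched moments to get the $O(\epsilon^3)$ pointwise log-ratio bound for all $\sigma^2\geq\epsilon^{1-\theta}$.

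The only differences from the paper's proof are cosmetic and in your favor. You bound $\DKL(F_1\|F_2)$ by $\chi^2$, giving $O(\epsilon^{3/2})$ (slightly sharper than the claimed $O(\epsilon^{1.5-\theta})$; the paper obtains the latter by a second-order Taylor expansion of $\log(p_{F_2}/p_{F_1})$ combined with the loose bound $\epsilon_1\leq\epsilon^{1-\theta/2}$, even though $\epsilon_1=\Theta(\epsilon)$). You also take $k\geq 3/\theta$ rather than the paper's $k>6/\theta$; both make $(k+1)\theta\geq 5/2$ so the exponent $1/2+(k+1)\theta\geq 3$. Your sharper $\DKL$ bound would in fact propagate to a slightly improved rate $O(\epsilon^{2.5-\theta}t)$ in Proposition~\ref{prop:tm_ada_exm}, matching the median case (Proposition~\ref{prop:med_ada_exm}), but this does not affect the correctness of either argument.
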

Then, we present the proof of Proposition \ref{prop:tm_ada_exm}. 

\begin{proof}
Consider $F_1$ and $F_2$ as distributions constructed in Proposition \ref{prop:tm_ada_exm_prep}. Based on Proposition \ref{prop:tm_ada_exm_prep} and Lemma \ref{lem:log_sum}, analogously, we have
\begin{equation*}
    \DKL(p_{\pi,F_1}\|p_{\pi,F_2})\leq \sum_{i\in\mbN}\pp{ O(\epsilon^{1.5-\theta})\mbI(n_i\geq \epsilon^{\theta-1})+O(\epsilon^{3})\mbI(n_i\leq \epsilon^{\theta-1})}.
\end{equation*}
Note that
\begin{equation*}
    t=\sum_{i\in\mbN}n_i\geq \epsilon^{\theta-1}\sum_{i\in\mbN}\mbI(n_i\geq \epsilon^{\theta-1}), \quad   t\geq \sum_{i\in \mbN}\mbI(n_i\leq \epsilon^{\theta-1}).
\end{equation*}
This implies that
\begin{equation*}
    \DKL(p_{\pi,F_1}\|p_{\pi,F_2})\leq O(\epsilon^{2.5-2\theta}t).
\end{equation*}
This completes the proof.
\end{proof}

% Consider two distributions $F_1$ and $F_2$ defined in Proposition \ref{thm:tm_ada_exm} with trimmed means $g(F_1)$ and $g(F_2)$, where $g(F_2)\leq g(F_1)-2.1\epsilon$. According to Proposition \ref{thm:tm_ada_exm}, we note that
% \begin{equation*}
% \DKL(p_{\pi,F_1},p_{\pi,F_2}) \leq C t\epsilon^{2.5-2\theta}.    
% \end{equation*}
% where $C>0$ is a constant. Consider the event $E=\{|\hat{G}-g(F_1)|\leq \epsilon\}$ and $E_2=\{|\hat{G}-g(F_2)|\leq \epsilon\}$. According to Lemma \ref{lem:bh_inequ}, we note that
% \begin{equation*}
%     P_{p_{\pi,F_1}}(E_1^c)+P_{p_{\pi,F_2}}(E_1)\geq \frac{1}{2}\exp(-\DKL(p_{\pi,F_1},p_{\pi,F_2})) \geq \frac{1}{2} \exp(-c t\epsilon^{2.5-2\theta})
% \end{equation*}
% Therefore, we can lower bound the probability of error by
% \begin{equation*}
% \begin{aligned}
%     &\max\{P_{p_{\pi,F_1}}(E_1^c), P_{p_{\pi,F_2}}(E_2^c)\}\\
% \geq&\frac{1}{2}\pp{P_{p_{\pi,F_1}}(E_1^c)+P_{p_{\pi,F_2}}(E_2^c)}\\
% \geq & \frac{1}{2}\pp{P_{p_{\pi,F_1}}(E_1^c)+P_{p_{\pi,F_2}}(E_1)}\\
% \geq &\frac{1}{4} \exp(-c t\epsilon^{2.5-2\theta}).
% \end{aligned}
% \end{equation*}
% This completes the proof. 

\subsection{Proof of Proposition \ref{prop:tm_ada_exm_prep}}
Consider two distributions constructed in Lemma \ref{lemma:lowerbound_tm_offline}. It is sufficient to show the bound on KL divergence and the pointwise bound. Firstly, we note that $F_1$ is uniform on $[1,2]$ and the density of $F_2$ only differs from the density of $F_1$ in $[1+\alpha-\sqrt{{\epsilon_1}},1+\alpha+\sqrt{{\epsilon_1}}]$. The difference is upper bounded by $b\sqrt{{\epsilon_1}}$. We note that for sufficiently small $\epsilon$, we have ${\epsilon}_1\leq \epsilon^{1-\theta/2}$. Therefore, we have
\begin{equation*}
\begin{aligned}
    \DKL(F_1\|F_2)
    % =&\int p_{F_1}\log \frac{p_{F_1}}{p_{F_2}}dx\\
    &=-\int_{1+\alpha-\sqrt{{\epsilon_1}}}^{1+\alpha+\sqrt{\epsilon_1}} p_{F_1}\log \frac{p_{F_2}}{p_{F_1}}dx\\
    &=-\int_{1+\alpha-\sqrt{\epsilon_1}}^{1+\alpha+\sqrt{\epsilon_1}} p_{F_1}\pp{\pp{\frac{p_{F_2}}{p_{F_1}}-1}-\frac{1}{2}\pp{\frac{p_{F_2}}{p_{F_1}}-1}^2+O(\epsilon_1^{3/2})}dx\\
    &=\frac{1}{2}\int_{1+\alpha-\sqrt{\epsilon_1}}^{1+\alpha+\sqrt{\epsilon_1}}\pp{\frac{p_{F_2}}{p_{F_1}}-1}^2dx+O(\epsilon^2)\\
    &=O(\epsilon_1^{\frac{3}{2}})=O(\epsilon^{\frac{3}{2}(1-\theta/2)})=O(\epsilon^{1.5-\theta}).
\end{aligned}
\end{equation*}
We note that $F_1$ and $F_2$ have $2k$ matched moments, where $k>6/\theta$. Analogous to the result in Lemma \ref{lem:pointwise}, for all $x\in\mbR$, 
\begin{align*}
    \left| \log \frac{p_{F_1}*\varphi_{\sigma^2}(x)}{p_{F_2}*\varphi_{\sigma^2}(x)} \right|
    \le  C\frac{b\sqrt{\epsilon_1} \sqrt{\epsilon_1}^{2k+2}}{\sigma^{2k+2}} 
    \leq  C\cdot \frac{b \epsilon^{(k+1)(1-\theta/2)}}{\epsilon^{(k+1)(1-\theta)}}
    = O(\epsilon^{(k+1)\theta/2})
    \leq O(\epsilon^3).
\end{align*}
Here $C>0$ is an absolute constant. This completes the proof. 

\end{document}